\newcommand{\eps}{\varepsilon}
\newcommand{\dir}{\ensuremath{\mathsf{d}}}
\DeclareMathOperator{\poly}{\textsf{poly}}
\newcommand{\norm}[1]{\lVert #1 \rVert}
\newcommand\numberthis{\addtocounter{equation}{1}\tag{\theequation}}
\newcommand{\abs}[1]{| #1 |}
\newcommand{\setdef}[2]{\left\{ #1 \mid #2 \right\}}
\newcommand{\inner}[2]{\langle #1,#2 \rangle}
\newcommand{\Gr}{\textsf{Grid}}
\newcommand{\conv}{*} 
\DeclareMathOperator{\supp}{supp}
\DeclareMathOperator{\spn}{span}
\newtheorem{theorem}{Theorem}
\newtheorem{lemma}[theorem]{Lemma}
\newtheorem{remark}[theorem]{Remark}
\newcommand\blfootnote[1]{%
  \begingroup
  \renewcommand\thefootnote{}\footnote{#1}%
  \addtocounter{footnote}{-1}%
  \endgroup
}
\title{Tight Bounds on the Hardness of Learning Simple\\Nonparametric Mixtures}
\author{Bryon Aragam \\ University of Chicago \and Wai Ming Tai \\ University of Chicago}
\date{}
\begin{document}

\maketitle

\begin{abstract}%

We study the problem of learning nonparametric distributions in a finite mixture, and establish tight bounds on the sample complexity for learning the component distributions in such models.
Namely, we are given i.i.d. samples from a pdf $f$ where 
$$
f=w_1f_1+w_2f_2, \quad w_1+w_2=1, \quad w_1,w_2>0
$$
and we are interested in learning each component $f_i$.
Without any assumptions on $f_i$, this problem is ill-posed.
In order to identify the components $f_i$, we assume that each $f_i$ can be written as a convolution of a Gaussian and a compactly supported density $\nu_i$ with $\text{supp}(\nu_1)\cap \text{supp}(\nu_2)=\emptyset$.

Our main result shows that $(\frac{1}{\varepsilon})^{\Omega(\log\log \frac{1}{\varepsilon})}$ samples are required for estimating each $f_i$. 
The proof relies on a quantitative Tauberian theorem that yields a fast rate of approximation with Gaussians, which may be of independent interest. 
To show this is tight, we also propose an algorithm that uses $(\frac{1}{\varepsilon})^{O(\log\log \frac{1}{\varepsilon})}$ samples to estimate each $f_i$. 
Unlike existing approaches to learning latent variable models based on moment-matching and tensor methods, our proof instead involves a delicate analysis of an ill-conditioned linear system via orthogonal functions.
Combining these bounds, we conclude that the optimal sample complexity of this problem properly lies in between polynomial and exponential, which is not common in learning theory.

\end{abstract}

\blfootnote{Accepted for presentation at the Conference on Learning Theory (COLT) 2023}

\section{Introduction}\label{sec:intro}

A mixture model is a probabilistic latent variable model that corresponds to a mixture of unknown distributions such that each distribution represents a subpopulation within an overall population. 
Easily the most commonly studied model is the Gaussian mixture model (GMM), which dates (at least) back to \cite{pearson1894contributions}. Although GMMs are a notoriously complex model to analyze, recent years have born witness to substantial progress on statistical and algorithmic fronts \citep{dasgupta1999learning,vempala2004spectral,regev2017learning,moitra2010settling,hardt2015tight,wu2018optimal,doss2020optimal,feldman2006pac,daskalakis2014faster,wu2018improved,suresh2014near,li2015nearly,bhaskara2015sparse,wu2018improved,chan2014efficient,acharya2017sample}.
Nevertheless, in applications, parametric assumptions such as Gaussianity are often unrealistic, and it is preferred to allow the component distributions to be as flexible as possible. In this case we are faced with the problem of learning a \emph{nonparametric} mixture model. More broadly, nonparametric mixtures represent the simplest nonparametric latent variable model of interest in applications: If learning a nonparametric mixture is hard, this suggests fundamental barriers to learning more complex latent variable models (e.g. deep generative models, autoencoders, etc.) in practice.

Nonparametric mixtures come in a variety of forms with many applications. 
In this paper, we are interested in simple, two-component mixture models in which the mixture components themselves are allowed to be nonparametric:
Define a probability density function (pdf) $f$ as
\begin{align}
\label{eq:npmix}
    f 
    = w_1f_1+w_2f_2
\end{align}
where $w_1,w_2>0$, $w_1+w_2=1$ and $f_1,f_2$ are some (unknown) pdfs.
Here, each $f_i$ will be allowed to come from a flexible, nonparametric family of distributions. 
Our goal is to study the sample complexity of this problem.
Unfortunately, without additional assumptions on the $f_i$, it is clear that this problem is ill-posed: There are infinitely many possible ways to write $f$ as a mixture model of the form \eqref{eq:npmix}. 
We will assume that $f_{i}=\nu_{i}\conv g_0$, where $g_{0}$ is the pdf of the standard Gaussian distribution centered at $0$, $\nu_{i}$ is a probability density supported on an interval and $\supp(\nu_{1})\cap \supp(\nu_{2})=\emptyset$ and $\conv$ is the convolution operator. 
This setting cleanly encapsulates the nonparametric setting we are interested in by allowing each $f_i$ to be essentially arbitrary while still ensuring identifiability (owing to the separation condition $\supp(\nu_{1})\cap \supp(\nu_{2})=\emptyset$), and has been studied previously \citep{koltchinskii2000empirical,nguyen2013convergence,aragam2018npmix,aragam2023uniform}.

Given the apparent generality of the problem under consideration, it is worthwhile to compare the sample complexity of recovering $f_{i}$ from $f$ to similar problems such as learning GMMs and deconvolution (see Table~\ref{table:1}).
Our main result shows that this problem, which is a natural generalization of GMM learning, is strictly harder and cannot be solved in polynomial-time.
To complement this hardness result, we also prove that our super-polynomial lower bound is tight. 
As described in more detail in the next section, this suggests an interesting middle ground between parameter learning and density estimation that has not been observed in the mixture literature previously.

The proof of our results may also be independently interesting. The lower bound construction involves the analysis of a delicate Gaussian approximation scheme and its rate of convergence, which provides a quantitative version of Wiener's Tauberian theorem \citep{wiener1932tauberian,wiener1933fourier}.
The upper bound analysis sidesteps traditional parametric approaches such as moment matching, tensor decompositions, and the EM algorithm and instead solves a nearly ill-conditioned linear system that arises from a Hermite polynomial expansion of the $f_i$. 

Finally, although this problem is interesting on its own, we mention two important applications that motivate this work:

\paragraph{Nonparametric clustering.} Here the goal is to partition a set of $n$ data points into $k$ clusters while making as few assumptions on the clusters as possible. 
In model-based clustering, we assume a mixture model as in \eqref{eq:npmix}, where each $f_{i}$ represents a single ``cluster'', and each sample is drawn from a randomly selected cluster with probability $w_{i}$. 
The optimal clustering is then given by the Bayes optimal partition, which is defined by the resulting Bayes classifier.
This problem has been well-studied in the literature \citep{achlioptas2005,kannan2008,kumar2010clustering,mixon2017,aragam2018npmix}.

\paragraph{Nonparametric latent variable models.}
Latent variable models with flexible nonparametric dependencies arise in many applications, and the two-component mixture we consider here is arguably one of the simplest such models. The analysis of \eqref{eq:npmix} provides fundamental insight into complexity of more general latent variable models. 
Examples of such models include variational autoencoders (VAEs), generative adversarial networks (GANs), normalizing flows, and diffusion models. Given the widespread popularity and adoption of these methods, understanding the complexity of identifying and learning the latent structure of these models is a fundamental problem that has received surprisingly little attention. Although there have been substantial developments in our understanding of \emph{density estimation} in these models \citep[e.g.][]{uppal2019nonparametric,biau2021some,ding2020high,belomestny2021rates}, our interest here is learning the underlying \emph{latent structure}, which is a more difficult problem.
Another application in which learning the components of a nonparametric mixture model explicitly arises is the problem of causal representation learning: Here, the goal is to learn high-level latent variables with meaningful causal relationships from low-level observations.
Recently, \cite{kivva2021learning,kivva2022identifiability} showed that this problem can be reduced to the problem of learning a nonparametric mixture model such as \eqref{eq:npmix}.

\subsection{Problem Definition}\label{sec:prob_def}

For any $\mu\in\mathbb{R}$, let $g_\mu$ be the pdf of a unit variance Gaussian distribution centered at $\mu$, i.e.
\begin{align*}
    g_\mu(x) &= \frac{1}{\sqrt{2\pi}} e^{-\frac{1}{2}(x-\mu)^2} \qquad\text{for all $x\in\mathbb{R}$.}
\end{align*}
For any set $S$, let $\mathcal{P}_S$ be the set of pdfs of all distributions on $S$.
For any interval $I\subset\mathbb{R}$, let
\begin{align*}
	\mathcal{G}_I
	:= \setdef{f\in \mathcal{P}_{\mathbb{R}}}{f = \int_{\mu\in I}\nu(\mu)g_{\mu}\dir\mu \text{ where $\nu\in\mathcal{P}_I$}}. \numberthis\label{eqn:ig}
\end{align*}
Namely, $\mathcal{G}_I$ is the collection of convolutions of a standard Gaussian $g_0$ with some distribution $\nu$ whose support lies in the interval $I$.
We call such distributions \emph{interval Gaussians}.
When $I_1$ and $I_2$ are clear in the context, we use $\mathcal{G}_i$ as a shorthand for $\mathcal{G}_{I_i}$.

Given two intervals $I_1$ and $I_2$, define a pdf $f$ by
\begin{align}
\label{eq:main:model}
    f=w_1f_1 + w_2f_2,
	\quad f_i\in\mathcal{G}_{i},
	\quad I_1\cap I_2=\emptyset,
	\quad w_i> 0,
	\quad w_1+w_2=1.
\end{align}
Since $f_i\in\mathcal{G}_i$, we can write $f_{i}=\nu_{i}\ast g_{0}$ for $i=1,2$, where $\nu_i\in\mathcal{P}_{I_i}$. 
We let $\nu=w_1\nu_{1}+w_2\nu_{2}$ denote the global mixing density, whence $f=\nu\ast g_0$.

Suppose we are given a set of samples drawn from $f$.
Then, what is the sample complexity for estimating each component $f_i$?
Before answering this question, we must first address the identifiability of this model.
If we assume that these two intervals are known, it is easy to see that this model is identifiable.
If these two intervals are unknown, then as long as they are well-separated, the model will be identifiable
(see Section~\ref{sec:sep} for details).
Formally, we have the following problem: 
\begin{quote}
	\emph{Let $P$ be a set of $n$ i.i.d. samples drawn from $f = w_1f_1+w_2f_2$, where $f$ is defined as in \eqref{eq:main:model} and $I_1,I_2$ are unknown and well-separated. For a sufficiently small error $\eps>0$, what is the threshold $\tau_\eps$ such that
	\begin{itemize}
	    \item if $n<\tau_\eps$, then no algorithm taking $P$ as the input returns two pdfs $\widehat f_1,\widehat f_2$ such that $\norm{f_i - \widehat f_i}_1 < \eps$ with probability at least $1-\frac{1}{100}$ for some $f$?
	    \item if $n>\tau_\eps$, then there is an algorithm that takes $P$ as the input and returns two pdfs $\widehat f_1,\widehat f_2$ such that $\norm{f_i - \widehat f_i}_1 < \eps$ with probability at least $1-\frac{1}{100}$ for any $f$?
	\end{itemize}}
\end{quote}
Without loss of generality, we can assume that $I_1$ is the left interval and $I_2$ is the right interval by reordering the indices.
Here, we are focusing on learning the components $f_i$ and treating the weights $w_i$ as nuisance parameters.

\subsection{Separation Assumptions}
\label{sec:sep}

To ensure identifiability when the intervals are unknown, some kind of separation is needed.
Let $R$ be the minimum distance between the endpoints of two intervals.
Unsurprisingly, the difficulty of the problem depends acutely on how this value varies. 
Our main interest is the case where $R$ is independent of $\eps$, and in particular, does not diverge as the number of samples increases. 
Formally, we may consider three separate regimes: $\Theta(1)$-separation (our focus), $\omega(1)$-separation ($R\to\infty$),  and $o(1)$-separation ($R\to0$).
\begin{itemize}
    \item $\Theta(1)$-separation: $R$ is independent of $\eps$; this regime is our main focus. 
    \item $\omega(1)$-separation: $R\rightarrow \infty$ as $\eps\rightarrow 0$.
    This learning problem is easy.
    For example, when $R=\sqrt{\log 1/\eps}$, 
    one can apply the clustering technique (e.g. \cite{kumar2010clustering}) to learn each component in polynomial time. 
    
    \item $o(1)$-separation: $R\rightarrow 0$ as $\eps\rightarrow 0$.
    If the intervals are unknown, this will cause identifiability issues when $R$ is larger than the length of the intervals.
\end{itemize}
It is worth pointing out that the main difficulty in the analysis arises when the intervals are known, and a simple pre-processing step suffices to reduce the unknown case to known intervals (see Remark~\ref{rem:known_int} and Appendix~\ref{sec:extension}).
For a more refined analysis of the relationship between separation and identifiability, see \citep{aragam2018npmix,aragam2023uniform}.

\subsection{Learning Goal}
\label{sec:goal}
To provide additional context for this problem, we recall that mixture modeling problems can be broadly classified into two general categories:
\begin{itemize}
	\item \emph{Parameter learning.}
	The most common example of parameter learning is for GMMs, in which case we seek to estimate the weights $w_i$ and centers $\mu_i$ (and possibly the variances $\Sigma_{i}$) for each component.
	In our nonparametric setting,
	recalling our definition of the mixing density $\nu=w_1\nu_1+w_2\nu_2$,
	parameter learning would mean estimating $\nu$, i.e. we find another pdf $\nu'$ such that $\nu$ and $\nu'$ are close in say the Wasserstein distance.
	Under our assumptions, this is equivalent to deconvolution, which requires exponentially many samples (see Section~\ref{sec:related} for details).
	
	\item \emph{Density estimation.}
	Here we estimate the mixture distribution $f$ directly, i.e. we find another pdf $f'$ such that $f$ and $f'$ are close in say the total variation or the Hellinger distance.
	It is known that it only needs polynomially many samples to achieve this goal.
\end{itemize}
It is obvious that parameter learning implies density estimation; in particular, parameter learning is at least as hard as density estimation.
These two general problems inspire an intriguing question: 
\begin{itemize}
	\item \emph{Can we acquire any guarantee in between parameter learning and density estimation?} Instead of parameter learning or density estimation, we seek to learn the components $f_i$ rather than the mixture distribution $f$ or the mixing density $\nu$.
	For this task, we do not need to estimate each $\nu_i$.
	Of course, one could learn $\nu$ in the traditional sense such as parameter learning, however, this is not necessary.
\end{itemize}
Since nonparametric density estimation can be done efficiently and deconvolution is provably hard, what can be said about this ``in-between'' problem? 
Our results shed light on this problem from a new perspective.

\section{Related Work}\label{sec:related}

Table~\ref{table:1} compares the results for our setting to other related problems discussed in this section.

\paragraph{Mixture Models}

Roughly speaking, our model can be viewed as a GMM with infinitely many Gaussians whose centers are well-clustered.
Since there are infinitely many centers in this model, traditional techniques for learning mixtures of finitely many Gaussians may not be applicable.
In parameter learning, the goal is to estimate the means and weights (and variances) of the Gaussians \citep{dasgupta1999learning,vempala2004spectral,regev2017learning,moitra2010settling,hardt2015tight,wu2018optimal,doss2020optimal}. 
Parameter learning for GMMs has an exponential dependence of $k$ and hence when $k$ is a constant it can be accomplished in polynomial-time in $\frac{1}{\eps}$.
Density estimation, on the other hand, can further be split into two categories: proper learning and improper learning.
In proper learning, the outputs are restricted to be a mixture of $k$ Gaussians where $k$ is the number of Gaussians in the underlying model \citep{feldman2006pac,daskalakis2014faster,wu2018improved,suresh2014near,li2015nearly} while, in improper learning, the output is unrestricted \citep{bhaskara2015sparse,wu2018improved,chan2014efficient,acharya2017sample}. 
Note that the sample complexity under these settings is polynomial in $k$ and $\frac{1}{\eps}$.

\begin{table}[t]
\centering
\begin{tabular}{l l l} 
    \toprule
    Setting & Sample bound & Learning goal \\
    \midrule
    Density estimation & $\poly(1/\eps)$ & learning the density $f$\\
    $k$-GMM & $(1/\eps)^{O(k)}$ & learning the parameters $(w_i,\mu_i)$\\
    Deconvolution & $2^{\poly(1/\eps)}$ & learning the mixing density $\nu$\\
    \midrule
    \bf{Our setting} & $(1/\eps)^{\log\log (1/\eps)}$ & learning the components $f_i$ \\
    \bottomrule
\end{tabular}
\caption{Comparison of our result and related work.}
\label{table:1}
\end{table}

Compared to learning GMMs, less is known about nonparametric mixtures. 
One strand of literature beginning with \cite{teicher1967} assumes that each $f_{i}$ is a product distribution while allowing each marginal to be nonparametric. In this case, the parameters $(w_{i},f_{i})$ are identifiable, and consistent estimators can be constructed \citep{hall2003,elmore2005,hall2005mixture}.
Recently there has been progress on learning algorithms for this model \citep{chaudhuri2008learning,rabani2014learning,li2015learning,gordon2021hadamard,gordon2021source}.
We note also related work on nonparametric mixtures in the statistics literature  \citep{shi2009,allman2009,nguyen2013convergence,vandermeulen2019operator}. 
Variants of the convolution model \eqref{eqn:ig} have been studied previously, however, precise hardness or sample complexity bounds are missing. For example, \cite{koltchinskii2000empirical} discusses recovery of the intervals $I_1,\ldots,I_k$ and \cite{aragam2018npmix} proves identifiability and asymptotic consistency without finite-sample theory. 

Beyond parameter learning, the literature has also studied clustering, i.e. achieving low misclassification error of the Bayes classifier defined by the mixture \eqref{eq:npmix}, without the need to impose identifiability assumptions \citep{achlioptas2005,kumar2010clustering}. Assuming $\omega(1)$-separation, \cite{kannan2008} are able to learn general log-concave mixtures and \cite{mixon2017clustering} learn subgaussian mixtures.

\paragraph{Deconvolution}

Even though traditional techniques for learning GMMs might not be helpful, nonparametric deconvolution is one way to solve our problem, albeit with suboptimal sample complexity.
Recall that by \eqref{eqn:ig}-\eqref{eq:main:model}, we can write $f = (w_1\nu_1 + w_2\nu_2)\conv g_0 = \nu\conv g_0$ where $\nu = w_1\nu_1 + w_2\nu_2$.
Algorithms for deconvolution return another mixing density $\widehat \nu$ such that $\widehat \nu \approx \nu$ given a set of samples drawn from $f$.
Since we assume that the support of $\nu$ is in the union of two disjoint intervals, a simple truncation argument provides a way to break $\widehat \nu $ into two parts, $\widehat \nu_1$ and $\widehat \nu_2$, such that $\nu\approx w_1\widehat\nu_1 + w_2\widehat \nu_2$ and $\widehat\nu_i\conv g_0 \approx f_i$. 
It is worth noting that although results on deconvolution often assume some smoothness conditions on $\nu$---which we do not assume---this can easily be fixed by smoothing the mixing density $\nu$ first. 
Regardless, learning $\nu$ directly \emph{requires} exponentially many samples, and this cannot be improved---see \cite{meister2009deconvolution} for a detailed account.

For example, \cite{zhang1990fourier} showed that the minimax rate in estimating $\nu$ in the $L^2$ norm is bounded from above by $\poly(\log n)^{-1}$ where $n$ is the number of samples.
More recently, in \cite{nguyen2013convergence} it was shown that the Wasserstein distance between any two mixing densities is bounded from above by $\poly(\log \frac{1}{V})^{-1}$ where $V$ is the total variation between the two mixture densities. These results imply that exponentially many (i.e. $2^{\Theta(\frac{1}{\eps})}$) samples are required to estimate $\nu$ directly, either in $L^2$ or the weaker Wasserstein metric. Other related results on deconvolution include \cite{carroll1988optimal,stefanski1990deconvolving,fan1991optimal,gassiat2020deconvolution}.
Thus, in order to break the exponential barrier for our problem \eqref{eq:npmix}, deconvolution techniques must be avoided.

\paragraph{Latent Variable Models}
A standard approach to learning latent variable models is moment matching, which is closely related to tensor decompositions that have been used for learning topic models \citep{anandkumar2015}, mixed regression models \citep{chaganty2013,chen2014convex,hand2018convex}, hidden Markov models \citep{anandkumar2012mixture,gassiat2013finite,mossel2005learning}, and latent graphical models \citep{anandkumar2012learning,anandkumar2013} in addition to mixture models. Due to their widespread applicability, tensor methods have been the subject of intense scrutiny in the theory literature \citep{anandkumar2014tensor,allman2009,diakonikolas2020small,bhaskara2014smoothed}. Another standard approach is the EM algorithm. Although theoretical guarantees on the EM algorithm are more difficult to obtain, recent work has produced some exceptions for GMMs \citep{balakrishnan2017statistical,cai2017chime} and mixed regression \citep{kwon2020minimax}. Our proof technique, by contrast is distinct by necessity: Both moment-based methods and the EM algorithm are notoriously difficult to analyze for infinite-dimensional (i.e. nonparametric) models. Instead, we use orthogonal functions to reduce our problem to a linear system whose analysis involves careful control over the approximation rate and conditioning.

\section{Our Results}\label{sec:result}

Our main result shows that given a set of samples from a mixture of two interval Gaussians as in \eqref{eq:main:model}, estimating each interval Gaussian requires super-polynomially many samples, and the requisite sub-exponential sample complexity is tight.

We first show the sample complexity has a super-polynomial lower bound.
Although our problem definition allows the weights $w_1,w_2$ to be arbitrary, our result shows that even when the weights are known to be balanced, the problem is still hard.
Recall that, given any interval $I$, $\mathcal{G}_I$ is defined by \eqref{eqn:ig} as the collection of convolutions of a standard Gaussian $g_0$ with some distribution $\nu$ such that $\supp(\nu)\subset I$.
Formally, we have the following lower bound:

\begin{theorem}\label{thm:main_lower}
    Let $\eps>0$ be a sufficiently small error and $I_1,I_2$ be two known disjoint intervals.
    There exists a distribution whose pdf is $f^*=\frac{1}{2}f^*_1+\frac{1}{2}f^*_2$ where $f^*_i\in \mathcal{G}_{i}$ such that no algorithm taking  a set of $n$ i.i.d. samples drawn from $f^*$ as input returns two pdfs $\widehat f_1,\widehat f_2$ such that $\norm{f^*_i - \widehat f_i}_1 < \eps$ with probability at least $1-\frac{1}{100}$ whenever $n<(\frac{1}{\eps})^{C\log\log \frac{1}{\eps}}$ where $C$ is an absolute constant.
\end{theorem}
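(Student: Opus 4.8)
The plan is to establish the lower bound by a two-point (Le Cam) reduction. I will construct two instances of the model,
$f^{(0)}=\tfrac12 f^{(0)}_1+\tfrac12 f^{(0)}_2$ and $f^{(1)}=\tfrac12 f^{(1)}_1+\tfrac12 f^{(1)}_2$ with $f^{(j)}_i\in\mathcal{G}_i$, whose \emph{mixtures} are super-polynomially close in total variation but whose \emph{first components} satisfy $\norm{f^{(0)}_1-f^{(1)}_1}_1\ge 3\eps$. Fix a smooth baseline density $\nu^{(1)}_i$ supported strictly inside $I_i$ and bounded below by a positive constant on a subinterval $J_i$. On $J_1$ place a fixed mean-zero bump perturbation $\tilde\eta_1$ (a difference of two narrow bumps) and set $\nu^{(0)}_1:=\nu^{(1)}_1+\alpha\tilde\eta_1$, where $\alpha=\Theta(\eps)$ is chosen so that $\norm{f^{(0)}_1-f^{(1)}_1}_1=\alpha\norm{\tilde\eta_1\conv g_0}_1=3\eps$; since $\alpha$ is small, $\nu^{(0)}_1$ is a valid density on $I_1$. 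The whole construction hinges on choosing a mean-zero signed density $\tilde\eta_2$ supported on $J_2$ and setting $\nu^{(0)}_2:=\nu^{(1)}_2+\alpha\tilde\eta_2$, so that the Gaussian-smoothed perturbations almost cancel in the mixture: $\norm{(\tilde\eta_1+\tilde\eta_2)\conv g_0}_1=:\beta$ is as small as possible subject to $\alpha\norminf{\tilde\eta_2}\lesssim 1$ (which keeps $\nu^{(0)}_2$ a valid density). Then $\norm{f^{(0)}-f^{(1)}}_1=\tfrac{\alpha}{2}\beta$, and by Le Cam no algorithm can distinguish the two mixtures from $n$ i.i.d.\ samples when $n\lesssim 1/\norm{f^{(0)}-f^{(1)}}_1\asymp 1/(\eps\beta)$.

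The theorem then reduces to exhibiting $\tilde\eta_2$ with $\beta\le(1/\eps)^{-\Omega(\log\log(1/\eps))}$ under the polynomial budget forced by positivity. Taking $\tilde\eta_2$ of the form $\sum_j c_j\psi(\cdot-\mu_j)$, a bounded ($\sum_j|c_j|\le M$) combination of translates of a fixed narrow smooth bump $\psi$ with centers $\mu_j\in I_2$, and recalling that $\tilde\eta_1\conv g_0$ is a difference of translated Gaussians centered in $I_1$, this becomes exactly the quantitative Wiener–Tauberian problem advertised earlier: approximate a far-away Gaussian bump in $L^1(\mathbb{R})$ by $\sum_j c_j\Psi(\cdot-\mu_j)$, translates of a fixed Gaussian-type kernel $\Psi=\psi\conv g_0$ whose centers lie in $I_2$, with coefficient budget $\sum_j|c_j|\le M$. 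Using $g_\mu(x)=g_0(x)\,e^{\mu x-\mu^2/2}$ turns this into approximating the exponential $e^{-Rx}$ (with $R=\mathrm{dist}(I_1,I_2)$) on a frequency window $|x|\lesssim\sqrt{\log(1/\beta)}$ by exponential sums whose exponents are confined to a fixed interval; passing through polynomial approximation and the Vandermonde-type blow-up of the coefficients shows the optimal degree $d$ obeys $M\asymp e^{\Theta(d)}$ and $\beta\asymp e^{-\Theta(d\log d)}$, i.e.\ $\beta\asymp\exp(-\Theta(\log M\cdot\log\log M))$, with constants depending only on the (fixed) lengths of $I_1,I_2$ and on $R$.

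Putting the pieces together: $\norminf{\tilde\eta_2}\lesssim M$ and $\alpha=\Theta(\eps)$ force only $M\lesssim 1/\eps$; plugging $M=\Theta(1/\eps)$ into the Tauberian estimate gives $\beta\le(1/\eps)^{-\Omega(\log\log(1/\eps))}$, hence $\norm{f^{(0)}-f^{(1)}}_1=\tfrac{\alpha}{2}\beta\le(1/\eps)^{-\Omega(\log\log(1/\eps))}$ and the distinguishing bound $n\ge(1/\eps)^{\Omega(\log\log(1/\eps))}$. For the reduction: an algorithm that on every instance returned $\widehat f_i$ with $\norm{f^*_i-\widehat f_i}_1<\eps$ (prob.\ $\ge 99/100$) using fewer samples could, on inputs from $f^{(0)}$ or $f^{(1)}$, report whichever of $f^{(0)}_1,f^{(1)}_1$ is closer to $\widehat f_1$; since the $\eps$-balls around these two components are disjoint (they are $3\eps$ apart), this distinguishes $f^{(0)}$ from $f^{(1)}$ with error $\le 1/100$, contradicting the Le Cam bound. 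Thus on one of $f^{(0)},f^{(1)}$ — which we take as $f^*$ — the algorithm must fail, with $C$ the absolute constant produced by the chain above.

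The main obstacle is the middle step, the quantitative Tauberian estimate $\beta\asymp\exp(-\Theta(\log M\log\log M))$: this is where the unusual $\log\log$ rate originates and around which the rest of the proof is organized. Making the heuristic rigorous — a genuine $L^1$ bound on the Gaussian/exponential/polynomial approximation, with simultaneous control of the Gaussian tail outside the frequency window and of $\norminf{\tilde\eta_2}$ so that $\nu^{(0)}_2$ remains a bona fide density supported on $I_2$ — is the delicate part; the two-point reduction and the tracking of constants are routine by comparison.
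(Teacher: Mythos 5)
Your reduction skeleton is exactly the paper's: a two-point (Le Cam) construction of two mixtures whose components differ by $\geq 3\eps$ in $L^1$ while the mixtures differ by $(1/\eps)^{-\Omega(\log\log(1/\eps))}$, followed by the standard ``closer-component'' test to convert a component estimator into a distinguisher. You also correctly isolate the crux --- a quantitative Wiener--Tauberian statement: a Gaussian centred outside an interval can be approximated by a signed combination of Gaussians centred inside it with error $\exp(-\Theta(\log M\log\log M))$ where $M$ is the coefficient budget, and the positivity/normalization constraint caps $M$ at $\Theta(1/\eps)$. That tradeoff, with $M=\Theta(1/\eps)$, is precisely what produces the $\log\log$ exponent, and your bookkeeping around it (mean-zero perturbations of a baseline $\nu_i$, $\alpha=\Theta(\eps)$, $n\lesssim 1/(\eps\beta)$) is a clean and slightly tidier packaging than the paper's, which instead normalizes the positive and negative parts of the projection separately and must additionally show the mass imbalance $\abs{C_{\Delta,+}-C_{\Delta,-}-1}$ is super-exponentially small.

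The genuine gap is that the central estimate is asserted, not proved, and you say so yourself. The claim $\beta\asymp\exp(-\Theta(\log M\log\log M))$ ``via polynomial approximation and Vandermonde-type blow-up'' is a heuristic; nothing in your write-up controls the $L^1$ error of the exponential-sum approximation on the whole line, verifies that the mean-zero constraint on $\tilde\eta_2$ is compatible with the optimal rate, or pins down the constants in the $M$-versus-$\beta$ tradeoff --- and this lemma \emph{is} the theorem, since everything else is routine. For comparison, the paper proves it by taking the approximating class to be Gaussians at the grid points $\Gr(\Delta)\cap[0,1]$ with $m=1/\Delta$, running Gram--Schmidt to get a closed form for $\inner{g_a}{\widetilde u_i}$ (Lemma \ref{lem:recur}), and then showing two things separately: the total coefficient mass satisfies $C_{\Delta,+}+C_{\Delta,-}\leq 2^{O(m)}$ (Lemma \ref{lem:sumofcoeff}) while the projection error satisfies $\beta_\Delta\leq 2^{-\Omega(m\log m)}$ (Lemma \ref{lem:beta}); writing $M=2^{\Theta(m)}$ recovers exactly your $\exp(-\Theta(\log M\log\log M))$. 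The paper also has to patch two issues your sketch elides: converting between $L^2$ (where projections are computed) and $L^1$ (Lemma \ref{lem:loneltwo}), and the tail/mass-balance control (Lemma \ref{lem:diffofcoeff}). So: right architecture, right target rate, but the proof of the one nontrivial ingredient is missing.
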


This theorem makes no assumptions on the separation $R$ (the distance between the closest endpoints of two intervals), and in particular holds in the regime of $R=O(1)$. The only implied assumption on $R$ is that $I_1\cap I_2=\emptyset$---i.e $R>0$---which allows for fixed separation as $\eps\to0$. In particular, our result holds when $R=\Theta(1)$ stays bounded away from zero.

Since we already know that this problem has an \emph{exponential} upper bound from deconvolution (see Section~\ref{sec:related}), the lower bound from Theorem~\ref{thm:main_lower} leaves open the question whether or not there is a \emph{sub-exponential} algorithm that matches the super-polynomial lower bound.
Our second main result shows that this is indeed the case, i.e. the lower bound in Theorem~\ref{thm:main_lower} is tight.

\begin{remark}
\label{rem:known_int}
In Theorem \ref{thm:main_upper} below, we assume knowledge of the intervals $(I_1,I_2)$, but this is purely for simplicity: The main difficulties in the proof arise even when these intervals are known, and it is straightforward to approximate the intervals as a pre-processing step when they are unknown. For completeness, we have included these details in Appendix \ref{sec:extension}.
\end{remark}
Formally, we have the following theorem:

\begin{theorem}\label{thm:main_upper}
    Let $\eps>0$ be  a sufficiently small error and $I_1,I_2$ be two known disjoint intervals of length $1$ such that $r>4$ where $r$ is the distance between the centers of the intervals.
    There exists an algorithm such that, for any distribution whose pdf is $f=w_1f_1+w_2f_2$ where $f_i\in \mathcal{G}_{i}$, $w_i=\Omega(\eps)$ and $w_1+w_2=1$, the algorithm taking a set of $n$ i.i.d. samples from $f$ as input returns two pdfs $\widehat f_1,\widehat f_2$ such that $\norm{f_i - \widehat f_i}_1 < \eps$ with probability at least $1-\frac{1}{100}$ whenever $n>(\frac{1}{\eps})^{C\log\log\frac{1}{\eps}}$ where $C$ is an absolute constant.

\end{theorem}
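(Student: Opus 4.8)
The plan is to estimate, from the samples, the low-degree moments $m_k^{(i)}:=\int x^k\,\dir\nu_i(x)$ of each mixing density $\nu_i$, and then reconstruct $f_i=\nu_i\conv g_0$ from those moments. Write $f=\nu\conv g_0$ with $\nu=w_1\nu_1+w_2\nu_2$ and $\supp\nu_i\subseteq I_i$; since the intervals have length $1$ and their centers are $r>4$ apart, the gap $\operatorname{dist}(I_1,I_2)$ and $M:=\sup\{|x|:x\in I_1\cup I_2\}$ are fixed constants. The reconstruction step rests on the generating‑function identity $g_0(t-x)=g_0(t)\sum_{k\ge0}\tfrac{\mathrm{He}_k(t)}{k!}x^k$ ($\mathrm{He}_k$ the probabilists' Hermite polynomials), which gives $f_i(t)=g_0(t)\sum_{k\ge0}\tfrac{\mathrm{He}_k(t)}{k!}m_k^{(i)}$; using $\norm{g_0\mathrm{He}_k}_1\le\sqrt{k!}$ and $|m_k^{(i)}|\le M^k$, the truncation $\widehat f_i(t):=g_0(t)\sum_{k\le d}\tfrac{\mathrm{He}_k(t)}{k!}\widehat m_k^{(i)}$ satisfies
\[
\norm{f_i-\widehat f_i}_1\ \le\ \sum_{k\le d}\frac{|m_k^{(i)}-\widehat m_k^{(i)}|}{\sqrt{k!}}\ +\ \sum_{k>d}\frac{M^k}{\sqrt{k!}}\,.
\]
Taking $d=\Theta(\log\tfrac1\eps)$ (constant depending on $M$) kills the second sum, so it suffices to estimate each $m_k^{(i)}$, $k\le d$, to additive error $\eps\sqrt{k!}/\poly(d)$; clipping $\widehat f_i$ to its positive part and renormalizing turns it into an honest pdf at the cost of a constant factor.

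The moments are reached through the identity $\int(\phi\conv g_0)\,\dir\nu=\mathbb{E}_{X\sim f}[\phi(X)]$, valid for any polynomial $\phi$: to estimate $\int H\,\dir\nu$ it suffices to ``Gaussian‑deconvolve'' $H$ and take an empirical mean, and the Gaussian deconvolution of the monomial $x^\ell$ is exactly $\mathrm{He}_\ell$ (so deconvolution merely re‑expresses a polynomial in the Hermite basis). To separate the two components I exploit the gap: a standard Chebyshev‑type construction gives a polynomial $p^{(i)}$ of degree $D=O(\log\tfrac1\delta)$ with $|p^{(i)}-\mathbf{1}_{I_i}|\le\delta$ on $I_1\cup I_2$ and coefficients of size $C^D$. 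Then $H_k^{(i)}(x):=x^kp^{(i)}(x)$ has degree $N:=k+D$, and its deconvolution $\phi_k^{(i)}$ satisfies $\mathbb{E}_{X\sim f}[\phi_k^{(i)}(X)]=w_im_k^{(i)}+O(\delta M^k)$; reading $\widehat w_i$ off the $k=0$ case and dividing (legitimate since $w_i=\Omega(\eps)$, costing only a $\poly(1/\eps)$ factor) yields $\widehat m_k^{(i)}$. Equivalently, collecting these relations for $0\le k\le d$ and $i=1,2$ amounts to solving a linear system $Ab=v$ for the $\approx 2d$ low‑degree coefficients $b$ of $\nu_1,\nu_2$, where $v$ is a vector of polynomial expectations under $f$ that we estimate from $P$, and $A$ is nearly singular because convolution with $g_0$ makes a density on $I_1$ nearly indistinguishable from one on $I_2$ at resolution $O(\log\tfrac1\eps)$.

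It remains to control the variance of the estimators $\tfrac1n\sum_{x\in P}\phi_k^{(i)}(x)$. Each $\phi_k^{(i)}$ is a polynomial of degree $N=k+D=O(\log\tfrac1\eps)$ whose coefficients are at most $C^D\cdot N!$ (the $C^D$ from $p^{(i)}$, the $N!$ from the Hermite polynomials), and $f=\nu\conv g_0$ has $\mathbb{E}_f|X|^{2\ell}\le(C\ell)^\ell$, so
\[
\Var_f\!\big(\phi_k^{(i)}(X)\big)\ \le\ \mathbb{E}_f\!\big[\phi_k^{(i)}(X)^2\big]\ \le\ C^{2D}(N!)^2(CN)^N\ =\ \Big(\tfrac1\eps\Big)^{O(\log\log\frac1\eps)}\,,
\]
where the $C^{2D}$ and $(CN)^N$ factors contribute only $\poly(1/\eps)$ and the driving term is $N^{3N}=e^{O(\log\frac1\eps\cdot\log\log\frac1\eps)}$. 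By Chebyshev's inequality and a union bound over the $O(d)$ estimators, $n=(1/\eps)^{C\log\log(1/\eps)}$ samples make every $\widehat m_k^{(i)}$ accurate to the tolerance required above with probability $\ge1-\tfrac1{100}$; feeding these into the truncated Hermite reconstruction gives $\norm{f_i-\widehat f_i}_1<\eps$. The unknown‑interval case reduces to this one by a pre‑processing step (Remark~\ref{rem:known_int}, Appendix~\ref{sec:extension}).

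The crux is balancing the two degrees. The separating polynomial must have degree $D=\Theta(\log\tfrac1\eps)$ for densities on $I_1$ and on $I_2$ to be distinguishable at all, but this forces the test functions $\phi_k^{(i)}$ — equivalently the matrix $A$ — to have degree (respectively inverse condition number) of order $(1/\eps)^{\Theta(\log\log\frac1\eps)}$, and it is precisely this blow‑up that pins down the sample complexity. Choosing the constant in $D$ small enough that the separation error $O(\delta M^k)$ never swamps the reconstruction tolerance, while keeping the variance bound — hence $n$ — at the claimed order, and verifying that the $1/w_i$ division and the moment‑to‑$L^1$ conversion cost only polynomial factors, is the delicate bookkeeping the proof must carry out; this is also where the upper bound meets the $(1/\eps)^{\Omega(\log\log 1/\eps)}$ lower bound of Theorem~\ref{thm:main_lower}, the two degree thresholds being the same.
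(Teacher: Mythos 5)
Your proposal is correct in outline, but it takes a genuinely different route from the paper's proof. The paper expands each $f_i$ in Hermite \emph{functions} centered at $r_1,r_2$, forms the $2\ell\times 2\ell$ Gram matrix $A$ of these $2\ell$ basis functions, estimates the right-hand side $\inner{f}{\psi_{j,r_i}}$ through an intermediate $L^2$ density estimate $f'$ with $\norm{f'-f}_2\le\eps^{\Theta(\log\log(1/\eps))}$, and pays the price of ill-conditioning through $A^{-1}$, whose entries are controlled by a Cramer's-rule and Cauchy--Binet analysis of ratios of determinants; the separation between the two components is encoded only implicitly, in the near-singularity of $A$. You instead estimate the moments of the mixing measures directly by empirical means of explicit polynomial statistics $\phi_k^{(i)}$ (the Hermite-basis ``deconvolution'' of $x^kp^{(i)}(x)$), handle the separation by an explicit degree-$O(\log\frac1\eps)$ polynomial approximation of $\mathbf{1}_{I_i}$ on $I_1\cup I_2$ with $C^D$ coefficient bounds, and pay the same $2^{\Theta(\ell\log\ell)}$ price through the variance of these unbounded statistics rather than through $A^{-1}$; the reconstruction map from moments back to $f_i$ is then essentially perfectly conditioned because of the $\norm{g_0\mathrm{He}_k}_1/k!\le 1/\sqrt{k!}$ weights. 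Your route avoids both the density-estimation step and the determinant identities (the most technical part of the paper's appendix), at the cost of importing the standard machinery of indicator-approximating polynomials and of more delicate variance bookkeeping (the $N!$ growth of Hermite coefficients, the $(C\ell)^\ell$ moments of $f$, and the $1/w_i$ division, each of which you correctly argue costs only $\poly(1/\eps)$ or $(1/\eps)^{O(\log\log(1/\eps))}$ factors). Both arguments hinge on the same two thresholds---truncation degree $\Theta(\log\frac1\eps)$ and inversion cost $2^{\Theta(\ell\log\ell)}$---so they land on the same sample complexity; yours is arguably more elementary and needs only a constant gap between the intervals rather than the specific bound $r>4$.
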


Theorem \ref{thm:main_upper} is stated so as to draw attention to the assumptions and characteristics of the problem that reflect the main technical challenges addressed by our analysis. Nonetheless, it is possible to generalize this result in several directions, as discussed in the remarks below.

\begin{remark}
\label{rem:gen:simple}
In several places we have not bothered to optimize the analysis, which we outline here for the interested reader:
 \begin{itemize}
    \item Although the distance between two centers of the intervals, $r$, is assumed to be larger than $4$, this lower bound is not optimized.
    Clearly, it cannot be less than $1$ since otherwise the two intervals intersect, leading to identifiability issues.
    \item The length of the intervals is assumed to be $1$ for simplicity; this can be replaced with any constant $2s$ for $s>0$.
    We just need to modify the proof accordingly and $r$ needs to be larger than $8s$.
    \item In our algorithm, we assume that the exact computation of an integral can be done.
    One can always approximate an integral arbitrarily well and we assume that this error is negligible. 
    It does not change the sample complexity.
\end{itemize}
Optimizing these dependencies is an interesting direction for future work.
\end{remark}

\section{Proof Overview} 

In this section, we will give an overview of our proofs.
We first outline the lower bound result in  Section \ref{sec:overview_lower}.
Then, we outline the upper bound result in Section \ref{sec:overview_upper}.
Relevant preliminaries and detailed proofs are deferred to the appendix.

\subsection{Lower Bound}\label{sec:overview_lower}

Our goal is to construct two mixtures $f=\tfrac12 f_1+\tfrac12f_2$ and $f'=\tfrac12 f_1'+\tfrac12f_2'$ such that the difference between the components $f_i$ and $f_i'$ is large compared to the difference between the mixtures $f$ and $f'$. More specifically, we want the following:
\begin{align}
	\label{eq:goal}
	\norm{f_1-f_1'}_1>\eps, \quad
	\norm{f_2-f_2'}_1>\eps, \quad
	\norm{f-f'}_1<o(\eps^c) \quad \text{for any $c>0$}.
\end{align}
By the standard information theoretic lower bounds, this implies that $\omega(1/\eps^c)$ samples are needed to distinguish $f$ and $f'$.
Suppose $f_1,f_1'\in\mathcal{G}_{1}$, $f_2,f_2'\in\mathcal{G}_{2}$, $I_1:=[0,1]$ and $I_2:=[-2,-1]$. 
One way for $\norm{f-f'}_1$ to be bounded above is that $f_1-f_1'$ should assign nontrivial mass outside of $[0,1]$, and similarly $f_2-f_2'$ should assign nontrivial mass outside of $[-2,-1]$. A simple way to accomplish this is to have
\begin{align*}
	f_1-f_1'
	\approx \lambda\cdot(g_{-1} - g_{0})
	\quad\text{and}\quad
	f_2-f_2'
	\approx \lambda\cdot(g_{0} - g_{-1})
\end{align*}
for some small $\lambda$.
Consequently, we have
\begin{align*}
	f-f' \approx \frac{1}{2}\lambda\cdot(g_{-1} - g_{0}) + \frac{1}{2}\lambda\cdot(g_{0} - g_{-1}) =0.
\end{align*}
But then
\begin{align*}
	L_1
	:= \frac{1}{\lambda}(f_1 - f_1') + g_{0}
	\approx g_{-1} 
	\quad    \text{and} \quad
	L_2
	:= \frac{1}{\lambda}(f_2 - f_2') + g_{-1}
	\approx g_{0} 
\end{align*}
where $L_1$ is a linear combination of Gaussians centred inside $[0,1]$. Note that $L_1\notin\mathcal{G}_{1}$ since $L_1$ is a \emph{linear} combination not a \emph{convex} combination; that is, it may have large and negative coefficients in its expansion. A similar argument applies to $L_2$, which is a linear combination of Gaussians centred inside $[-2,-1]$.
Thus, as long as we can construct $L_1$ and $L_2$---along with the appropriate rates in \eqref{eq:goal}, we can achieve the desired goal.
The key to this construction is the surprising fact that a single Gaussian centred anywhere can be approximated extremely well by a linear combination of Gaussians centred at points in an arbitrary interval.

To construct such linear combination with the appropriate rates, we consider the following construction.
Let $\Gr(\Delta)$ be a grid of cell width $\Delta>0$ over the reals, i.e.
\begin{align*}
	\Gr(\Delta)
	= \setdef{j\cdot \Delta}{\text{$j$ is an integer}}.
\end{align*}
Recall that we want to approximate $g_{-1}$ (resp. $g_{0}$) by a linear combination of Gaussians centered inside $[0,1]$ (resp. $[-2,-1]$).
If we project $g_{-1}$ onto the subspace spanned by the Gaussians  centered at the grid points $[0,1]\cap \Gr(\Delta)$ for a small $\Delta$, the projection by definition is a linear combination of the Gaussians centered at the grid points.
Moreover, it can be proven that $g_{-1}$  is indeed close to the linear combination, i.e.
\begin{align*}
    g_{-1} \approx \Pi_{\mathcal{V}}(g_{-1}) := \text{the projection of $g_{-1}$ onto $\mathcal{V}$} = \sum_{\mu\in [0,1]\cap \Gr(\Delta)}\alpha_\mu g_\mu \numberthis\label{eq:ex1}
\end{align*}
where $\mathcal{V} = \spn\setdef{g_{\mu}}{\mu\in [0,1]\cap\Gr(\Delta)}$ and $\alpha_\mu$ are some coefficients.
The quantity $\Delta$ defines the approximation quality, i.e. the smaller $\Delta$ is the better the approximation is.
By symmetry, we have 
\begin{align*}
    g_0 \approx \Pi_{\mathcal{V}}(g_{0}) := \text{the projection of $g_{0}$ onto $\mathcal{V}'$} = \sum_{\mu\in [-2,-1]\cap \Gr(\Delta)}\alpha_\mu g_\mu\numberthis\label{eq:ex2}
\end{align*}
where $\mathcal{V}' = \spn\setdef{g_\mu}{[-2,-1]\cap\Gr(\Delta)}$.
For example, if we take $\Delta= 0.2$, we have
\begin{align*}
	\Pi_{\mathcal{V}}(g_{-1})
	& =
	80.609g_0 -260.774g_{0.2} + 331.9g_{0.4} -195.489g_{0.6} +   44.741g_{0.8}
	\approx
	g_{-1}.
\end{align*}
See Figure \ref{fig:proj}.
\begin{figure}[!t]
	\begin{center}
		\includegraphics[width=0.4\linewidth]{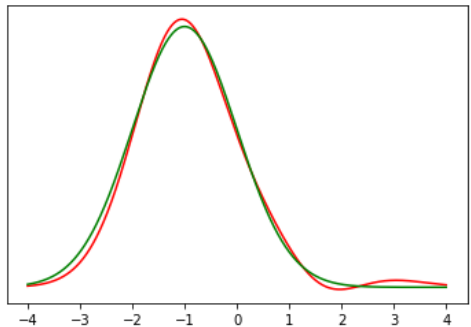}
	\end{center}
	\caption{\label{fig:proj} Graph of $\Pi_{\mathcal{V}}(g_{-1})$ (red) and $g_{-1}$ (green)}
\end{figure}
Indeed, we can prove the following lemma, which is proved in the appendix, to show how good the approximation is. 
\begin{lemma}
	\label{lem:gauss:approx}
	Let $\Pi_{\mathcal{V}}(g_{-1})$ be the projection of $g_{-1}$ onto $\mathcal{V}$ where $\mathcal{V} = \spn\setdef{g_{\mu}}{\mu\in [0,1]\cap\Gr(\Delta)}$.
	For any sufficiently small $\Delta>0$, we have
	\begin{align*}
	    \|g_{-1} - \Pi_{\mathcal{V}}(g_{-1})\|_2 < 2^{-\Omega((1/\Delta)\log(1/\Delta))}.
	\end{align*}
\end{lemma}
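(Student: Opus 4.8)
The plan is to upper-bound the projection error by producing one explicit element of $\mathcal V$ that is already exponentially close to $g_{-1}$; since $\Pi_{\mathcal V}(g_{-1})$ is the $L^2$-best approximation from $\mathcal V$, any such witness suffices. The witness comes from classical polynomial \emph{extrapolation}. Consider the curve $t\mapsto g_t\in L^2(\mathbb R)$, which is infinitely differentiable with $\frac{\dir^m}{\dir t^m}g_t=(-1)^m g_0^{(m)}(\,\cdot-t\,)$ (a Hermite function times a Gaussian, translated). We know this curve at the $N+1=\Theta(1/\Delta)$ grid nodes $t_0<\dots<t_N$ filling $[0,1]\cap\Gr(\Delta)$, and we want to predict its value at $t=-1$. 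Let $p:=\sum_{i=0}^N\ell_i(-1)\,g_{t_i}$, where $\ell_i(t)=\prod_{j\ne i}\frac{t-t_j}{t_i-t_j}$ are the Lagrange basis polynomials for $t_0,\dots,t_N$; then $p$ is a linear combination of the $g_{t_i}$, so $p\in\mathcal V$, and it is exactly the Lagrange interpolant of $t\mapsto g_t$ evaluated at $-1$.

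For the error, use the Newton remainder identity $g_{-1}-p=\big(\prod_{i=0}^N(-1-t_i)\big)\cdot F[t_0,\dots,t_N,-1]$, where $F[\,\cdot\,]$ is the (Banach-space valued) divided difference of $F(t):=g_t$. Two bounds then finish the job. First, $\big|\prod_{i=0}^N(-1-t_i)\big|=\prod_{i=0}^N(1+t_i)\le 2^{N+1}$, since each node lies in $[0,1]$. Second, by the Hermite–Genocchi representation of divided differences, $\|F[t_0,\dots,t_N,-1]\|_2\le\frac1{(N+1)!}\sup_s\|F^{(N+1)}(s)\|_2=\frac{\|g_0^{(N+1)}\|_2}{(N+1)!}$ (the last step because translation is an $L^2$-isometry), and Plancherel gives $\|g_0^{(m)}\|_2^2=\frac1{2\pi}\int\xi^{2m}e^{-\xi^2}\dir\xi=\frac{\Gamma(m+\frac12)}{2\pi}\le m!$. (Readers preferring to avoid vector-valued divided differences can instead apply the scalar Lagrange remainder to $t\mapsto g_t(x)$ for each fixed $x$, obtaining $|g_{-1}(x)-p(x)|\le\frac{2^{N+1}}{(N+1)!}\sup_{|\zeta|\le1}|g_0^{(N+1)}(x-\zeta)|$, then upgrade to $L^2$ via a one-line Sobolev-type estimate and Fubini, which costs only an absolute constant.)

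Combining, $\|g_{-1}-\Pi_{\mathcal V}(g_{-1})\|_2\le\|g_{-1}-p\|_2\le\frac{2^{N+1}}{(N+1)!}\sqrt{(N+1)!}=\frac{2^{N+1}}{\sqrt{(N+1)!}}\le (2\sqrt e)^{N+1}(N+1)^{-(N+1)/2}$, using $(N+1)!\ge((N+1)/e)^{N+1}$. Taking $\log_2$ yields $(N+1)\log_2(2\sqrt e)-\frac{N+1}{2}\log_2(N+1)$, whose second term dominates once $N+1$ is large; since $N+1=\Theta(1/\Delta)$, for every sufficiently small $\Delta$ this is at most $-\frac{N+1}{4}\log_2(N+1)\le-\Omega\big(\tfrac1\Delta\log\tfrac1\Delta\big)$, i.e. $\|g_{-1}-\Pi_{\mathcal V}(g_{-1})\|_2\le 2^{-\Omega((1/\Delta)\log(1/\Delta))}$.

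The only genuinely delicate point is the middle step: getting a \emph{dimension-free} constant when passing from the exact interpolation identity to the $L^2$ bound. This is what forces the sharp estimate $\|g_0^{(m)}\|_2\le\sqrt{m!}$ (any bound weaker by a factor super-exponential in $m$ would be swamped by the $2^{N+1}$ from the extrapolation distance), and, along the elementary pointwise route, is also why the dependence of the remainder point $\zeta_x$ on $x$ must be absorbed by a uniform Sobolev-type estimate rather than handled pointwise. Everything else is routine Stirling bookkeeping and pinning down the constant in ``sufficiently small $\Delta$''.
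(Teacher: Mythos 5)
Your proof is correct, and it takes a genuinely different route from the paper's. The paper computes the projection error essentially \emph{exactly}: it runs Gram--Schmidt on the $g_{i\Delta}$, derives a closed-form product formula for each coefficient $\inner{v}{\widetilde u_i}/\norm{\widetilde u_i}_2^2$ (Lemma~\ref{lem:recur}), and then evaluates $1-\norm{\Pi_{\mathcal V}(v)}_2^2/\norm{v}_2^2$ via a telescoping combinatorial identity (Lemma~\ref{lem:closedform}), ending with the bound $\beta_\Delta^2\le 1/(m+1)!$. You instead use only that $\Pi_{\mathcal V}$ is the $L^2$-best approximation and exhibit a single witness in $\mathcal V$ --- the Lagrange extrapolant of the smooth curve $t\mapsto g_t$ --- controlling its error by the divided-difference remainder, the node product $\prod_i(1+t_i)\le 2^{N+1}$, and $\norm{g_0^{(m)}}_2\le\sqrt{m!}$ via Plancherel; all of these steps check out, and the Stirling bookkeeping gives the same $2^{-\Omega((1/\Delta)\log(1/\Delta))}$ rate (interestingly, both arguments bottom out at the same quantity $1/\sqrt{(N+1)!}$ up to $2^{O(N)}$ slack). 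Your route is shorter, more modular, and generalizes immediately to approximating $g_a$ for any $a$ and to other kernels with non-vanishing, rapidly decaying Fourier transform; what the paper's heavier computation buys is explicit control of the \emph{coefficients} $\alpha_i$ of the actual projection, which is reused in Lemmas~\ref{lem:sumofcoeff} and~\ref{lem:diffofcoeff} to bound $C_{\Delta,+}+C_{\Delta,-}$ and $C_{\Delta,+}-C_{\Delta,-}-1$ for the hard-instance construction, so your lemma replaces Lemma~\ref{lem:gauss:approx} but not the surrounding machinery. One small correction: in your fallback pointwise route, passing from $\sup_{|\zeta|\le 1}|g_0^{(N+1)}(x-\zeta)|$ to an $L^2$ bound costs a factor growing like $(N+2)^{1/2}$ (e.g. via $\norm{\sup_{|\zeta|\le1}|h(\cdot-\zeta)|}_2^2\le\norm{h}_2^2+2\norm{h}_2\norm{h'}_2$), not an absolute constant; this is harmless since any $\poly(N)$ factor is absorbed by $2^{-\Omega(N\log N)}$, and your primary Bochner-valued Hermite--Genocchi route avoids the issue entirely.
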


The fact that translates of a Gaussian are dense in the set of square-integrable functions dates back to classical results such as Wiener's Tauberian theorem \citep{wiener1932tauberian,wiener1933fourier}, which says that a square-integrable function $g$ can be approximated by linear combinations of translations of another function $f$ so long as the Fourier transform of $f$ does not vanish. 
A more recent result can be found in \cite{calcaterra2008}; a detailed account of Tauberian theory can be found in \citet{korevaar2013tauberian}. 
So it is known that such approximations are possible,
however, existing results stop short of proving explicit approximation rates. 
With this in mind, Lemma~\ref{lem:gauss:approx} is a quantitative version of Wiener's theorem for the special case where both $f$ and  $g$ are Gaussians.\footnote{By suitably mofidying the proof of Lemma~\ref{lem:gauss:approx}, the function $g_{-1}$ may be replaced with any $g_{a}$.}
Determining a fast rate is a crucial step in our proof: For example, if this rate were ``merely'' exponential, the desired superpolynomial lower bound would not follow.

Now, we are ready to construct $f_1,f_2,f_1',f_2'$.
We split \eqref{eq:ex1} and \eqref{eq:ex2} into two parts: the Gaussians with positive coefficients and the Gaussians with negative coefficients.
Note that these two parts are two unnormalized interval Gaussians (the sums of the coefficients are not $1$ in magnitude and indeed are large).
Then, $f_1$ (resp. $f_2$) is defined to be the sum of the Gaussians with normalized positive coefficients in \eqref{eq:ex1} (resp. \eqref{eq:ex2}).
Also, $f_1'$ (resp. $f_2'$) is defined to be the sum of the Gaussians with normalized negative coefficients in \eqref{eq:ex1} (resp. \eqref{eq:ex2}) and $g_0$ (resp. $g_{-1}$) with the weight that balances the total weight in $f_1$ (resp. $f_2$).

In general, as $\Delta\rightarrow 0$, we have the following.
Let $C$ be the sum of the absolute value of the coefficients in the linear combination.
Then, the term $\norm{f_1 - f'_1}_1$ is bounded from below by $\sim 1/C$ which decays slower than the rate of $1/2^{O(1/\Delta)}$.
On the other hand, the term $\norm{f-f'}_1$ is bounded from above by $\sim \|g_{-1} - \Pi_{\mathcal{V}}(g_{-1})\|_2/C$ which decays faster than the rate of $1/2^{\Omega((1/\Delta)\log(1/\Delta))}$.
Another perspective is to view the approximation as the Hermite function expansions.
When $\Delta\to 0$, the subspace $\mathcal{V}$ is indeed the subspace spanned by the Hermite functions.
Moreover, if we express a Gaussian as the Hermite function expansion, the coefficient at the $m$-th term is indeed $\sim \frac{1}{\sqrt{m!}} = 1/2^{\Omega(m\log m)}$.

\subsection{Upper Bound} \label{sec:overview_upper}

For the upper bound, we propose an algorithm that uses sub-exponentially many samples to estimate each component, and thus breaks the exponential complexity of deconvolution and shows that our super-polynomial lower bound is tight.
The basic idea behind our algorithm is to re-formulate the learning problem as a linear system, approximate the terms in this system via sample quantities, and then carefully analyze the resulting approximation error. Although the overall approach is deceptively simple, obtaining precise bounds is highly nontrivial, and represents the main technical hurdle we overcome. 
In this section, we provide a high-level overview of the main steps and the challenges in the analysis.

Recall our problem definition from \eqref{eq:main:model}.
For simplicity, we will hereafter assume that $r_1=0$, $r_2=r$, and the lengths of the intervals are $1$, i.e.
\begin{align*}
    f=w_1f_1 + w_2f_2, \qquad \text{where $f_i\in\mathcal{G}_{I_i}$ and $I_1 = [r_1-\frac{1}{2},r_1+\frac{1}{2}]$, \quad $I_2=[r_2-\frac{1}{2},r_2+\frac{1}{2}]$.}
\end{align*}
The components $f_i$ in the mixture are interval Gaussians which are nonparametric, i.e. there may not be a finite number of parameters to describe each component.
Hence, we must first ask the following fundamental question: 
\emph{How} to appropriately discretize the infinite-dimensional problem \eqref{eq:main:model}?
A natural approach is to express $f_i$ as a sum of orthonormal functions such as the Hermite functions.
Namely, 
\begin{align*}
    f_i = \alpha_{i,0}\psi_{0,r_i} + \alpha_{i,1}\psi_{1,r_i} + \cdots
\end{align*}
where $\psi_{j,r_i}$ is the $j$-th Hermite function centered at $r_i$ and $\alpha_{i,j}=\inner{f_i}{\psi_{j,r_i}}$ for $i=1,2$ and $j\in\mathbb{N}_{0}$.
It turns out that $\alpha_{i,j}$ decays at a fast rate as $j\rightarrow \infty$, so that if we truncate this expansion, we have
\begin{align*}
    f_i \approx \alpha_{i,0}\psi_{0,r_i} + \alpha_{i,1}\psi_{1,r_i} + \cdots + \alpha_{i,\ell-1}\psi_{\ell-1,r_1} \numberthis \label{eq:fi_approx}
\end{align*}
for a sufficiently large integer $\ell$.
If we manage to give a good approximation on each $\alpha_{i,j}$ for $i=1,2$ and $j\in[\ell]$, we will be able to give a good approximation on $f_i$.
Intuitively, our goal now is to learn $2\ell$ parameters which are the first $2\ell$ coefficients in the Hermite function expansion of $f_i$.
Namely, we perform regression of $f_i$ onto the subspace spanned by the first $\ell$ Hermite functions centered at $r_1$ and $r_2$.
Note that $\ell$ is not an absolute constant and is supposed to tend to infinity as the error tends to $0$.
We preview that $\ell$ is indeed $\Theta(\log \frac{1}{\eps})$.

Let $\lambda_{i,j}$ 
be $w_i\alpha_{i,j}$ for $i=1,2$ and $j\in\mathbb{N}_{0}$.
Now, we plug $\eqref{eq:fi_approx}$ into $f$ and we have
\begin{align*}
    f & \approx \lambda_{1,0}\psi_{0,r_1} + \lambda_{1,1}\psi_{1,r_1} + \cdots + \lambda_{1,\ell-1}\psi_{\ell-1,r_1} \\
    & \qquad+ \lambda_{2,0}\psi_{0,r_2} + \lambda_{2,1}\psi_{1,r_2} + \cdots + \lambda_{2,\ell-1}\psi_{\ell-1,r_2}.
\end{align*}
Furthermore, if we project $f$ onto $\psi_{j,r_i}$ for $i=1,2$ and $j\in[\ell]$, we have
\begin{align*}
\begin{aligned}
    \inner{f}{\psi_{j,r_i}} & \approx \lambda_{1,0}\inner{\psi_{0,r_1}}{\psi_{j,r_i}} + \lambda_{1,1}\inner{\psi_{1,r_1}}{\psi_{j,r_i}} + \cdots + \lambda_{1,\ell-1}\inner{\psi_{\ell-1,r_1}}{\psi_{j,r_i}} \\
    & \qquad+ \lambda_{2,0}\inner{\psi_{0,r_2}}{\psi_{j,r_i}} + \lambda_{2,1}\inner{\psi_{1,r_2}}{\psi_{j,r_i}} + \cdots + \lambda_{2,\ell-1}\inner{\psi_{\ell-1,r_2}}{\psi_{j,r_i}}.
\end{aligned}
\end{align*}
This can be written as a system of linear equations
\begin{align*}
    A\lambda \approx y
\end{align*}
where
\begin{align*}
    &\text{$A$ is the $2\ell$-by-$2\ell$ matrix whose entries are given by } \inner{\psi_{j_1,r_{i_1}}}{\psi_{j_2,r_{i_2}}}  \text{ for $i_1,i_2=1,2$; $j_1,j_2\in[\ell]$,} \\
    &\text{$\lambda$ is the $2\ell$-dimensional vector whose entries are given by } w_i\alpha_{i,j} \text{ for $i=1,2$; $j\in[\ell]$ and} \\
	&\text{$y$ is the $2\ell$-dimensional vector whose entries are given by } \inner{f}{\psi_{j,r_i}} \text{ for $i=1,2$ ; $j\in[\ell]$.}
\end{align*}
For example, when $\ell=2$, 
\begin{align*}
    A &= \begin{bmatrix}
        1 & 0 & \inner{\psi_{0,r_2}}{\psi_{0,r_1}} & \inner{\psi_{1,r_2}}{\psi_{0,r_1}}\\
        0 & 1 & \inner{\psi_{0,r_2}}{\psi_{1,r_1}} &  \inner{\psi_{1,r_2}}{\psi_{1,r_1}}\\
        \inner{\psi_{0,r_1}}{\psi_{0,r_2}} & \inner{\psi_{1,r_1}}{\psi_{0,r_2}} & 1 & 0\\
        \inner{\psi_{0,r_1}}{\psi_{1,r_2}}& \inner{\psi_{1,r_1}}{\psi_{1,r_2}} & 0 & 1
    \end{bmatrix}\\
    \lambda & =\begin{bmatrix}
        w_1\alpha_{1,0} & w_1\alpha_{1,1} & w_2\alpha_{2,0} & w_2\alpha_{2,1}
    \end{bmatrix}^\top \\
    y & =\begin{bmatrix}
        \inner{f}{\psi_{0,r_1}} & \inner{f}{\psi_{1,r_1}} & \inner{f}{\psi_{0,r_2}} & \inner{f}{\psi_{1,r_2}}
    \end{bmatrix}^\top 
\end{align*}
One might notice that we still do not know the entries of $y$:
To estimate these values, we will use the samples to first find an approximation $f'$ of $f$ that $\norm{f'-f}_2$ is small.
Once we have $f'$ we can approximate the entries of $y$ by $\inner{f'}{\psi_{j,r_i}}$ for $i=1,2$ and $j\in[\ell]$.

Let $y'$ be the resulting $2\ell$-dimensional vector that approximates $y$ (i.e. by replacing each entry $\inner{f}{\psi_{j,r_i}}$ with $\inner{f'}{\psi_{j,r_i}}$).
Recall that the matrix $A$ defined above is already known (i.e. independent of the data), so we may consider the following system of linear equations in the indeterminate $x$:
\begin{align*}
    Ax =  y'.
\end{align*}
Let $\widehat \lambda$ be the solution of this system of linear equations.
From the above discussion, we expect that the solution of this system, $\widehat \lambda = A^{-1} y'$, is close to $\lambda$.
A useful perspective is to view this as analyzing the condition number of the system $Ax =  y'$.

In the preceding construction, we incurred two sources of error in approximating $ \lambda$ with $\widehat \lambda$:
The error from truncating the Hermite function expansion of $f_i$ (the truncation error) and the error from estimating the terms $\inner{f}{\psi_{j,r_i}}$ (the approximation error).
Formally, we have
\begin{align}
\label{eq:approx:error}
    \widehat \lambda - \lambda  = \underbrace{(A^{-1}( y' - y))}_{:=\mathcal{E}_a} + \underbrace{(A^{-1}(y - A\lambda))}_{:=\mathcal{E}_t}
\end{align}
where $\mathcal{E}_t$ is the truncation error and $\mathcal{E}_a$ is the approximation error.
Namely, we need to show that each entry of $\mathcal{E}_t,\mathcal{E}_a$ is small.
Observe that while the approximation error $\mathcal{E}_a$ depends on both $\ell$ and $n$, the truncation error $\mathcal{E}_t$ is independent of $n$ (i.e. it depends only on $\ell$).

To bound these errors, we observe that these errors have the term $A^{-1}$.
We can argue that $\det(A) \approx 0$ as follows.
Let $u$ be the $2\ell$ dimensional vector
\begin{align*}
    u=\begin{bmatrix}
\inner{\psi_{0,0}}{\psi_{0,r}} & \inner{\psi_{1,0}}{\psi_{0,r}} & \cdots & \inner{\psi_{\ell-1,0}}{\psi_{0,r}} & -1 & 0 & \cdots & 0 \end{bmatrix}^\top.
\end{align*}
By direct calculation, $u^\top A u = 1-\sum_{j=0}^{\ell-1}\inner{\psi_{j,0}}{\psi_{0,r}}^2 = \sum_{j=\ell}^{\infty} e^{-\frac{1}{2}r^2}\frac{1}{j!}(\frac{r^2}{2})^j$ where the last equality is from the fact that $\inner{\psi_{j,0}}{\psi_{0,r}}^2 = e^{-\frac{1}{2}r^2}\frac{1}{j!}(\frac{r^2}{2})^j$.
Hence, $u^\top Au \to \infty$ as $\ell\to \infty$ which further implies the smallest eigenvalue of $A$ tends to $0$.

For the approximation error $\mathcal{E}_a$, one can intuitively think that even though the term $A^{-1}$ may blow up the error for large $\ell$ the error can still be bounded as long as $y'$ and $y$ are close enough to offset the effect of $\det A$ being close to $0$. Crucially, we can make the difference $y' - y$ small by increasing the number of samples without increasing $\ell$. In other words, $\mathcal{E}_a$ can be controlled simply by adding more samples.
On the other hand, the truncation error $\mathcal{E}_t$ is purely dictated by $\ell$ and hence is independent of the number of samples.
When $\ell$ is getting larger, the effect of $\det A$ being close to $0$ makes the analysis challenging and hence nontrivial insights are needed. 

To obtain the desired bound for the truncation error, observe that we can explicitly express \eqref{eq:fi_approx} as 
\begin{align*}
    y - A\lambda = \sum_{j=\ell}^\infty\lambda_{1,j}z_{1,j} + \sum_{j=\ell}^\infty\lambda_{2,j}z_{2,j}
\end{align*}
where 
\begin{align*}
    z_{1,j} &= \begin{bmatrix}0 & \cdots & 0 & \inner{\psi_{j,r_1}}{\psi_{0,r_2}} & \cdots& \inner{\psi_{j,r_1}}{\psi_{\ell-1,r_2}} \end{bmatrix}^\top \\
    z_{2,j} &= \begin{bmatrix}\inner{\psi_{j,r_2}}{\psi_{0,r_1}} & \cdots & \inner{\psi_{j,r_2}}{\psi_{\ell-1,r_1}} & 0 & \cdots & 0 \end{bmatrix}^\top.
\end{align*}
As we mentioned before, $\lambda_{i,j} = w_i\alpha_{i,j}$ decays at a fast rate as $j\rightarrow\infty$.
More precisely, we can show that $\abs{\lambda_{i,j}} \leq w_i\cdot \frac{1}{\sqrt{j!}}\cdot 2^{O(j)}$.
On the other hand, the entries of $A^{-1}z_{i,j}$ are roughly bounded by $\sqrt{j!}\cdot \frac{1}{r^{\Omega(j)}}$.
Hence, we conclude that the entries of $\lambda_{i,j}A^{-1}z_{i,j}$ are bounded by $w_i\cdot\frac{1}{2^{\Omega(j)}}$ for a large constant $r$.
Finally, the entries of $\mathcal{E}_t$ are bounded by $\frac{1}{2^{\Omega(\ell)}}$.
To obtain the desired bound for the approximation error, one can show that the entries of $\mathcal{E}_a$ are roughly bounded by $2^{O(\ell\log \ell)}\norm{f'-f}_2$.

This shows that the entries of $\widehat \lambda - \lambda$ are bounded by $\frac{1}{2^{\Omega(\ell)}} + 2^{O(\ell\log \ell)}\cdot\norm{f'-f}_2$.
Recall that $\lambda_{i,j} = w_i\alpha_{i,j}$ and hence the function 
\begin{align*}
    \widetilde f_i = \sum_{j=0}^{\ell-1} \widehat \lambda_{i,j} \psi_{j,r_i}
\end{align*}
is expected to be close to $w_if_i$ as long as $\ell$ is large enough and $f'$ is close to $f$.
More precisely, by picking $\ell = \Theta(\log \frac{1}{\eps})$ and approximating $f$ such that $\norm{f'-f}_2 \leq \eps^{\Theta(\log\log\frac{1}{\eps})}$, we have
\begin{align*}
    \norm{\widetilde f_i  - w_i f_i}_1 \leq \eps^{\Omega(1)}.
\end{align*}
Note that $\widetilde  f_i$ may not be a positive function since it is just a linear combination of Hermite functions.
It is easy to show that, assuming we are able to calculate an integral accurately, 
\begin{align*}
    \norm{\widehat f_i - f_i}_1 \leq \eps.
\end{align*}
where $\widehat f_i = \frac{(\widetilde f_i)_+}{\norm{(\widetilde f_i)_+}_1}$ and $(\cdot)_+ = \max\{0,\cdot\}$.

To analyze the sample complexity, it boils down to the question of how many samples drawn from the distribution whose pdf is $f$ are required to have a good approximation on $f$.
It is known that, for any $\eps'>0$, we only need to use $\poly(\frac{1}{\eps'})$ samples to return a function $f'$ such that $\norm{f'-f}_2\leq \eps'$.
By substituting $\eps'=\eps^{\Theta(\log\log\frac{1}{\eps})}$, the sample complexity of our algorithm is $(\frac{1}{\eps})^{\Theta(\log\log\frac{1}{\eps})}$.

\section{Conclusion}\label{sec:conclusion}

In this paper, we studied the problem of learning mixture components from a nonparametric mixture model $f = w_1f_1+w_2f_2$ where each component $f_i$ is an interval Gaussian.
Given samples drawn from a nonparametric mixture model of this form, we are interested in the sample complexity for estimating the components $f_i$.
Roughly speaking, our guarantee lies in between parameter learning and density estimation.
We showed that super-polynomially many samples are required to achieve this guarantee.
To the best of our knowledge, no such nontrivial lower bounds have been established previously.
Moreover, we proposed an algorithm that uses sub-exponentially many samples drawn from the nonparametric mixture model to estimate the components.
We can conclude that the optimal sample complexity of this problem properly lies in between polynomial and exponential, which is not common in learning theory.

\bibliography{ref}

\begin{thebibliography}{71}
\providecommand{\natexlab}[1]{#1}
\providecommand{\url}[1]{\texttt{#1}}
\expandafter\ifx\csname urlstyle\endcsname\relax
  \providecommand{\doi}[1]{doi: #1}\else
  \providecommand{\doi}{doi: \begingroup \urlstyle{rm}\Url}\fi

\bibitem[Acharya et~al.(2017)Acharya, Diakonikolas, Li, and
  Schmidt]{acharya2017sample}
J.~Acharya, I.~Diakonikolas, J.~Li, and L.~Schmidt.
\newblock Sample-optimal density estimation in nearly-linear time.
\newblock In \emph{Proceedings of the Twenty-Eighth Annual ACM-SIAM Symposium
  on Discrete Algorithms}, pages 1278--1289. SIAM, 2017.

\bibitem[Achlioptas and McSherry(2005)]{achlioptas2005}
D.~Achlioptas and F.~McSherry.
\newblock On spectral learning of mixtures of distributions.
\newblock In \emph{International Conference on Learning Theory}, pages
  458--469. Springer, 2005.

\bibitem[Allman et~al.(2009)Allman, Matias, and Rhodes]{allman2009}
E.~S. Allman, C.~Matias, and J.~A. Rhodes.
\newblock Identifiability of parameters in latent structure models with many
  observed variables.
\newblock \emph{Annals of Statistics}, pages 3099--3132, 2009.

\bibitem[Anandkumar et~al.(2012{\natexlab{a}})Anandkumar, Hsu, Huang, and
  Kakade]{anandkumar2012learning}
A.~Anandkumar, D.~Hsu, F.~Huang, and S.~Kakade.
\newblock Learning mixtures of tree graphical models.
\newblock volume~2, pages 1052--1060, 2012{\natexlab{a}}.
\newblock cited By 15.

\bibitem[Anandkumar et~al.(2012{\natexlab{b}})Anandkumar, Hsu, and
  Kakade]{anandkumar2012mixture}
A.~Anandkumar, D.~Hsu, and S.~Kakade.
\newblock A method of moments for mixture models and hidden markov models.
\newblock \emph{Journal of Machine Learning Research}, 23:\penalty0
  33.1--33.34, 2012{\natexlab{b}}.
\newblock cited By 26.

\bibitem[Anandkumar et~al.(2013)Anandkumar, Hsu, Javanmard, and
  Kakade]{anandkumar2013}
A.~Anandkumar, D.~Hsu, A.~Javanmard, and S.~Kakade.
\newblock Learning linear {B}ayesian networks with latent variables.
\newblock In \emph{Proceedings of The 30th International Conference on Machine
  Learning}, pages 249--257, 2013.

\bibitem[Anandkumar et~al.(2014)Anandkumar, Ge, Hsu, Kakade, and
  Telgarsky]{anandkumar2014tensor}
A.~Anandkumar, R.~Ge, D.~Hsu, S.~Kakade, and M.~Telgarsky.
\newblock Tensor decompositions for learning latent variable models.
\newblock \emph{Journal of Machine Learning Research}, 15:\penalty0 2773--2832,
  2014.
\newblock cited By 170.

\bibitem[Anandkumar et~al.(2015)Anandkumar, Hsu, Janzamin, and
  Kakade]{anandkumar2015}
A.~Anandkumar, D.~Hsu, M.~Janzamin, and S.~Kakade.
\newblock When are overcomplete topic models identifiable? uniqueness of tensor
  tucker decompositions with structured sparsity.
\newblock \emph{Journal of Machine Learning Research}, 16:\penalty0 2643--2694,
  2015.

\bibitem[Aptekarev et~al.(2012)Aptekarev, Dehesa, S{\'a}nchez-Moreno, and
  Tulyakov]{aptekarev2012asymptotics}
A.~Aptekarev, J.~Dehesa, P.~S{\'a}nchez-Moreno, and D.~Tulyakov.
\newblock Asymptotics of lp-norms of hermite polynomials and r{\'e}nyi entropy
  of rydberg oscillator states.
\newblock \emph{Contemp. Math}, 578:\penalty0 19--29, 2012.

\bibitem[Aptekarev et~al.(1995)Aptekarev, Buyarov, and
  Degeza]{aptekarev1995asymptotic}
A.~I. Aptekarev, V.~S. Buyarov, and I.~Degeza.
\newblock Asymptotic behavior of the lp-norms and the entropy for general
  orthogonal polynomials.
\newblock \emph{Sbornik: Mathematics}, 82\penalty0 (2):\penalty0 373, 1995.

\bibitem[Aragam and Yang(2023)]{aragam2023uniform}
B.~Aragam and R.~Yang.
\newblock Uniform consistency in nonparametric mixture models.
\newblock \emph{The Annals of Statistics}, 51\penalty0 (1):\penalty0 362--390,
  2023.

\bibitem[Aragam et~al.(2020)Aragam, Dan, Xing, and Ravikumar]{aragam2018npmix}
B.~Aragam, C.~Dan, E.~P. Xing, and P.~Ravikumar.
\newblock Identifiability of nonparametric mixture models and bayes optimal
  clustering.
\newblock \emph{Ann. Statist.}, 48\penalty0 (4):\penalty0 2277--2302, 2020.
\newblock ISSN 0090-5364.
\newblock \doi{10.1214/19-AOS1887}.
\newblock arXiv:1802.04397.

\bibitem[Balakrishnan et~al.(2017)Balakrishnan, Wainwright, and
  Yu]{balakrishnan2017statistical}
S.~Balakrishnan, M.~Wainwright, and B.~Yu.
\newblock Statistical guarantees for the em algorithm: From population to
  sample-based analysis.
\newblock \emph{Annals of Statistics}, 45\penalty0 (1):\penalty0 77--120, 2017.
\newblock \doi{10.1214/16-AOS1435}.
\newblock cited By 15.

\bibitem[Belomestny et~al.(2021)Belomestny, Moulines, Naumov, Puchkin, and
  Samsonov]{belomestny2021rates}
D.~Belomestny, E.~Moulines, A.~Naumov, N.~Puchkin, and S.~Samsonov.
\newblock Rates of convergence for density estimation with gans.
\newblock \emph{arXiv preprint arXiv:2102.00199}, 2021.

\bibitem[Bhaskara et~al.(2014)Bhaskara, Charikar, Moitra, and
  Vijayaraghavan]{bhaskara2014smoothed}
A.~Bhaskara, M.~Charikar, A.~Moitra, and A.~Vijayaraghavan.
\newblock Smoothed analysis of tensor decompositions.
\newblock In \emph{Proceedings of the forty-sixth annual ACM symposium on
  Theory of computing}, pages 594--603, 2014.

\bibitem[Bhaskara et~al.(2015)Bhaskara, Suresh, and
  Zadimoghaddam]{bhaskara2015sparse}
A.~Bhaskara, A.~Suresh, and M.~Zadimoghaddam.
\newblock Sparse solutions to nonnegative linear systems and applications.
\newblock In \emph{Artificial Intelligence and Statistics}, pages 83--92, 2015.

\bibitem[Biau et~al.(2021)Biau, Sangnier, and Tanielian]{biau2021some}
G.~Biau, M.~Sangnier, and U.~Tanielian.
\newblock Some theoretical insights into wasserstein gans.
\newblock \emph{Journal of Machine Learning Research}, 2021.

\bibitem[Cai et~al.(2017)Cai, Ma, and Zhang]{cai2017chime}
T.~T. Cai, J.~Ma, and L.~Zhang.
\newblock Chime: Clustering of high-dimensional gaussian mixtures with em
  algorithm and its optimality.
\newblock 2017.

\bibitem[Calcaterra and Boldt(2008)]{calcaterra2008}
C.~Calcaterra and A.~Boldt.
\newblock Approximating with gaussians.
\newblock \emph{arXiv preprint arXiv:0805.3795}, 2008.

\bibitem[Canonne(2022)]{canonne2022topics}
C.~L. Canonne.
\newblock Topics and techniques in distribution testing.
\newblock 2022.

\bibitem[Carroll and Hall(1988)]{carroll1988optimal}
R.~J. Carroll and P.~Hall.
\newblock Optimal rates of convergence for deconvolving a density.
\newblock \emph{Journal of the American Statistical Association}, 83\penalty0
  (404):\penalty0 1184--1186, 1988.

\bibitem[Chaganty and Liang(2013)]{chaganty2013}
A.~T. Chaganty and P.~Liang.
\newblock Spectral experts for estimating mixtures of linear regressions.
\newblock In \emph{ICML}, pages 1040--1048, 2013.

\bibitem[Chan et~al.(2014)Chan, Diakonikolas, Servedio, and
  Sun]{chan2014efficient}
S.-O. Chan, I.~Diakonikolas, R.~A. Servedio, and X.~Sun.
\newblock Efficient density estimation via piecewise polynomial approximation.
\newblock In \emph{Proceedings of the forty-sixth annual ACM symposium on
  Theory of computing}, pages 604--613, 2014.

\bibitem[Chaudhuri and Rao(2008)]{chaudhuri2008learning}
K.~Chaudhuri and S.~Rao.
\newblock Learning mixtures of product distributions using correlations and
  independence.
\newblock In R.~A. Servedio and T.~Zhang, editors, \emph{21st Annual Conference
  on Learning Theory - {COLT} 2008, Helsinki, Finland, July 9-12, 2008}, pages
  9--20. Omnipress, 2008.

\bibitem[Chen et~al.(2014)Chen, Yi, and Caramanis]{chen2014convex}
Y.~Chen, X.~Yi, and C.~Caramanis.
\newblock A convex formulation for mixed regression with two components:
  Minimax optimal rates.
\newblock In \emph{Conference on Learning Theory}, pages 560--604. PMLR, 2014.

\bibitem[Dasgupta(1999)]{dasgupta1999learning}
S.~Dasgupta.
\newblock Learning mixtures of gaussians.
\newblock In \emph{40th Annual Symposium on Foundations of Computer Science
  (Cat. No. 99CB37039)}, pages 634--644. IEEE, 1999.

\bibitem[Daskalakis and Kamath(2014)]{daskalakis2014faster}
C.~Daskalakis and G.~Kamath.
\newblock Faster and sample near-optimal algorithms for proper learning
  mixtures of gaussians.
\newblock In \emph{Conference on Learning Theory}, pages 1183--1213, 2014.

\bibitem[Diakonikolas and Kane(2020)]{diakonikolas2020small}
I.~Diakonikolas and D.~M. Kane.
\newblock Small covers for near-zero sets of polynomials and learning latent
  variable models.
\newblock In \emph{2020 IEEE 61st Annual Symposium on Foundations of Computer
  Science (FOCS)}, pages 184--195. IEEE, 2020.

\bibitem[Ding et~al.(2020)Ding, Zou, Wang, Shahrampour, and Tuo]{ding2020high}
L.~Ding, L.~Zou, W.~Wang, S.~Shahrampour, and R.~Tuo.
\newblock High-dimensional non-parametric density estimation in mixed smooth
  sobolev spaces.
\newblock \emph{arXiv preprint arXiv:2006.03696}, 2020.

\bibitem[Doss et~al.(2020)Doss, Wu, Yang, and Zhou]{doss2020optimal}
N.~Doss, Y.~Wu, P.~Yang, and H.~H. Zhou.
\newblock Optimal estimation of high-dimensional gaussian mixtures.
\newblock \emph{arXiv preprint arXiv:2002.05818}, 2020.

\bibitem[Elmore et~al.(2005)Elmore, Hall, and Neeman]{elmore2005}
R.~Elmore, P.~Hall, and A.~Neeman.
\newblock An application of classical invariant theory to identifiability in
  nonparametric mixtures.
\newblock In \emph{Annales de l'institut Fourier}, volume~55, pages 1--28,
  2005.

\bibitem[Fan(1991)]{fan1991optimal}
J.~Fan.
\newblock On the optimal rates of convergence for nonparametric deconvolution
  problems.
\newblock \emph{The Annals of Statistics}, pages 1257--1272, 1991.

\bibitem[Feldman et~al.(2006)Feldman, Servedio, and O'Donnell]{feldman2006pac}
J.~Feldman, R.~A. Servedio, and R.~O'Donnell.
\newblock Pac learning axis-aligned mixtures of gaussians with no separation
  assumption.
\newblock In \emph{International Conference on Computational Learning Theory},
  pages 20--34. Springer, 2006.

\bibitem[Gassiat et~al.(2013)Gassiat, Cleynen, and Robin]{gassiat2013finite}
E.~Gassiat, A.~Cleynen, and S.~Robin.
\newblock Finite state space non parametric hidden markov models are in general
  identifiable.
\newblock \emph{arXiv preprint arXiv:1306.4657}, 2013.

\bibitem[Gassiat et~al.(2020)Gassiat, Corff, and
  Leh{\'e}ricy]{gassiat2020deconvolution}
E.~Gassiat, S.~L. Corff, and L.~Leh{\'e}ricy.
\newblock Deconvolution with unknown noise distribution is possible for
  multivariate signals.
\newblock \emph{arXiv preprint arXiv:2006.14226}, 2020.

\bibitem[Gordon et~al.(2021)Gordon, Mazaheri, Rabani, and
  Schulman]{gordon2021source}
S.~Gordon, B.~H. Mazaheri, Y.~Rabani, and L.~Schulman.
\newblock Source identification for mixtures of product distributions.
\newblock In \emph{Conference on Learning Theory}, pages 2193--2216. PMLR,
  2021.

\bibitem[Gordon and Schulman(2021)]{gordon2021hadamard}
S.~L. Gordon and L.~J. Schulman.
\newblock Hadamard extensions and the identification of mixtures of product
  distributions.
\newblock \emph{arXiv preprint arXiv:2101.11688}, 2021.

\bibitem[Hall and Zhou(2003)]{hall2003}
P.~Hall and X.-H. Zhou.
\newblock Nonparametric estimation of component distributions in a multivariate
  mixture.
\newblock \emph{Annals of Statistics}, pages 201--224, 2003.

\bibitem[Hall et~al.(2005)Hall, Neeman, Pakyari, and Elmore]{hall2005mixture}
P.~Hall, A.~Neeman, R.~Pakyari, and R.~Elmore.
\newblock Nonparametric inference in multivariate mixtures.
\newblock \emph{Biometrika}, 92\penalty0 (3):\penalty0 667--678, 2005.

\bibitem[Hand and Joshi(2018)]{hand2018convex}
P.~Hand and B.~Joshi.
\newblock A convex program for mixed linear regression with a recovery
  guarantee for well-separated data.
\newblock \emph{Information and Inference: A Journal of the IMA}, 7\penalty0
  (3):\penalty0 563--579, 2018.

\bibitem[Hardt and Price(2015)]{hardt2015tight}
M.~Hardt and E.~Price.
\newblock Tight bounds for learning a mixture of two gaussians.
\newblock In \emph{Proceedings of the forty-seventh annual ACM symposium on
  Theory of computing}, pages 753--760, 2015.

\bibitem[Kannan et~al.(2008)Kannan, Salmasian, and Vempala]{kannan2008}
R.~Kannan, H.~Salmasian, and S.~Vempala.
\newblock The spectral method for general mixture models.
\newblock \emph{SIAM Journal on Computing}, 38\penalty0 (3):\penalty0
  1141--1156, 2008.

\bibitem[Kivva et~al.(2021)Kivva, Rajendran, Ravikumar, and
  Aragam]{kivva2021learning}
B.~Kivva, G.~Rajendran, P.~Ravikumar, and B.~Aragam.
\newblock Learning latent causal graphs via mixture oracles.
\newblock \emph{Advances in Neural Information Processing Systems}, 34, 2021.

\bibitem[Kivva et~al.(2022)Kivva, Rajendran, Ravikumar, and
  Aragam]{kivva2022identifiability}
B.~Kivva, G.~Rajendran, P.~Ravikumar, and B.~Aragam.
\newblock Identifiability of deep generative models without auxiliary
  information.
\newblock \emph{Advances in Neural Information Processing Systems},
  35:\penalty0 15687--15701, 2022.

\bibitem[Koltchinskii(2000)]{koltchinskii2000empirical}
V.~I. Koltchinskii.
\newblock Empirical geometry of multivariate data: a deconvolution approach.
\newblock \emph{Annals of statistics}, pages 591--629, 2000.

\bibitem[Korevaar(2013)]{korevaar2013tauberian}
J.~Korevaar.
\newblock \emph{Tauberian theory: a century of developments}, volume 329.
\newblock Springer Science \& Business Media, 2013.

\bibitem[Kumar and Kannan(2010)]{kumar2010clustering}
A.~Kumar and R.~Kannan.
\newblock Clustering with spectral norm and the k-means algorithm.
\newblock In \emph{2010 IEEE 51st Annual Symposium on Foundations of Computer
  Science}, pages 299--308. IEEE, 2010.

\bibitem[Kwon et~al.(2020)Kwon, Ho, and Caramanis]{kwon2020minimax}
J.~Kwon, N.~Ho, and C.~Caramanis.
\newblock On the minimax optimality of the em algorithm for learning
  two-component mixed linear regression.
\newblock \emph{arXiv preprint arXiv:2006.02601}, 2020.

\bibitem[Li and Schmidt(2015)]{li2015nearly}
J.~Li and L.~Schmidt.
\newblock A nearly optimal and agnostic algorithm for properly learning a
  mixture of k gaussians, for any constant k.
\newblock \emph{arXiv preprint arXiv:1506.01367}, 2015.

\bibitem[Li et~al.(2015)Li, Rabani, Schulman, and Swamy]{li2015learning}
J.~Li, Y.~Rabani, L.~J. Schulman, and C.~Swamy.
\newblock Learning arbitrary statistical mixtures of discrete distributions.
\newblock In \emph{Proceedings of the forty-seventh annual ACM symposium on
  Theory of computing}, pages 743--752, 2015.

\bibitem[Meister(2009)]{meister2009deconvolution}
A.~Meister.
\newblock \emph{Deconvolution problems in nonparametric statistics}, volume
  193.
\newblock Springer Science \& Business Media, 2009.

\bibitem[Mixon et~al.(2017{\natexlab{a}})Mixon, Villar, and Ward]{mixon2017}
D.~G. Mixon, S.~Villar, and R.~Ward.
\newblock Clustering subgaussian mixtures by semidefinite programming.
\newblock \emph{Information and Inference: A Journal of the IMA}, 6\penalty0
  (4):\penalty0 389--415, 2017{\natexlab{a}}.

\bibitem[Mixon et~al.(2017{\natexlab{b}})Mixon, Villar, and
  Ward]{mixon2017clustering}
D.~G. Mixon, S.~Villar, and R.~Ward.
\newblock Clustering subgaussian mixtures by semidefinite programming.
\newblock \emph{Information and Inference: A Journal of the IMA}, 6\penalty0
  (4):\penalty0 389--415, 2017{\natexlab{b}}.

\bibitem[Moitra and Valiant(2010)]{moitra2010settling}
A.~Moitra and G.~Valiant.
\newblock Settling the polynomial learnability of mixtures of gaussians.
\newblock In \emph{2010 IEEE 51st Annual Symposium on Foundations of Computer
  Science}, pages 93--102. IEEE, 2010.

\bibitem[Mossel and Roch(2005)]{mossel2005learning}
E.~Mossel and S.~Roch.
\newblock Learning nonsingular phylogenies and hidden markov models.
\newblock In \emph{Proceedings of the thirty-seventh annual ACM symposium on
  Theory of computing}, pages 366--375, 2005.

\bibitem[Nguyen(2013)]{nguyen2013convergence}
X.~Nguyen.
\newblock Convergence of latent mixing measures in finite and infinite mixture
  models.
\newblock \emph{Annals of Statistics}, 41\penalty0 (1):\penalty0 370--400,
  2013.

\bibitem[Pearson(1894)]{pearson1894contributions}
K.~Pearson.
\newblock Contributions to the mathematical theory of evolution.
\newblock \emph{Philosophical Transactions of the Royal Society of London. A},
  185:\penalty0 71--110, 1894.

\bibitem[Rabani et~al.(2014)Rabani, Schulman, and Swamy]{rabani2014learning}
Y.~Rabani, L.~J. Schulman, and C.~Swamy.
\newblock Learning mixtures of arbitrary distributions over large discrete
  domains.
\newblock In \emph{Proceedings of the 5th conference on Innovations in
  theoretical computer science}, pages 207--224, 2014.

\bibitem[Regev and Vijayaraghavan(2017)]{regev2017learning}
O.~Regev and A.~Vijayaraghavan.
\newblock On learning mixtures of well-separated gaussians.
\newblock In \emph{2017 IEEE 58th Annual Symposium on Foundations of Computer
  Science (FOCS)}, pages 85--96. IEEE, 2017.

\bibitem[Shi et~al.(2009)Shi, Belkin, and Yu]{shi2009}
T.~Shi, M.~Belkin, and B.~Yu.
\newblock Data spectroscopy: Eigenspaces of convolution operators and
  clustering.
\newblock \emph{Annals of Statistics}, pages 3960--3984, 2009.

\bibitem[Stefanski and Carroll(1990)]{stefanski1990deconvolving}
L.~A. Stefanski and R.~J. Carroll.
\newblock Deconvolving kernel density estimators.
\newblock \emph{Statistics}, 21\penalty0 (2):\penalty0 169--184, 1990.

\bibitem[Suresh et~al.(2014)Suresh, Orlitsky, Acharya, and
  Jafarpour]{suresh2014near}
A.~T. Suresh, A.~Orlitsky, J.~Acharya, and A.~Jafarpour.
\newblock Near-optimal-sample estimators for spherical gaussian mixtures.
\newblock In \emph{Advances in Neural Information Processing Systems}, pages
  1395--1403, 2014.

\bibitem[Teicher(1967)]{teicher1967}
H.~Teicher.
\newblock Identifiability of mixtures of product measures.
\newblock \emph{The Annals of Mathematical Statistics}, 38\penalty0
  (4):\penalty0 1300--1302, 1967.

\bibitem[Uppal et~al.()Uppal, Singh, and P{\'{o}}czos]{uppal2019nonparametric}
A.~Uppal, S.~Singh, and B.~P{\'{o}}czos.
\newblock Nonparametric density estimation {\&} convergence rates for gans
  under besov {IPM} losses.
\newblock In H.~M. Wallach, H.~Larochelle, A.~Beygelzimer,
  F.~d'Alch{\'{e}}{-}Buc, E.~B. Fox, and R.~Garnett, editors, \emph{Advances in
  Neural Information Processing Systems 32: Annual Conference on Neural
  Information Processing Systems 2019, NeurIPS 2019, December 8-14, 2019,
  Vancouver, BC, Canada}, pages 9086--9097.

\bibitem[Vandermeulen and Scott(2019)]{vandermeulen2019operator}
R.~A. Vandermeulen and C.~D. Scott.
\newblock An operator theoretic approach to nonparametric mixture models.
\newblock \emph{Annals of Statistics}, 47\penalty0 (5):\penalty0 2704--2733,
  2019.

\bibitem[Vempala and Wang(2004)]{vempala2004spectral}
S.~Vempala and G.~Wang.
\newblock A spectral algorithm for learning mixture models.
\newblock \emph{Journal of Computer and System Sciences}, 68\penalty0
  (4):\penalty0 841--860, 2004.

\bibitem[Wiener(1932)]{wiener1932tauberian}
N.~Wiener.
\newblock Tauberian theorems.
\newblock \emph{Annals of mathematics}, pages 1--100, 1932.

\bibitem[Wiener(1933)]{wiener1933fourier}
N.~Wiener.
\newblock \emph{The Fourier integral and certain of its applications}.
\newblock Cambridge University Press, 1933.

\bibitem[Wu and Xie(2018)]{wu2018improved}
X.~Wu and C.~Xie.
\newblock Improved algorithms for properly learning mixture of gaussians.
\newblock In \emph{National Conference of Theoretical Computer Science}, pages
  8--26. Springer, 2018.

\bibitem[Wu and Yang(2018)]{wu2018optimal}
Y.~Wu and P.~Yang.
\newblock Optimal estimation of gaussian mixtures via denoised method of
  moments.
\newblock \emph{arXiv preprint arXiv:1807.07237}, 2018.

\bibitem[Zhang(1990)]{zhang1990fourier}
C.-H. Zhang.
\newblock Fourier methods for estimating mixing densities and distributions.
\newblock \emph{Annals of Statistics}, 18\penalty0 (2):\penalty0 806--831,
  1990.
\newblock ISSN 00905364.

\end{thebibliography}
\bibliographystyle{abbrvnat}

\appendix

\section{Preliminaries}\label{sec:prelim}

For any function $p:\mathbb{R}\rightarrow \mathbb{R}$ that $\int_{-\infty}^\infty \abs{p(x)} \dir x < \infty$, the $L^1$ norm of $p$ is defined as $\norm{p}_1=\int_{-\infty}^\infty \abs{p(x)} \dir x$.
For any two square-integrable functions $p,q:\mathbb{R}\rightarrow\mathbb{R}$, the inner product of $p,q$ is defined as $\inner{p}{q} = \int_{-\infty}^{\infty} p(x)q(x) \dir x$ and, for any square-integrable function $q$, the $L^2$ norm of $q$ is defined as $\norm{q}_2 = \sqrt{\inner{q}{q}}$.
We use $\mathbb{N}_{0}$ to denote the set of nonnegative integers and, for any $m\in\mathbb{N}_{0}$, $[m]$ to denote the set $\{0,1,\cdots,m-1\}$.
For any $\mu\in\mathbb{R}$, we let $g_{\mu}$ be the pdf of the standard Gaussian distribution, i.e. 
\begin{align*}
	g_\mu(x) &= \frac{1}{\sqrt{2\pi}} e^{-\frac{1}{2}(x-\mu)^2} \qquad\text{for all $x\in\mathbb{R}$.}
\end{align*}

\paragraph{Gram-Schmidt Process}
Recall that the key observation in our lower bound proof is expressing a Gaussian as a linear combination of Gaussians centered at points inside an interval.
When we analyze the relevant convergence rate, we treat these Gaussians as vectors and often need to express the linear combination by the orthonormal basis for the subspace spanned by these vectors.
Hence, Gram-Schmidt process is a process to construct the orthonormal basis for the subspace spanned by a set of given vectors and works as follows.

Suppose $u_0,\cdots,u_{m}$ are $m+1$ linearly independent vectors.
Define the vector $\tilde u_i$ for $i\in [m+1]$ as follows.
\begin{align*}
	\widetilde u_0  = u_0 \qquad \text{and}\qquad \widetilde u_{i} = u_{i} - \sum_{j=0}^{i-1}\frac{\inner{u_{i}}{\widetilde u_j}}{\inner{\widetilde u_j}{\widetilde u_j}}\widetilde u_j \qquad\text{for $i>0$}
\end{align*}
Then, these vectors are orthogonal, i.e. $\inner{\tilde u_i}{\tilde u_j}=0$ for any $i,j\in[m+1]$.
Note that they are not normalized, i.e. $\norm{\widetilde u_i}_2\neq 1$.

\paragraph{Information Theoretic Lower Bounds}
In our lower bound proof, we construct two distributions that the $L^1$-norm of the difference of them is small.
To make the connection with the sample complexity, we will invoke the following standard information theoretic lower bound (e.g. see \cite{canonne2022topics}). 
This is an immediate corollary of the well-known Neyman-Pearson lemma.

\begin{lemma}\label{lem:infothe}
    Let $f$ and $f'$ be two pdfs and $\delta = \norm{f-f'}_1$.
    Suppose we are given $n$ samples drawn from either $f$ or $f'$.
    If $n<C\cdot \frac{1}{\delta}$ where $C$ is an absolute constant, no algorithm taking these $n$ samples as the input can test which pdf the samples are drawn from with probability $1-\frac{1}{100}$. 
\end{lemma}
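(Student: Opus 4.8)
The plan is to deduce this from Le Cam's two-point method, i.e.\ from the Neyman--Pearson lemma, together with tensorization of total variation distance. Writing $d_{\mathrm{TV}}(p,q)=\tfrac12\norm{p-q}_1$, the first step is to recall the standard fact that, given $n$ i.i.d.\ samples, deciding whether the data came from $f$ or from $f'$ is the same as deciding between the product measures $f^{\otimes n}$ and $(f')^{\otimes n}$, and that the Bayes-optimal rule is the likelihood-ratio test, whose error under the uniform prior equals exactly $\tfrac12\bigl(1-d_{\mathrm{TV}}(f^{\otimes n},(f')^{\otimes n})\bigr)$; in particular, for \emph{any} test the worst-case error over the two hypotheses is at least $\tfrac12\bigl(1-d_{\mathrm{TV}}(f^{\otimes n},(f')^{\otimes n})\bigr)$. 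Hence it suffices to show that $d_{\mathrm{TV}}(f^{\otimes n},(f')^{\otimes n})$ is bounded away from $1$ when $n<C/\delta$.

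The second step is the tensorization bound $d_{\mathrm{TV}}(f^{\otimes n},(f')^{\otimes n})\le n\cdot d_{\mathrm{TV}}(f,f')=\tfrac{n\delta}{2}$. I would prove this by a hybrid argument: set $P_k:=f^{\otimes k}\otimes(f')^{\otimes(n-k)}$, so that $P_0=(f')^{\otimes n}$ and $P_n=f^{\otimes n}$, and apply the triangle inequality $d_{\mathrm{TV}}(P_0,P_n)\le\sum_{k=1}^{n}d_{\mathrm{TV}}(P_{k-1},P_k)$. Since $P_{k-1}$ and $P_k$ share all coordinates except the $k$-th, and $d_{\mathrm{TV}}$ is unchanged when both measures carry a common product factor, $d_{\mathrm{TV}}(P_{k-1},P_k)=d_{\mathrm{TV}}(f,f')=\tfrac\delta2$, which gives the claim. (One could equally invoke subadditivity of squared Hellinger distance, but the hybrid bound is cleanest.)

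Combining the two steps: if $n<C/\delta$ then $d_{\mathrm{TV}}(f^{\otimes n},(f')^{\otimes n})<C/2$, so the worst-case error of every test is at least $\tfrac12(1-C/2)$. Choosing the absolute constant $C$ small enough that $\tfrac12(1-C/2)>\tfrac1{100}$ --- any $C<\tfrac{49}{25}$ works, e.g.\ $C=1$ with room to spare --- we conclude that no test can be correct with probability $1-\tfrac1{100}$ on both hypotheses, which is exactly the assertion.

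There is no genuine obstacle here: as the text notes, the lemma is an immediate corollary of Neyman--Pearson. The only points requiring care are (i) pinning down the precise sense in which ``testing with probability $1-\tfrac1{100}$'' fails --- i.e.\ phrasing it as a lower bound on the minimax (or uniform-prior Bayes) error, so that the two-point inequality applies directly --- and (ii) keeping the constant in the tensorization step explicit so that the final threshold $C$ is a clean absolute constant independent of $f$, $f'$, $\delta$, and $n$.
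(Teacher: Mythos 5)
Your proof is correct and follows exactly the route the paper intends: the paper gives no written proof, stating only that the lemma is an immediate corollary of the Neyman--Pearson lemma, and your argument (likelihood-ratio test optimality plus the hybrid tensorization bound $d_{\mathrm{TV}}(f^{\otimes n},(f')^{\otimes n})\le n\,d_{\mathrm{TV}}(f,f')$) is the standard way to make that corollary explicit. The constant bookkeeping ($C<\tfrac{49}{25}$) is also right.
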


\paragraph{Hermite Functions}
Let $h_j(x)$ be the (physicist's) Hermite polynomials, i.e.
\begin{align*}
	h_j(x) & = (-1)^je^{x^2}\frac{\dir^j}{\dir \xi^j}e^{-\xi^2}\bigg|_{\xi=x} \qquad\text{for all $x\in\mathbb{R}$}
\end{align*}
and $ \psi_j(x)$ be the (physicist's) Hermite functions, i.e.
\begin{align*}
	 \psi_j(x) & = (-1)^j\frac{1}{\sqrt{2^jj!\sqrt{\pi}}}h_j(x)e^{-\frac{1}{2}x^2}  \qquad\text{for all $x\in\mathbb{R}$.}
\end{align*}
The Hermite functions are orthonormal, i.e. $\norm{ \psi_{i}}_2 = 1$ and $\inner{ \psi_i}{ \psi_j}=0$ for any $i,j\in\mathbb{N}_{0}$.

For any $\mu\in\mathbb{R}$ and any $j\in\mathbb{N}_0$, we use $ \psi_{j,\mu}(\cdot)$ as a shorthand for $ \psi_{j}(\cdot - \mu)$.
Note that $ \psi_{j,0} =  \psi_j$.
It is known that, for any $\mu\in\mathbb{R}$ and any $i,j\in\mathbb{N}_0$, the inner product of $ \psi_{i,0}$ and $ \psi_{j,\mu}$ is 
\begin{align*}
	\MoveEqLeft\inner{ \psi_{i,0}}{ \psi_{j,\mu}} \\
	& = 
	\int_{-\infty}^{\infty}  \psi_{i,0}(x) \psi_{j,\mu}(x)\dir x \\
	& =
	\sum_{k=0}^{\min\{i,j\}}\bigg(e^{-\frac{1}{8}{\mu}^2}(-1)^{i}\sqrt{\frac{k!}{i!}}{i \choose k} (\frac{\mu}{\sqrt{2}})^{i-k}\bigg)\bigg(e^{-\frac{1}{8}{\mu}^2}(-1)^{j}\sqrt{\frac{k!}{j!}}{j\choose k} (\frac{-{\mu}}{\sqrt{2}})^{j-k}\bigg). \numberthis\label{eq:hermite_inner}
\end{align*}
In particular, we have
\begin{align*}
	\inner{ \psi_{j,0}}{g_{\mu}} = \frac{(-1)^j}{\sqrt{2^{j+1}j!\sqrt{\pi}}}e^{-\frac{1}{4}\mu^2} \mu^j \numberthis\label{eq:hermite_inner_g}
\end{align*}
since $g_\mu = \frac{1}{\sqrt{2\pi}}e^{-\frac{1}{2}(x-\mu)^2} = \frac{1}{\sqrt{2\sqrt{\pi}}} \psi_{0,\mu}$.

We will express our density functions as Hermite function expansions and truncate the expansion.
Lemma \ref{lem:hermite_l1} gives the decay rate for  $\norm{ \psi_j}_1$ as $j\rightarrow \infty$.
Hence, it helps controlling the error of the tail expansion.

\begin{lemma}[\cite{aptekarev1995asymptotic,aptekarev2012asymptotics}]\label{lem:hermite_l1}
	For any $j\in\mathbb{N}_{0}$, we have $\norm{ \psi_{j}}_1 = O(j^{1/4})$.
\end{lemma}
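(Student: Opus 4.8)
The plan is to cut the real line at the classical turning points $\pm\sqrt{2j+1}$ of the $j$-th Hermite function and estimate $\norm{\psi_j}_1$ on the ``bulk'' interval $[-a,a]$ and on the two tails $\abs{x}>a$ separately, for $a=a_j:=\sqrt{2j+1}+1=\Theta(\sqrt j)$. This is essentially the Plancherel--Rotach dichotomy (an oscillatory band of width $\Theta(\sqrt j)$ together with exponentially small tails), and for the order bound $O(j^{1/4})$ only soft ingredients are needed: orthonormality $\norm{\psi_j}_2=1$, the Hermite ODE, the location of the zeros of $h_j$, and an elementary maximum-principle comparison. The sharp leading constant is what would require the finer asymptotics of \cite{aptekarev1995asymptotic,aptekarev2012asymptotics}.

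For the bulk I would just invoke Cauchy--Schwarz and $\norm{\psi_j}_2=1$:
\[
\int_{-a}^{a}\abs{\psi_j(x)}\,\dir x\le(2a)^{1/2}\Big(\int_{-a}^{a}\psi_j(x)^2\,\dir x\Big)^{1/2}\le(2a)^{1/2}=O(j^{1/4}),
\]
the structural point being that the oscillatory band has width $\Theta(\sqrt j)$ yet carries $L^2$-mass at most $1$. For the tail on $[a,\infty)$, note $\psi_j$ solves $\psi_j''=(x^2-(2j+1))\psi_j$ with $x^2-(2j+1)\ge a^2-(2j+1)\ge1$ there, and has no zero past $a$ since all zeros of $h_j$ lie in $(-\sqrt{2j+1},\sqrt{2j+1})$; hence $\psi_j$ has constant sign on $[a,\infty)$ and, being in $L^2$, tends to $0$. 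Comparing $\psi_j$ with $\phi(x):=\abs{\psi_j(a)}e^{-(x-a)}$ via the maximum principle — the difference $\psi_j-\phi$ vanishes at $a$ and at $\infty$ and cannot have a positive interior maximum $x^\ast$, since there $(\psi_j-\phi)''(x^\ast)=(x^{\ast2}-(2j+1))\psi_j(x^\ast)-\phi(x^\ast)\ge\psi_j(x^\ast)-\phi(x^\ast)>0$ — yields $\abs{\psi_j(x)}\le\abs{\psi_j(a)}e^{-(x-a)}$, so $\int_a^\infty\abs{\psi_j}\le\abs{\psi_j(a)}$. I would then bound the single value $\abs{\psi_j(a)}$ by writing $\psi_j(a)^2=-\int_a^\infty(\psi_j^2)'=-2\int_a^\infty\psi_j\psi_j'$, applying Cauchy--Schwarz, and using $\norm{\psi_j'}_2^2=j+\tfrac12$ (a standard consequence of the raising/lowering relations), which gives $\abs{\psi_j(a)}=O(j^{1/4})$. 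The tail on $(-\infty,-a]$ is identical by symmetry, and summing the three pieces gives $\norm{\psi_j}_1=O(j^{1/4})$.

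The genuinely delicate point is the bulk step: the drop of the exponent from $1/2$ to $1/4$ comes entirely from the tension between the width $\Theta(\sqrt j)$ of the oscillatory band and the unit $L^2$-mass it supports, and Cauchy--Schwarz is lossless only up to constants. If one instead wanted $\lim_j j^{-1/4}\norm{\psi_j}_1$ exactly, one would have to establish the uniform envelope $\abs{\psi_j(x)}\lesssim(2j+1-x^2)^{-1/4}$ on the oscillatory region up to an $O(j^{-1/6})$-wide Airy window around each turning point (inside which separately $\abs{\psi_j}=O(j^{-1/12})$, contributing $O(j^{-1/6}\cdot j^{-1/12})=o(1)$), and integrate the envelope after the substitution $x=\sqrt{2j+1}\,t$ to obtain $(2j+1)^{1/4}\int_0^1(1-t^2)^{-1/4}\,\dir t$, which supplies both the power and the constant. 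Making that envelope bound uniform in $j$ all the way up to the Airy window is the hard part, and is exactly what the cited asymptotic results deliver.
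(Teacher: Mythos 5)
Your argument is correct, but note that the paper does not prove Lemma~\ref{lem:hermite_l1} at all: it is quoted directly from \cite{aptekarev1995asymptotic,aptekarev2012asymptotics}, where the precise Plancherel--Rotach asymptotics of the $L^p$ norms of orthonormal Hermite functions are established. Your proposal therefore replaces a citation with a self-contained elementary proof of the order bound, and the steps check out: the bulk estimate $\int_{-a}^{a}\abs{\psi_j}\le(2a)^{1/2}\norm{\psi_j}_2=O(j^{1/4})$ is a clean application of Cauchy--Schwarz; on $[a,\infty)$ with $a=\sqrt{2j+1}+1$ the potential $x^2-(2j+1)$ is indeed at least $1$, the zeros of $h_j$ all lie in $(-\sqrt{2j+1},\sqrt{2j+1})$ so $\psi_j$ has constant sign there, and the comparison with $\abs{\psi_j(a)}e^{-(x-a)}$ is a valid Sturm-type barrier argument (a positive interior maximum of the difference would force a nonpositive second derivative, contradicting your computation, and the supremum is attained because the difference vanishes at $a$ and at infinity); finally $\psi_j(a)^2=-2\int_a^\infty\psi_j\psi_j'\le 2\norm{\psi_j}_2\norm{\psi_j'}_2=2\sqrt{j+1/2}$ closes the tail at $O(j^{1/4})$. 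What your route buys is independence from the cited asymptotic machinery, at the cost of the sharp constant; as you correctly observe, the exact limit $j^{-1/4}\norm{\psi_j}_1$ requires the uniform envelope $\abs{\psi_j(x)}\lesssim(2j+1-x^2)^{-1/4}$ plus the Airy analysis at the turning points, which is what the references supply. Since the paper only ever uses the lemma as an $O(\cdot)$ bound (in Lemma~\ref{lem:tail}), your elementary version suffices for all downstream uses; the only cosmetic caveat is that $O(j^{1/4})$ should be read as $O((j+1)^{1/4})$ so that the $j=0$ case is not vacuous.
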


\paragraph{Systems of Linear Equations}
We mentioned that our algorithm involves solving a system of linear equations and analyzing its solution.
Cramer's rule provides an explicit formula for the solution of a system of linear equations whenever the system has a unique solution.
\begin{lemma}[Cramer's rule]\label{lem:cramer}
	Consider the following system of $n$ linear equations with $n$ variables.
	\begin{align*}
		Ax=b
	\end{align*}
	where $A$ is a $n$-by-$n$ matrix with nonzero determinant and $b$ is a $n$ dimensional vector.
	Then, the solution of this system $\widehat x = A^{-1}b$ satisfies that the entry of $\widehat x$ indexed at $i\in[n]$ is 
	\begin{align*}
		\det(A^{i\to b}) / \det(A)
	\end{align*}
	where $A^{i\to b}$ is the same matrix as $A$ except that the column indexed at $i$ is $b$.
	
\end{lemma}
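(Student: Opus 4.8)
The plan is to obtain the formula directly from the multilinearity and alternating properties of the determinant in the \emph{columns} of $A$, which avoids any explicit computation with cofactors.

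First I would set up notation: write $A = [\,a_1 \mid a_2 \mid \cdots \mid a_n\,]$, where $a_k \in \mathbb{R}^n$ is the $k$-th column of $A$. Since $\det(A) \neq 0$ by hypothesis, $A$ is invertible, so the system $Ax = b$ has the unique solution $\widehat x = A^{-1}b$; writing this out column-wise gives the identity $b = A\widehat x = \sum_{k=1}^{n} \widehat x_k\, a_k$.

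Next, fix an index $i \in [n]$. The matrix $A^{i \to b}$ is obtained from $A$ by replacing its $i$-th column with $b = \sum_{k=1}^{n} \widehat x_k\, a_k$. Since the determinant is a linear function of the $i$-th column when the remaining columns are held fixed, I would expand
\begin{align*}
	\det\!\big(A^{i \to b}\big) = \sum_{k=1}^{n} \widehat x_k \, \det\!\big(A^{i \to a_k}\big).
\end{align*}
For each $k \neq i$, the matrix $A^{i \to a_k}$ has two equal columns — its $i$-th column and its $k$-th column are both $a_k$ — so by the alternating property $\det(A^{i \to a_k}) = 0$. The only surviving term is $k = i$, and $A^{i \to a_i} = A$, so $\det(A^{i \to b}) = \widehat x_i \det(A)$. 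Dividing by $\det(A) \neq 0$ yields $\widehat x_i = \det(A^{i \to b})/\det(A)$, which is the claim.

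I do not expect any genuine obstacle here, since this is a classical identity and every step is a standard property of determinants; the only points to state with care are the existence and uniqueness of the solution (immediate from $\det(A) \neq 0$) and the vanishing of the repeated-column determinants above. An equivalent route, if one prefers, is to start from the adjugate identity $A^{-1} = \det(A)^{-1}\,\mathrm{adj}(A)$ and recognize the $i$-th coordinate of $\mathrm{adj}(A)\,b$ as the Laplace expansion of $\det(A^{i \to b})$ along its $i$-th column — this is the same computation repackaged, and I would pick whichever presentation reads more cleanly in context.
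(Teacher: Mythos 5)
Your proof is correct: it is the standard argument for Cramer's rule via multilinearity of the determinant in the $i$-th column together with the alternating property, and every step (existence/uniqueness from $\det(A)\neq 0$, the expansion $\det(A^{i\to b})=\sum_k \widehat x_k \det(A^{i\to a_k})$, and the vanishing of the repeated-column terms) is sound. The paper states this lemma as a classical fact without proof, so there is no authorial argument to compare against; your write-up would serve as a complete justification.
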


\paragraph{Determinants}
In our analysis, we often encounter determinants due to the application of Cramer's rule (Lemma \ref{lem:cramer}).
The Cauchy-Binet formula is a formula for the determinant of a matrix that can be expressed as a product of two matrices of transpose shapes.
Indeed, the entries of matrices in our analysis are often the inner products of two vectors.
\begin{lemma}[Cauchy–Binet formula]\label{lem:cb_formula}
	Let $b_0,b_1,\cdots$ and $c_0,c_1,\cdots$ be two infinite sequences of $n$ dimensional vectors.
	For any (ordered) subset $S=\{s_0<s_1<\cdots<s_{n-1}\}$ of $\mathbb{N}_{0}$, let $B_S$ (resp. $C_S$) be the matrix that the column indexed at $i$ is $b_{s_{i}}$ (resp. $c_{s_{i}}$) for $i\in[n]$.
	The determinant of $A = \sum_{i=0}^{\infty} b_ic_i^\top $ is 
	\begin{align*}
		\det A = \sum_{S\in\mathbb{N}_{0},\abs{S}=n} \big(\det B_S\big) \cdot \big(\det C_S\big)
	\end{align*}
	if the RHS converges.
\end{lemma}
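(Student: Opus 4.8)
The plan is to reduce to the classical finite Cauchy--Binet formula by a truncation-and-limit argument. For each $N\in\mathbb{N}_0$, set $A_N := \sum_{i=0}^{N} b_i c_i^\top$. Since the series defining $A=\sum_{i=0}^\infty b_ic_i^\top$ is assumed to exist, each entry of $A_N$ converges to the corresponding entry of $A$, so $A_N\to A$ entrywise as $N\to\infty$.

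First I would record the finite case. For fixed $N$, write $A_N = B^{(N)}C^{(N)}$, where $B^{(N)}$ is the $n\times(N+1)$ matrix with columns $b_0,\dots,b_N$ and $C^{(N)}$ is the $(N+1)\times n$ matrix with rows $c_0^\top,\dots,c_N^\top$. Expanding $\det A_N=\sum_{\sigma}\mathrm{sgn}(\sigma)\prod_{j=1}^n (A_N)_{j,\sigma(j)}$, substituting $(A_N)_{j,k}=\sum_{i=0}^N (b_i)_j(c_i)_k$, interchanging the finite sums, and noting that any index tuple $(i_1,\dots,i_n)$ with a repeated coordinate contributes a determinant with two equal rows and hence vanishes, one regroups the surviving tuples by their underlying $n$-element set $S\subseteq\{0,\dots,N\}$ to obtain
\[
    \det A_N \;=\; \sum_{\substack{S\subseteq\{0,\dots,N\}\\ |S|=n}} \big(\det B_S\big)\big(\det C_S\big).
\]
This is the standard multilinearity-of-the-determinant proof of Cauchy--Binet and is the routine part of the argument.

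Then I would let $N\to\infty$. On the left, the determinant is a polynomial in the matrix entries, hence continuous, so $\det A_N\to\det A$. On the right, as $N$ grows the index set $\{S\subseteq\{0,\dots,N\}:|S|=n\}$ exhausts all $n$-element subsets of $\mathbb{N}_0$; thus $\det A_N$ is exactly the partial sum of $\sum_{S\subseteq\mathbb{N}_0,\,|S|=n}(\det B_S)(\det C_S)$ along the exhaustion $\{\max S\le N\}$. Granted that this series converges, its partial sums tend to the full sum, and equating the two limits gives $\det A=\sum_{S}(\det B_S)(\det C_S)$, as claimed.

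The one point needing care --- and the only real obstacle --- is the meaning of ``the RHS converges'': one needs it to converge strongly enough (absolutely, or at least independently of the chosen exhaustion) that the particular partial sums $\det A_N$ tend to the full sum. In every application of this lemma in the paper the series is in fact absolutely convergent, which is verified directly there, so this hypothesis is harmless; under absolute convergence the rearrangement step above is automatic.
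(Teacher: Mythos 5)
The paper states this lemma as a standard preliminary (in the ``Determinants'' paragraph of the appendix) and gives no proof of its own, so there is nothing to compare your argument against; on its own terms, your proof is correct. The truncation $A_N=\sum_{i=0}^N b_ic_i^\top$, the classical finite Cauchy--Binet identity $\det A_N=\sum_{S\subseteq\{0,\dots,N\},\,|S|=n}(\det B_S)(\det C_S)$ via multilinearity, and the passage to the limit using continuity of the determinant on $n\times n$ matrices are all sound. You are also right to flag the one genuinely delicate point: since $\det A_N$ is the partial sum along the specific exhaustion $\{\max S\le N\}$, the hypothesis ``the RHS converges'' must be read as unconditional (e.g.\ absolute) convergence for the limit of these particular partial sums to equal the full sum. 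That reading is the intended one, and in every invocation in the paper (Lemmas \ref{lem:det_t_numor:detail}, \ref{lem:det_a_numor:detail}, \ref{lem:det_part_2}) the relevant series are sums of squares or are dominated by such sums via Cauchy--Schwarz, so absolute convergence holds and your argument applies without modification.
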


\section{Proof for the Lower Bound}

In this section, we will prove Theorem~\ref{thm:main_lower}.
Recall that we have the following notations.
$g_\mu$ is the standard Gaussian centered at $\mu$.
Given any given interval $I$, recall that $\mathcal{G}_I$ is the set of pdfs $\setdef{f\in \mathcal{P}_{\mathbb{R}}}{f = \int_{\mu\in I}\nu(\mu)g_{\mu}\dir\mu \text{ where $\nu\in\mathcal{P}_I$}}$ which we call each element in this set an interval Gaussian.
From now on, we will consider $L^2$ norm instead of $L^1$ norm for analytical convenience, and resolve this issue later.
Given any $\Delta>0$, recall that $\Gr(\Delta)$ is the set $\setdef{j\cdot \Delta}{\text{$j$ is an integer}}$.
Namely, it is an infinite grid of cell width $\Delta$ on the real line.
Without loss of generality, we consider $\Delta$ where $\frac{1}{\Delta}$ is an integer and denote the number $\frac{1}{\Delta}$ by $m$.
Here, we abuse the notations that we treat $g_\mu$ as a vector and recall that the inner product $\inner{g_{\mu_1}}{g_{\mu_2}} = \int_{-\infty}^\infty g_{\mu_1}(x)g_{\mu_2}(x)\dir x$ for any $\mu_1,\mu_2\in\mathbb{R}$ in the usual way.
By a straightforward calculation, we have $\inner{g_{\mu_1}}{g_{\mu_2}} = \frac{1}{\sqrt{4\pi}}e^{-\frac{1}{4}(\mu_1-\mu_2)^2}$ for any $\mu_1,\mu_2\in\mathbb{R}$.

As we mentioned before, the key observation in our construction is the fact that any Gaussian can be approximated by a linear combination of Gaussians centered at points inside an interval.
The approximation is based on the projection of a Gaussian onto a specific subspace.
For simplicity, we will illustrate how to express $g_{-1}$, the Gaussian centered at $-1$, as a linear combination of the Gaussians centered at points inside $[0,1]$.
It is easy to generalize our argument to the case of arbitrary centers and intervals.

We break the proof down into the following steps.
In Section \ref{sec:construct}, we will explicitly construct two mixtures of two interval Gaussians, $f$ and $f'$.
In Section \ref{sec:analysis}, we will analyze the convergence rate of $\norm{f-f'}_2$, $\norm{f_1-f'_1}_2$ and $\norm{f_2-f'_2}_2$.
Finally, in Section \ref{sec:thm}, we will state and prove our main theorem (Theorem \ref{thm:main_lower}).
We will further defer the detailed calculations to Section \ref{sec:detail}.

\subsection{Construction}\label{sec:construct}

Let $v$ be $g_{-1}$ and $u_{i}$ be $g_{i\cdot \Delta}$ for $i\in[m+1]$.
Let $\mathcal{V}$ be the subspace $\spn\{g_\mu\mid\mu\in \Gr(\Delta)\cap [0,1]\}$.
Let $\Pi_{\mathcal{V}}(v)$ be the projection of $v$ onto the subspace $\mathcal{V}$.
We can express $\Pi_{\mathcal{V}}(v)$ as 
\begin{align*}
	\Pi_{\mathcal{V}}(v) = \sum_{i=0}^{m} \alpha_{i} u_{i}
\end{align*}
for some coefficients $\alpha_i$.
Recall that it is a linear combination and therefore each $\alpha_i$ can be negative and large in magnitude.
We now split this linear combination into two parts: the Gaussians with positive coefficients and the Gaussians with negative coefficients.
Let $J_+ = \setdef{i}{\alpha_i\geq 0}$ and $J_-=\setdef{i}{\alpha_i<0}$.
We rewrite $\Pi_{\mathcal{V}}(v)$ as
\begin{align*}
	\Pi_{\mathcal{V}}(v) = \sum_{i\in J_+} \alpha_{i} u_{i} - \sum_{i\in J_-}(-\alpha_i) u_i.
\end{align*}

By symmetry, let $v'=g_0$ and $u'_i=g_{-1-i\cdot \Delta}$ for $i\in[m+1]$.
Let $\mathcal{V}'$ be the subspace $\spn\setdef{g_\mu}{\mu\in\Gr(\Delta)\cap[-2,-1]}$. 
We rewrite $\Pi_{\mathcal{V}'}(v')$ as
\begin{align*}
	\Pi_{\mathcal{V}'}(v') = \sum_{i\in J_+} \alpha_{i} u'_{i} - \sum_{i\in J_-}(-\alpha_i) u'_i.
\end{align*}

Let 
\begin{align*}
	C_{\Delta,+} = \sum_{i\in J_+} \alpha_i,
	\quad
	C_{\Delta,-} = -\sum_{i\in J_-} \alpha_i,
\end{align*}
i.e. $C_{\Delta,+}$ is the sum of the positive coefficients and $-C_{\Delta,-}$ is the sum of the negative coefficients.
Now, we are ready to construct two mixtures of two interval Gaussians.
They are 
\begin{align*}
	f = \frac{1}{2}\underbrace{\frac{1}{C_{\Delta,+}}\bigg(\sum_{i\in J_+} \alpha_{i} u_{i}\bigg)}_{\in\mathcal{G}_{[0,1]}} + \frac{1}{2}\underbrace{\frac{1}{C_{\Delta,+}}\bigg(\sum_{i\in J_+} \alpha_{i} u'_{i}\bigg)}_{\in\mathcal{G}_{[-2,-1]}}
\end{align*}
and
\begin{align*}
	f' & = \frac{1}{2}\underbrace{\frac{1}{C_{\Delta,+}}\bigg(\sum_{i\in J_-}(-\alpha_i) u_i +(C_{\Delta,+}-C_{\Delta,-}) u_0\bigg)}_{\in\mathcal{G}_{[0,1]}} + \frac{1}{2}\underbrace{\frac{1}{C_{\Delta,+}}\bigg(\sum_{i\in J_-}(-\alpha_i) u'_i +(C_{\Delta,+}-C_{\Delta,-}) u'_0\bigg)}_{\in\mathcal{G}_{[-2,-1]}}
\end{align*}
To ease the notations, we define
\begin{align*}
	f_1 &= \frac{1}{C_{\Delta,+}}\bigg(\sum_{i\in J_+} \alpha_{i} u_{i}\bigg), \qquad
	f'_1 = \frac{1}{C_{\Delta,+}}\bigg(\sum_{i\in J_-}(-\alpha_i) u_i +(C_{\Delta,+}-C_{\Delta,-}) u_0\bigg) \\
	f_2 &= \frac{1}{C_{\Delta,+}}\bigg(\sum_{i\in J_+} \alpha_{i} u'_{i}\bigg), \qquad
	f'_2 = \frac{1}{C_{\Delta,+}}\bigg(\sum_{i\in J_-}(-\alpha_i) u'_i +(C_{\Delta,+}-C_{\Delta,-}) u'_0\bigg)
\end{align*}
and therefore we have
\begin{align*}
	f = \frac{1}{2}f_1 + \frac{1}{2}f_2, \qquad f'=\frac{1}{2}f'_1 + \frac{1}{2}f'_2 \numberthis \label{eqn:hardinstance}
\end{align*}

\subsection{Analysis of the Convergence Rate}\label{sec:analysis}

Recall that our objective is to show that $\norm{f-f'}_2$ is small while $\norm{f_1-f'_1}_2$ and $\norm{f_2-f'_2}_2$ are large.
We will now examine the terms $f_1-f'_1$, $f_2-f'_2$ and $f-f'$.
For the term $f_1-f'_1$,
\begin{align*}
	f_1-f'_1
	& =
	\frac{1}{C_{\Delta,+}} \sum_{i\in J_+} \alpha_{i} u_{i} - \frac{1}{C_{\Delta,+}}\bigg(\sum_{i\in J_-}(-\alpha_i) u_i + (C_{\Delta,+}-C_{\Delta,-})u_0\bigg) \\
	& =
	\frac{1}{C_{\Delta,+}}\Pi_{\mathcal{V}}(v) - \frac{C_{\Delta,+}-C_{\Delta,-}}{C_{\Delta,+}}u_0
\end{align*}
since the definition of $\Pi_{\mathcal{V}}(v)$ is $\sum_{i\in J_+} \alpha_{i} u_{i} - \sum_{i\in J_-}(-\alpha_i) u_i$.
Similarly,
\begin{align*}
	f_2-f'_2
	& =
	\frac{1}{C_{\Delta,+}}\Pi_{\mathcal{V'}}(v')- \frac{C_{\Delta,+}-C_{\Delta,-}}{C_{\Delta,+}}u'_0.
\end{align*}
Now, we analyze $\norm{f_1-f'_1}_2$.
\begin{align*}
	\norm{f_1-f'_1}_2
	& =
	\norm{\frac{1}{C_{\Delta,+}}\Pi_{\mathcal{V}}(v) - \frac{C_{\Delta,+}-C_{\Delta,-}}{C_{\Delta,+}}u_0 }_2 \\
	& =
	\norm{\frac{1}{C_{\Delta,+}}(\Pi_{\mathcal{V}}(v)-v) - \frac{C_{\Delta,+}-C_{\Delta,-}-1}{C_{\Delta,+}}u_0 + \frac{1}{C_{\Delta,+}}(v-u_0)}_2\\
	& \geq
	\frac{1}{C_{\Delta,+}}\norm{v-u_0}_2 - \frac{1}{C_{\Delta,+}}\norm{\Pi_{\mathcal{V}}(v)-v}_2 - \frac{\abs{C_{\Delta,+}-C_{\Delta,-}-1}}{C_{\Delta,+}}\norm{u_0}_2\numberthis\label{eqn:compdiff}
\end{align*}
For the term $f-f'$, we have
\begin{align*}
	f-f'
	& =
	\frac{1}{2C_{\Delta,+}}\Pi_{\mathcal{V}}(v) - \frac{C_{\Delta,+}-C_{\Delta,-}}{2C_{\Delta,+}}u_0 + \frac{1}{2C_{\Delta,+}}\Pi_{\mathcal{V'}}(v')- \frac{C_{\Delta,+}-C_{\Delta,-}}{2C_{\Delta,+}}u'_0\\
	& =
	\frac{1}{2C_{\Delta,+}}\bigg((\Pi_{\mathcal{V}}(v) - u'_0) + (\Pi_{\mathcal{V'}}(v') - u_0) - (C_{\Delta,+}-C_{\Delta,-}-1)(u_0+u'_0)\bigg)
\end{align*}
Since $\Pi_{\mathcal{V}}(v)$ is the projection of $v$ onto the subspace $\mathcal{V}$, we have $\norm{\Pi_{\mathcal{V}}(v)}_2^2+\norm{\Pi_{\mathcal{V}}(v) - v}_2^2 = \norm{v}_2^2$ by Pythagorean theorem.
Let 
\begin{align*}
	\beta_\Delta := \frac{\norm{v-\Pi_{\mathcal{V}}(v)}_2}{\norm{v}_2}.
\end{align*}
The term $\beta_\Delta$ defines how close is $v$ to $\Pi_{\mathcal{V}}(v)$.
By symmetry,  $\norm{v'-\Pi_{\mathcal{V'}}(v')}_2=\beta_\Delta\norm{v'}_2$.
Hence,
\begin{align*}
	\norm{f-f'}_2
	& \leq
	\frac{1}{2C_{\Delta,+}}\bigg(\norm{\Pi_{\mathcal{V}}(v) - u'_0}_2 + \norm{\Pi_{\mathcal{V'}}(v') - u_0} + \abs{C_{\Delta,+}-C_{\Delta,-}-1}\cdot\norm{u_0+u'_0}_2\bigg) \\
	& =
	\frac{1}{2C_{\Delta,+}}\bigg(\beta_\Delta\norm{v}_2 + \beta_\Delta\norm{v'}_2 + \abs{C_{\Delta,+}-C_{\Delta,-}-1}\cdot\norm{u_0+u'_0}_2\bigg) \\
	& =
	O\bigg(\frac{1}{C_{\Delta,+}}\max\{\beta_\Delta,\abs{C_{\Delta,+}-C_{\Delta,-}-1}\}\bigg). \numberthis\label{eq:totaldiff}
\end{align*}
In other words, we reduce the problem of bounding the terms $\norm{f-f'}_2,\norm{f_0-f'_0}_2,\norm{f_1-f'_1}_2$ to the problem of analyzing the terms $\beta_\Delta,C_{\Delta,+},C_{\Delta,-}$.
To analyze the terms $C_{\Delta,+}$, $C_{\Delta,-}$ and $\beta_\Delta$, we need to express these terms more explicitly.
Recall that these terms are related to the coefficients of the linear combination for the projection of $v$ onto the subspace $\mathcal{V}$.
When we project a vector onto a subspace, it is useful to first find out an orthogonal basis for the subspace.
By Gram-Schmidt process, we define the orthogonal basis $\widetilde u_0,\dots,\widetilde u_{m}$ as follows.
\begin{align*}
	\widetilde u_0  = u_0 \qquad \text{and}\qquad \widetilde u_{i} = u_{i} - \sum_{j=0}^{i-1}\frac{\inner{u_{i}}{\widetilde u_j}}{\inner{\widetilde u_j}{\widetilde u_j}}\widetilde u_j \qquad\text{for $i>0$}
\end{align*}
Note that they are not normalized, i.e. $\norm{\widetilde u_i}_2$ may not be $1$.
Another way of expressing $\Pi_{\mathcal{V}}(v)$ is through the orthogonal basis $\widetilde u_0,\dots,\widetilde u_{m}$.
Namely, 
\begin{align*}
	\Pi_{\mathcal{V}}(v) = \sum_{i=0}^{m} \frac{\inner{v}{\widetilde u_i}}{\norm{\widetilde u_i}_2^2} \widetilde u_{i}
\end{align*}
The advantage of this expression is that we can compute the coefficients $\frac{\inner{v}{\widetilde u_i}}{\norm{\widetilde u_i}_2^2}$ explicitly as we will show below. 
Lemma \ref{lem:recur} gives an explicit formula for $\inner{g_a}{\widetilde u_i}$ that depends on $\Delta$ only and it further gives an explicit formula for the coefficients $\frac{\inner{v}{\widetilde u_i}}{\norm{\widetilde u_i}_2^2}$.

\begin{lemma}[Lemma \ref{lem:recur:detail} in Section \ref{sec:detail}]\label{lem:recur}
	For any $a\in\mathbb{R}$, we have
	\begin{align*}
		\inner{g_a}{\widetilde u_i} = \frac{1}{\sqrt{4\pi}}e^{-\frac{1}{4}a^2}e^{-\frac{i}{4}\Delta^2}\prod_{k=1}^i(e^{\frac{1}{2}a\Delta-\frac{(k-1)}{2}\Delta^2}-1)
	\end{align*}
	for $i\in[m+1]$.
	In particular, if we set $a=-1$ we have
	\begin{align*}
		\inner{v}{\widetilde u_i} = \frac{1}{\sqrt{4\pi}}e^{-\frac{1}{4}}e^{-\frac{i}{4}\Delta^2}\prod_{k=1}^i(e^{-\frac{1}{2}\Delta-\frac{(k-1)}{2}\Delta^2}-1)
	\end{align*}
	and if we set $a=i\cdot\Delta$ we have
	\begin{align*}
		\norm{\widetilde u_i}_2^2 = \frac{1}{\sqrt{4\pi}}\prod_{k=1}^i(1-e^{-\frac{k}{2}\Delta^2}).
	\end{align*}
	
\end{lemma}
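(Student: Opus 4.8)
\textbf{Proof proposal for Lemma~\ref{lem:recur}.}

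The plan is to compute $\inner{g_a}{\widetilde u_i}$ by induction on $i$, exploiting the Gram--Schmidt recursion together with the closed form $\inner{g_{\mu_1}}{g_{\mu_2}} = \frac{1}{\sqrt{4\pi}}e^{-\frac14(\mu_1-\mu_2)^2}$ recorded above. The base case $i=0$ is immediate since $\widetilde u_0 = u_0 = g_0$, so $\inner{g_a}{\widetilde u_0} = \frac{1}{\sqrt{4\pi}}e^{-\frac14 a^2}$, matching the claimed product (empty product $=1$). For the inductive step, I would apply the functional to both sides of $\widetilde u_i = u_i - \sum_{j=0}^{i-1}\frac{\inner{u_i}{\widetilde u_j}}{\inner{\widetilde u_j}{\widetilde u_j}}\widetilde u_j$, giving
\begin{align*}
	\inner{g_a}{\widetilde u_i} = \inner{g_a}{g_{i\Delta}} - \sum_{j=0}^{i-1}\frac{\inner{g_{i\Delta}}{\widetilde u_j}}{\norm{\widetilde u_j}_2^2}\inner{g_a}{\widetilde u_j}.
\end{align*}
Both $\inner{g_a}{\widetilde u_j}$ and $\inner{g_{i\Delta}}{\widetilde u_j}$ are given by the inductive hypothesis, and $\norm{\widetilde u_j}_2^2 = \inner{g_{j\Delta}}{\widetilde u_j}$ (since $\widetilde u_j - g_{j\Delta} \perp \widetilde u_j$) is just the special case $a=j\Delta$ of the same formula. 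So the entire right-hand side becomes an explicit, if unwieldy, sum of products of exponentials in $\Delta^2$ and in $a\Delta$.

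The main work — and the step I expect to be the real obstacle — is then the algebraic simplification: showing that this telescoping-looking sum collapses to the single product $\frac{1}{\sqrt{4\pi}}e^{-\frac14 a^2}e^{-\frac{i}{4}\Delta^2}\prod_{k=1}^i\!\big(e^{\frac12 a\Delta - \frac{k-1}{2}\Delta^2}-1\big)$. The cleanest route is probably \emph{not} to expand the sum directly but to verify the closed form satisfies the recursion. Concretely, I would factor $\inner{g_a}{\widetilde u_i} = \frac{1}{\sqrt{4\pi}}e^{-\frac14 a^2} P_i(a)$ where $P_i(a) := e^{-\frac{i}{4}\Delta^2}\prod_{k=1}^i(e^{\frac12 a\Delta-\frac{k-1}{2}\Delta^2}-1)$, and likewise write $\inner{g_{i\Delta}}{\widetilde u_j}/\norm{\widetilde u_j}_2^2 = e^{-\frac14 i^2\Delta^2 + \frac14 j^2\Delta^2}\cdot P_j(i\Delta)/P_j(j\Delta)$. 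Substituting the conjectured $P$ into the recursion reduces the claim to a polynomial identity in the variable $q := e^{\frac12\Delta^2}$ (and in $e^{\frac12 a\Delta}$), namely that $P_i(a) = e^{-\frac14 a^2}\cdot(\text{stuff}) - \sum_{j<i}(\text{ratios})\,P_j(a)$ after clearing the Gaussian prefactors; this is the kind of $q$-analogue telescoping identity that can be proven by a second, short induction on $i$ once one spots that $P_j(j\Delta)=e^{-\frac{j}{4}\Delta^2}\prod_{k=1}^j(e^{\frac{(j-k+1)}{2}\Delta^2}-1)$ simplifies (after reindexing $k\mapsto j-k+1$) to $\prod_{k=1}^j(1-e^{-\frac{k}{2}\Delta^2})$ up to an explicit power of $q$, which is exactly the $\norm{\widetilde u_i}_2^2$ formula claimed.

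Once the general formula is established, the two ``in particular'' statements are immediate substitutions: $a=-1$ gives the stated expression for $\inner{v}{\widetilde u_i}$ directly, and $a=i\Delta$ gives $\norm{\widetilde u_i}_2^2 = \inner{g_{i\Delta}}{\widetilde u_i} = \frac{1}{\sqrt{4\pi}}e^{-\frac{i}{4}\Delta^2}\prod_{k=1}^i(e^{\frac12 i\Delta^2 - \frac{k-1}{2}\Delta^2}-1) = \frac{1}{\sqrt{4\pi}}e^{-\frac{i}{4}\Delta^2}\prod_{k=1}^i e^{\frac{i-k+1}{2}\Delta^2}(1-e^{-\frac{i-k+1}{2}\Delta^2})$, and collecting the exponential factors $\prod_k e^{\frac{i-k+1}{2}\Delta^2} = e^{\frac{i(i+1)}{4}\Delta^2}$ together with the reindexing $k\mapsto i-k+1$ yields $\frac{1}{\sqrt{4\pi}}\prod_{k=1}^i(1-e^{-\frac{k}{2}\Delta^2})$ after checking the leftover power of $e$ cancels. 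I would relegate all of these bookkeeping computations with powers of $e^{\Delta^2/2}$ to the detailed-calculations section (Section~\ref{sec:detail}), keeping the main text to the inductive skeleton above.
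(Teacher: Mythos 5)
Your skeleton is the same as the paper's: induct on $i$, apply $\inner{g_a}{\cdot}$ to the Gram--Schmidt recursion, use $\inner{g_{\mu_1}}{g_{\mu_2}}=\frac{1}{\sqrt{4\pi}}e^{-\frac14(\mu_1-\mu_2)^2}$, and observe that everything is $\frac{1}{\sqrt{4\pi}}e^{-\frac14 a^2}$ times a degree-$i$ polynomial in $y=e^{\frac12 a\Delta}$. Your two specializations ($a=-1$, and $a=i\Delta$ together with $\norm{\widetilde u_i}_2^2=\inner{g_{i\Delta}}{\widetilde u_i}$ and the reindexing that cancels the powers of $e^{\Delta^2/2}$) are correct. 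But the central step --- showing that the recursion's right-hand side actually collapses to $\prod_{k=1}^i(e^{\frac12 a\Delta-\frac{k-1}{2}\Delta^2}-1)$ times the stated prefactor --- is not proven. You yourself flag it as ``the real obstacle'' and then dispose of it as ``the kind of $q$-analogue telescoping identity that can be proven by a second, short induction,'' without exhibiting that induction. The hint you offer (that $P_j(j\Delta)$ simplifies to the norm formula) only handles the denominators $\norm{\widetilde u_j}_2^2$; it says nothing about how the cross terms $\inner{u_i}{\widetilde u_j}\,P_j(a)$ telescope against $e^{-\frac14 i^2\Delta^2}y^i$. Carrying out that verification directly amounts to rederiving a $q$-Newton interpolation expansion of $y^i$ in the basis $\bigl\{\prod_{k=1}^j(e^{-\frac{k-1}{2}\Delta^2}y-1)\bigr\}_j$, which is real work, not bookkeeping.

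The paper closes exactly this gap with an idea absent from your proposal: it never verifies the telescoping at all. Writing $\inner{g_a}{\widetilde u_i}=\frac{1}{\sqrt{4\pi}}e^{-\frac14 a^2}\mathcal{P}(e^{\frac12 a\Delta})$ with $\deg\mathcal{P}=i$, it sets $a=j\Delta$ for $j=0,\dots,i-1$ and uses the Gram--Schmidt orthogonality $\inner{u_j}{\widetilde u_i}=0$ (since $u_j\in\spn\{\widetilde u_0,\dots,\widetilde u_j\}$) to conclude that $1,e^{\frac12\Delta^2},\dots,e^{\frac{i-1}{2}\Delta^2}$ are all $i$ roots of $\mathcal{P}$; matching the leading coefficient $e^{-\frac14 i^2\Delta^2}$ of $y^i$ then forces $\mathcal{P}(y)=e^{-\frac14 i^2\Delta^2}\prod_{j=0}^{i-1}(y-e^{\frac12 j\Delta^2})$, which is the claimed product. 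To repair your write-up you should either import this root-counting argument or actually supply the $q$-identity induction you allude to; as it stands the lemma's main formula rests on an unproven assertion.
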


Now, we are ready to analyze the terms $C_{\Delta,+},C_{\Delta,-}$ and $\beta_\Delta$ explicitly through the orthogonal basis $\widetilde u_i$ since Lemma \ref{lem:recur} gives us an explicit formula in terms of $\Delta$ only.
Lemma \ref{lem:sumofcoeff}, Lemma \ref{lem:beta} and Lemma \ref{lem:diffofcoeff} give us the bounds we need to bound the terms $\norm{f-f'}_2$, $\norm{f_1-f'_1}_2$ and $\norm{f_2-f'_2}_2$.

\begin{lemma}[Lemma \ref{lem:sumofcoeff:detail} in Section \ref{sec:detail}]\label{lem:sumofcoeff}
	For any sufficiently small $\Delta>0$, we have
	\begin{align*}
		C_{\Delta,+}+C_{\Delta,-} \leq 2^{O(1/\Delta)}
	\end{align*}
\end{lemma}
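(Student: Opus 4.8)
Write $\Pi_{\mathcal V}(v)=\sum_{i=0}^{m}\alpha_i u_i$ with $v=g_{-1}$, $u_i=g_{i\Delta}$ and $m=1/\Delta$; since $\{J_+,J_-\}$ partitions $\{0,\dots,m\}$ by the sign of $\alpha_i$, we have $C_{\Delta,+}+C_{\Delta,-}=\sum_{i=0}^{m}|\alpha_i|$, so the lemma is a bound on the $\ell_1$-norm of the coefficient vector of the best $L^2$-approximant. The plan is to obtain an explicit handle on each $\alpha_i$ from Lemma~\ref{lem:recur} and sum. The whole difficulty is that $u_0,\dots,u_m$ are extremely close to linearly dependent, so any argument that passes between bases entry-by-entry overshoots the target exponent by a $\log(1/\Delta)$ factor, and one is forced to exhibit exact cancellations.

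\textbf{The easy half (orthogonal basis).} As a warm-up --- which by itself is \emph{not} enough --- I would first bound the coefficients in the Gram--Schmidt basis: $\Pi_{\mathcal V}(v)=\sum_i\gamma_i\widetilde u_i$ with $\gamma_i=\inner{v}{\widetilde u_i}/\norm{\widetilde u_i}_2^2$, and Lemma~\ref{lem:recur} gives both quantities in closed product form. Since $k\Delta^2\le\Delta$ for every $k\le m$, the elementary inequalities $x/2\le 1-e^{-x}\le x$ (valid for $0\le x\le 1$) show that each factor of $\inner{v}{\widetilde u_i}$ has magnitude $<\Delta$ while $\norm{\widetilde u_i}_2^2\ge(4\pi)^{-1/2}\,i!\,\Delta^{2i}/4^i$; hence $|\gamma_i|\le 4^i/(i!\,\Delta^i)$ and $\sum_{i=0}^m|\gamma_i|\le e^{4/\Delta}=2^{O(1/\Delta)}$. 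The same estimates applied to Lemma~\ref{lem:recur} with $a=i\Delta$ bound the entries of the unit lower-triangular change of basis $u_i=\sum_{k\le i}c_{ik}\widetilde u_k$. However, $\alpha$ solves the triangular system $C^\top\alpha=\gamma$ (where $C_{ik}=c_{ik}$), and chaining the off-diagonal $c_{ik}$'s through $(C^\top)^{-1}$ without cancellation would only give $2^{O(1/\Delta^2)}$: the coefficient blow-up of a generic basis change of this kind can reach $2^{\Theta((1/\Delta)\log(1/\Delta))}$, and it is specific to the projection of a \emph{Gaussian} that it does not.

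\textbf{Extracting the cancellation.} To get the sharp bound I would work with $\alpha_i$ directly rather than factoring through $\gamma$. By Cramer's rule (Lemma~\ref{lem:cramer}) applied to the normal equations $G\alpha=b$, $G_{ij}=\inner{u_i}{u_j}$, $b_i=\inner{u_i}{v}$, one has $\alpha_i=\det(G^{i\to b})/\det(G)$. Expanding in the Hermite basis via Parseval gives $G=\sum_{\ell\ge 0}v_\ell v_\ell^\top$ with $(v_\ell)_i=\inner{g_{i\Delta}}{\psi_\ell}$ computed in closed form by \eqref{eq:hermite_inner_g}, so the Cauchy--Binet formula (Lemma~\ref{lem:cb_formula}) rewrites $\det(G)$ and $\det(G^{i\to b})$ as sums, over $(m+1)$-element index sets $S\subset\mathbb{N}_{0}$, of products of generalized Vandermonde determinants in the nodes $\{0,\Delta,\dots,1\}$ weighted by the Hermite normalizations $(2^{\ell+1}\ell!\sqrt\pi)^{-1/2}$; equivalently, these determinants can be read directly from the product formulas of Lemma~\ref{lem:recur}, which are exactly the Gram--Schmidt lengths $\norm{\widetilde u_i}_2^2=\Gamma_i/\Gamma_{i-1}$ and the numerators $\inner{v}{\widetilde u_i}$. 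In the ratio $\det(G^{i\to b})/\det(G)$ the common Vandermonde and normalization factors cancel, leaving for $|\alpha_i|$ a product of at most $O(1/\Delta)$ binomial-type factors, each of size $2^{O(1)}$; hence $|\alpha_i|\le 2^{O(1/\Delta)}$, and summing the $m+1\le 1/\Delta+1$ coordinates completes the proof.

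\textbf{Main obstacle.} The hard step is precisely this cancellation in the determinant (equivalently, back-substitution) ratio: one must \emph{evaluate} --- not merely bound via Cauchy--Schwarz or operator norms --- the ratio of Vandermonde / Hermite-normalization products finely enough that the surviving exponent is $O(1/\Delta)$ rather than $O\big((1/\Delta)\log(1/\Delta)\big)$. This is the same quantitative margin that separates the desired super-polynomial lower bound from a merely polynomial one (it is what fixes $\Delta=\Theta(1/\log(1/\eps))$ in the application of Lemma~\ref{lem:infothe}), so a lossy estimate at this point would be fatal.
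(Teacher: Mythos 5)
Your reformulation ($C_{\Delta,+}+C_{\Delta,-}=\sum_{i=0}^m|\alpha_i|$), your warm-up bound on the Gram--Schmidt coefficients, and your diagnosis of the difficulty (a generic basis change or a Cauchy--Schwarz/operator-norm bound loses a $\log(1/\Delta)$ in the exponent, so an exact cancellation down to binomial-type factors is required) are all correct and match the paper's framing. But the step where you actually produce that cancellation is a gap. You propose Cramer's rule on the normal equations together with Cauchy--Binet over the Hermite expansion, which writes both $\det(G)$ and $\det(G^{i\to b})$ as \emph{infinite sums} over index sets $S$ of products of generalized Vandermonde determinants; you then assert that ``the common Vandermonde and normalization factors cancel'' in the ratio. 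There is no term-by-term cancellation in a ratio of two different weighted sums: for a fixed $S$ the ratio $\det(V_S^{i\to w})/\det(V_S)$ is a plain Vandermonde ratio (which is indeed $2^{O(1/\Delta)}$) times a ratio of Schur polynomials evaluated at the two node configurations, and the latter is not uniformly bounded over $S$ (e.g.\ it degenerates when the replaced node is $0$). Controlling such ratios of Cauchy--Binet sums is exactly the delicate work the paper does for the \emph{upper} bound (Lemmas~\ref{lem:det_t_numor}--\ref{lem:gamma}), and even there it only yields bounds with extra $4^\ell$-type losses after Cauchy--Schwarz --- which, as you yourself note, would be fatal here. So the key identity is asserted, not proved.

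The paper avoids this entirely. It observes that the coefficient of $u_i$ in $\Pi_{\mathcal V}(g_a)$ has the form $e^{-a^2/4}\mathcal Q_i(e^{a\Delta/2})$ for a polynomial $\mathcal Q_i$ of degree $m$, and that the reproducing property $\Pi_{\mathcal V}(u_j)=u_j$ forces $\mathcal Q_i$ to vanish at $e^{j\Delta^2/2}$ for all $j\neq i$ and equal $e^{i^2\Delta^2/4}$ at $e^{i\Delta^2/2}$ --- i.e.\ $\mathcal Q_i$ is a Lagrange basis polynomial for the nodes $e^{j\Delta^2/2}$. This gives the closed form
\begin{align*}
  \alpha_i \;=\; e^{-\frac14}e^{\frac14 i^2\Delta^2}\,\frac{\prod_{j\neq i}\big(e^{-\Delta/2}-e^{j\Delta^2/2}\big)}{\prod_{j\neq i}\big(e^{i\Delta^2/2}-e^{j\Delta^2/2}\big)},
\end{align*}
after which elementary bounds ($|e^{i\Delta^2/2}-e^{j\Delta^2/2}|\geq\tfrac14|i-j|\Delta^2$ and $|e^{-\Delta/2}-e^{j\Delta^2/2}|\leq\tfrac34(m+j)\Delta^2$) give $|\alpha_i|\leq 3^m(m+1)\binom{2m+1}{m+1}\binom{m}{i}\cdot O(1)=2^{O(1/\Delta)}$. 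An equivalent exact route, closer in spirit to what you attempted, is to note that the Gram matrix itself factors exactly as $G_{kj}=\frac{1}{\sqrt{4\pi}}e^{-k^2\Delta^2/4}e^{-j^2\Delta^2/4}q^{kj}$ with $q=e^{\Delta^2/2}$, i.e.\ a diagonally scaled \emph{square} Vandermonde in the nodes $q^k$, so Cramer's rule yields a genuine ratio of single products --- no Cauchy--Binet sum and no Hermite expansion needed. To complete your proof you would need to replace the asserted cancellation with one of these exact evaluations.
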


\begin{lemma}[Lemma \ref{lem:beta:detail} in Section \ref{sec:detail}]\label{lem:beta}
	For any sufficiently small $\Delta>0$, we have
	\begin{align*}
		\beta_\Delta
		\leq
		\frac{1}{2^{\Omega((1/\Delta)\log(1/\Delta))}}
	\end{align*}
\end{lemma}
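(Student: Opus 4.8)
Since $\Gr(\Delta)\cap[0,1]=\{0,\Delta,2\Delta,\dots,m\Delta\}$ with $m=1/\Delta$, we have $\mathcal V=\spn\{g_{i\Delta}:0\le i\le m\}$, so because $\Pi_\mathcal V(v)$ is the orthogonal projection of $v=g_{-1}$,
$\beta_\Delta^2=\norm{v-\Pi_\mathcal V(v)}_2^2/\norm{v}_2^2=\min_{c_0,\dots,c_m}\norm{g_{-1}-\sum_{i=0}^m c_i g_{i\Delta}}_2^2/\norm{g_{-1}}_2^2$. Hence it suffices to exhibit one explicit choice of coefficients $(c_i)$ and estimate the residual; this bypasses the Gram--Schmidt formulas of Lemma~\ref{lem:recur}, though one could instead derive the bound by substituting those formulas into $\beta_\Delta^2=1-\norm v_2^{-2}\sum_i\inner{v}{\widetilde u_i}^2/\norm{\widetilde u_i}_2^2$ and estimating the resulting sum.

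The plan is to work in the orthonormal Hermite basis $\{\psi_{j,0}\}_{j\ge0}$ centered at $0$, where by \eqref{eq:hermite_inner_g} every Gaussian has the clean expansion $g_\mu=\sum_{j\ge0}\gamma_j\,\mu^je^{-\mu^2/4}\,\psi_{j,0}$ with $\gamma_j=(-1)^j/\sqrt{2^{j+1}j!\sqrt\pi}$; the crucial feature is the $1/\sqrt{j!}$ decay of $|\gamma_j|$. The $j$-th Hermite coefficient of $g_{-1}-\sum_i c_i g_{i\Delta}$ is then $\gamma_j\big((-1)^je^{-1/4}-\sum_i c_i e^{-(i\Delta)^2/4}(i\Delta)^j\big)$. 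Writing $d_i:=c_ie^{-(i\Delta)^2/4}$, I would choose the $d_i$ to kill the first $m+1$ modes, i.e.\ to solve $\sum_{i=0}^m d_i(i\Delta)^j=(-1)^je^{-1/4}$ for $j=0,\dots,m$; this Vandermonde system (distinct nodes $x_i=i\Delta$) has the unique solution $d_i=e^{-1/4}\ell_i(-1)$, where $\ell_i$ is the degree-$m$ Lagrange basis polynomial for the nodes $\{0,\Delta,\dots,1\}$, since $\sum_i\ell_i(-1)p(x_i)=p(-1)$ for every polynomial $p$ of degree $\le m$. This is exactly the step where a naive Taylor expansion would fail (the relevant derivatives of $e^{\pm1/\Delta}$-type functions blow up), while exact moment matching does not.

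It then remains to control the residual modes $j>m$. With the above choice, the residual coefficient at index $j$ is $\gamma_je^{-1/4}\big((-1)^j-Q_j(-1)\big)$, where $Q_j$ is the degree-$\le m$ interpolant of $x\mapsto x^j$ at $\{0,\Delta,\dots,1\}$; that is, $\gamma_je^{-1/4}$ times the polynomial interpolation (extrapolation) error of $x^j$ at $-1$. The standard error formula gives $|(-1)^j-Q_j(-1)|\le\binom{j}{m+1}|\xi|^{\,j-m-1}\prod_{i=0}^m(1+i\Delta)\le\binom{j}{m+1}2^{m+1}$ for some $\xi\in[-1,1]$, using $i\Delta\le1$. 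Parseval together with $\gamma_j^2=1/(2^{j+1}j!\sqrt\pi)$ and the elementary estimate $\sum_{j>m}\binom{j}{m+1}^2/(2^jj!)\le e/(m+1)!$ (substitute $j=m+1+l$ and use $\binom{m+1+l}{l}\le2^{m+1+l}$) then yields $\norm{g_{-1}-\sum_i c_ig_{i\Delta}}_2^2\le\frac{e^{1/2}}{2\sqrt\pi}\cdot\frac{4^{m+1}}{(m+1)!}$; dividing by $\norm{g_{-1}}_2^2=1/(2\sqrt\pi)$ gives $\beta_\Delta^2\le e^{1/2}\,4^{m+1}/(m+1)!$. Since $(m+1)!=2^{(m+1)\log_2(m+1)-O(m)}$ while $4^{m+1}=2^{O(m)}$, this is $2^{-\Omega(m\log m)}=2^{-\Omega((1/\Delta)\log(1/\Delta))}$, which is the claim.

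The conceptual core — and the only real obstacle — is choosing the coefficients (Lagrange values at $-1$) so that the first $m+1$ Hermite modes vanish, and then showing that the two exponential-in-$m$ factors that appear, the interpolation-error product $\prod_{i=0}^m(1+i\Delta)=2^{O(1/\Delta)}$ and the $4^{m+1}$ from the Hermite normalization, are dominated by the $1/(m+1)!$ coming from the Hermite-coefficient decay: a Gaussian is ``almost'' a linear combination of neighbouring Gaussians precisely because the factorial wins. Everything else is routine bookkeeping.
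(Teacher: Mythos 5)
Your proof is correct, and it takes a genuinely different route from the paper's. The paper computes the projection exactly: it runs Gram--Schmidt on the $g_{i\Delta}$ (Lemma \ref{lem:recur}), writes $\beta_\Delta^2=1-\sum_{i}\inner{v}{\widetilde u_i}^2/(\norm{v}_2^2\norm{\widetilde u_i}_2^2)$, and evaluates this sum through a combinatorial recurrence with a $q$-series-style closed form (Lemma \ref{lem:closedform}), arriving at $\beta_\Delta^2\le\prod_{j=1}^{m+1}(1-e^{-\Delta/2-(j-1)\Delta^2/2})^2/(1-e^{-j\Delta^2/2})\le 1/(m+1)!$. You instead exploit that the projection is the $L^2$ minimizer over $\mathcal{V}$, so it suffices to exhibit one competitor: you match the first $m+1$ Hermite modes via a Vandermonde/Lagrange system (weights $\ell_i(-1)$), then bound the tail modes with the polynomial extrapolation error formula and Parseval, getting $\beta_\Delta^2\le e^{1/2}4^{m+1}/(m+1)!$. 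Both routes land on the same $2^{O(m)}/(m+1)!=2^{-\Omega(m\log m)}$ rate. Your argument is shorter, avoids the delicate closed-form identity, makes transparent where the factorial gain comes from (the $1/\sqrt{j!}$ Hermite-coefficient decay beating the $2^{O(m)}$ interpolation constants), and extends verbatim to approximating $g_a$ for arbitrary $a$; it essentially formalizes the heuristic remark at the end of Section \ref{sec:overview_lower}. What it does not produce --- and what the paper's Gram--Schmidt route does --- are the explicit projection coefficients $\alpha_i$, which the paper reuses in Lemmas \ref{lem:sumofcoeff} and \ref{lem:diffofcoeff} to control $C_{\Delta,+}+C_{\Delta,-}$ and $C_{\Delta,+}-C_{\Delta,-}-1$ for the hard-instance construction. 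So as a proof of Lemma \ref{lem:beta} alone your route is a clean alternative, but if one wanted to build the entire lower bound on your competitor instead of the projection, one would still need analogous bounds on the sums and signed sums of your coefficients $c_i=e^{-1/4}\ell_i(-1)e^{(i\Delta)^2/4}$.
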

Note that, by the definition of $\beta_\Delta$, Lemma \ref{lem:beta} is equivalent to Lemma \ref{lem:gauss:approx}.
Moreover, we want to analyze how close to $1$ the term $C_{\Delta,+}-C_{\Delta,-}$ is.
\begin{lemma}[Lemma \ref{lem:diffofcoeff:detail} in Section \ref{sec:detail}]\label{lem:diffofcoeff}
	For any sufficiently small $\Delta>0$, we have
	\begin{align*}
		\abs{C_{\Delta,+}-C_{\Delta,-}-1} \leq \frac{1}{2^{\Omega((1/\Delta)\log(1/\Delta))}}.
	\end{align*}
\end{lemma}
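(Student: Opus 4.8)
\textbf{Proof proposal for Lemma~\ref{lem:diffofcoeff}.}
The plan is to first recast $C_{\Delta,+}-C_{\Delta,-}$ as an integral. Since $\Pi_{\mathcal{V}}(v)=\sum_{i=0}^{m}\alpha_i u_i$ with $u_i=g_{i\Delta}$, and $J_+,J_-$ partition $\{0,\dots,m\}$ according to the sign of $\alpha_i$, we have $C_{\Delta,+}-C_{\Delta,-}=\sum_{i=0}^{m}\alpha_i$. Each $u_i=g_{i\Delta}$ is a probability density, so $\int_{\mathbb{R}}\Pi_{\mathcal{V}}(v)=\sum_{i=0}^{m}\alpha_i$, while $\int_{\mathbb{R}} v=\int_{\mathbb{R}} g_{-1}=1$. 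Therefore
\begin{align*}
    \abs{C_{\Delta,+}-C_{\Delta,-}-1}=\Big|\int_{\mathbb{R}}\big(v-\Pi_{\mathcal{V}}(v)\big)\Big|\leq \norm{v-\Pi_{\mathcal{V}}(v)}_1,
\end{align*}
and the function $v-\Pi_{\mathcal{V}}(v)=g_{-1}-\sum_{i}\alpha_i g_{i\Delta}$ is a finite linear combination of Gaussians, so this $L^1$ norm is finite. The point to stress is that Lemma~\ref{lem:beta} only controls the $L^2$ norm $\normtwo{v-\Pi_{\mathcal{V}}(v)}=\beta_\Delta\normtwo{v}$, which does \emph{not} dominate the $L^1$ norm on the unbounded line; the argument must therefore split $\mathbb{R}$ into a bounded window and a tail.

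Concretely, I would fix a threshold $T>1$ (to be chosen as $T=\Theta(\sqrt{(1/\Delta)\log(1/\Delta)})$) and write $\norm{v-\Pi_{\mathcal{V}}(v)}_1\leq \int_{-T}^{T}\abs{v-\Pi_{\mathcal{V}}(v)}+\int_{\abs{x}>T}\abs{v-\Pi_{\mathcal{V}}(v)}$. On the window, Cauchy--Schwarz gives $\int_{-T}^{T}\abs{v-\Pi_{\mathcal{V}}(v)}\leq \sqrt{2T}\,\normtwo{v-\Pi_{\mathcal{V}}(v)}=\sqrt{2T}\,\beta_\Delta\,\normtwo{v}$, and since $\normtwo{v}=(4\pi)^{-1/4}$ is a constant, Lemma~\ref{lem:beta} bounds this by $\poly(1/\Delta)\cdot 2^{-\Omega((1/\Delta)\log(1/\Delta))}=2^{-\Omega((1/\Delta)\log(1/\Delta))}$. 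On the tail, I use the pointwise bound $\abs{v-\Pi_{\mathcal{V}}(v)}\leq g_{-1}+\sum_{i}\abs{\alpha_i}\,g_{i\Delta}$; since every center $i\Delta$ lies in $[0,1]$ and the center of $v$ is $-1$, for $\abs{x}>T$ one checks $g_{i\Delta}(x),g_{-1}(x)\leq \tfrac{1}{\sqrt{2\pi}}e^{-(\abs{x}-1)^2/2}$, so $\int_{\abs{x}>T}\abs{v-\Pi_{\mathcal{V}}(v)}\leq \big(1+\sum_i\abs{\alpha_i}\big)\cdot 2e^{-(T-1)^2/2}$. By Lemma~\ref{lem:sumofcoeff}, $\sum_i\abs{\alpha_i}=C_{\Delta,+}+C_{\Delta,-}\leq 2^{O(1/\Delta)}$, so the tail contribution is at most $2^{O(1/\Delta)}\,e^{-(T-1)^2/2}$.

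Finally, I choose $T-1$ of order $\sqrt{(1/\Delta)\log(1/\Delta)}$ with a large enough constant so that $e^{-(T-1)^2/2}$ absorbs the $2^{O(1/\Delta)}$ prefactor and still leaves $2^{-\Omega((1/\Delta)\log(1/\Delta))}$; with this $T$, the window factor $\sqrt{2T}=\poly(1/\Delta)=2^{o((1/\Delta)\log(1/\Delta))}$, so the window term stays $2^{-\Omega((1/\Delta)\log(1/\Delta))}$ as well. Adding the two contributions gives $\abs{C_{\Delta,+}-C_{\Delta,-}-1}\leq 2^{-\Omega((1/\Delta)\log(1/\Delta))}$, as claimed. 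The only real subtlety---rather than a computational one---is recognizing that the $L^2$ approximation rate from Lemma~\ref{lem:beta} is not by itself enough over $\mathbb{R}$, and that the mild Gaussian tail one has to pay must be set against (and still beats) the $2^{O(1/\Delta)}$ growth of the total coefficient mass in Lemma~\ref{lem:sumofcoeff}; the remaining steps are a routine truncation estimate.
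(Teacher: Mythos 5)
Your proposal is correct and follows essentially the same route as the paper's proof: convert $C_{\Delta,+}-C_{\Delta,-}-1$ into $\int(\Pi_{\mathcal{V}}(v)-v)$ using that each $u_i$ and $v$ integrate to $1$, split $\mathbb{R}$ into a window and a tail, apply Cauchy--Schwarz with Lemma~\ref{lem:beta} on the window and the coefficient-mass bound of Lemma~\ref{lem:sumofcoeff} against a Gaussian tail outside it, then optimize the threshold (the paper takes $L=\Theta\bigl(\sqrt{\log\tfrac{C_{\Delta,+}+C_{\Delta,-}}{\beta_\Delta}}\bigr)$, which matches your $T=\Theta\bigl(\sqrt{(1/\Delta)\log(1/\Delta)}\bigr)$ up to constants). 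No gaps.
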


We want to show that $\norm{f-f'}_2$ is small while $\norm{f_1-f'_1}_2$ and $\norm{f_2-f'_2}_2$ are large.
As mentioned before, these terms  $\norm{f-f'}_2$, $\norm{f_1-f'_1}_2$ and $\norm{f_2-f'_2}_2$ can be expressed in terms of $C_{\Delta,+}$, $C_{\Delta,-}$ and $\beta_\Delta$.
We have explicitly analyzed $C_{\Delta,+}$, $C_{\Delta,-}$ and $\beta_\Delta$.
Recall that, in \eqref{eqn:compdiff}, we have
\begin{align*}
    \norm{f_1-f'_1}_2
	& \geq
	\frac{1}{C_{\Delta,+}}\norm{v-u_0}_2 - \frac{1}{C_{\Delta,+}}\norm{\Pi_{\mathcal{V}}(v)-v}_2 - \frac{\abs{C_{\Delta,+}-C_{\Delta,-}-1}}{C_{\Delta,+}}\norm{u_0}_2 \numberthis\label{eq:compdiff_2}
\end{align*}
By Lemma \ref{lem:sumofcoeff} and Lemma \ref{lem:diffofcoeff}, we have $C_{\Delta,+} \leq 2^{O(1/\Delta)}$.
	For the first term $\frac{1}{C_{\Delta,+}}\norm{v-u_0}_2$, $\frac{1}{C_{\Delta,+}} \geq 1/2^{O(1/\Delta)}$ and $\norm{v-u_0}_2 = \norm{g_{-1}-g_0}_2 = \Omega(1)$.
	For the second term $\frac{1}{C_{\Delta,+}}\norm{\Pi_{\mathcal{V}}(v)-v}_2$, $\norm{\Pi_{\mathcal{V}}(v)-v}_2 = \beta_\Delta\norm{v}_2 \leq 1/2^{\Omega((1/\Delta)\log(1/\Delta))}$ by Lemma \ref{lem:beta} and hence $\frac{1}{C_{\Delta,+}}\norm{\Pi_{\mathcal{V}}(v)-v}_2 \leq 1/2^{\Omega((1/\Delta)\log(1/\Delta))}$.
	For the third term $\frac{\abs{C_{\Delta,+}-C_{\Delta,-}-1}}{C_{\Delta,+}}\norm{u_0}_2$, $\abs{C_{\Delta,+}-C_{\Delta,-}-1} \leq 1/2^{\Omega((1/\Delta)\log(1/\Delta))}$ by Lemma \ref{lem:diffofcoeff} and hence $\frac{\abs{C_{\Delta,+}-C_{\Delta,-}-1}}{C_{\Delta,+}}\norm{u_0}_2 \leq  1/2^{\Omega((1/\Delta)\log(1/\Delta))}$.
	Plugging them into \eqref{eq:compdiff_2}, we have
	\begin{align*}
		\norm{f_1-f'_1}_2 \geq \frac{1}{2^{O(1/\Delta)}}. \numberthis\label{eq:compdiff_3}
	\end{align*}
Also, in \eqref{eq:totaldiff}, we have
\begin{align*}
    \norm{f-f'}_2
	& \leq
	O\bigg(\frac{1}{C_{\Delta,+}}\max\{\beta_\Delta,\abs{C_{\Delta,+}-C_{\Delta,-}-1}\}\bigg). \numberthis\label{eq:totaldiff_2}
\end{align*}
By plugging Lemma \ref{lem:sumofcoeff}, Lemma \ref{lem:beta} and Lemma \ref{lem:diffofcoeff} into \eqref{eq:totaldiff_2}, we conclude that 
	\begin{align*}
		\norm{f-f'}_2 \leq \frac{1}{2^{\Omega((1/\Delta)\log(1\Delta))}}. \numberthis\label{eq:totaldiff_3}
	\end{align*}

\subsection{Main Theorem}\label{sec:thm}

In our analysis, we have been using $L^2$ norm instead of $L^1$ norm for analytical convenience.
We now resolve this issue in Lemma \ref{lem:loneltwo}.

\begin{lemma}[Lemma \ref{lem:loneltwo:detail} in Section \ref{sec:detail}]\label{lem:loneltwo}
	We have
	\begin{align*}
		\norm{f_1-f'_1}_2\leq O(\sqrt{\norm{f_1-f'_1}_1}) \qquad \text{and} \qquad
		\norm{f - f'}_1 = O(\norm{f - f'}_2^{2/3})
	\end{align*}
\end{lemma}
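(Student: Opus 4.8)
The plan is to prove the two bounds independently; both reduce to a single structural observation about the functions in \eqref{eqn:hardinstance}. Recall that in the construction $I_1=[0,1]$ and $I_2=[-2,-1]$, so \emph{every} function appearing there --- $f$, $f'$, and each of $f_i,f_i'$ --- is a mixture of standard Gaussians $g_\mu$ whose means $\mu$ lie in the fixed interval $[-2,1]$. Consequently each of these functions is a genuine probability density, is bounded pointwise by $\tfrac{1}{\sqrt{2\pi}}$, and has tails decaying at least as fast as those of a unit-variance Gaussian. (For $f_1'$, the fact that it is a density with nonnegative coefficients summing to $1$ uses $C_{\Delta,+}\ge C_{\Delta,-}$, which holds for small $\Delta$ by Lemma~\ref{lem:diffofcoeff}; the same remark applies to $f_2'$.)

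For the first inequality I would simply invoke the elementary relation $\norm{h}_2^2\le\norminf{h}\cdot\norm{h}_1$, valid for any integrable $h$. Applying it to $h=f_1-f_1'$ and using $\norminf{f_1-f_1'}\le\norminf{f_1}+\norminf{f_1'}\le\tfrac{2}{\sqrt{2\pi}}$ gives $\norm{f_1-f_1'}_2^2\le\tfrac{2}{\sqrt{2\pi}}\,\norm{f_1-f_1'}_1$, and taking square roots yields $\norm{f_1-f_1'}_2=O\!\big(\sqrt{\norm{f_1-f_1'}_1}\big)$ with an absolute constant. By symmetry the identical argument handles $f_2-f_2'$.

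For the second inequality, set $\delta:=\norm{f-f'}_2$ and $h:=f-f'$. The plan is to split the line at a threshold $T>0$ and optimize. On $[-T,T]$, Cauchy--Schwarz gives $\int_{\abs{x}\le T}\abs{h}\le\sqrt{2T}\,\big(\int_{\abs{x}\le T}h^2\big)^{1/2}\le\sqrt{2T}\,\delta$. For the tail, a random variable $X$ with density $f$ has the form $M+Z$ where $M$ is a random mean supported in $I_1\cup I_2\subseteq[-2,1]$ and $Z\sim N(0,1)$ is independent, so $\E\abs{X}\le\E\abs{M}+\E\abs{Z}\le 2+1=3$; Markov's inequality then gives $\int_{\abs{x}>T}f=\prob{\abs{X}>T}\le 3/T$, and the same bound holds for $f'$, whence $\int_{\abs{x}>T}\abs{h}\le 6/T$. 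Combining, $\norm{f-f'}_1\le\sqrt{2T}\,\delta+6/T$ for every $T>0$, and the choice $T=\delta^{-2/3}$ balances the two terms and gives $\norm{f-f'}_1\le(\sqrt2+6)\,\delta^{2/3}=O\!\big(\norm{f-f'}_2^{2/3}\big)$.

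I do not anticipate a genuine obstacle here: this is essentially a bookkeeping lemma. The only points requiring care are (i) verifying that each function in \eqref{eqn:hardinstance} is a bona fide probability density, so that the uniform bound $\tfrac{1}{\sqrt{2\pi}}$ and the first-moment bound $3$ are legitimate \emph{absolute} constants, and (ii) confirming that no implied constant secretly depends on $\Delta$ --- which it does not, since the relevant sup-norms and moments are controlled entirely by the fixed intervals $[0,1]$ and $[-2,-1]$. If desired, the exponent $2/3$ could be improved: replacing Markov's bound $6/T$ by the Gaussian tail bound $e^{-\Omega(T^2)}$ and re-optimizing yields $\norm{f-f'}_1=O(\delta\cdot\text{polylog}(1/\delta))$; but the weaker exponent $2/3$ already suffices for the proof of Theorem~\ref{thm:main_lower}.
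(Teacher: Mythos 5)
Your proposal is correct. For the first inequality your argument is the same as the paper's: both amount to $\norm{h}_2^2\le \norminf{h}\,\norm{h}_1$ together with the pointwise bound $f_1,f_1'\le \tfrac{1}{\sqrt{2\pi}}$ for mixtures of unit-variance Gaussians (and your parenthetical check that $C_{\Delta,+}\ge C_{\Delta,-}$ for small $\Delta$, so that $f_1'$ is a genuine density, is a point the paper glosses over). For the second inequality you diverge from the paper in a useful way: the paper simply invokes Lemma 6 of \cite{nguyen2013convergence}, which gives $\norm{f-f'}_1=O((E+E')^{1/3}\norm{f-f'}_2^{2/3})$ in terms of the first absolute moments $E,E'$, and then bounds $E,E'=O(1)$ using the fact that all mixing means lie in $[-2,1]$. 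You instead reprove the needed special case from scratch: Cauchy--Schwarz on $[-T,T]$ contributes $\sqrt{2T}\,\delta$, Markov's inequality applied to $\E|X|\le 3$ contributes $6/T$ from the tails, and optimizing at $T=\delta^{-2/3}$ gives the $\delta^{2/3}$ rate. This is exactly the mechanism behind the cited lemma (interpolating between $L^2$ control on a window and moment control of the tails), so the two proofs buy the same constants and the same dependence on the moments; yours has the advantage of being self-contained and of making transparent that the exponent $2/3$ is an artifact of using only first moments, which is why your closing remark that Gaussian tails would upgrade the bound to $O(\delta\cdot\mathrm{polylog}(1/\delta))$ is also correct, though unnecessary for Theorem~\ref{thm:main_lower}.
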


Theorem \ref{thm:main_lower} is the main theorem to show that estimating components from a mixture of Gaussians requires super-polynomially many samples.
We reduce it to the problem of distinguishing two distributions given a finite number of samples.
The two distributions are $f$ and $f'$ defined as in \eqref{eqn:hardinstance}.
From the previous lemmas, these two pdfs are very close in $L^1$ norm and the components in each corresponding pair are relatively far away in $L^1$ norm.
Combining with Lemma \ref{lem:infothe}, we will prove Theorem \ref{thm:main_lower}.

\begin{theorem}[Restated Theorem \ref{thm:main_lower}]
    Let $\eps>0$ be a sufficiently small error and $I_1,I_2$ be two known disjoint intervals.
    There exists a distribution whose pdf is $f^*=\frac{1}{2}f^*_1+\frac{1}{2}f^*_2$ where $f^*_i\in \mathcal{G}_{i}$ such that no algorithm taking  a set of $n$ i.i.d. samples drawn from $f^*$ as input returns two pdfs $\widehat f_1,\widehat f_2$ such that $\norm{f^*_i - \widehat f_i}_1 < \eps$ with probability at least $1-\frac{1}{100}$ whenever $n<(\frac{1}{\eps})^{C\log\log \frac{1}{\eps}}$ where $C$ is an absolute constant.
\end{theorem}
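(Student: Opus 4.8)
The plan is to instantiate the construction of Section~\ref{sec:construct} at a grid width $\Delta$ chosen as a function of $\eps$, convert the $L^2$ estimates of Section~\ref{sec:analysis} into $L^1$ estimates via Lemma~\ref{lem:loneltwo}, and then apply the information-theoretic bound Lemma~\ref{lem:infothe} to the pair $(f,f')$ from \eqref{eqn:hardinstance}. For the first step, combining \eqref{eq:compdiff_3} with the first inequality of Lemma~\ref{lem:loneltwo} gives $\norm{f_1-f'_1}_1 \ge 2^{-O(1/\Delta)}$, and by the symmetry of the construction the same bound holds for $\norm{f_2-f'_2}_1$; the square in $\norm{\cdot}_2 = O(\sqrt{\norm{\cdot}_1})$ only worsens the absolute constant in the exponent. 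Combining \eqref{eq:totaldiff_3} with the second inequality of Lemma~\ref{lem:loneltwo} gives $\norm{f-f'}_1 = O\big(\norm{f-f'}_2^{2/3}\big) \le 2^{-\Omega((1/\Delta)\log(1/\Delta))}$. So there are absolute constants $c_1,c_2>0$ with $\norm{f_i-f'_i}_1 \ge 2^{-c_1/\Delta}$ and $\norm{f-f'}_1 \le 2^{-c_2(1/\Delta)\log(1/\Delta)}$.

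Next I would pick $\Delta$. Take $1/\Delta$ to be the largest positive integer $m$ with $2^{-c_1 m} \ge 2\eps$; then $m = \Theta(\log\tfrac1\eps)$, so $\norm{f_i-f'_i}_1 > 2\eps$, and for $\eps$ small this $m$ is large enough that the ``$\Delta$ sufficiently small'' hypotheses of all the lemmas hold. Since $1/\Delta = \Theta(\log\tfrac1\eps)$ we get $(1/\Delta)\log(1/\Delta) = \Theta\big(\log\tfrac1\eps\,\log\log\tfrac1\eps\big)$, hence $\norm{f-f'}_1 \le 2^{-\Omega(\log\frac1\eps\,\log\log\frac1\eps)} = \eps^{\,\Omega(\log\log\frac1\eps)}$.

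For the reduction, set $f^\ast := f$ (one member of the hard pair), noting $f^\ast_i \in \mathcal{G}_i$ by construction. Suppose toward a contradiction that some algorithm, given $n$ i.i.d.\ samples from an arbitrary valid mixture, returns $\widehat f_1,\widehat f_2$ with $\norm{f_i-\widehat f_i}_1 < \eps$ with probability $\ge 1-\tfrac{1}{100}$; in particular it would do so on samples from $f$ and, separately, on samples from $f'$ (both are genuine mixtures of two interval Gaussians). Since $\norm{f_1-f'_1}_1 > 2\eps$, the triangle inequality makes the events $\{\norm{\widehat f_1-f_1}_1 < \eps\}$ and $\{\norm{\widehat f_1-f'_1}_1 < \eps\}$ disjoint, so the test ``declare $f$ iff $\norm{\widehat f_1-f_1}_1 < \eps$'' distinguishes the two sources with probability $\ge 1-\tfrac{1}{100}$. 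By Lemma~\ref{lem:infothe} this forces $n \ge C'/\norm{f-f'}_1 \ge C'(1/\eps)^{\Omega(\log\log\frac1\eps)}$, contradicting $n < (1/\eps)^{C\log\log\frac1\eps}$ once $C$ is taken below the hidden constant. For general disjoint intervals $I_1,I_2$ one repeats Section~\ref{sec:construct} with $g_{-1}$ (resp.\ $g_0$) replaced by $g_a$ for a point $a \in I_2$ (resp.\ $b \in I_1$) and with the approximating grid placed inside $I_1$ (resp.\ $I_2$); the estimates of Section~\ref{sec:analysis} go through with constants depending only on the fixed intervals, which are absorbed into $C$.

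I expect the main obstacle to be essentially bookkeeping rather than a new idea: tracking how the exponents compose through the $L^1$/$L^2$ conversion, and verifying that a single parameter $\Delta$ can simultaneously force $\norm{f_i-f'_i}_1 = \Omega(\eps)$, keep $\norm{f-f'}_1$ super-polynomially small in $\eps$, and keep $\Delta$ in the regime where the lemmas apply --- all of which hold for small $\eps$ precisely because the required $1/\Delta$ scales like $\log\tfrac1\eps$. The single genuinely delicate analytic ingredient, the super-exponential Gaussian-approximation rate, is already isolated in Lemma~\ref{lem:gauss:approx}/Lemma~\ref{lem:beta}.
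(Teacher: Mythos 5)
Your proposal is correct and follows essentially the same route as the paper's own proof: construct the hard pair $(f,f')$ from Section~\ref{sec:construct}, convert the $L^2$ bounds \eqref{eq:compdiff_3} and \eqref{eq:totaldiff_3} to $L^1$ via Lemma~\ref{lem:loneltwo}, choose $1/\Delta=\Theta(\log\frac{1}{\eps})$, and invoke Lemma~\ref{lem:infothe} through the same triangle-inequality distinguishing test (the paper uses a $3\eps$ gap where you use $2\eps$, which is immaterial).
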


\begin{proof}
    Take $f^*$ to be $f$ defined in  \eqref{eqn:hardinstance}.
	Suppose there is an algorithm $\mathcal{A}$ that takes $P$ as the input and returns two pdfs $\widehat f_1,\widehat f_2$ such that $\norm{f^*_i - \widehat f_i}_1 \leq \eps$ with probability at least $1-\frac{1}{100}$. 
	We reduce it to the problem of distinguishing $f$ and $f'$ defined in \eqref{eqn:hardinstance}.
	From \eqref{eq:compdiff_3}, \eqref{eq:totaldiff_3} and Lemma \ref{lem:loneltwo}, we have
	\begin{align*}
		\norm{f-f'}_1 \leq \frac{1}{2^{\Omega((1/\Delta)\log(1/\Delta))}} \qquad \text{and}\qquad \norm{f_1-f'_1}_1 \geq \frac{1}{2^{O(1/\Delta)}}
	\end{align*}
	for any sufficiently small $\Delta>0$.
	By choosing $\frac{1}{\Delta}=\Theta(\log\frac{1}{\eps})$, we have
	\begin{align*}
		\norm{f-f'}_1 \leq \eps^{C_1\log \frac{1}{\eps}} \qquad \text{and}\qquad \norm{f_1-f'_1}_1 \geq 3\eps
	\end{align*}
	where $C_1$ is an absolute constant.
	If we are given a set of $n < (\frac{1}{\eps})^{C\log\log\frac{1}{\eps}}$ i.i.d. samples from one of $f$ and $f'$, we can apply the algorithm $\mathcal{A}$ on these samples.
	From the assumption, $\mathcal{A}$ returns two pdfs $\widehat f_1,\widehat f_2$ such that $\norm{f_i - \widehat f_i}_1\leq\eps$ or $\norm{f'_i - \widehat f_i}_1\leq\eps$.
	Since $\norm{f_1-f'_1}_1 \geq 3\eps$, we can use $\widehat f_1$ to determine  which of $f$ and $f'$ the samples are drawn from.
	It implies that we can distinguish $f$ and $f'$ with $(\frac{1}{\eps})^{C\log\log\frac{1}{\eps}}$ samples while $\norm{f-f'}_1 \leq \eps^{C_1\log\log \frac{1}{\eps}}$.
	It contradicts Lemma \ref{lem:infothe}.
\end{proof}

\section{Proof for the Upper Bound}

In this section, we prove Theorem~\ref{thm:main_upper}.
As we mentioned before, our algorithm is to reformulate the problem as solving a system of linear equations with samples.
Then, we carefully analyze the error and obtain the desired bounds.

For simplicity, we let $I_1=[r_1-\frac{1}{2},r_1+\frac{1}{2}]$ and $I_2=[r_2-\frac{1}{2},r_2+\frac{1}{2}]$ for some $r_1,r_2\in\mathbb{R}$.
Suppose we have a distribution whose pdf is
\begin{align*}
	f = w_1 f_1 + w_2 f_2
\end{align*}
where $f_i=\int_{\mu\in I_i} \nu_i(\mu) g_{\mu}(x)\dir \mu\in\mathcal{G}_{I_i}$.
WLOG, we set $r_1=0$ and $r_2=r$.
We assume that $I_1,I_2$ are known and hence $r$ is an absolute constant.
It is easy to extend our result to arbitrary intervals as long as the separation condition is satisfied.

We break the proof down into the following steps.
In Section \ref{sec:formulating}, we will formulate an appropriate class of functions to approximate the components.
In Section \ref{sec:reduction}, we will reduce the problem to the problem of solving a system of linear equations.
In Section \ref{sec:errorbound}, we will analyze the error induced by the approximation.
In Section \ref{sec:full}, we will give the full algorithm and analyze the sample complexity of the algorithm.
We will further defer the detailed calculations to Section \ref{sec:detail}.

\subsection{Formulating the Approximation}\label{sec:formulating}

We first formulate an appropriate approximation to the infinite-dimensional components $f_i$.
Note that $f_i$ are square-integrable functions.
We can expand $f_i$ in terms of the Hermite function basis as
\begin{align*}
	f_i = \sum_{j=0}^\infty \alpha_{i,j} \psi_{j,r_i}\numberthis\label{eq:fi_hermite}
\end{align*}
where $\alpha_{i,j} = \inner{f_i}{ \psi_{i,r_i}}$ for $i=1,2$ and $j\in \mathbb{N}_{0}$.
Define 
\begin{align*}
    \lambda_{i,j}:=w_i\alpha_{i,j}. 
\end{align*}

Let $\ell$ be any nonnegative integer.
Suppose we manage to approximate each $\lambda_{i,j}$ for $i=1,2$ and $j=0,1\cdots,\ell-1$, i.e. we have another $\widetilde f_{i,j} = \sum_{j=0}^{\ell-1}\widehat \lambda_{i,j} \psi_{j,r_i}$ where $\widehat \lambda_{i,j}$ is the approximation of $\lambda_{i,j}$.
We can show that $\widetilde f_i$ is close to the true $w_if_i$.
Note that $\widetilde f_i$ is just a linear combination of Hermite functions which can be negative; we will handle this issue later.

The following lemma quantifies how $\alpha_{i,j}$ decays, and exposes one of the crucial ingredients in our analysis: Indeed, instead of assuming the convolutional model \eqref{eq:npmix}, it is enough to assume that the $\alpha_{i,j}$ decay as below and the analysis goes through.
\begin{lemma}[Lemma \ref{lem:hermite_coeff:detail} in Section \ref{sec:detail}]\label{lem:hermite_coeff}
	For $i=1,2$ and any $j\in\mathbb{N}_{0}$, $\abs{\alpha_{i,j}} \leq O(1)\cdot\frac{1}{\sqrt{j!}(2\sqrt{2})^j}$.
\end{lemma}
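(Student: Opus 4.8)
\textbf{Proof proposal for Lemma~\ref{lem:hermite_coeff}.}
The plan is to bound $\alpha_{i,j}=\inner{f_i}{\psi_{j,r_i}}$ directly by exploiting the convolutional structure $f_i=\nu_i\conv g_0$ with $\supp(\nu_i)\subset I_i=[r_i-\tfrac12,r_i+\tfrac12]$. Since the Hermite function $\psi_{j,r_i}$ is centered at $r_i$, after translating everything by $-r_i$ we may assume $r_i=0$, so that $\nu_i$ is supported on $[-\tfrac12,\tfrac12]$ and we must bound $\inner{\nu_i\conv g_0}{\psi_j}$. First I would use Fubini to write
\begin{align*}
\alpha_{i,j}=\inner{\nu_i\conv g_0}{\psi_j}=\int_{-1/2}^{1/2}\nu_i(\mu)\,\inner{g_\mu}{\psi_j}\,\dir\mu,
\end{align*}
so that, since $\nu_i$ is a probability density on $[-\tfrac12,\tfrac12]$, we get $\abs{\alpha_{i,j}}\le\sup_{\abs{\mu}\le1/2}\abs{\inner{g_\mu}{\psi_j}}$.

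Next I would compute $\inner{g_\mu}{\psi_j}$ in closed form using the identity $g_\mu=\frac{1}{\sqrt{2\sqrt{\pi}}}\psi_{0,\mu}$ recorded in the preliminaries, together with formula \eqref{eq:hermite_inner} applied with $i=0$ (or equivalently \eqref{eq:hermite_inner_g} after swapping the roles of the two functions via a translation). Concretely, \eqref{eq:hermite_inner_g} gives $\inner{\psi_{j,0}}{g_\mu}=\frac{(-1)^j}{\sqrt{2^{j+1}j!\sqrt{\pi}}}e^{-\mu^2/4}\mu^j$, hence
\begin{align*}
\abs{\alpha_{i,j}}\le\sup_{\abs{\mu}\le 1/2}\frac{1}{\sqrt{2^{j+1}j!\sqrt{\pi}}}\,e^{-\mu^2/4}\,\abs{\mu}^j\le\frac{1}{\sqrt{2\sqrt{\pi}}}\cdot\frac{1}{\sqrt{2^j j!}}\cdot\Bigl(\tfrac12\Bigr)^j=O(1)\cdot\frac{1}{\sqrt{j!}\,(2\sqrt{2})^j},
\end{align*}
since $2^{j/2}\cdot 2^j=(2\sqrt2)^j$ and $e^{-\mu^2/4}\le1$. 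This is exactly the claimed bound, so the argument is essentially a one-line computation once the right closed form for $\inner{g_\mu}{\psi_j}$ is in hand.

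The only mild subtlety — and the step I would be most careful about — is justifying the interchange of integration (Fubini) and confirming the exact normalization constants in the Hermite function convention used here (the ``physicist's'' normalization with the $\sqrt{2^j j!\sqrt\pi}$ factor), since an off-by-$\sqrt2$ in the base would change whether the denominator is $(2\sqrt2)^j$ or $(\sqrt2)^j$; the stated constant $2\sqrt2$ comes precisely from the product of the $2^{j/2}$ in the Hermite normalization and the $(1/2)^j$ bound on $\abs{\mu}^j$ over $I_i$. I would also note, as the lemma's statement already hints, that this is the only place the convolutional model is used: any mixing measure supported in an interval of length $1$ yields the same decay, and more generally an interval of length $2s$ would give base $\sqrt2/s$ up to constants, consistent with Remark~\ref{rem:gen:simple}.
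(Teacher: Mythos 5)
Your proposal is correct and follows essentially the same route as the paper's proof: apply Fubini to reduce to $\inner{g_\mu}{\psi_{j,0}}$, use the closed form \eqref{eq:hermite_inner_g}, and bound $e^{-\mu^2/4}\abs{\mu}^j\le 2^{-j}$ on $[-\tfrac12,\tfrac12]$ before integrating out the probability density $\nu_i$. The only cosmetic difference is that you pass to a supremum over $\mu$ whereas the paper keeps the integral against $\nu_i$; the constants and the $(2\sqrt2)^j$ base come out identically.
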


The next lemma shows that the quality of the approximation on $\lambda_{i,j}$ implies the quality of the approximation on $w_if_i$ as $\ell\rightarrow\infty$.
\begin{lemma}[Lemma \ref{lem:tail:detail} in Section \ref{sec:detail}]\label{lem:tail}
	Let $\Delta>0$ and $\ell$ be a nonnegative integer that $\ell=\Omega(1)$.
	If $\abs{\lambda_{i,j} - \widehat \lambda_{i,j}} < \Delta$ for all $j\in[\ell]$, then $\norm{w_if_i - \widetilde f_i}_1 = O(\Delta \ell^{5/4} + \frac{w_i}{10^\ell})$.
\end{lemma}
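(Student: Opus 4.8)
The plan is to bound $\norm{w_if_i - \widetilde f_i}_1$ by splitting the difference into a \emph{tail} piece (the Hermite modes $j \geq \ell$ which $\widetilde f_i$ drops entirely) and a \emph{head} piece (the first $\ell$ modes, where $\widetilde f_i$ uses the approximate coefficients $\widehat\lambda_{i,j}$ rather than the true $\lambda_{i,j} = w_i\alpha_{i,j}$). Concretely, writing $w_if_i = \sum_{j=0}^\infty \lambda_{i,j}\psi_{j,r_i}$ and $\widetilde f_i = \sum_{j=0}^{\ell-1}\widehat\lambda_{i,j}\psi_{j,r_i}$, the triangle inequality gives
\begin{align*}
    \norm{w_if_i - \widetilde f_i}_1 \leq \sum_{j=0}^{\ell-1}\abs{\lambda_{i,j} - \widehat\lambda_{i,j}}\cdot\norm{\psi_{j,r_i}}_1 + \norm{\sum_{j=\ell}^{\infty}\lambda_{i,j}\psi_{j,r_i}}_1,
\end{align*}
and I would bound the two summands separately.

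For the head term, I use the hypothesis $\abs{\lambda_{i,j} - \widehat\lambda_{i,j}} < \Delta$ together with the $L^1$ decay of the Hermite functions, Lemma~\ref{lem:hermite_l1}, which gives $\norm{\psi_{j,r_i}}_1 = \norm{\psi_j}_1 = O(j^{1/4})$ (translation invariance of the $L^1$ norm). Hence the head term is at most $\Delta \sum_{j=0}^{\ell-1} O(j^{1/4}) = O(\Delta \cdot \ell \cdot \ell^{1/4}) = O(\Delta\ell^{5/4})$, which is exactly the first term in the claimed bound.

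For the tail term, I again use the triangle inequality and Lemma~\ref{lem:hermite_l1} to get $\norm{\sum_{j=\ell}^\infty \lambda_{i,j}\psi_{j,r_i}}_1 \leq \sum_{j=\ell}^\infty \abs{\lambda_{i,j}}\cdot O(j^{1/4})$, and then invoke Lemma~\ref{lem:hermite_coeff}, which says $\abs{\alpha_{i,j}} \leq O(1)\cdot\frac{1}{\sqrt{j!}(2\sqrt2)^j}$, so $\abs{\lambda_{i,j}} = w_i\abs{\alpha_{i,j}} \leq O(w_i)\cdot\frac{1}{\sqrt{j!}(2\sqrt2)^j}$. The resulting series $\sum_{j\geq \ell} \frac{j^{1/4}}{\sqrt{j!}(2\sqrt2)^j}$ converges extremely fast; since $\frac{1}{\sqrt{j!}} \leq 1$ and $(2\sqrt2)^j \geq 2^j$ and the $j^{1/4}$ factor is dwarfed, I would bound the whole tail by a geometric-type estimate of the form $\frac{1}{(2\sqrt2)^\ell}$ times a convergent constant, and crudely absorb everything into $\frac{1}{10^\ell}$ (using $2\sqrt2 < 2.83$, one actually needs a small additional argument — e.g. use $\frac{1}{\sqrt{j!}}$ to beat down the base to below $1/10$ for $j$ past a fixed threshold, or just note $\frac{j^{1/4}}{\sqrt{j!}(2\sqrt2)^j} \leq \frac{1}{10^j}$ for all $j$ large enough, say $j \geq \ell = \Omega(1)$, which is why the lemma assumes $\ell = \Omega(1)$). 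This yields the $O(w_i/10^\ell)$ term. Combining the two estimates gives $\norm{w_if_i - \widetilde f_i}_1 = O(\Delta\ell^{5/4} + w_i/10^\ell)$, as desired.

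The only mildly delicate point — the ``main obstacle'' such as it is — is pinning down the tail bound so that the decaying base genuinely comes out as $1/10$ rather than merely $1/(2\sqrt2)$: one must use a chunk of the $1/\sqrt{j!}$ super-exponential factor (valid for $j \geq \ell$ with $\ell$ at least some absolute constant, hence the $\ell = \Omega(1)$ hypothesis) to convert $(2\sqrt2)^{-j}$ into $10^{-j}$ and simultaneously kill the polynomial factor $j^{1/4}$ from Lemma~\ref{lem:hermite_l1}. Everything else is a routine application of the triangle inequality and the two cited lemmas.
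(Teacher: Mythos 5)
Your proposal is correct and follows essentially the same route as the paper's proof: the same triangle-inequality split into the head term (bounded by $\Delta$ times $\sum_{j<\ell}\norm{\psi_j}_1 = O(\ell^{5/4})$ via Lemma~\ref{lem:hermite_l1}) and the tail term (bounded via Lemma~\ref{lem:hermite_coeff} and Lemma~\ref{lem:hermite_l1}). Your extra remark about using part of the $1/\sqrt{j!}$ factor to push the base of the tail below $1/10$ is a detail the paper glosses over, and it is exactly the right justification.
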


\subsection{Reduction to Solving a System of Linear Equations}\label{sec:reduction}

Previously, in \eqref{eq:fi_hermite}, we expanded 
\begin{align*}
	f_1 = \sum_{j=0}^\infty \alpha_{1,j} \psi_{j,r_1}, \qquad f_2 = \sum_{j=0}^\infty \alpha_{2,j} \psi_{j,r_2}.
\end{align*}
Let $\lambda_{i,j}=w_i\alpha_{i,j}$ for $i=1,2$ and $j\in\mathbb{N}_{0}$.
It follows that
\begin{align*}
	f = \sum_{j=0}^\infty \lambda_{1,j} \psi_{j,r_1} + \sum_{j=0}^\infty \lambda_{2,j} \psi_{j,r_2}
\end{align*}
or, by projecting $f$ onto each $ \psi_{k,r_1}$ for $i=1,2$ and $k\in [\ell]$,
\begin{align*}
	\inner{f}{\psi_{k,r_i}} = \sum_{j=0}^\infty \lambda_{1,j}\inner{\psi_{j,r_1}}{\psi_{k,r_i}}+\sum_{j=0}^\infty \lambda_{2,j}\inner{\psi_{j,r_2}}{\psi_{k,r_i}}.
\end{align*}
Then, we have
\begin{align*}
    y = A \lambda + \sum_{j=\ell}^\infty\lambda_{1,j} z_{1,j} + \sum_{j=\ell}^\infty \lambda_{2,j}z_{2,j}\numberthis \label{eq:hermite_expand_ell}
\end{align*}
where 
\begin{align*}
    &\text{$A$ is the $2\ell$-by-$2\ell$ matrix whose entries are given by } \inner{\psi_{j_1,r_{i_1}}}{\psi_{j_2,r_{i_2}}}  \text{ for $i_1,i_2=1,2$; $j_1,j_2\in[\ell]$,} \\
    &\text{$\lambda$ is the $2\ell$-dimensional vector whose entries are given by } w_i\alpha_{i,j} \text{ for $i=1,2$; $j\in[\ell]$,} \\
	&\text{$y$ is the $2\ell$-dimensional vector whose entries are given by } \inner{f}{\psi_{j,r_i}} \text{ for $i=1,2$ ; $j\in[\ell]$,} \\
	&z_{1,j} = \begin{bmatrix}0 \\ \vdots \\ 0 \\ \inner{\psi_{j,r_1}}{\psi_{0,r_2}} \\ \vdots\\ \inner{\psi_{j,r_1}}{\psi_{\ell-1,r_2}} \end{bmatrix}\qquad \text{and} \qquad
    z_{2,j} = \begin{bmatrix}\inner{\psi_{j,r_2}}{\psi_{0,r_1}} \\ \vdots \\ \inner{\psi_{j,r_2}}{\psi_{\ell-1,r_1}} \\ 0 \\ \vdots \\ 0 \end{bmatrix} \qquad \text{for $j\geq \ell$}.
\end{align*}
Let $f':\mathbb{R}\rightarrow \mathbb{R}$ be any square-integrable function.
The function $f'$ is expected to be the approximation of $f$ from the samples that we will specify later.
Consider the following system of $2\ell$ linear equations with $2\ell$ variables
\begin{align}
\label{eq:linsys}
	A x = y'.
\end{align}
where $y'$ is the $2\ell$-dimensional vector whose entries are given by $\inner{f'}{\psi_{j,r_i}}$ for $i=1,2$ and $j\in[\ell]$.
Let $\widehat{\lambda} = A^{-1}y'$ be the solution of the above system. 
Then, we have
\begin{align*}
    \widehat{\lambda} = A^{-1}y' = A^{-1}y + A^{-1}y^{\Delta} \qquad \text{where $y^{\Delta}:=y'-y$.}
\end{align*}
Note that each entry of $y^{\Delta}$ is $\inner{\Delta f}{\psi_{k,0}}$ for $k\in\mathbb{N}_0$ where 
\begin{align*}
    \Delta f :=f'-f.
\end{align*}
Plugging \eqref{eq:hermite_expand_ell} into the above equation,
\begin{align*}
	\widehat \lambda
	& =
	A^{-1}\underbrace{\bigg(A \lambda + \sum_{j=\ell}^\infty\lambda_{1,j} z_{1,j} + \sum_{j=\ell}^\infty \lambda_{2,j}z_{2,j} \bigg)}_{\text{from \eqref{eq:hermite_expand_ell}}} + A^{-1}y^{\Delta} \\
	& =
	\lambda+\sum_{j=\ell}^\infty \lambda_{1,j}A^{-1}z_{1,j}+\sum_{j=\ell}^\infty \lambda_{2,j}A^{-1}z_{2,j} + A^{-1}y^{\Delta} \numberthis\label{eq:lambda_error}
\end{align*}
Let
\begin{align*}
	\mathcal{E}_t & = \sum_{j=\ell}^\infty \lambda_{1,j}A^{-1}z_{1,j}+\sum_{j=\ell}^\infty \lambda_{2,j}A^{-1}z_{2,j}, \\
	\mathcal{E}_a & = A^{-1}y^{\Delta}.
\end{align*}
These are the truncation error $\mathcal{E}_t$ and the approximation error $\mathcal{E}_a$ introduced in \eqref{eq:approx:error}.
In other words, \eqref{eq:lambda_error} can be rewritten as
\begin{align*}
	\widehat \lambda - \lambda = \mathcal{E}_t + \mathcal{E}_a
\end{align*}
and we need to bound the entries of $\mathcal{E}_t$ and $\mathcal{E}_a$ to invoke Lemma \ref{lem:tail}.
Note that $\widehat \lambda, \lambda, \mathcal{E}_t, \mathcal{E}_a$ have $2\ell$ entries and each one corresponds to a coefficient of $ \psi_{k,r_i}$ for $i=1,2$ and $k\in[\ell]$.

\subsection{Bounding the Error}\label{sec:errorbound}

In this section, we will bound the truncation error $\mathcal{E}_t$ and the approximation error $\mathcal{E}_a$.

We  first bound the entries of $\mathcal{E}_t$.
Let $\mathcal{E}_{t,i,j}$ be the vector $A^{-1}z_{1,j}$ for $i=1,2$ and $j\geq \ell$.
Namely, we have
\begin{align*}
	\mathcal{E}_t = \sum_{j=\ell}^\infty\lambda_{1,j}\mathcal{E}_{t,1,j} + \sum_{j=\ell}^\infty\lambda_{2,j}\mathcal{E}_{t,2,j}.
\end{align*}
We only need to analyze the first sum $\sum_{j=\ell}^\infty\lambda_{1,j}\mathcal{E}_{t,1,j}$ and by symmetry we can conclude a similar bound for the second sum $\sum_{j=\ell}^\infty\lambda_{2,j}\mathcal{E}_{t,2,j}$.
By Cramer's rule (Lemma \ref{lem:cramer}), the entry of $ \mathcal{E}_{t,1,j} = A^{-1}z_{1,j}$ indexed at $(i,k)$ is given by 
\begin{align*}
    \frac{\det(A^{(i,k)\to j})}{\det(A)}
\end{align*}
where $A^{(i,k)\to j}$ is the $2\ell$-by-$2\ell$ matrix same as $A$ except that the column indexed at $(i,k)$ is replaced with $z_{1,j}$ for $i=1,2$, $k\in[\ell]$ and $j\geq \ell$.
Lemma \ref{lem:det_t_numor} gives a bound on $\abs{\det\big(A^{(i,k)\to j}\big)}$ when comparing to $\det(A)$.
\begin{lemma}[Lemma \ref{lem:det_t_numor:detail} in Section \ref{sec:detail}]\label{lem:det_t_numor}
    For any $\ell\in\mathbb{N}_0$, $k\in[\ell]$ and $j\geq \ell$, we have
    \begin{align*}
        \abs{\det\big(A^{(1,k)\to j}\big)} & \leq 	\bigg(e^{\frac{1}{4}r^2}\cdot 4^{\ell}\cdot \bigg(\frac{1}{\sqrt{j!}}(\frac{r}{2\sqrt{2}})^j\bigg)^{-1}\bigg)\det(A) \qquad \text{and}\\
        \abs{\det\big(A^{(2,k)\to j}\big)} & \leq \bigg(e^{\frac{5}{4}r^2}\cdot4^\ell \cdot\bigg(\frac{1}{\sqrt{j!}}(\frac{r}{2\sqrt{2}})^j\bigg)^{-1}\bigg)\det(A).
    \end{align*}
    Hence, the absolute values of the entries of $\mathcal{E}_{t,1,j}$ are bounded by $O\big(4^\ell \cdot\big(\frac{1}{\sqrt{j!}}(\frac{r}{2\sqrt{2}})^j\big)^{-1}\big)$, i.e.
    \begin{align*}
        \abs{(\mathcal{E}_{t,1,j})_{i,k}} \leq C\cdot 4^\ell \cdot\big(\frac{1}{\sqrt{j!}}(\frac{r}{2\sqrt{2}})^j\big)^{-1} \qquad \text{for $i=1,2$ and $k\in[\ell]$.}
    \end{align*}
    Here, $C$ is an absolute constant.
\end{lemma}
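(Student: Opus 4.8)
The natural route, given the Cramer's rule / Cauchy--Binet machinery already set up, is as follows. Recall that we have normalized $r_1=0$ and $r_2=r$, so $A$ is the Gram matrix of the $2\ell$ functions $\psi_{0,0},\dots,\psi_{\ell-1,0},\psi_{0,r},\dots,\psi_{\ell-1,r}$. By Cramer's rule (Lemma~\ref{lem:cramer}) the $(i,k)$-entry of $\mathcal{E}_{t,1,j}=A^{-1}z_{1,j}$ equals $\det(A^{(i,k)\to j})/\det(A)$, so it suffices to establish the two displayed determinant inequalities; the entrywise bound on $\mathcal{E}_{t,1,j}$ then follows, since $r$ is an absolute constant and the prefactors $e^{r^2/4}$, $e^{5r^2/4}$ are absorbed into the $O(\cdot)$.

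First I would pass to the orthonormal basis of Hermite functions centered at $0$. Let $\Phi$ be the infinite-by-$2\ell$ matrix whose $q$-th column is the Hermite expansion $(\inner{b_q}{\psi_{m,0}})_{m\ge 0}$ of the $q$-th of the above functions $b_q$, so that $A=\Phi^{\top}\Phi$. Because $r_1=0$, the entries of $z_{1,j}$ are exactly $\inner{b_q}{\psi_{j,0}}$ (the first $\ell$ of them vanish since $j\ge\ell$), i.e.\ $z_{1,j}=\Phi^{\top}e_j$, and hence $A^{(1,k)\to j}=\Phi^{\top}\widetilde\Phi$ where $\widetilde\Phi$ is $\Phi$ with the column for $(1,k)$ replaced by $e_j$. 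Applying Cauchy--Binet (Lemma~\ref{lem:cb_formula}) to $\det(\Phi^{\top}\Phi)$ and $\det(\Phi^{\top}\widetilde\Phi)$, and exploiting the structural fact that the first $\ell$ columns of $\Phi$ are the standard basis vectors $e_0,\dots,e_{\ell-1}$ (as $\psi_{0,0},\dots,\psi_{\ell-1,0}$ are themselves basis elements), the $2\ell$-subset sums collapse: only index sets containing $[\ell]$ (respectively $[\ell]\cup\{j\}$) contribute, and deleting those rows and columns leaves $\ell\times\ell$ minors of the single matrix $\Psi_T:=\big(\inner{\psi_{t,0}}{\psi_{q,r}}\big)_{t\in T,\,q\in[\ell]}$. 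I expect this to yield
\[
\det(A)=\sum_{T}\big(\det\Psi_T\big)^2,\qquad
\det\!\big(A^{(1,k)\to j}\big)=\sum_{T'}\pm\,\det\Psi_{\{j\}\cup T'}\cdot\det\Psi_{\{k\}\cup T'},
\]
the first sum over $\ell$-subsets $T\subseteq\{\ell,\ell+1,\dots\}$ and the second over $(\ell-1)$-subsets $T'\subseteq\{\ell,\ell+1,\dots\}\setminus\{j\}$.

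The remaining task is one uniform estimate: for each $T'$, bound $\big|\det\Psi_{\{j\}\cup T'}\big|\cdot\big|\det\Psi_{\{k\}\cup T'}\big|$ by $e^{r^2/4}4^{\ell}\sqrt{j!}\,(2\sqrt2/r)^{j}$ times one of the squared minors $(\det\Psi_T)^2$ in $\det(A)$, via a correspondence $T'\mapsto T$ of bounded multiplicity, so that summing over $T'$ gives $|\det(A^{(1,k)\to j})|\le e^{r^2/4}4^{\ell}\sqrt{j!}\,(2\sqrt2/r)^{j}\det(A)$. For this I would use the explicit formulas \eqref{eq:hermite_inner}--\eqref{eq:hermite_inner_g} to read off that the row of $\Psi_T$ indexed by $t$ has entries of magnitude at most $\tfrac{1}{\sqrt{t!}}(r/\sqrt2)^{t}$ up to polynomial-in-$t$ factors; pulling this leading factor out of each row normalizes $\det\Psi_T$ to a bounded determinant and isolates precisely the $\sqrt{j!}\,(r/\sqrt2)^{-j}$ blow-up produced by the ``bad'' row $j\ge\ell$ together with the mismatched small-index row $k$. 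The leftover slack ($4^{\ell}$, and the $e^{r^2/4}$ prefactor) is what one loses to Hadamard- and binomial-type bounds on the normalized $\ell\times\ell$ determinants and to summing the tail over the infinitely many $T'$, which converges absolutely thanks to the $1/\sqrt{t!}$ decay of the rows. The $(2,k)\to j$ case runs identically with $\psi_{k,r}$ replacing $\psi_{k,0}$ in the mismatched row; carrying the shifted frame through the estimate costs one additional $e^{r^2}$, producing $e^{5r^2/4}$.

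The hard part is this last uniform determinant estimate. Since $\det(A)\to 0$ as $\ell\to\infty$ (the test vector $u$ in the overview already shows the smallest eigenvalue decays), one cannot afford to bound numerator and denominator separately; the argument must rely on the exact row normalization dictated by the Hermite inner-product formula and on the cancellation already built into the Cauchy--Binet expansion, and it must be uniform in the infinite family of index sets $T'$. The full computation is carried out in Lemma~\ref{lem:det_t_numor:detail} in Section~\ref{sec:detail}.
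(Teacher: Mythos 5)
Your setup is exactly the paper's: Cramer's rule reduces the entrywise bound to the two determinant inequalities, the factorizations $A=V^{\top}V$ and $A^{(i,k)\to j}=V^{\top}V^{(i,k)\to j}$ in the Hermite basis centered at $0$ are the ones used, and Cauchy--Binet collapses everything (because the first $\ell$ columns of $V$ are standard basis vectors) to sums of products of $\ell\times\ell$ minors $U_{K\cup\{k\}}$, $U_{K\cup\{j\}}$ indexed by subsets $K$ of $\{\ell,\ell+1,\dots\}$. Your injection $T'\mapsto T'\cup\{j\}$ in place of the paper's Cauchy--Schwarz step is a harmless variant: both work once the minor-ratio estimate is in hand.

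The gap is precisely that estimate, and the tool you name cannot deliver it. You need, uniformly over the infinite family of index sets $T'$, a bound of the form $\abs{\det\Psi_{\{k\}\cup T'}}\le C_{j,\ell}\,\abs{\det\Psi_{\{j\}\cup T'}}$, which requires a \emph{lower} bound on the denominator minor (or an exact evaluation of both); Hadamard-type inequalities only bound determinants from above, and a priori these minors could vanish or be arbitrarily small relative to one another, in which case "pulling out the leading row factor $\tfrac{1}{\sqrt{t!}}(r/\sqrt2)^t$ and bounding the normalized determinant" proves nothing about the ratio. The paper closes this by showing (Lemmas~\ref{lem:det_w} and~\ref{lem:det_part}) that each minor is a generalized Vandermonde determinant with the exact closed form $\abs{\det(U_K)}=e^{-\ell r^2/4}\sqrt{1/(F_{[\ell]}F_K)}(r/\sqrt2)^{\Sigma_K-\ell(\ell-1)/2}C_K$, so the ratio collapses to $\sqrt{j!/k!}\,(r/\sqrt2)^{k-j}\prod_{c}\abs{a_c-k}/\abs{a_c-j}$, and then a separate combinatorial argument (Lemma~\ref{lem:ratio}) shows $\prod_c\abs{a_c-k}/\abs{a_c-j}\le 2^j4^{\ell}$ --- which is where the $4^{\ell}$ actually originates, not from Hadamard losses or tail summation. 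The $(2,k)\to j$ case additionally needs the exact expansion of $U^{(-k)}_K$ (Lemma~\ref{lem:det_part_2}), not merely "the same argument with $\psi_{k,r}$ replacing $\psi_{k,0}$". As written, your proposal identifies the right architecture but defers the only genuinely hard step to the paper's own appendix, and the substitute mechanism it sketches for that step would fail.
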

The key idea is to express $A^{(i,k)\to j}$ as a product of two "infinite-dimensional" matrices of transpose shapes and apply Cauchy-Binet formula (Lemma \ref{lem:cb_formula}).

Finally, we combine Lemma \ref{lem:det_t_numor} and Lemma \ref{lem:hermite_coeff} to prove that the absolute value of each entry of $\mathcal{E}_t$ is small.
By Lemma~\ref{lem:det_t_numor}, each entry of $\mathcal{E}_{t,1,j}$ is bounded by 
\begin{align*}
	O\big(4^\ell \cdot\big(\frac{1}{\sqrt{j!}}(\frac{r}{2\sqrt{2}})^j\big)^{-1}\big)
\end{align*}
Also, by Lemma~\ref{lem:hermite_coeff}, each entry of $\lambda_{1,j}\mathcal{E}_{t,1,j}$ is bounded by 
\begin{align*}
	O\big(\frac{w_1}{\sqrt{j!}(2\sqrt{2})^j}\big) \cdot O\big(4^\ell \cdot\big(\frac{1}{\sqrt{j!}}(\frac{r}{2\sqrt{2}})^j\big)^{-1}\big)
	& =
	O\big(w_1\cdot 4^\ell\cdot \frac{1}{r^j}\big).
\end{align*}
Hence, each entry of $\sum_{j=\ell}^\infty\lambda_{1,j}\mathcal{E}_{t,1,j}$ is bounded by $O\big(w_1\cdot(\frac{4}{r})^{\ell}\big)$.
By the similar argument in Lemma~\ref{lem:det_t_numor}, the absolute value of each entry of $\sum_{j=\ell}^\infty\lambda_{2,j}\mathcal{E}_{t,2,j}$ is bounded by $O\big(w_2\cdot(\frac{4}{r})^{\ell}\big)$.
Hence, the absolute value of each entry of $\mathcal{E}_t$ is bounded by $O\big((\frac{4}{r})^{\ell}\big)$, i.e.
\begin{align*}
    \abs{(\mathcal{E}_t)_{i,k}} \leq C\cdot(\frac{4}{r})^{\ell} \qquad \text{for $i=1,2$ and $k\in[\ell]$.} \numberthis\label{eq:truncate_error}
\end{align*}
Here, $C$ is an absolute constant.

Now, we will analyze $\mathcal{E}_a$.
Recall that 
\begin{align*}
	\mathcal{E}_a = A^{-1}y^{\Delta}.
\end{align*}
By Cramer's rule (Lemma~\ref{lem:cramer}), the entry of $\mathcal{E}_a$ indexed at $(i,k)$ is 
\begin{align*}
	\frac{\det\big(A^{(i,k)\to \Delta}\big)}{\det(A) }
\end{align*}
where $A^{(i,k)\to \Delta}$ is the $2\ell$-by-$2\ell$ matrix same as $A$ except that the column indexed at $(i,k)$ is replaced with $y^{\Delta}$ for $i=1,2$ and $k\in[\ell]$.
Lemma \ref{lem:det_a_numor} gives a bound on $\abs{\det\big(A^{(i,k)\to \Delta}\big)}$ when comparing to $\det(A)$.

\begin{lemma}[Lemma \ref{lem:det_a_numor:detail} in Section \ref{sec:detail}]\label{lem:det_a_numor}
    For any $\ell\in\mathbb{N}_0$, $i=1,2$ and $k\in[\ell]$, we have
    \begin{align*}
        \abs{\det\big(A^{(i,k)\to \Delta}\big)} < 2^{O(\ell\log \ell)} \cdot \norm{\Delta f}_2\cdot\det(A).
    \end{align*}
\end{lemma}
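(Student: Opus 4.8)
The plan is to follow the same strategy as for Lemma~\ref{lem:det_t_numor}: express the determinant through the Cauchy--Binet formula applied to a product of two infinite matrices of transpose shapes, and then reduce everything to a lower bound on the smallest eigenvalue of $A$. Recall that $A$ is the Gram matrix of the $2\ell$ functions $\{\psi_{j,r_i}\}_{i\in\{1,2\},\,j\in[\ell]}$, its column indexed by $(i,k)$ being the vector of inner products of all these functions against $\psi_{k,r_i}$, whereas the replacement column $y^{\Delta}$ in $A^{(i,k)\to\Delta}$ is the vector of inner products against $\Delta f$. Expanding every function in a fixed orthonormal basis of $L^2$ we may write $A=B^{\top}B$ and $A^{(i,k)\to\Delta}=B^{\top}\widetilde B$, where $B$ is the (infinite) coordinate matrix of these $2\ell$ functions and $\widetilde B$ is $B$ with the column indexed by $(i,k)$ replaced by the coordinates of $\Delta f$. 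Applying Cauchy--Binet (Lemma~\ref{lem:cb_formula}) to each side and then Cauchy--Schwarz on the resulting sum of products of minors gives
\begin{align*}
	\abs{\det\big(A^{(i,k)\to\Delta}\big)}
	=\Big|\textstyle\sum_{S}\det(B_S)\det(\widetilde B_S)\Big|
	\le\sqrt{\textstyle\sum_{S}\det(B_S)^{2}}\cdot\sqrt{\textstyle\sum_{S}\det(\widetilde B_S)^{2}}
	=\sqrt{\det(A)}\cdot\sqrt{\det(G)},
\end{align*}
where $G=\widetilde B^{\top}\widetilde B$ is the Gram matrix of the system obtained from $\{\psi_{j,r_i}\}$ by swapping $\psi_{k,r_i}$ for $\Delta f$. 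Peeling off the $\Delta f$ factor from this Gram determinant,
\begin{align*}
	\det(G)=\det(\widehat A)\cdot\mathrm{dist}\big(\Delta f,\ \spn\{\psi_{j',r_{i'}}:(i',j')\neq(i,k)\}\big)^{2}\le\det(\widehat A)\cdot\norm{\Delta f}_2^{2},
\end{align*}
where $\widehat A$ is the principal submatrix of $A$ with row and column $(i,k)$ deleted. Combining the two displays, it suffices to prove $\det(\widehat A)\le 2^{O(\ell\log\ell)}\det(A)$.

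By Cauchy interlacing, $\det(\widehat A)\le\det(A)/\lambda_{\min}(A)$, so it is enough to show $\lambda_{\min}(A)\ge 2^{-O(\ell\log\ell)}$. This is where the Hermite structure enters. Writing $r=r_2-r_1$ and letting $M$ be the $\ell\times\ell$ matrix with $M_{jk}=\inner{\psi_{j,r_1}}{\psi_{k,r_2}}$, the matrix $A$ has the block form $\left(\begin{smallmatrix}I_\ell & M\\ M^{\top}&I_\ell\end{smallmatrix}\right)$, whose eigenvalues are $1\pm\sigma_i(M)$; hence $\lambda_{\min}(A)=1-\norm{M}_{\mathrm{op}}=\tfrac12\min\{\norm{p-q}_2^{2}:p\in V_1,\,q\in V_2,\,\norm{p}_2=\norm{q}_2=1\}$, where $V_1=\spn\{\psi_{j,r_1}\}_{j\in[\ell]}$ and $V_2=\spn\{\psi_{j,r_2}\}_{j\in[\ell]}$. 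So the remaining task is purely geometric: a unit-norm function of the form $($polynomial of degree $<\ell)\times($Gaussian centred at $r_2)$ cannot be approximated in $L^2$ to within $2^{-\omega(\ell\log\ell)}$ by a function of the form $($polynomial of degree $<\ell)\times($Gaussian centred at $r_1)$. One establishes this using the explicit inner-product formula \eqref{eq:hermite_inner}--\eqref{eq:hermite_inner_g}; the exponent $\ell\log\ell$ is dictated by the $\tfrac{1}{\sqrt{j!}}$ decay of Hermite coefficients (the same mechanism responsible for the $\sqrt{j!}$ in Lemma~\ref{lem:det_t_numor}, and closely related to the super-geometric rate of rational approximation of the exponential), and the vector $u$ from Section~\ref{sec:overview_upper} shows this rate is tight.

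The main obstacle is precisely this last step — the lower bound $\lambda_{\min}(A)\ge 2^{-O(\ell\log\ell)}$, equivalently quantifying the angular separation of the two truncated Hermite subspaces $V_1,V_2$. Cheap bounds are useless here: $\det(A)$ is itself vastly smaller than $2^{-O(\ell\log\ell)}$ (decaying roughly like $2^{-\Theta(\ell^{2})}$), so the estimate $\lambda_{\min}(A)\ge\det(A)$ is hopelessly weak, and one instead needs a genuine argument exploiting the Hermite structure via \eqref{eq:hermite_inner}, paralleling the delicate determinant estimates behind Lemma~\ref{lem:det_t_numor}. The remaining ingredients — the Cauchy--Binet/Cauchy--Schwarz reduction, the Gram-determinant peeling, and the interlacing step — are routine.
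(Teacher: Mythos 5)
Your reduction is sound as far as it goes, and the first half is actually a clean reorganization of what the paper does: the paper also writes $A^{(i,k)\to\Delta}=V^{\top}V^{(i,k)\to\Delta}$, applies Cauchy--Binet and then Cauchy--Schwarz to split off a factor of $\sqrt{\det(A)}$. Your Gram-determinant peeling $\det(G)\le\det(\widehat A)\,\norm{\Delta f}_2^{2}$ and the interlacing step $\det(\widehat A)\le\det(A)/\lambda_{\min}(A)$ are both correct, and they extract the $\norm{\Delta f}_2$ factor somewhat more elegantly than the paper's cofactor expansion of $\det(U_K^{\Delta})$ along the $\Delta f$ column.

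The problem is that you stop exactly where the proof begins. The entire content of the lemma is the quantitative estimate you defer --- $\lambda_{\min}(A)\ge 2^{-O(\ell\log\ell)}$, or the weaker bound $\det(\widehat A)\le 2^{O(\ell\log\ell)}\det(A)$ that your argument actually requires --- and you offer only a heuristic (the $1/\sqrt{j!}$ decay of Hermite coefficients) for why the exponent should be $\ell\log\ell$. The statement is true, but proving it is precisely the delicate part: the paper never introduces $\widehat A$ or $\lambda_{\min}(A)$ at all. Instead it expands $\det(U^{\Delta}_K)$ along the $\Delta f$ column, reindexes the resulting double sum so that each minor $\det(U_{K\cup\{k\}})$ with $K\in\mathcal{K}'$ can be paired with a minor $\det(U_{K\cup\{j\}})$, $\ell\le j\le 2\ell-1$, appearing in the Cauchy--Binet expansion of $\det(A)$, and bounds the ratio of these two minors by $e^{r^2/4}4^{\ell}\sqrt{j!}\,(2\sqrt2/r)^{j}=2^{O(\ell\log\ell)}$ using closed-form evaluations of the determinants $\det((W_r)_{K,[\ell]})$ (Lemmas~\ref{lem:det_w}--\ref{lem:gamma}, packaged as Lemmas~\ref{lem:det_sum_i1} and~\ref{lem:det_sum_i2}). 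Incidentally, your target $\det(\widehat A)/\det(A)$ equals $\det(A)^{-1}\big(\det(A)+\sum_{K\in\mathcal{K}'}\det(U_{K\cup\{k\}})^{2}\big)$ after the same Cauchy--Binet expansion, so your route and the paper's converge on the same minor-ratio estimates; you have simply not supplied them. As written, the proposal is a correct reduction to a statement that is as hard as the lemma itself, not a proof.
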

Similar to the proof of Lemma \ref{lem:det_t_numor}, the key idea is to express $A^{(i,k)\to \Delta}$ as a product of two "infinite-dimensional" matrices of transpose shapes and apply Cauchy-Binet formula (Lemma \ref{lem:cb_formula}).

Hence, the absolute values of the entries of $\mathcal{E}_a$ are bounded by $2^{O(\ell\log \ell)} \cdot \norm{\Delta f}_2$, i.e.
\begin{align*}
    \abs{(\mathcal{E}_a)_{i,k}} \leq 2^{C\cdot \ell\log \ell} \cdot \norm{\Delta f}_2 \qquad \text{for $i=1,2$ and $k\in[\ell]$.} \numberthis\label{eq:approx_error}
\end{align*}
Here, $C$ is an absolute constant.
With Lemma \ref{lem:det_a_numor}, we conclude that the entries of $\mathcal{E}_a$ are small as long as we have a good approximation on $f$, i.e. $\norm{\Delta f}_2$ is small.

\subsection{Full Algorithm}\label{sec:full}

Recall that the central idea of our algorithm is to estimate the coefficients in the Hermite function expansion for each component.
In Section \ref{sec:formulating}, we proved that we only need to consider the first $\ell$ coefficients in the Hermite function expansion for each component for sufficiently large $\ell$.
In Section \ref{sec:reduction}, we reduced the problem of estimating the coefficients to the problem of solving a system of linear equations.
We are now ready to describe our full algorithm.
Algorithm \ref{alg:main} is an algorithm that takes samples drawn from a mixture of two interval Gaussians as an input and returns two pdfs with the desired guarantee.

\begin{algorithm}[t]
	\caption{Estimating components}\label{alg:main}
	{\bf Input:} A set of samples $P$ drawn from $f$, an nonnegative integer $\ell$
	\begin{algorithmic}[1]
		\STATE Construct $f'$ from the samples $P$ such that $\norm{f'-f}_2 = \norm{\Delta f}_2 < \poly(n^{-1})$ where $n$ is the size of $P$ \label{line:first}
		\STATE Solve the system of linear equations \eqref{eq:linsys}
		\STATE Let $\widehat \lambda$ be the solution and $\widetilde f_i$ be the function $\sum_{j=0}^{\ell-1}\widehat\lambda_{i,j} \psi_{j,r_i}$ for $i=1,2$
		\STATE For any function $q:\mathbb{R}\rightarrow\mathbb{R}$, let $(q(x))_+$ be the function $\max\{q(x),0\}$ for all $x\in\mathbb{R}$
	\end{algorithmic}
	{\bf Output:} $\widehat f_i = (\widetilde f_i)_+/\norm{(\widetilde f_i)_+}_1$ for $i=1,2$
\end{algorithm}
The approximation $f'$ in line \ref{line:first} of Algorithm \ref{alg:main} can be constructed in many ways, e.g. a standard kernel density estimator is sufficient.

\begin{theorem}[Restated Theorem \ref{thm:main_upper}]
	Let $\eps>0$ be  a sufficiently small error and $I_1,I_2$ be two known disjoint intervals of length $1$ such that $r>4$ where $r$ is the distance between the centers of the intervals.
    There exists an algorithm such that, for any distribution whose pdf is $f=w_1f_1+w_2f_2$ where $f_i\in \mathcal{G}_{i}$, $w_i=\Omega(\eps)$ and $w_1+w_2=1$, the algorithm taking a set of $n$ i.i.d. samples from $f$ as input returns two pdfs $\widehat f_1,\widehat f_2$ such that $\norm{f_i - \widehat f_i}_1 < \eps$ with probability at least $1-\frac{1}{100}$ whenever $n>(\frac{1}{\eps})^{C\log\log\frac{1}{\eps}}$ where $C$ is an absolute constant.
\end{theorem}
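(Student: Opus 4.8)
The plan is to analyze Algorithm~\ref{alg:main} with the parameter $\ell$ set to $\Theta(\log\frac{1}{\eps})$, and to show that, fed $n = (\frac{1}{\eps})^{O(\log\log\frac{1}{\eps})}$ i.i.d.\ samples, it outputs pdfs with $\norm{f_i - \widehat f_i}_1 < \eps$. By Remark~\ref{rem:known_int} it suffices to treat known intervals, and after translating we may assume $r_1 = 0$ and $r_2 = r > 4$. The proof is then the assembly of machinery already in place: the reduction of the learning problem to the linear system \eqref{eq:linsys} together with the decomposition $\widehat\lambda - \lambda = \mathcal{E}_t + \mathcal{E}_a$ from \eqref{eq:lambda_error}; the entrywise bounds \eqref{eq:truncate_error} on $\mathcal{E}_t$ and \eqref{eq:approx_error} on $\mathcal{E}_a$; Lemma~\ref{lem:tail}, which converts a uniform bound $\abs{\widehat\lambda_{i,j} - \lambda_{i,j}} \le \Delta$ into $\norm{w_if_i - \widetilde f_i}_1 = O(\Delta\ell^{5/4} + w_i 10^{-\ell})$; and a short final step passing from $\widetilde f_i$ to the renormalized output $\widehat f_i$.

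First I would choose $\ell$ and the target accuracy for $f'$ so as to balance the two error sources. By \eqref{eq:truncate_error} every entry of $\mathcal{E}_t$ is at most $C(\frac{4}{r})^\ell$, which decays geometrically precisely because $r > 4$, and $w_i 10^{-\ell} \le 10^{-\ell}$ decays even faster; by \eqref{eq:approx_error} every entry of $\mathcal{E}_a$ is at most $2^{C\ell\log\ell}\norm{\Delta f}_2$, whose prefactor \emph{grows} super-polynomially in $\ell$. Setting $\ell = c_1\log\frac{1}{\eps}$ for a large absolute constant $c_1$ makes $C(\frac{4}{r})^\ell \le \eps^{10}$ and $10^{-\ell} \le \eps^{10}$, while $2^{C\ell\log\ell} = (\frac{1}{\eps})^{O(\log\log\frac{1}{\eps})}$. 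Hence if we can build $f'$ with $\norm{\Delta f}_2 = \norm{f'-f}_2 \le \eps^{\Theta(\log\log\frac{1}{\eps})}$ (implied constant chosen to cancel that prefactor), then $\max_{i,k}\abs{(\widehat\lambda - \lambda)_{i,k}} \le \Delta$ for some $\Delta = O(\eps^{10})$, and Lemma~\ref{lem:tail} yields $\norm{w_if_i - \widetilde f_i}_1 = O(\eps^{10}\ell^{5/4} + \eps^{10}) = O(\eps^9)$. This is the crux of the tuning: $\mathcal{E}_t$ pushes $\ell$ up while $\mathcal{E}_a$ pushes $\ell$ down, and the logarithmic sweet spot is exactly what produces the $\log\log\frac{1}{\eps}$ in the exponent rather than a polynomial or exponential rate.

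Next I would account for the samples. Line~\ref{line:first} of Algorithm~\ref{alg:main} is implemented with a standard kernel density estimator, which with probability at least $1 - \frac{1}{100}$ returns $f'$ with $\norm{\Delta f}_2 < \poly(n^{-1})$, i.e.\ $\norm{\Delta f}_2 \le n^{-\gamma}$ for an absolute constant $\gamma > 0$. Inverting, the accuracy $\eps^{\Theta(\log\log\frac{1}{\eps})}$ demanded above holds once $n > (\frac{1}{\eps})^{C\log\log\frac{1}{\eps}}$ for an absolute constant $C$, which is the claimed threshold. (Exact evaluation of the entries of $A$ and of $y'$ is an integration matter addressed in Remark~\ref{rem:gen:simple} and does not affect the sample complexity.)

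Finally I would pass from $\widetilde f_i$ to the output $\widehat f_i = (\widetilde f_i)_+ / \norm{(\widetilde f_i)_+}_1$. Write $\delta_i := \norm{w_if_i - \widetilde f_i}_1 = O(\eps^9)$. Since $w_if_i \ge 0$ and $q \mapsto q_+$ is an $L^1$-contraction toward any nonnegative function, $\norm{w_if_i - (\widetilde f_i)_+}_1 \le \delta_i$, hence $\abs{\norm{(\widetilde f_i)_+}_1 - w_i} \le \delta_i$; expanding $\widehat f_i - f_i = \frac{1}{\norm{(\widetilde f_i)_+}_1}\big((\widetilde f_i)_+ - w_if_i + (w_i - \norm{(\widetilde f_i)_+}_1)f_i\big)$ and using $w_i = \Omega(\eps) \gg \delta_i$ gives $\norm{f_i - \widehat f_i}_1 = O(\delta_i / w_i) = O(\eps^8) < \eps$ for $\eps$ small. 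A union bound over the single randomized step preserves the $1 - \frac{1}{100}$ success probability, and the reduction from unknown to known intervals is deferred to Appendix~\ref{sec:extension}. The real work, however, is not this bookkeeping but the conditioning analysis behind \eqref{eq:truncate_error} and \eqref{eq:approx_error} — Lemmas~\ref{lem:det_t_numor} and \ref{lem:det_a_numor} — which bound $\det(A^{(i,k)\to j})$ and $\det(A^{(i,k)\to\Delta})$ against the \emph{vanishing} determinant $\det(A)$ by writing each as a product of two infinite matrices of transpose shape and expanding via Cauchy--Binet (Lemma~\ref{lem:cb_formula}) over size-$2\ell$ subsets of $\mathbb{N}_{0}$, where the $\frac{1}{\sqrt{j!}}$ decay of the Hermite coefficients (Lemma~\ref{lem:hermite_coeff}) is played off against the $\sqrt{j!}$ growth of the entries of $A^{-1}z_{i,j}$ and the surviving factor $r^{-\Omega(j)}$ is what finally tames the truncation tail.
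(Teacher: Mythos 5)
Your proposal is correct and follows essentially the same route as the paper's proof: the same choice $\ell=\Theta(\log\frac{1}{\eps})$, the same entrywise bounds \eqref{eq:truncate_error} and \eqref{eq:approx_error} combined via Lemma~\ref{lem:tail}, the same $\norm{\Delta f}_2 \le \eps^{\Theta(\log\log\frac{1}{\eps})}$ target for the density estimate, and the same final renormalization using $w_i=\Omega(\eps)$. Your treatment of the positive-part/renormalization step is if anything slightly more explicit than the paper's.
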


\begin{proof}
	Set $\ell=\Theta(\log \frac{1}{\eps})$.
	We will show that the pdfs $\widehat f_1, \widehat f_2$ outputted by Algorithm \ref{alg:main} that takes $P$ and $\ell$ as the input satisfy the guarantees.

	By \eqref{eq:truncate_error}, \eqref{eq:approx_error} and the assumption of $r>4$, we have 
	\begin{align*}
		\abs{\widehat \lambda_{i,j}-\lambda_{i,j}} 
		& < 
		O((\frac{4}{r})^{\ell}) + \norm{\Delta f}_2 \cdot 2^{O(\ell \log \ell)}
		<
		\frac{1}{2^{\Omega(\ell)}} + \norm{\Delta f}_2 \cdot 2^{O(\ell \log \ell)}
	\end{align*}
	for $i=1,2$ and $j\in[\ell]$ and, by Lemma \ref{lem:tail}, we further have
	\begin{align*}
		\norm{ \tilde f_i - w_if_i }_1 
		<
		\norm{\Delta f}_2 \cdot 2^{O(\ell \log \ell)} + \frac{1}{2^{\Omega(\ell)}}.
	\end{align*}
	By plugging $\ell = \Theta(\log \frac{1}{\eps})$ and $\norm{\Delta f}_2 < \poly(n^{-1})=  \eps^{\Theta(\log\log \frac{1}{\eps})}$ when $n>(\frac{1}{\eps})^{\Theta(\log\log\frac{1}{\eps})}$, we have
	\begin{align*}
	   \norm{ \tilde f_i - w_if_i }_1  <\eps^{\Omega(1)}.
	\end{align*}
	Since $f_i$ is a pdf which implies it is a positive function, we have
	\begin{align*}
		\norm{ (\tilde f_i)_+ - w_if_i }_1 
		& < 
		\eps^{\Omega(1)}.
	\end{align*}
	Therefore, by the assumption $w_i=\Omega(\eps)$, we conclude that
	\begin{align*}
		\norm{\widehat f_i - f_i }_1
		& =
		\norm{\frac{(\tilde f_i)_+}{\norm{(\tilde f_i)_+}_1} - \frac{(\tilde f_i)_+}{w_i} + \frac{(\tilde f_i)_+}{w_i} - f_i}_1 \\
		& \leq
		\norm{\frac{(\tilde f_i)_+}{\norm{(\tilde f_i)_+}_1}-\frac{(\tilde f_i)_+}{w_i}}_1 + \norm{\frac{(\tilde f_i)_+}{w_i} - f_i}_1\\
		& <
		\eps.
	\end{align*}
	
\end{proof}

\section{Extension to Unknown Intervals} \label{sec:extension}

We have assumed that the intervals in \eqref{eq:main:model} are known.
However, these intervals may not be known in applications.
In this section, we show how we can remove this assumption by slightly strengthening the separation condition.
Suppose we have a pdf $f$ that satisfies the following condition:
\begin{itemize}
    \item There exist two (unknown) intervals $I_1=[r_1-s,r_1+s],I_2=[r_2-s,r_2+s]$ for some $r_1,r_2,s>0$ such that 
    \begin{align}
    \label{eq:int:sep}
        r > 12\max\big\{2s+\sqrt{2\log(\tfrac{4}{0.4\sqrt{2\pi}}\tfrac{1}{w_{\min}})},4s+\sqrt{2\log(\tfrac{4}{0.4\sqrt{2\pi}})}\big\} + 16s
    \end{align}  
    where $r$ is the distance between the centers of the intervals and $f$ can be written as $w_1f_1+w_2f_2$ for some $f_i\in\mathcal{G}_{i}$ and $w_i>0$ that $w_1+w_2=1$.
\end{itemize}
Thus, when the intervals are known as in Theorem~\ref{thm:main_upper}, the distance $r$ can be made independent of $w_{\min}$, whereas when they are unknown the distance $r$ depends on $w_{\min}$ as in \eqref{eq:int:sep}. Note that under the assumptions of Theorem~\ref{thm:main_upper}, we have $s=1/2$.

\begin{algorithm}[t]
	\caption{Finding approximate intervals}\label{alg:unknown}
	{\bf Input:} A set of samples $P$ drawn from $f$
	\begin{algorithmic}[1]
	    \STATE Partition $P$ into $m=O(\log \frac{r}{s_{\min}})$ sets evenly, $P_0,\cdots,P_{m}$
		\FOR{$j=0,1,\cdots,m$}
		    \STATE Let $s' = 2^j\cdot s_{\min}$ and $\Gr_{s'}$ be the set $\setdef{j\cdot s' }{\text{$j$ is an integer}}$
            \STATE Let $N_{y}$ be the number of samples that lie inside the interval $[y-t,y+t]$ where 
            \begin{align*}
                t=\max\big\{s'+\sqrt{2\log(\tfrac{4}{0.4\sqrt{2\pi}}\tfrac{1}{w_{\min}})},2s'+\sqrt{2\log(\tfrac{4}{0.4\sqrt{2\pi}})}\big\}
            \end{align*}
            \STATE Construct the point set $Q = \setdef{y\in\Gr_{s'}}{N_{y}>0.5w_{\min}n'}$ where $n'=\abs{P_j}$ 
            \STATE Check if $Q_j$ can be partitioned into two sets $Q_{j,1},Q_{j,2}$ such that 
            \begin{align*}
                \abs{p-q} &< 4t \qquad \text{when $p,q$ are in the same $Q_{j,i}$}\\
                \abs{p-q} &> 4t \qquad \text{when $p,q$ are in different $Q_{j,i}$}.
            \end{align*}
            \STATE Break the loop if $j$ is the largest integer satisfying the above condition and let $j'$ be this integer 
		\ENDFOR
		\STATE Construct two intervals $I'_i$ for $i=1,2$ such that $I'_i = [p_i-s',q_i+s']$ where $p_i,q_i$ are the farthest two points in $Q_{j',i}$
	\end{algorithmic}
	{\bf Output:} Two intervals $I'_1,I'_2$.
\end{algorithm}

Assume that a lower bound on $s$ is known.
Let the lower bound be $s_{\min}$. Algorithm~\ref{alg:unknown} presents an algorithm for approximating the unknown intervals $I_i$ that comes with the following guarantee:
\begin{restatable}{lemma}{lemunknown}\label{lem:unknown}
    Given a set of samples drawn from a distribution whose pdf is $f$ described above satisfying \eqref{eq:int:sep}.
    Then as long as 
    $n=\Omega(\frac{1}{w_{\min}}\log\frac{r}{s_{\min}}\log\log\frac{r}{s_{\min}})$, Algorithm~\ref{alg:unknown} returns two intervals $I'_1,I'_2$ such that $I_i\subset I'_i$ and the distance between the centers of $I'_1,I'_2$ is larger than $4L$ where $L$ is the maximum length of $I'_1,I'_2$.
\end{restatable}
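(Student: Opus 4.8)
The plan is to condition on a single high-probability event on which all the empirical quantities computed by Algorithm~\ref{alg:unknown} are uniformly close to their expectations, and then argue purely deterministically about which scales $s'=2^{j}s_{\min}$ make the clustering test succeed and which intervals they produce. Fix the promised decomposition $f=w_1f_1+w_2f_2$ with $f_i=\nu_i\conv g_0$, $\supp\nu_i\subseteq I_i=[r_i-s,r_i+s]$, and at scale $s'$ and grid point $y\in\Gr_{s'}$ write $\mu_y:=\Pr_{x\sim f}[\,x\in[y-t,y+t]\,]$, where $t=t(s')$ is the radius used by the algorithm, so that $N_y\sim\mathrm{Bin}(n',\mu_y)$ with $n'=\lvert P_j\rvert$. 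First I would establish the concentration event: with probability $\ge 1-\tfrac1{100}$, $\lvert N_y/n'-\mu_y\rvert\le 0.1\,w_{\min}$ for every batch $j$ and every grid point $y$ within $O\big(t+\sqrt{\log(1/w_{\min})}\big)$ of $I_1\cup I_2$. For a fixed $y$ this is a Chernoff bound costing $e^{-\Omega(n'w_{\min})}$; grid points far from $I_1\cup I_2$ are handled by the super-exponential decay of $\mu_y$ in $\mathrm{dist}(y,I_1\cup I_2)$ together with $\sum_{y\in\Gr_{s'}}\mu_y=O(t/s'+1)$. The book-keeping point is that the number of ``active'' grid points at scale $s'$ is $O\big(1+(t+\sqrt{\log(1/w_{\min})})/s'\big)$, largest at the finest scale and shrinking geometrically as $s'$ doubles; because $P$ is split into $m=O(\log\tfrac r{s_{\min}})$ disjoint batches, one fresh batch per scale, a union bound that accounts for the two logarithmic factors separately yields the stated $n=\Omega\big(\tfrac1{w_{\min}}\log\tfrac r{s_{\min}}\log\log\tfrac r{s_{\min}}\big)$.

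The next step is to describe $Q$ on this event and analyze the partition test. Any $y\in Q$ has $\mu_y>0.4\,w_{\min}$, which forces $y$ within $t+s+\sqrt{2\log(5/w_{\min})}$ of $r_1$ or of $r_2$, so $Q\subseteq B_1\cup B_2$ for controlled neighbourhoods $B_i$ of $I_i$. Conversely, once $t$ exceeds $\approx s+\sqrt{2\log(1/w_{\min})}$ --- which, by the explicit form $t=\max\{s'+\sqrt{2\log(\tfrac{4}{0.4\sqrt{2\pi}w_{\min}})},\,2s'+\sqrt{2\log(\tfrac4{0.4\sqrt{2\pi}})}\}$, happens for $s'=\Omega(s+\sqrt{\log(1/w_{\min})})$ --- every grid point of $I_i$ has $\mu_y\ge 0.8\,w_{\min}$ and hence lies in $Q$. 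On this range of scales the only partition of $Q$ with all cross-distances above $4t$ is $(Q\cap B_1,\,Q\cap B_2)$, since any other cut would separate two grid points of some $Q\cap B_i$ that are a single cell $s'<4t$ apart; each piece has diameter $2\big(s+t+\sqrt{2\log(5/w_{\min})}\big)<4t$; and the gap between the pieces is $r-2\big(s+t+\sqrt{2\log(5/w_{\min})}\big)$, which exceeds $4t$ exactly when $r>6t+2s+2\sqrt{2\log(5/w_{\min})}$. This last inequality is where the separation hypothesis \eqref{eq:int:sep} enters, using $t\le 2s'+\sqrt{2\log(\tfrac4{0.4\sqrt{2\pi}})}$ on these scales; it pins the test down to succeeding, with the correct clustering, on a band of scales, and the selection rule of Algorithm~\ref{alg:unknown} picks a scale $s'$ in the lower part of this band, where $t=O\big(s+\sqrt{\log(1/w_{\min})}\big)$ with a small implied constant.

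Finally, at the selected scale write $Q_{j',i}=Q\cap B_i$ and let $p_i<q_i$ be its extreme points, so $I'_i=[p_i-s',q_i+s']$. Since every grid point of $I_i$ lies in $Q_{j',i}$ and $\Gr_{s'}$ has spacing $s'$, there is a grid point of $I_i$ within $s'$ of each endpoint $r_i\pm s$, whence $p_i-s'\le r_i-s$ and $q_i+s'\ge r_i+s$, i.e.\ $I_i\subseteq I'_i$. For the separation, $p_i,q_i\in B_i$ gives $\lvert\mathrm{centre}(I'_i)-r_i\rvert\le s+t+\sqrt{2\log(5/w_{\min})}$, so the centres of $I'_1,I'_2$ are at least $r-2\big(s+t+\sqrt{2\log(5/w_{\min})}\big)$ apart, while $L=\max_i\lvert I'_i\rvert\le 2\big(s+t+\sqrt{2\log(5/w_{\min})}+s'\big)$; on the selected band both quantities are $O\big(s+\sqrt{\log(1/w_{\min})}\big)$, and the generous constant $12(\cdot)+16s$ in \eqref{eq:int:sep} makes $r$ a large enough multiple of $s+\sqrt{\log(1/w_{\min})}$ that $r-2\big(s+t+\sqrt{2\log(5/w_{\min})}\big)>4L$, as required.

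The conceptual skeleton is short; the hard part is the constant-chasing. One must choose the selection band so that the output intervals are \emph{simultaneously} wide enough to contain $I_i$ and narrow enough for $4L<r$ --- a tension that resolves only because \eqref{eq:int:sep} carries enough slack --- and then verify that the two branches of $t$, the threshold $4t$ in the test, the count threshold $0.5\,w_{\min}$, and the $0.4$/$0.2$ concentration slacks fit together uniformly over every scale $s'=2^{j}s_{\min}$ from $s_{\min}$ up to $\Theta(r)$. The second fiddly point is the sampling accounting: showing that the active-point counts decay fast enough across scales that each batch needs only $\Theta\big(\tfrac1{w_{\min}}\log\log\tfrac r{s_{\min}}\big)$ samples, which is what produces the $\log\cdot\log\log$ bound instead of $\log^{2}$.
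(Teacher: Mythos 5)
Your proposal follows essentially the same route as the paper's proof: Gaussian tail bounds showing grid points near $I_1\cup I_2$ capture mass $>0.6w_{\min}$ while everything else captures $<0.4w_{\min}$, Chernoff plus a union bound over the $m=O(\log\frac{r}{s_{\min}})$ scales (giving the $\log\log$ factor), the separation condition \eqref{eq:int:sep} forcing the partition test to succeed only with the correct clustering and yielding the $4L$ gap, and the grid-spacing argument for $I_i\subseteq I'_i$. The only cosmetic difference is that you control far-away grid points via a per-point union bound with decaying $\mu_y$, whereas the paper uses the single aggregate statistic $N'$ (the count of samples outside both windows), which dominates $N_y$ for every $y$ at distance $\ge 2t$ from both centers; both bookkeeping schemes lead to the same sample bound.
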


\begin{proof}
    We will prove that Algorithm \ref{alg:unknown} returns two intervals satisfying the desired properties.
    We first give a useful inequality.
	Note that, for any $\alpha>1$, we have
	\begin{align*}
		\int_{\alpha}^{\infty} \frac{1}{\sqrt{2\pi}} e^{-\frac{1}{2}x^2}\dir x 
		& <
		\int_{\alpha}^{\infty} \frac{1}{\sqrt{2\pi}} e^{-\frac{\alpha}{2}x}\dir x 
		=
		\frac{2}{\alpha\sqrt{2\pi}} e^{-\frac{1}{2}\alpha^2}
		\leq
		\frac{2}{\sqrt{2\pi}} e^{-\frac{1}{2}\alpha^2}.
	\end{align*}
    
    For any $s'>0$ and large $t>0$, we first show that
    \begin{align*}
         \int_{r'_i-t}^{r'_i+t} f(x) \dir x & > w_{\min}\big( 1-\frac{4}{\sqrt{2\pi}} e^{-\frac{1}{2}(t-s-s')^2}\big) \quad \text{for $i=1,2$}\\
         1-\sum_{i=1}^2w_i\int_{r_i-t}^{r_i+t} f_i(x) \dir x  & <\frac{4}{\sqrt{2\pi}} e^{-\frac{1}{2}(t-s)^2}
    \end{align*}
    where $r'_i\in\Gr(s')$ is the closest point to $r_i$.
    The numbers $s',t$ are expected to be the numbers described in Algorithm \ref{alg:unknown}.
    
    We first give a bound for the first inequality.
    \begin{align*}
        \int_{r'_i-t}^{r'_i+t} f(x) \dir x
        & \geq 
        \int_{r'_i-t}^{r'_i+t} w_if_i(x) \dir x \\
        & =
        \int_{r'_i-t}^{r'_i+t} w_i\big(\int_{r_i-s}^{r_i+s} \nu_i(\mu)g_{\mu}(x)\dir \mu\big) \dir x \\
        & =
        w_i\int_{r_i-s}^{r_i+s} \nu_i(\mu) \bigg( 1-\int_{-\infty}^{r'_i-t} g_{\mu}(x) \dir x  - \int_{r'_i+t}^\infty g_{\mu}(x) \dir x\bigg)   \dir \mu
    \end{align*}
    Since $\abs{x-r'_i} > t$, $\abs{r'_i-r_i} < s'$ and $\abs{r_i-\mu}< s$, we have $\abs{x-\mu}>t-s-s'$ and it implies
    \begin{align*}
        \int_{r'_i+t}^\infty g_{\mu}(x) \dir x
        & \leq
        \int_{t-s-\Delta}^\infty g_{0}(x) \dir x
        \leq
        \frac{2}{\sqrt{2\pi}} e^{-\frac{1}{2}(t-s-s')^2}.
    \end{align*}
    Hence, we have
    \begin{align*}
        \int_{r'_i-t}^{r'_i+t} f(x) \dir x
        & \geq
        w_i\int_{r_i-s}^{r_i+s} \nu_i(\mu) \big( 1-\frac{4}{\sqrt{2\pi}} e^{-\frac{1}{2}(t-s-s')^2}\big)   \dir \mu
        =
        w_{i}\big( 1-\frac{4}{\sqrt{2\pi}} e^{-\frac{1}{2}(t-s-s')^2}\big).
    \end{align*}
    
    We now give a bound for the second inequality.
    By the similar argument, we have
    \begin{align*}
        \int_{r_i-t}^{r_i+t} f(x) \geq w_{i}\big( 1-\frac{4}{\sqrt{2\pi}} e^{-\frac{1}{2}(t-s)^2}\big)
    \end{align*}
    and hence
    \begin{align*}
        1-\sum_{i=1}^2\int_{r_i-t}^{r_i+t} f(x) \dir x & <\frac{4}{\sqrt{2\pi}} e^{-\frac{1}{2}(t-s)^2}.
    \end{align*}
    
    In particular, when $s'> s$ we have the following.
    By using 
    \begin{align*}
        t=\max\big\{s'+\sqrt{2\log(\tfrac{4}{0.4\sqrt{2\pi}}\tfrac{1}{w_{\min}})},2s'+\sqrt{2\log(\tfrac{4}{0.4\sqrt{2\pi}})}\big\} \qquad \text{as described in Algorithm \ref{alg:unknown},}
    \end{align*} 
    we have
    \begin{align*}
        \int_{r'_i-t}^{r'_i+t} f(x) \dir x & > 0.6w_{\min} \quad \text{for $i=1,2$ and }\quad
        1-\sum_{i=1}^2w_i\int_{r_i-t}^{r_i+t} f_i(x) \dir x  < 0.4w_{\min}
    \end{align*}
    It is easy to see that the expectation of $N_{r'_i}/n'$ is $\int_{r'_i-t}^{r'_i+t} f(x) \dir x$ and the expectation of $N'/n'$ is bounded from above by $1-\sum_{i=1}^2w_i\int_{r_i-t}^{r_i+t} f_i(x)$ where $N'$ is the number of samples that do not lie in $[r_1-t,r_1+t]\cup[r_2-t,r_2+t]$ and $n$ is the total number of samples.
    By Chernoff bound, we have
    \begin{align*}
        N_{r'_i}/n > 0.6w_{\min} \quad \text{for $i=1,2$ and }\quad N'/n<0.4w_{\min}
    \end{align*}
    with probability $1-\delta$ when $n=\Omega(\frac{1}{w_{\min}}\log \frac{1}{\delta})$.

    In Algorithm \ref{alg:unknown}, it enumerates $s'=s_{\min},2s_{\min},\cdots,2^j\cdot s_{\min},\cdots$ and there exists a $j_0$ such that $s\leq 2^{j_0}\cdot s_{\min} \leq 2s$.
    Set $s'=2^{j_0}\cdot s_{\min}$.
    It implies that, for $y\in\Gr_{s'}$ that is at least $2t$ away from all $r_i$, $y$ is not in $Q = \setdef{y\in\Gr_{s'}}{N_{y}>0.5w_{\min}n}$ as described in Algorithm \ref{alg:unknown}.
    Recall that 
    \begin{align*}
        r > 12\max\big\{2s+\sqrt{2\log(\tfrac{4}{0.4\sqrt{2\pi}}\tfrac{1}{w_{\min}})},4s+\sqrt{2\log(\tfrac{4}{0.4\sqrt{2\pi}})}\big\} + 16s.
    \end{align*}
    Hence, we have 
    \begin{align*}
        r 
        & > 
        12\max\big\{2s+\sqrt{2\log(\tfrac{4}{0.4\sqrt{2\pi}}\tfrac{1}{w_{\min}})},4s+\sqrt{2\log(\tfrac{4}{0.4\sqrt{2\pi}})}\big\} + 16s \\
        & >
        12\max\big\{s'+\sqrt{2\log(\tfrac{4}{0.4\sqrt{2\pi}}\tfrac{1}{w_{\min}})},2s'+\sqrt{2\log(\tfrac{4}{0.4\sqrt{2\pi}})}\big\} + 8s'. \\
        & =
        12t+8s'
    \end{align*}
    which implies $r>8t$ and $r-4t>4(2t+2s')$.
    From $r>8t$, it means $Q_{j_0}$ can be partitioned into two sets $Q_{j_0,1},Q_{j_0,2}$ such that
    \begin{align*}
        \abs{p-q} &< 4t \qquad \text{when $p,q$ are in the same $Q_{j_0,i}$}\\
        \abs{p-q} &> 4t \qquad \text{when $p,q$ are in different $Q_{j_0,i}$}.
    \end{align*}
    Moreover, from $r-4t>4(2t+2s')$, the distance between the centers of $[p_1-s',q_1+s'],[p_2-s',q_2+s']$ where $p_i,q_i$ are the farthest points in $Q_{j_0,i}$ is larger than $r-4t> 4(2t+2s')$.
    Note that the lengths of $[p_1-s',q_1+s'],[p_2-s',q_2+s']$ are bounded by $2t+2s'$.
    Clearly, we also have $I_i\subset [p_i-s',q_i+s']$
    
    It is easy to see that, when $j$ is too large (say $2^js_{\min}>\Omega(r)$), $Q_j$ cannot be partitioned into two sets with the desired properties.
    Therefore, by union bound, we need $\Omega(\frac{1}{w_{\min}}\log\frac{r}{s_{\min}}\log\log\frac{r}{s_{\min}})$ samples to find the approximate intervals with probability at least $1-\frac{1}{100}$.
\end{proof}

With the extra assumption on the separation condition, we first run Algorithm \ref{alg:unknown} to find the approximate support intervals and then run Algorithm \ref{alg:main} to return two pdfs without assuming the intervals are known.
No attempt has been made to optimize the separation condition \eqref{eq:int:sep}, and we leave it to future work to optimize this lower bound.

\section{Proofs for the Lemmas} \label{sec:detail}

\begin{lemma}[Restated Lemma \ref{lem:recur}]\label{lem:recur:detail}
	For any $a\in\mathbb{R}$, we have
	\begin{align*}
		\inner{g_a}{\widetilde u_i} = \frac{1}{\sqrt{4\pi}}e^{-\frac{1}{4}a^2}e^{-\frac{i}{4}\Delta^2}\prod_{k=1}^i(e^{\frac{1}{2}a\Delta-\frac{(k-1)}{2}\Delta^2}-1)
	\end{align*}
	for $i\in[m+1]$.
	In particular, if we set $a=-1$ we have
	\begin{align*}
		\inner{v}{\widetilde u_i} = \frac{1}{\sqrt{4\pi}}e^{-\frac{1}{4}}e^{-\frac{i}{4}\Delta^2}\prod_{k=1}^i(e^{-\frac{1}{2}\Delta-\frac{(k-1)}{2}\Delta^2}-1)
	\end{align*}
	and if we set $a=i\cdot\Delta$ we have
	\begin{align*}
		\norm{\widetilde u_i}_2^2 = \frac{1}{\sqrt{4\pi}}\prod_{k=1}^i(1-e^{-\frac{k}{2}\Delta^2}).
	\end{align*}
	
\end{lemma}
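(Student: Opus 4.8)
The plan is to linearize the problem by a change of variable: a Gaussian--Gaussian inner product becomes a monomial after the substitution $t:=e^{\frac12 a\Delta}$, so that $a\mapsto\inner{g_a}{\widetilde u_i}$ equals a fixed Gaussian prefactor times a polynomial in $t$ of degree $i$, and the orthogonality relations pin this polynomial down completely. First I would note that $u_0,\dots,u_m$ are linearly independent (Gaussians with distinct centers are linearly independent; for instance, their Gram matrix $(\tfrac{1}{\sqrt{4\pi}}e^{-\frac14(j-k)^2\Delta^2})_{j,k}$ is strictly positive definite), so the Gram--Schmidt process is well defined and each $\widetilde u_i$ has the form $\widetilde u_i=u_i+\sum_{k=0}^{i-1}c_{i,k}u_k$; writing $\widetilde u_i=\sum_{k=0}^{i}c_{i,k}u_k$ the top coefficient is $c_{i,i}=1$. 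Using $\inner{g_a}{g_{k\Delta}}=\tfrac{1}{\sqrt{4\pi}}e^{-\frac14 a^2}e^{-\frac14 k^2\Delta^2}t^k$ with $t=e^{\frac12 a\Delta}$, this gives $\inner{g_a}{\widetilde u_i}=\tfrac{1}{\sqrt{4\pi}}e^{-\frac14 a^2}P_i(t)$, where $P_i(t):=\sum_{k=0}^{i}c_{i,k}e^{-\frac14 k^2\Delta^2}t^k$ is a polynomial of degree exactly $i$ with leading coefficient $e^{-\frac14 i^2\Delta^2}\neq 0$.

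Next I would locate the roots of $P_i$ using orthogonality. Since $\spn\{\widetilde u_0,\dots,\widetilde u_{i-1}\}=\spn\{u_0,\dots,u_{i-1}\}=\spn\{g_0,g_\Delta,\dots,g_{(i-1)\Delta}\}$ and $\widetilde u_i$ is orthogonal to $\widetilde u_0,\dots,\widetilde u_{i-1}$, we get $\inner{g_{j\Delta}}{\widetilde u_i}=0$ for $j=0,\dots,i-1$. Evaluating $\inner{g_a}{\widetilde u_i}=\tfrac{1}{\sqrt{4\pi}}e^{-\frac14 a^2}P_i(t)$ at $a=j\Delta$, where the prefactor is nonzero, shows $P_i(e^{\frac12 j\Delta^2})=0$ for $j=0,\dots,i-1$; since $\Delta>0$ these are $i$ distinct positive numbers, so the degree-$i$ polynomial $P_i$, together with its known leading coefficient, factors as $P_i(t)=e^{-\frac14 i^2\Delta^2}\prod_{j=0}^{i-1}(t-e^{\frac12 j\Delta^2})$. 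Re-indexing $k=j+1$ and extracting the factor $e^{\frac12(k-1)\Delta^2}$ from each term rewrites this as $e^{-\frac14 i^2\Delta^2}e^{\frac14 i(i-1)\Delta^2}\prod_{k=1}^{i}(t e^{-\frac12(k-1)\Delta^2}-1)$; substituting back $t=e^{\frac12 a\Delta}$ and using $-\tfrac14 i^2+\tfrac14 i(i-1)=-\tfrac{i}{4}$ gives exactly the claimed formula for $\inner{g_a}{\widetilde u_i}$.

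Finally, the two displayed special cases follow. Taking $a=-1$ is a direct substitution. For $\norm{\widetilde u_i}_2^2$, note that $\widetilde u_i-u_i\in\spn\{\widetilde u_0,\dots,\widetilde u_{i-1}\}$ is orthogonal to $\widetilde u_i$, so $\norm{\widetilde u_i}_2^2=\inner{u_i}{\widetilde u_i}=\inner{g_{i\Delta}}{\widetilde u_i}$; plugging $a=i\Delta$ into the formula produces $\prod_{k=1}^{i}(e^{\frac12(i-k+1)\Delta^2}-1)$, and the substitution $l=i-k+1$ turns this into $\prod_{l=1}^{i}(e^{\frac12 l\Delta^2}-1)=e^{\frac14 i(i+1)\Delta^2}\prod_{l=1}^{i}(1-e^{-\frac12 l\Delta^2})$, after which the exponents cancel ($-\tfrac14 i^2-\tfrac{i}{4}+\tfrac14 i(i+1)=0$) to give $\norm{\widetilde u_i}_2^2=\tfrac{1}{\sqrt{4\pi}}\prod_{k=1}^{i}(1-e^{-\frac{k}{2}\Delta^2})$. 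The only real work here is bookkeeping of exponentials; the single conceptual step is recognizing that $t=e^{\frac12 a\Delta}$ linearizes the inner products, after which orthogonality determines the polynomial $P_i$ outright, so I do not expect a genuine obstacle.
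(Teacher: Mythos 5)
Your proof is correct and follows essentially the same route as the paper's: both recognize that under the substitution $t=e^{\frac{1}{2}a\Delta}$ the map $a\mapsto\inner{g_a}{\widetilde u_i}$ becomes a Gaussian prefactor times a degree-$i$ polynomial whose roots $e^{\frac{1}{2}j\Delta^2}$, $j\in[i]$, are forced by orthogonality, and whose leading coefficient $e^{-\frac{1}{4}i^2\Delta^2}$ then pins it down. The only difference is cosmetic: the paper sets up the polynomial via induction on the Gram--Schmidt recursion, whereas you get it directly from the unitriangular expansion $\widetilde u_i=u_i+\sum_{k<i}c_{i,k}u_k$, a mild streamlining.
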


\begin{proof}
	We will prove the lemma by induction.	
	Recall that 
	\begin{align*}
		\widetilde u_0  = u_0 \qquad \text{and}\qquad \widetilde u_{i} = u_{i} - \sum_{j=0}^{i-1}\frac{\inner{u_{i}}{\widetilde u_j}}{\inner{\widetilde u_j}{\widetilde u_j}}\widetilde u_j \qquad\text{for $i>0$}
	\end{align*}
	Assuming that the statement holds for $j<i$, i.e.
	\begin{align*}
		\inner{g_a}{\widetilde u_j} = \frac{1}{\sqrt{4\pi}}e^{-\frac{1}{4}a^2}e^{-\frac{j}{4}\Delta^2}\prod_{k=1}^j(e^{\frac{1}{2}a\Delta-\frac{(k-1)}{2}\Delta^2}-1) \qquad \text{for any $a\in\mathbb{R}$}
	\end{align*}
	Note that we also have $\inner{g_a}{u_i} = \frac{1}{\sqrt{4\pi}}e^{-\frac{1}{4}(a-i\Delta)^2} = \frac{1}{\sqrt{4\pi}}e^{-\frac{1}{4}a^2}e^{\frac{1}{2}ia\Delta}e^{-\frac{1}{4}i^2\Delta^2}$.
	Now, we express $\inner{g_a}{\widetilde u_i}$.
	\begin{align*}
		\inner{g_a}{\widetilde u_i}
		& =
		\inner{g_a}{u_i-\sum_{j=0}^{i-1}\frac{\inner{u_i}{\widetilde u_j}}{\inner{\widetilde u_j}{\widetilde u_j}}\widetilde u_j} \\
		& =
		\frac{1}{\sqrt{4\pi}}e^{-\frac{1}{4}a^2}e^{\frac{1}{2}ia\Delta}e^{-\frac{1}{4}i^2\Delta^2} - \sum_{j=0}^{i-1}\frac{1}{\sqrt{4\pi}}e^{-\frac{1}{4}a^2}e^{-\frac{j}{4}\Delta^2}\prod_{k=1}^j(e^{\frac{1}{2}a\Delta-\frac{(k-1)}{2}\Delta^2}-1)\frac{\inner{u_i}{\widetilde u_j}}{\inner{\widetilde u_j}{\widetilde u_j}} 
	\end{align*}
	Consider the polynomial $\mathcal{P}(y) = e^{-\frac{1}{4}i^2\Delta^2}y^i - \sum_{j=0}^{i-1}e^{-\frac{j}{4}\Delta^2}\prod_{k=1}^j(e^{-\frac{(k-1)}{2}\Delta^2}y-1)\frac{\inner{u_i}{\widetilde u_j}}{\inner{\widetilde u_j}{\widetilde u_j}} $.
	Namely, we can rewrite $\inner{g_a}{\widetilde u_i}$ as
	\begin{align*}
		\inner{g_a}{\widetilde u_i}
		=
		\frac{1}{\sqrt{4\pi}}e^{-\frac{1}{4}a^2}\mathcal{P}(e^{\frac{1}{2}a\Delta}) \qquad \text{for any $a\in\mathbb{R}$}
	\end{align*}
	$\mathcal{P}$ is a polynomial of degree $i$ and we will argue that $1,e^{\frac{1}{2}\Delta^2},\dots,e^{\frac{i-1}{2}\Delta^2}$ are its roots.
	By setting $a=j\cdot \Delta$, we have
	\begin{align*}
		\inner{u_j}{\widetilde u_i}
		=
		\frac{1}{\sqrt{4\pi}}e^{-\frac{1}{4}j^2\Delta^2}\mathcal{P}(e^{\frac{1}{2}j\Delta^2})
	\end{align*}
	for $j=0,\dots,i-1$.
	Since $u_j$ lies on $\spn\{\widetilde u_0,\widetilde u_1,\dots,\widetilde u_j\}$, we have $\inner{u_j}{\widetilde u_i}=0$ by the definition of Gram-Schmidt process.
	It implies that $\mathcal{P}(e^{\frac{1}{2}j\Delta^2})=0$.
	
	Since we have all roots of $\mathcal{P}$, we can rewrite $\mathcal{P}(y)$ as 
	\begin{align*}
		\mathcal{P}(y)=C\cdot \prod_{j=0}^{i-1}(y-e^{\frac{1}{2}j\Delta^2})
	\end{align*}
	for some absolute constant $C$.
	By comparing the coefficient of $y^i$, we have $C=e^{-\frac{1}{4}i^2\Delta^2}$.
	Therefore,
	\begin{align*}
		\inner{g_a}{\widetilde u_i}
		& =
		\frac{1}{\sqrt{4\pi}}e^{-\frac{1}{4}a^2}\mathcal{P}(e^{\frac{1}{2}a\Delta}) \\
		& =
		\frac{1}{\sqrt{4\pi}}e^{-\frac{1}{4}a^2}e^{-\frac{1}{4}i^2\Delta^2}\prod_{j=0}^{i-1}(e^{\frac{1}{2}a\Delta}-e^{\frac{1}{2}j\Delta^2}) \\
		& =
		\frac{1}{\sqrt{4\pi}}e^{-\frac{1}{4}a^2}e^{-\frac{i}{4}\Delta^2}\prod_{k=1}^i(e^{\frac{1}{2}a\Delta-\frac{(k-1)}{2}\Delta^2}-1)
	\end{align*}
	and we proved the statement.
\end{proof}

\begin{lemma}[Restated Lemma \ref{lem:sumofcoeff}]\label{lem:sumofcoeff:detail}
	For any sufficiently small $\Delta>0$, we have
	\begin{align*}
		C_{\Delta,+}+C_{\Delta,-} \leq 2^{O(1/\Delta)}
	\end{align*}
\end{lemma}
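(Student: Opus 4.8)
The plan is to bound $C_{\Delta,+}+C_{\Delta,-}=\sum_{i=0}^{m}\abs{\alpha_i}$, the $\ell_1$-norm of the coefficient vector of the expansion $\Pi_{\mathcal{V}}(v)=\sum_{i}\alpha_i u_i$, by expanding $\Pi_{\mathcal{V}}(v)$ simultaneously in the basis $\{u_i\}$ and in the Gram--Schmidt basis $\{\widetilde u_i\}$. Write $\Pi_{\mathcal{V}}(v)=\sum_{i}\beta_i\widetilde u_i$ with $\beta_i=\inner{v}{\widetilde u_i}/\norm{\widetilde u_i}_2^2$, and write the change of basis as $\widetilde u_i=\sum_{k\le i}R_{ki}u_k$ (upper triangular, $R_{ii}=1$). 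Matching coefficients gives $\alpha_k=\sum_{i\ge k}R_{ki}\beta_i$, hence $\sum_k\abs{\alpha_k}\le\sum_{i=0}^{m}\abs{\beta_i}\big(\sum_{k\le i}\abs{R_{ki}}\big)$. Since the outer sum has only $m+1=O(1/\Delta)$ terms, it suffices to show $\abs{\beta_i}\le 2^{O(1/\Delta)}$ and $\sum_{k\le i}\abs{R_{ki}}\le 2^{O(1/\Delta)}$ for every $i\le m$.

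For the $\beta_i$, I would plug the explicit formulas of Lemma~\ref{lem:recur} directly into $\beta_i$. Each numerator factor $1-e^{-\frac12\Delta-\frac{k-1}{2}\Delta^2}$ is at most $\tfrac12\Delta+\tfrac{k-1}{2}\Delta^2\le\Delta$ for $k\le m$, so $\abs{\inner{v}{\widetilde u_i}}\le\tfrac{1}{\sqrt{4\pi}}\Delta^i$; and each factor of $\norm{\widetilde u_i}_2^2$ satisfies $1-e^{-\frac{k}{2}\Delta^2}\ge\tfrac14 k\Delta^2$, so $\norm{\widetilde u_i}_2^2\ge\tfrac{1}{\sqrt{4\pi}}\,i!\,(\Delta^2/4)^i$. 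Thus $\abs{\beta_i}\le 4^i/(i!\,\Delta^i)$. A short calculation using $i!\ge(i/e)^i$ and maximizing $(4e/(i\Delta))^i$ over $1\le i\le 1/\Delta$ (the derivative of $i\log(4e/(i\Delta))$ is $\log(4/(i\Delta))>0$ on this range) shows $\abs{\beta_i}\le(4e)^{1/\Delta}=2^{O(1/\Delta)}$.

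The heart of the argument is the bound on $\sum_{k\le i}\abs{R_{ki}}$, and the key observation is that this change of basis is governed exactly by the polynomial appearing in the proof of Lemma~\ref{lem:recur}. From that proof, $\inner{g_a}{\widetilde u_i}=\tfrac{1}{\sqrt{4\pi}}e^{-\frac14 a^2}\mathcal{P}(e^{\frac12 a\Delta})$ with $\mathcal{P}(z)=e^{-\frac14 i^2\Delta^2}\prod_{j=0}^{i-1}(z-e^{\frac12 j\Delta^2})$; on the other hand, expanding $\widetilde u_i=\sum_k R_{ki}g_{k\Delta}$ directly gives $\inner{g_a}{\widetilde u_i}=\tfrac{1}{\sqrt{4\pi}}e^{-\frac14 a^2}\sum_k R_{ki}e^{-\frac14 k^2\Delta^2}(e^{\frac12 a\Delta})^k$. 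Since these agree for all $a$, i.e.\ as polynomials in $z=e^{\frac12 a\Delta}$, we get $R_{ki}e^{-\frac14 k^2\Delta^2}=[z^k]\mathcal{P}(z)$. Writing $q=e^{\frac12\Delta^2}\ge 1$, the coefficient of $z^k$ in $\prod_{j=0}^{i-1}(z-q^j)$ equals $(-1)^{i-k}$ times the elementary symmetric polynomial $e_{i-k}(1,q,\dots,q^{i-1})\ge 0$, so $\sum_k\abs{[z^k]\mathcal{P}(z)}=e^{-\frac14 i^2\Delta^2}\prod_{j=0}^{i-1}(1+q^j)$, and hence, using $e^{\frac14 k^2\Delta^2}\le e^{\frac14 i^2\Delta^2}$ for $k\le i$,
\[
\sum_{k\le i}\abs{R_{ki}}\;\le\;\prod_{j=0}^{i-1}\bigl(1+e^{\frac12 j\Delta^2}\bigr)\;\le\;2^i e^{\frac14 i(i-1)\Delta^2}\;\le\;e^{1/4}\,2^i\;=\;2^{O(1/\Delta)}
\]
whenever $i\le m=1/\Delta$. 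Combining the three bounds, $C_{\Delta,+}+C_{\Delta,-}\le(m+1)\cdot 2^{O(1/\Delta)}\cdot 2^{O(1/\Delta)}=2^{O(1/\Delta)}$.

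The main obstacle is precisely this last step. A naive bound through the condition number of the Gram matrix of $u_0,\dots,u_m$ would only yield $2^{O((1/\Delta)\log(1/\Delta))}$, which is too weak for the lemma; one must track the exact polynomial description of the $R_{ki}$ rather than a norm estimate, and the crucial feature that saves the logarithmic factor in the exponent is that $q=e^{\frac12\Delta^2}$ lies within $O(\Delta^2)$ of $1$, so $\prod_{j<i}(1+q^j)$ behaves like $2^i$ rather than $q^{\Theta(i^2)}$ over the relevant range $i\le 1/\Delta$. A secondary delicate point is the cancellation of the $i!$ in the denominator of $\norm{\widetilde u_i}_2^2$ against $\Delta^{-i}$, which is what keeps each $\abs{\beta_i}$ subexponential in $1/\Delta$ and not merely the product $\abs{\beta_i}\sum_k\abs{R_{ki}}$.
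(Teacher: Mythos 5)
Your proof is correct, and it takes a genuinely different route from the paper's. The paper works directly with the coefficients $\alpha_i$ in the $\{u_i\}$ basis: it shows that $\Pi_{\mathcal{V}}(g_a)=\sum_i e^{-a^2/4}\mathcal{Q}_i(e^{a\Delta/2})u_i$ for polynomials $\mathcal{Q}_i$ of degree $m$, identifies each $\mathcal{Q}_i$ as a Lagrange interpolation basis polynomial through the nodes $e^{j\Delta^2/2}$ (using $\Pi_{\mathcal{V}}(u_j)=u_j$ to locate all its roots), and then bounds $\abs{\alpha_i}=e^{-1/4}\abs{\mathcal{Q}_i(e^{-\Delta/2})}$ by factor-by-factor estimates on the numerator and denominator products, yielding a binomial-type bound $2^{O(1/\Delta)}$ for each individual coefficient. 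You instead factor the problem through the Gram--Schmidt basis, writing $\alpha_k=\sum_{i\ge k}R_{ki}\beta_i$, and control the two pieces separately: the $\beta_i$ via the explicit formulas of Lemma~\ref{lem:recur} (where the cancellation of $i!$ against $\Delta^{-i}$ is exactly the right observation), and the columns of the change-of-basis matrix via the elementary-symmetric-polynomial expansion of $\prod_{j<i}(z-q^j)$ with $q=e^{\Delta^2/2}$, which gives $\sum_k\abs{R_{ki}}\le\prod_{j<i}(1+q^j)\approx 2^i$. Both arguments ultimately exploit the same $q$-polynomial structure uncovered in the proof of Lemma~\ref{lem:recur}, but the paper's version buys an explicit per-coefficient formula for $\alpha_i$ (reused nowhere else, so nothing is lost), while yours avoids the Lagrange identification entirely and bounds the $\ell_1$ norm of the coefficient vector by a product of two $\ell_1$-type norms; your closing remark about why $q$ being $1+O(\Delta^2)$ is what saves the $\log(1/\Delta)$ factor in the exponent is exactly the right diagnosis of where the difficulty lies.
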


\begin{proof}
	Recall that $\mathcal{V}$ is the subspace $\spn\{u_0,\dots,u_{m}\}$ and $\Pi_{\mathcal{V}}(g_a)$ is the projection of $g_a$ on the subspace $\mathcal{V}$.
	We have
	\begin{align*}
		\Pi_{\mathcal{V}}(g_a)
		& =
		\sum_{i=0}^{m}\inner{g_a}{\frac{\widetilde u_i}{\norm{\widetilde u_i}^2}}\widetilde u_i
	\end{align*}
	From Lemma \ref{lem:recur}, we plug the expressions into each coefficient $\inner{g_a}{\frac{\widetilde u_i}{\norm{\widetilde u_i}^2}}$ in the above formula.
	\begin{align*}
		\inner{g_a}{\frac{\widetilde u_i}{\norm{\widetilde u_i}^2}}
		& =
		e^{-\frac{1}{4}a^2}e^{-\frac{i}{4}\Delta^2}\frac{\prod_{k=1}^i(e^{\frac{1}{2}a\Delta-\frac{(k-1)}{2}\Delta^2}-1)}{\prod_{k=1}^i(1-e^{-\frac{k}{2}\Delta^2})}
	\end{align*}
	Consider the polynomial $\mathcal{P}_i(y) = e^{-\frac{i}{4}\Delta^2}\frac{\prod_{k=1}^i(e^{-\frac{(k-1)}{2}\Delta^2}y-1)}{\prod_{k=1}^i(1-e^{-\frac{k}{2}\Delta^2})}$ of degree $i<m$.
	Namely, we can rewrite $\inner{g_a}{\frac{\widetilde u_i}{\norm{\widetilde u_i}^2}}$ as $e^{-\frac{1}{4}a^2}\mathcal{P}_i(e^{\frac{1}{2}a})$.
	Hence, we now rewrite $\Pi_{\mathcal{V}}(g_a)$ as
	\begin{align*}
		\Pi_{\mathcal{V}}(g_a)
		& =
		\sum_{i=0}^{m}e^{-\frac{1}{4}a^2}\mathcal{P}_i(e^{\frac{1}{2}a\Delta})\widetilde u_i
	\end{align*}
	Recall that Gram-Schmidt process gives us the following connection between $u_i$ and $\widetilde u_i$.
	\begin{align*}
		\widetilde u_0  = u_0 \qquad \text{and}\qquad \widetilde u_{i} = u_{i} - \sum_{j=0}^{i-1}\frac{\inner{u_{i}}{\widetilde u_j}}{\inner{\widetilde u_j}{\widetilde u_j}}\widetilde u_j \qquad\text{for $i>0$}
	\end{align*}
	Hence, using the formula for $\tilde u_m$, we have
	\begin{align*}
		\Pi_{\mathcal{V}}(g_a)
		& =
		\sum_{i=0}^{m}e^{-\frac{1}{4}a^2}\mathcal{P}_i(e^{\frac{1}{2}a\Delta})\widetilde u_i
		=
		\sum_{i=0}^{m-1}e^{-\frac{1}{4}a^2}\mathcal{P}_i(e^{\frac{1}{2}a\Delta})\widetilde u_i + e^{-\frac{1}{4}a^2}\mathcal{P}_m(e^{\frac{1}{2}a\Delta})\widetilde u_m\\
		& =
		\sum_{i=0}^{m-1}e^{-\frac{1}{4}a^2}\mathcal{P}_i(e^{\frac{1}{2}a\Delta})\widetilde u_i + e^{-\frac{1}{4}a^2}\mathcal{P}_m(e^{\frac{1}{2}a\Delta})\big(u_{m} - \sum_{i=0}^{m-1}\frac{\inner{u_{m}}{\widetilde u_i}}{\inner{\widetilde u_i}{\widetilde u_i}}\widetilde u_i\big) \\
		& =
		\sum_{i=0}^{m-1}e^{-\frac{1}{4}a^2}\underbrace{\big(\mathcal{P}_i(e^{\frac{1}{2}a\Delta}) - \frac{\inner{u_{m}}{\widetilde u_i}}{\inner{\widetilde u_i}{\widetilde u_i}}\mathcal{P}_m(e^{\frac{1}{2}a\Delta})\big)}_{\text{a polynomial in $e^{\frac{1}{2}a\Delta}$ of degree $m$}}\widetilde u_i + e^{-\frac{1}{4}a^2}\mathcal{P}_m(e^{\frac{1}{2}a\Delta})u_{m}
	\end{align*}
	and if we recursively plug in the formula for $\widetilde u_i$ then it is easy to see that we can further rewrite $\Pi_{\mathcal{V}}(g_a)$ as 
	\begin{align*}
		\Pi_{\mathcal{V}}(g_a)
		& =
		\sum_{i=0}^{m}e^{-\frac{1}{4}a^2}\mathcal{Q}_i(e^{\frac{1}{2}a\Delta}) u_i
	\end{align*}
	for some polynomials $\mathcal{Q}_i$ of degree $m$.
	Since $u_j \in \mathcal{V}$, we have $\Pi_{\mathcal{V}}(u_j) = u_j$ by definition.
	It implies that $e^{-\frac{1}{4}j^2\Delta^2}\mathcal{Q}_i(e^{\frac{1}{2}j\Delta^2}) = 0$ for all $j=0,1,\dots,i-1,i+1,\dots,m$ by the fact that $u_0,\dots,u_m$ are linearly independent.
	It means that we have all roots of $\mathcal{Q}_i$ and hence
	\begin{align*}
		\mathcal{Q}_i(y)
		& =
		C_i\prod_{j=0, i\neq j}^m(y - e^{\frac{1}{2}j\Delta^2})
	\end{align*}
	for some absolute constant $C_i$.
	We also have $e^{-\frac{1}{4}i^2\Delta^2}\mathcal{Q}_i(e^{\frac{1}{2}i\Delta^2}) = 1$ and therefore
	\begin{align*}
		\mathcal{Q}_i(y)
		& =
		e^{\frac{1}{4}i^2\Delta^2}\frac{\prod_{j=0, i\neq j}^m(y - e^{\frac{1}{2}j\Delta^2})}{\prod_{j=0, i\neq j}^m(e^{\frac{1}{2}i\Delta^2} - e^{\frac{1}{2}j\Delta^2})}
	\end{align*}
	In particular, 
	\begin{align*}
		\alpha_i = e^{-\frac{1}{4}}\mathcal{Q}_i(e^{-\frac{1}{2}\Delta}) = e^{-\frac{1}{4}}e^{\frac{1}{4}i^2\Delta^2}\frac{\prod_{j=0, i\neq j}^m(e^{-\frac{1}{2}\Delta} - e^{\frac{1}{2}j\Delta^2})}{\prod_{j=0, i\neq j}^m(e^{\frac{1}{2}i\Delta^2} - e^{\frac{1}{2}j\Delta^2})}
	\end{align*}
	
	Furthermore, recall that $C_{\Delta,+}$ and $C_{\Delta,-}$ are defined as $C_{\Delta,+} = \sum_{i\in J_+} \alpha_i$ and $C_{\Delta,-} = \sum_{i\in J_-} -\alpha_i$ respectively.
	In other words, $C_{\Delta,+}+C_{\Delta,-} = \sum_{i=0}^m \abs{\alpha_i}$.
	We need to give a bound for each $\abs{\alpha_i}$.
	A useful inequality is $1+x \leq e^{x} \leq 1+x+x^2$ for any sufficiently small $x$.
	For $i>j$, the factors in the denominator of the fraction in $\abs{\alpha_i}$ is
	\begin{align*}
		\abs{e^{\frac{1}{2}i\Delta^2} - e^{\frac{1}{2}j\Delta^2}}
		& \geq
		1+\frac{1}{2}i\Delta^2 - 1-\frac{1}{2}j\Delta^2-(\frac{1}{2}j\Delta)^2
		\geq
		\frac{1}{2}(i-j)\Delta^2-\frac{1}{4}j^2\Delta^4.
	\end{align*}
	Since $j\leq m=\frac{1}{\Delta}$ which implies $j^2\Delta^4 \leq \Delta^2 \leq (i-j)\Delta^2$, we have $\abs{e^{\frac{1}{2}i\Delta^2} - e^{\frac{1}{2}j\Delta^2}} \geq \frac{1}{4}(i-j)\Delta^2$.
	Similarly, for $j>i$, we have $\abs{e^{\frac{1}{2}i\Delta^2} - e^{\frac{1}{2}j\Delta^2}} \geq \frac{1}{4}(j-i)\Delta^2$.
	Hence,
	\begin{align*}
		\abs{\prod_{j=0, i\neq j}^m(e^{\frac{1}{2}i\Delta^2} - e^{\frac{1}{2}j\Delta^2})}
		& \geq 
		\big(\frac{\Delta^2}{4}\big)^m\cdot i!(m-i)!
	\end{align*}
	On the other hand, the factors in the numerator of the fraction in $\abs{\alpha_i}$ is
	\begin{align*}
		\abs{e^{-\frac{1}{2}\Delta} - e^{\frac{1}{2}j\Delta^2}}
		& \leq 
		1+\frac{1}{2}j\Delta^2+(\frac{1}{2}j\Delta^2)^2 - 1+\frac{1}{2}\Delta
		=
		\frac{1}{2}(m+j)\Delta^2+\frac{1}{4}j^2\Delta^4
	\end{align*}
	Since $j \leq \frac{1}{\Delta}$ which implies $j^2\Delta^4\leq \Delta^2 \leq(m+j)\Delta^2$, we have $\abs{e^{-\frac{1}{2}\Delta} - e^{\frac{1}{2}j\Delta^2}} \leq \frac{3}{4}(m+j)\Delta^2$.
	Hence,
	\begin{align*}
		\abs{\prod_{j=0, i\neq j}^m(e^{-\frac{1}{2}\Delta} - e^{\frac{1}{2}j\Delta^2})}
		& \leq
		\frac{1}{m+i}\big(\frac{3\Delta^2}{4}\big)^m \cdot \frac{(2m+1)!}{m!}
		\leq
		\big(\frac{3\Delta^2}{4}\big)^m \cdot \frac{(2m+1)!}{m!}
	\end{align*}
	Combining the above inequalities,
	\begin{align*}
		\abs{\alpha_i}
		& \leq
		e^{-\frac{1}{4}}\cdot e^{\frac{1}{4}i^2\Delta^2} \cdot 3^m \cdot(m+1)\cdot \bigg(\frac{(2m+1)!}{m!(m+1)!}\bigg) \cdot \bigg(\frac{m!}{i!(m-i)!}\bigg)
		=
		2^{O(1/\Delta)}
	\end{align*}
	and therefore
	\begin{align*}
		C_{\Delta,+}+C_{\Delta,-} = 2^{O(1/\Delta)}
	\end{align*}
\end{proof}

\begin{lemma}[Restated Lemma \ref{lem:beta}]\label{lem:beta:detail}
	For any sufficiently small $\Delta>0$, we have
	\begin{align*}
		\beta_\Delta
		\leq
		\frac{1}{2^{\Omega((1/\Delta)\log(1/\Delta))}}
	\end{align*}
\end{lemma}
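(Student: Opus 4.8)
The plan is to exploit that $\beta_\Delta=\norm{g_{-1}-\Pi_{\mathcal V}(g_{-1})}_2/\norm{g_{-1}}_2$ together with the elementary fact that the orthogonal projection onto $\mathcal V=\spn\{u_0,\dots,u_m\}=\spn\{g_0,g_\Delta,\dots,g_{m\Delta}\}$ (recall $m=1/\Delta$) is the \emph{nearest} point of $\mathcal V$; hence it suffices to produce one explicit $w\in\mathcal V$ with $\norm{g_{-1}-w}_2\le 2^{-\Omega(m\log m)}$, and since $\norm{g_{-1}}_2=(4\pi)^{-1/4}=\Theta(1)$ this gives the claimed bound (equivalently, Lemma~\ref{lem:gauss:approx}). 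I would build $w$ on the Fourier side: with $\widehat{g_\mu}(\xi)=e^{-i\mu\xi-\xi^2/2}$ and Parseval, for $w=\sum_{i=0}^m c_i g_{i\Delta}$ one has $\norm{g_{-1}-w}_2^2=\frac{1}{2\pi}\int_{\mathbb R}|e^{i\xi}-P(e^{-i\Delta\xi})|^2e^{-\xi^2}\,d\xi$, where $P(z)=\sum_{i=0}^m c_i z^i$ has degree $\le m$. Substituting $\theta=\Delta\xi$ and using $m\Delta=1$, the target $e^{i\xi}$ becomes $z^{-m}$ with $z=e^{-i\theta}$, so the task reduces to choosing a degree-$\le m$ polynomial $P$ that approximates the Laurent monomial $z^{-m}$ well in the Gaussian-weighted $L^2$ sense on the unit circle.

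I would take $P$ to be the degree-$m$ Taylor polynomial of $z^{-m}=(1+(z-1))^{-m}$ about $z=1$, namely $P(z)=\sum_{j=0}^m\binom{-m}{j}(z-1)^j$, so that the error is exactly the Taylor tail, $z^{-m}-P(z)=\sum_{j>m}\binom{-m}{j}(z-1)^j=O(|z-1|^{m+1})$ for $|z-1|<1$. Then I would split the frequency integral at $|\theta|=1/8$. On $|\theta|\le 1/8$ one has $|z-1|=2|\sin(\theta/2)|\le|\theta|\le 1/8$, hence $|z^{-m}-P(z)|\le\sum_{j>m}\binom{m+j-1}{j}|\theta|^j\le\sum_{j>m}(4|\theta|)^j\le 2(4|\theta|)^{m+1}$ (using $\binom{m+j-1}{j}\le 4^j$ for $j\ge m$); feeding this in and computing $\int_{\mathbb R}|\theta|^{2m+2}e^{-m^2\theta^2}\,d\theta=m^{-2m-3}\Gamma(m+\tfrac32)$ gives a contribution of size $m\cdot 4^{2m}\cdot m^{-2m-3}\Gamma(m+\tfrac32)=2^{-m\log_2 m+O(m)}$. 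On $|\theta|>1/8$ the Gaussian weight is at most $e^{-\Omega(m^2)}$ while $|P(e^{-i\Delta\xi})|\le\sum_{j=0}^m\binom{m+j-1}{j}2^j\le 9^m$, so this region contributes at most $e^{-\Omega(m^2)}\cdot 9^{2m}\cdot\poly(m)=e^{-\Omega(m^2)}$, negligible beside the first bound. Adding the pieces, $\norm{g_{-1}-w}_2\le 2^{-\Omega(m\log m)}=2^{-\Omega((1/\Delta)\log(1/\Delta))}$, which finishes the proof.

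The hard part is getting the interplay between the approximant and the splitting point right. The polynomial $P$ above is a terrible approximant of $z^{-m}$ on most of the unit circle — its coefficients are as large as $2^{\Theta(m)}$ and it blows up near $z=-1$ — so a naive term-by-term analysis of the exact expression $\beta_\Delta^2=1-e^{-1/2}\sum_{i=0}^m q^i\prod_{k=1}^i\frac{(1-tq^{k-1})^2}{1-q^k}$ (with $q=e^{-\Delta^2/2}$, $t=e^{-\Delta/2}$) coming from Lemma~\ref{lem:recur} does not work: the per-term relative errors are genuinely $\Theta(\Delta)$ and, compounded over $\sim 1/\Delta$ terms, would yield only a polynomially small bound. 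The argument works precisely because the Gaussian frequency weight $e^{-\xi^2}=e^{-m^2\theta^2}$ is concentrated, up to error $e^{-\Omega(m^2)}$, on $|\theta|\lesssim 1$ — exactly where the local Taylor approximation is accurate — which forces the split to be at $|\xi|=\Theta(1/\Delta)$ and not sooner. The second delicate ingredient is the estimate $m^{-2m}\Gamma(m)=2^{-\Theta(m\log m)}$, which is what upgrades the rate from merely exponential to the super-polynomial $2^{-\Omega((1/\Delta)\log(1/\Delta))}$ needed in the rest of the argument.
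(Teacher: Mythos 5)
Your proof is correct, but it takes a genuinely different route from the paper's. The paper computes $\beta_\Delta^2=1-\norm{\Pi_{\mathcal V}(v)}_2^2/\norm{v}_2^2$ \emph{exactly}: it uses the closed form for the Gram--Schmidt inner products (Lemma~\ref{lem:recur}) to write $\beta_\Delta^2$ as a $q$-series in $y=e^{-\Delta^2/2}$, proves a combinatorial telescoping identity (Lemma~\ref{lem:closedform}) showing the partial sums $S_i$ are dominated by the products $T_{i+1}$, and then bounds the resulting product $\prod_{j=1}^{m+1}\frac{(1-e^{-\Delta/2-(j-1)\Delta^2/2})^2}{1-e^{-j\Delta^2/2}}$ by $\frac{1}{(m+1)!}$. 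You instead use only the variational characterization of the projection and exhibit an explicit competitor $w\in\mathcal V$, passing to the Fourier side where the problem becomes polynomial approximation of $z^{-m}$ on the unit circle against the weight $e^{-m^2\theta^2}$; the Taylor truncation of $(1+(z-1))^{-m}$ plus the moment computation $\int|\theta|^{2m+2}e^{-m^2\theta^2}\,\dir\theta=m^{-2m-3}\Gamma(m+\tfrac32)$ then delivers $2^{-\Omega(m\log m)}$, with the far-frequency region killed by the $e^{-\Omega(m^2)}$ weight despite the $2^{\Theta(m)}$ size of the coefficients. I checked the key steps: the identification of $\spn\{g_{i\Delta}\}_{i=0}^m$ with degree-$\le m$ polynomials in $z=e^{-i\Delta\xi}$ is right, the tail bound $\binom{m+j-1}{j}\le 4^j$ for $j\ge m$ and the geometric summation on $|\theta|\le 1/8$ are valid, and $e^{-\Omega(m^2)}\ll 2^{-m\log_2 m}$, so the splitting point is chosen correctly. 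Your approach is arguably more robust (it would apply verbatim with $g_{-1}$ replaced by $g_a$, and more generally to any target whose Fourier transform is controlled by the Gaussian weight, making the ``quantitative Tauberian'' flavor explicit), at the cost of losing the exact identity for $\beta_\Delta$; the paper's computation, being exact, is what also feeds Lemmas~\ref{lem:sumofcoeff} and~\ref{lem:diffofcoeff}, which your argument does not replace. Both yield the same rate $\beta_\Delta\le 2^{-\Omega((1/\Delta)\log(1/\Delta))}$, essentially $1/\sqrt{m!}$ up to $2^{O(m)}$ factors.
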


\begin{proof}
	Recall that the definition of $\beta_\Delta$ is $\frac{\norm{\Pi_{\mathcal{V}}(v)-v}_2}{\norm{v}_2}$.
	Hence, $\beta_\Delta^2 = 1-\frac{\norm{\Pi_{\mathcal{V}}(v)}_2^2}{\norm{v}_2^2}$.
	We first express $\frac{\norm{\Pi_{\mathcal{V}}(v)}_2^2}{\norm{v}_2^2}$ in an explicit formulation.
	Recall that 
	\begin{align*}
		\Pi_{\mathcal{V}}(v)
		& =
		\sum_{i=0}^{m}\inner{v}{\frac{\widetilde u_i}{\norm{\widetilde u_i}^2}}\widetilde u_i.
	\end{align*}
	By Pythagorean theorem, we have
	\begin{align*}
		\frac{\norm{\Pi_{\mathcal{V}}(v)}_2^2}{\norm{v}_2^2}
		& =
		\sum_{i=0}^{m}\inner{\frac{v}{\norm{v}_2}}{\frac{\widetilde u_i}{\norm{\widetilde u_i}_2}}^2
	\end{align*}
	From Lemma \ref{lem:recur}, each term can be expressed as
	\begin{align*}
		\inner{\frac{v}{\norm{v}_2}}{\frac{\widetilde u_i}{\norm{\widetilde u_i}_2}}^2
		& =
		e^{-\frac{1}{2}}\cdot e^{-\frac{i}{2}\Delta^2}\cdot \prod_{j=1}^i \frac{(1-e^{-\frac{1}{2}\Delta - \frac{j-1}{2}\Delta^2})^2}{1-e^{-\frac{j}{2}\Delta^2}}
	\end{align*}
	To ease the notations, we consider the following sequences.
	For any $y<1$,
	\begin{align*}
		S_0 & = 1-y^{m^2} \qquad \text{and} \qquad S_i = S_{i-1} - y^{m^2+i} T_i \qquad \text{for $i>0$}
	\end{align*}
	where 
	\begin{align*}
	    T_i = \prod_{j=1}^i\frac{(1-y^{m-1+j})^2}{1-y^j} \qquad \text{for $i>0$}
	\end{align*}
	and recall that $m=\frac{1}{\Delta}$.
	In other words, we replace $e^{-\frac{1}{2}\Delta^2}$ with $y$ and $1-S_i$ is the sum of the first $i+1$ terms of $\sum_{j=0}^{m}\inner{\frac{v}{\norm{v}_2}}{\frac{\widetilde u_j}{\norm{\widetilde u_j}_2}}^2$ which means $S_{m} = 1- \frac{\norm{\Pi_{\mathcal{V}}(v)}_2^2}{\norm{v}_2^2} = \beta_\Delta^2$.

	In Lemma \ref{lem:closedform}, we have the closed form of $S_i$ and $T_i$,
	\begin{align*}
		S_{i-1} &= \prod_{j=1}^i(1-y^{m-1+j}) \cdot \bigg(\sum_{(k_1,k_2,\dots,k_i)\in \mathcal{I}_i} y^{\sum_{j=1}^{i} (m-1+j)k_j}\bigg) \\
		T_{i} &= \prod_{j=1}^i(1-y^{m-1+j}) \cdot \bigg(\sum_{(k_1,k_2,\dots,k_{i})\in \mathcal{I}_i} y^{\sum_{j=1}^{i} jk_j}\bigg) 
	\end{align*}
	where $\mathcal{I}_i = \setdef{(k_1,k_2,\dots,k_{i})}{0\leq k_1 \leq k_1+k_2 \leq \dots \leq k_1+\dots+k_{i} \leq m-1}$.
	It is easy to see that $S_{i-1}\leq T_{i}$ since there is a one-to-one correspondence between the terms $y^{\sum_{j=1}^{i} (m-1+j)k_j}$ and $y^{\sum_{j=1}^{i} jk_j}$ and $\sum_{j=1}^{i} (m-1+j)k_j \geq \sum_{j=1}^{i} jk_j$ and the fact $y<1$.
	Now, 
	\begin{align*}
		\beta_\Delta^2
		& =
		S_m 
		\leq 
		T_{m+1}
		=
		\prod_{j=1}^{m+1}\frac{(1-y^{m-1+j})^2}{1-y^j}
		=
		\prod_{j=1}^{m+1}\frac{(1-e^{-\frac{1}{2}\Delta-\frac{j-1}{2}\Delta^2})^2}{1-e^{-\frac{j}{2}\Delta^2}} \qquad \text{when $y=e^{-\frac{1}{2}\Delta^2}$}
	\end{align*}
	A useful inequality is $1+x \leq e^x \leq 1+2x$ for any sufficiently small $x$.
	Then, we have
	\begin{align*}
		\prod_{j=1}^{m+1}(1-e^{-\frac{j}{2}\Delta^2})
		& \geq
		\prod_{j=1}^{m+1}(1-(1-2\frac{j}{2}\Delta^2))
		=
		\Delta^{2(m+1)}\cdot (m+1)!
	\end{align*}
	and, since $j-1\leq m= \frac{1}{\Delta}$,
	\begin{align*}
		\prod_{j=1}^{m+1}(1-e^{-\frac{1}{2}\Delta-\frac{j-1}{2}\Delta^2})^2
		& \leq
		\prod_{j=1}^{m+1}(1-(1-\frac{1}{2}\Delta-\frac{j-1}{2}\Delta^2))^2
		\leq
		\Delta^{2(m+1)}		
	\end{align*}
	We conclude that 
	\begin{align*}
		\beta_{\Delta} \leq \sqrt{\frac{\Delta^{2(m+1)}	}{\Delta^{2(m+1)}\cdot (m+1)!}}
		=
		\sqrt{\frac{1}{(m+1)!}}
		=
		\frac{1}{2^{\Omega(m\log m)}}
		=
		\frac{1}{2^{\Omega((1/\Delta)\log(1/\Delta))}}
	\end{align*}
\end{proof}

\begin{lemma}\label{lem:closedform}
		Let $S_i$ and $T_i$ be the recurrence sequence defined in the proof of Lemma \ref{lem:beta:detail}.
		We have
		\begin{align*}
			S_{i-1} &= \prod_{j=1}^i(1-y^{m-1+j}) \cdot \bigg(\sum_{(k_1,k_2,\dots,k_i)\in \mathcal{I}_i} y^{\sum_{j=1}^{i} (m-1+j)k_j}\bigg) \\
			T_{i} &= \prod_{j=1}^i(1-y^{m-1+j}) \cdot \bigg(\sum_{(k_1,k_2,\dots,k_{i})\in \mathcal{I}_i} y^{\sum_{j=1}^{i} jk_j}\bigg) 
		\end{align*}
		where $\mathcal{I}_i = \setdef{(k_1,k_2,\dots,k_{i})}{0\leq k_1 \leq k_1+k_2 \leq \dots \leq k_1+\dots+k_{i} \leq m-1}$.
	\end{lemma}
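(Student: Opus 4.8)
The plan is to recognize the two sums over $\mathcal{I}_i$ as generating functions for Young diagrams in a box, and then to verify the two recurrences that define $S_i$ and $T_i$.

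\emph{Reformulation.} First note that the conditions defining $\mathcal{I}_i$ amount to $k_1,\dots,k_i\ge 0$ together with $k_1+\cdots+k_i\le m-1$, since each intermediate inequality $k_1+\cdots+k_t\le k_1+\cdots+k_{t+1}$ is just $k_{t+1}\ge 0$. The map $\lambda_t:=k_t+k_{t+1}+\cdots+k_i$ is then a bijection from $\mathcal{I}_i$ onto the set $B_i$ of partitions $\lambda_1\ge\cdots\ge\lambda_i\ge 0$ with $\lambda_1\le m-1$, i.e. Young diagrams inside the $i\times(m-1)$ box, under which $\sum_{j=1}^i jk_j=|\lambda|:=\lambda_1+\cdots+\lambda_i$ and $\sum_{j=1}^i(m-1+j)k_j=(m-1)\lambda_1+|\lambda|$. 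So, writing $\tau_i:=\sum_{\lambda\in B_i}y^{|\lambda|}$ and $\sigma_i:=\sum_{\lambda\in B_i}y^{(m-1)\lambda_1+|\lambda|}$, the two claimed identities become $T_i=\big(\prod_{j=1}^i(1-y^{m-1+j})\big)\tau_i$ and $S_{i-1}=\big(\prod_{j=1}^i(1-y^{m-1+j})\big)\sigma_i$.

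\emph{Two recurrences, then induction.} Next I would prove, for all $i\ge 1$,
\[
(1-y^{i+1})\tau_{i+1}=(1-y^{m+i})\tau_i
\qquad\text{and}\qquad
(1-y^{m+i})\sigma_{i+1}=\sigma_i-y^{m^2+i}\tau_i .
\]
Each follows from decomposing $B_{i+1}$ in two ways and eliminating a common auxiliary sum: split according to whether the largest part is $\le m-2$ or equals $m-1$, and separately according to whether $\lambda$ has at most $i$ parts or exactly $i+1$ parts (subtracting $1$ from every part in the latter case). For $\tau$ this yields $\tau_{i+1}=G+y^{m-1}\tau_i$ and $\tau_{i+1}=\tau_i+y^{i+1}G$ with $G:=\sum_{\lambda\subseteq(i+1)\times(m-2)}y^{|\lambda|}$; for $\sigma$ it yields $\sigma_{i+1}=\widetilde\sigma+y^{m(m-1)}\tau_i$ and $\sigma_{i+1}=\sigma_i+y^{m+i}\widetilde\sigma$ with $\widetilde\sigma:=\sum_{\lambda\subseteq(i+1)\times(m-2)}y^{(m-1)\lambda_1+|\lambda|}$. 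With these in hand the lemma follows by induction on $i$: the base case $i=1$ is the direct check $\tau_1=\tfrac{1-y^m}{1-y}$, $\sigma_1=\tfrac{1-y^{m^2}}{1-y^m}$, matching $T_1=\tfrac{(1-y^m)^2}{1-y}$ and $S_0=1-y^{m^2}$; and for the inductive step one substitutes the induction hypotheses into the defining recurrences $T_{i+1}=\tfrac{(1-y^{m+i})^2}{1-y^{i+1}}T_i$ and $S_i=S_{i-1}-y^{m^2+i}T_i$, whereupon the first collapses to the $\tau$-recurrence above and the second to the $\sigma$-recurrence. (Incidentally the $\tau$-recurrence gives the closed form $\tau_i=\prod_{j=1}^i\tfrac{1-y^{m-1+j}}{1-y^j}$, the Gaussian binomial coefficient $\binom{m-1+i}{i}_y$, though this is not needed.)

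\emph{Main obstacle.} The delicate step is the $\sigma$-recurrence $(1-y^{m+i})\sigma_{i+1}=\sigma_i-y^{m^2+i}\tau_i$. Because $\sigma_i$ carries the inhomogeneous weight $(m-1)\lambda_1$ (and, unlike $\tau_i$, is not itself a product), one has to propagate that weight carefully through both box decompositions --- in particular through the auxiliary sum $\widetilde\sigma$, whose weight no longer matches the smaller box --- and check that the exponents combine to give precisely $m^2+i=m(m-1)+(m+i)$, the exponent appearing in the definition of $S_i$. Everything else is routine manipulation of finite geometric sums.
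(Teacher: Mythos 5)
Your proposal is correct, and it takes a genuinely different route from the paper. The paper proves both identities by direct induction on the recurrences, working with the composition variables $(k_1,\dots,k_i)$: it multiplies the candidate closed form by $(1-y^{i+1})$ (resp.\ $(1-y^{m+i})$), telescopes the inner sum over the last index, and then re-indexes the surviving boundary term via the change of variables $(k_1,\dots,k_i)\mapsto(k_2,\dots,k_i,\,m-1-\sum_j k_j)$ on $\mathcal{I}_i$. You instead pass through the bijection $\lambda_t=k_t+\cdots+k_i$ onto partitions in the $i\times(m-1)$ box, under which $\sum_j jk_j=|\lambda|$ and $\sum_j(m-1+j)k_j=(m-1)\lambda_1+|\lambda|$, and derive the two needed recurrences $(1-y^{i+1})\tau_{i+1}=(1-y^{m+i})\tau_i$ and $(1-y^{m+i})\sigma_{i+1}=\sigma_i-y^{m^2+i}\tau_i$ by two complementary box decompositions (largest part $=m-1$ or not; number of parts $=i+1$ or not) and elimination of the common auxiliary sum over the $(i+1)\times(m-2)$ box. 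I checked the weight bookkeeping in the $\sigma$ case: stripping a column sends $(m-1)\lambda_1+|\lambda|$ to $(m-1)\lambda_1'+|\lambda'|+(m+i)$, and the top-row case contributes $y^{m(m-1)}\tau_i$, so the exponents combine to $m(m-1)+(m+i)=m^2+i$ exactly as required; the base cases $\tau_1=\frac{1-y^m}{1-y}$ and $\sigma_1=\frac{1-y^{m^2}}{1-y^m}$ also match $T_1$ and $S_0$. What your approach buys is conceptual transparency --- the $T_i$ formula is recognized as the Gaussian binomial $\binom{m-1+i}{i}_y$ and both recurrences become Pascal-type identities --- at the cost of introducing the partition reformulation; the paper's argument is more mechanical but self-contained in the original variables. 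Both are valid proofs of the lemma.
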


\begin{proof}
	We first prove the expression for $T_{i}$.
	When $i=1$, we have
	\begin{align*}
		T_{1} = \frac{(1-y^{m})^2}{1-y} = (1-y^{m})(\sum_{k_1=0}^{m-1} y^{k_1})
	\end{align*}
	By induction, we have
	\begin{align*}
		T_{i+1}
		& =
		T_i \cdot \frac{(1-y^{m+i})^2}{1-y^{i+1}} \\
		& =
		\prod_{j=1}^i(1-y^{m-1+j}) \cdot \bigg(\sum_{(k_1,k_2,\dots,k_{i})\in \mathcal{I}_i} y^{\sum_{j=1}^{i} jk_j}\bigg)  \cdot \frac{(1-y^{m+i})^2}{1-y^{i+1}} \\
		& =
		\prod_{j=1}^{i+1}(1-y^{m-1+j}) \cdot \bigg(\sum_{(k_1,k_2,\dots,k_{i})\in \mathcal{I}_i} y^{\sum_{j=1}^{i} jk_j}\bigg)  \cdot \frac{1-y^{m+i}}{1-y^{i+1}}
	\end{align*}
	In other words, we want to prove 
	\begin{align*}
		\MoveEqLeft \bigg(\sum_{(k_1,k_2,\dots,k_{i})\in \mathcal{I}_i} y^{\sum_{j=1}^{i} jk_j}\bigg)  \cdot \frac{1-y^{m+i}}{1-y^{i+1}}
		=
		\sum_{(k_1,k_2,\dots,k_{i+1})\in \mathcal{I}_{i+1}} y^{\sum_{j=1}^{i+1} jk_j}
	\end{align*}
	We are now examining
	\begin{align*}
		\MoveEqLeft \bigg(\sum_{(k_1,k_2,\dots,k_{i+1})\in \mathcal{I}_{i+1}} y^{\sum_{j=1}^{i+1} jk_j}\bigg)  \cdot (1-y^{i+1}) \\
		& =
		\sum_{(k_1,k_2,\dots,k_{i+1})\in \mathcal{I}_{i+1}} y^{\sum_{j=1}^{i+1} jk_j} - \sum_{(k_1,k_2,\dots,k_{i+1})\in \mathcal{I}_{i+1}} y^{\sum_{j=1}^{i} jk_j + (i+1)(k_{i+1}+1)} \\
		& =
		\sum_{(k_1,k_2,\dots,k_{i})\in \mathcal{I}_{i}} \bigg(\sum_{k_{i+1}=0}^{\ell} y^{\sum_{j=1}^{i+1} jk_j} - \sum_{k_{i+1}=0}^{\ell} y^{\sum_{j=1}^{i} jk_j + (i+1)(k_{i+1}+1)} \bigg) \numberthis\label{eq:temp}
	\end{align*}
	where $\ell = m-1-\sum_{j=1}^ik_j$.
	We fix the indices $k_1,\dots,k_i$ and consider the summation with the index $k_{i+1}$. 
	We have
	\begin{align*}
		\sum_{k_{i+1}=0}^{\ell} y^{\sum_{j=1}^{i+1} jk_j} - \sum_{k_{i+1}=0}^{\ell} y^{\sum_{j=1}^{i} jk_j + (i+1)(k_{i+1}+1)} 
		& =
		y^{\sum_{j=1}^{i} jk_j} -  y^{\sum_{j=1}^{i} jk_j + (i+1)(\ell+1)}\numberthis\label{eq:temp1}
	\end{align*}
	Therefore, we plug \eqref{eq:temp1} into \eqref{eq:temp}.
	\begin{align*}
		\bigg(\sum_{(k_1,k_2,\dots,k_{i+1})\in \mathcal{I}_{i+1}} y^{\sum_{j=1}^{i+1} jk_j}\bigg)  \cdot (1-y^{i+1}) 
		& =
		\sum_{(k_1,k_2,\dots,k_{i})\in \mathcal{I}_{i}} \bigg(y^{\sum_{j=1}^{i} jk_j} -  y^{\sum_{j=1}^{i} jk_j + (i+1)(\ell+1)} \bigg) \numberthis\label{eq:temp3}
	\end{align*}
	Note that $\sum_{j=1}^{i} jk_j + (i+1)(\ell+1) =m+i+\sum_{j=2}^{i}(j-1)k_{j} + i(m-1-\sum_{j=1}^{i} k_j)$, the term $y^{\sum_{j=1}^{i} jk_j + (i+1)(\ell+1)}$ becomes $y^{m+i+\sum_{j=2}^{i}(j-1)k_{j} + i(m-1-\sum_{j=1}^{i} k_j)}$.
	By change of variables, we have
	\begin{align*}
	    \mathcal{I}_i = \setdef{(k_2,\dots,k_i, m-1-\sum_{j=1}^ik_j)}{(k_1,k_2,\dots,k_i)\in \mathcal{I}_i}.
	\end{align*}
	and hence we have
	\begin{align*}
	    \sum_{(k_1,k_2,\dots,k_{i})\in \mathcal{I}_{i}} y^{m+i+\sum_{j=2}^{i}(j-1)k_{j} + i(m-1-\sum_{j=1}^{i} k_j)}
	    & =
	    \sum_{(k_1,k_2,\dots,k_{i})\in \mathcal{I}_{i}} y^{m+i+\sum_{j=1}^{i} jk_j}\numberthis\label{eq:temp2}.
	\end{align*}
	By plugging \eqref{eq:temp2} into \eqref{eq:temp3}, we have
	\begin{align*}
		\bigg(\sum_{(k_1,k_2,\dots,k_{i+1})\in \mathcal{I}_{i+1}} y^{\sum_{j=1}^{i+1} jk_j}\bigg)  \cdot (1-y^{i+1}) 
		& =
		\bigg(\sum_{(k_1,k_2,\dots,k_{i})\in \mathcal{I}_i} y^{\sum_{j=1}^{i} jk_j}\bigg)( 1-y^{m+i})
	\end{align*}

	Now, we will prove the expression for $S_{i-1}$.
	When $i=0$,
	\begin{align*}
		S_0 = 1-y^{m^2} = (1-y^{m})(\sum_{k_1=0}^{m-1} y^{k_1\cdot m})
	\end{align*}
	By induction and the expression for $T_i$, we have
	\begin{align*}
		S_i 
		& =
		S_{i-1} - y^{m^2+i}T_i \\
		& =
		\prod_{j=1}^i(1-y^{m-1+j}) \cdot \bigg(\sum_{(k_1,k_2,\dots,k_i)\in \mathcal{I}_i} y^{\sum_{j=1}^{i} (m-1+j)k_j}\bigg) \\
		& 
		\qquad- y^{m^2+i}\prod_{j=1}^i(1-y^{m-1+j}) \cdot \bigg(\sum_{(k_1,k_2,\dots,k_{i})\in \mathcal{I}_i} y^{\sum_{j=1}^{i} jk_j}\bigg) \\
		& =
		\prod_{j=1}^i(1-y^{m-1+j}) \cdot \bigg(\sum_{(k_1,k_2,\dots,k_i)\in \mathcal{I}_i} y^{\sum_{j=1}^{i} (m-1+j)k_j} - \sum_{(k_1,k_2,\dots,k_{i})\in \mathcal{I}_i} y^{m^2+i+\sum_{j=1}^{i} jk_j}\bigg) 
	\end{align*}
	In other words, we need to prove
	\begin{align*}
		\MoveEqLeft \sum_{(k_1,k_2,\dots,k_i)\in \mathcal{I}_i} y^{\sum_{j=1}^{i} (m-1+j)k_j} - \sum_{(k_1,k_2,\dots,k_{i})\in \mathcal{I}_i} y^{m^2+i+\sum_{j=1}^{i} jk_j} \\
		& =
		(1-y^{m+i})\bigg(\sum_{(k_1,k_2,\dots,k_{i+1})\in \mathcal{I}_{i+1}} y^{\sum_{j=1}^{i} (m-1+j)k_j}\bigg)
	\end{align*}
	Note that, by change of variables, we have
	\begin{align*}
	    \mathcal{I}_i = \setdef{(k_2,\dots,k_i, m-1-\sum_{j=1}^ik_j)}{(k_1,k_2,\dots,k_i)\in \mathcal{I}_i}
	\end{align*}and therefore
	\begin{align*}
		\sum_{(k_1,k_2,\dots,k_{i})\in \mathcal{I}_i} y^{m^2+i+\sum_{j=1}^{i} jk_j} 
		& =
		\sum_{(k_1,k_2,\dots,k_{i})\in \mathcal{I}_i} y^{m^2+i+\sum_{j=2}^{i} (j-1)k_j + i(m-1-\sum_{j=1}^ik_j)}. 
	\end{align*}
	Also, 
	\begin{align*}
		m^2+i+\sum_{j=2}^{i} (j-1)k_j + i(m-1-\sum_{j=1}^ik_j)
		& =
		\sum_{j=1}^{i} (m-1+j)k_j + (m+i)(m-\sum_{j=1}^ik_j)
	\end{align*}
	Hence, we have
	\begin{align*}
		\MoveEqLeft \sum_{(k_1,k_2,\dots,k_i)\in \mathcal{I}_i} y^{\sum_{j=1}^{i} (m-1+j)k_j} - \sum_{(k_1,k_2,\dots,k_{i})\in \mathcal{I}_i} y^{m^2+i+\sum_{j=1}^{i} jk_j} \\
		& =
		\sum_{(k_1,k_2,\dots,k_i)\in \mathcal{I}_i} y^{\sum_{j=1}^{i} (m-1+j)k_j} - \sum_{(k_1,k_2,\dots,k_{i})\in \mathcal{I}_i} y^{\sum_{j=1}^{i} (m-1+j)k_j + (m+i)(m-\sum_{j=1}^ik_j)} \\
		& =
		(1-y^{m+i})\bigg(\sum_{(k_1,k_2,\dots,k_{i+1})\in \mathcal{I}_{i+1}} y^{\sum_{j=1}^{i} (m-1+j)k_j}\bigg)
	\end{align*}
\end{proof}

\begin{lemma}[Restated Lemma \ref{lem:diffofcoeff}]\label{lem:diffofcoeff:detail}
	For any sufficiently small $\Delta>0$, we have
	\begin{align*}
		\abs{C_{\Delta,+}-C_{\Delta,-}-1} \leq \frac{1}{2^{\Omega((1/\Delta)\log(1/\Delta))}}.
	\end{align*}
\end{lemma}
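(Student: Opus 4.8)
The plan is to recognize that $C_{\Delta,+}-C_{\Delta,-}-1$ equals the \emph{signed} integral of the approximation error $\Pi_{\mathcal{V}}(v)-v$, and then to control that integral by combining the $L^2$ estimate of Lemma~\ref{lem:beta:detail} on a wide but bounded window with a crude Gaussian tail bound outside it. First I would note that every $u_i=g_{i\Delta}$ and $v=g_{-1}$ integrates to $1$, so since $\Pi_{\mathcal{V}}(v)=\sum_{i=0}^m\alpha_i u_i$ we get $\int_{\mathbb{R}}\Pi_{\mathcal{V}}(v)\,\dir x=\sum_{i=0}^m\alpha_i=C_{\Delta,+}-C_{\Delta,-}$, whence
\[
    C_{\Delta,+}-C_{\Delta,-}-1=\int_{\mathbb{R}}\big(\Pi_{\mathcal{V}}(v)-v\big)(x)\,\dir x ,
\]
and it remains to bound the absolute value of this integral.

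Next I would fix a cutoff $T:=1/\Delta^2$ (any polynomial cutoff would do) and split the integral at $\pm T$. On $[-T,T]$, Cauchy--Schwarz gives
\[
    \Big|\int_{-T}^{T}\big(\Pi_{\mathcal{V}}(v)-v\big)\,\dir x\Big|\le\sqrt{2T}\,\norm{\Pi_{\mathcal{V}}(v)-v}_2=\sqrt{2T}\,\beta_\Delta\norm{v}_2 ,
\]
using the definition of $\beta_\Delta$; by Lemma~\ref{lem:beta:detail} and $\norm{v}_2=(4\pi)^{-1/4}$ this is at most $(\sqrt{2}/\Delta)\cdot 2^{-\Omega((1/\Delta)\log(1/\Delta))}$, and the polynomial prefactor is absorbed into the exponent. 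Outside $[-T,T]$ I would use that $\Pi_{\mathcal{V}}(v)-v$ is a linear combination of Gaussians $g_\mu$ with $\mu\in[-1,1]$ whose coefficients have absolute values summing to at most $1+(C_{\Delta,+}+C_{\Delta,-})\le 2^{O(1/\Delta)}$ by Lemma~\ref{lem:sumofcoeff:detail}, together with the tail bound $\int_{\abs{x}>T}g_\mu(x)\,\dir x\le 2e^{-(T-1)^2/2}$ valid for $\abs{\mu}\le 1$; this gives $\int_{\abs{x}>T}\abs{\Pi_{\mathcal{V}}(v)-v}\,\dir x\le 2^{O(1/\Delta)}\cdot 2e^{-(T-1)^2/2}$, which for $T=1/\Delta^2$ is at most $2^{-\Omega(1/\Delta^4)}$ and in particular bounded by $2^{-\Omega((1/\Delta)\log(1/\Delta))}$. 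Adding the two pieces yields the claimed bound.

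The step I expect to need the most care is the second one. Since $\Pi_{\mathcal{V}}(v)$ has coefficients that are exponentially large in $1/\Delta$, one cannot bound $\norm{\Pi_{\mathcal{V}}(v)-v}_1$ directly and then invoke $\abs{\int(\cdot)}\le\norm{\cdot}_1$; the argument must instead exploit that the small $L^2$ error lives on a region of only polynomial width, while the mass leaking outside that region is tamed by Gaussian concentration even against the exponentially large coefficient sum. Making both crude estimates beat $2^{-\Omega((1/\Delta)\log(1/\Delta))}$ only requires taking $T$ polynomially large in $1/\Delta$; no delicate cancellation is needed. An alternative, more computational route would be to sum the explicit per-coefficient formula for $\alpha_i$ derived in the proof of Lemma~\ref{lem:sumofcoeff:detail}, but the integral argument above seems cleaner.
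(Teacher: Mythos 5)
Your proposal is correct and follows essentially the same route as the paper: identify $C_{\Delta,+}-C_{\Delta,-}-1$ with $\int(\Pi_{\mathcal{V}}(v)-v)\,\dir x$, split at a cutoff, apply Cauchy--Schwarz with Lemma~\ref{lem:beta:detail} on the bounded window, and use the coefficient-sum bound of Lemma~\ref{lem:sumofcoeff:detail} together with Gaussian tails outside. The only (immaterial) difference is that you fix the cutoff at $T=1/\Delta^{2}$, whereas the paper optimizes it as $L=\Theta\big(\sqrt{\log\tfrac{C_{\Delta,+}+C_{\Delta,-}}{\beta_\Delta}}\big)$; both choices yield the claimed $2^{-\Omega((1/\Delta)\log(1/\Delta))}$ bound.
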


\begin{proof}
	By the definition of $C_{\Delta,+}$ and $C_{\Delta,-}$, 
	\begin{align*}
		C_{\Delta,+} = \sum_{i\in J_+} \alpha_i \qquad \text{and}\qquad C_{\Delta,-} = \sum_{i\in J_-} -\alpha_i
	\end{align*}
	where $J_+ = \setdef{i}{\alpha_i\geq 0}$ and $J_-=\setdef{i}{\alpha_i<0}$.
	Also, the coefficients $\alpha_i$ satisfy $\Pi_{\mathcal{V}}(v) = \sum_{i=0}^{m} \alpha_{i} u_{i}$.
	If we take the integral,
	\begin{align*}
		\int_{x\in\mathbb{R}} \Pi_{\mathcal{V}}(v)\dir x = \int_{x\in\mathbb{R}} \bigg(\sum_{i=0}^{m} \alpha_{i} u_{i}\bigg)\dir x = \sum_{i=0}^m \alpha_i = C_{\Delta,+}-C_{\Delta,-}
	\end{align*}
	by the fact that $u_i=g_{i\cdot\Delta}$ are Gaussians and hence $\int_{x\in\mathbb{R}}u_i \dir x=1$.
	Since $v=g_{-1}$ is also a Gaussian, it implies
	\begin{align*}
		\abs{C_{\Delta,+}-C_{\Delta,-}-1}
		& =
		\abs{\int_{x\in\mathbb{R}} (\Pi_{\mathcal{V}}(v) - v)\dir x}.
	\end{align*}
	By triangle inequality, we have
	\begin{align*}
		\abs{C_{\Delta,+}-C_{\Delta,-}-1}
		& =
		\abs{\int_{x\in\mathbb{R}} (\Pi_{\mathcal{V}}(v) - v)\dir x} 
		\leq 
		\int_{x\in\mathbb{R}} \abs{\Pi_{\mathcal{V}}(v) - v} \dir x 
	\end{align*}
	For any $L>0$, we split the integral into two parts.
	\begin{align*}
		\MoveEqLeft \abs{C_{\Delta,+}-C_{\Delta,-}-1}\\
		& \leq
		\int_{x\in[-L,L]} \abs{\Pi_{\mathcal{V}}(v)-v }\dir x + \int_{x\notin[-L,L]} \abs{\Pi_{\mathcal{V}}(v)-v }\dir x 
	\end{align*}
	
	We first analyze the second term $\int_{x\notin[-L,L]} \abs{\Pi_{\mathcal{V}}(v)-v }\dir x $. 
	By triangle inequality, we express the term $\abs{\Pi_{\mathcal{V}}(v)-v}$.
	\begin{align*}
		\abs{\Pi_{\mathcal{V}}(v)-v} = \abs{\sum_{i=0}^{m} \alpha_{i} u_{i} - v} \leq \sum_{i=0}^{m} \abs{\alpha_{i}} u_{i} + v
	\end{align*}
	Since all Gaussians in $\Pi_{\mathcal{V}}(v)-v$ centered in $[-1,1]$, all integrals $\int_{x\notin[-L,L]} u_i\dir x, \int_{x\notin[-L,L]} v\dir x$ are bounded by $\int_{x\notin[-(L-1),L-1]} g_0 \dir x$.
	Namely, we have
	\begin{align*}
		\int_{x\notin[-L,L]} \abs{\Pi_{\mathcal{V}}(v)-v }\dir x 
		& \leq
		\int_{x\notin[-L,L]} (\sum_{i=0}^{m} \abs{\alpha_{i}} u_{i} + v)\dir x \\
		&\leq
		(C_{\Delta,+}+C_{\Delta,-}+1)(\int_{x\notin[-(L-1),L-1]} g_0 \dir x)
	\end{align*}
	A straightforward calculation gives 
	\begin{align*}
		\int_{x\notin[-(L-1),L-1]} g_0 \dir x
		& =
		\int_{x\notin[-(L-1),L-1]} \frac{1}{\sqrt{2\pi}}e^{-\frac{1}{2}x^2} \dir x
		\leq
		\frac{4}{\sqrt{2\pi}(L-1)}e^{-\frac{1}{2}(L-1)^2}
	\end{align*}
	which means
	\begin{align*}
		\int_{x\notin[-L,L]} \abs{\Pi_{\mathcal{V}}(v)-v }\dir x 
		\leq
		(C_{\Delta,+}+C_{\Delta,-}+1)\frac{4}{\sqrt{2\pi}(L-1)}e^{-\frac{1}{2}(L-1)^2}
	\end{align*}
	
	Now, we analyze the first term $\int_{x\in[-L,L]} \abs{\Pi_{\mathcal{V}}(v)-v }\dir x$.
	By Cauchy inequality, 
	\begin{align*}
		\int_{x\in[-L,L]} \abs{\Pi_{\mathcal{V}}(v)-v }\dir x
		& \leq
		\sqrt{2L\int_{x\in[-L,L]} (\Pi_{\mathcal{V}}(v)-v )^2\dir x }.
	\end{align*}
	Moreover, 
	\begin{align*}
		\int_{x\in[-L,L]} (\Pi_{\mathcal{V}}(v)-v )^2\dir x 
		& \leq
		\int_{x\in\mathbb{R}} (\Pi_{\mathcal{V}}(v)-v )^2\dir x 
		=
		\norm{\Pi_{\mathcal{V}}(v)-v}_2^2
		=
		\beta_\Delta^2\norm{v}_2^2.
	\end{align*}
	It means
	\begin{align*}
		\int_{x\in[-L,L]} \abs{\Pi_{\mathcal{V}}(v)-v }\dir x
		\leq
		\sqrt{2L}\beta_\Delta\norm{v}_2
	\end{align*}
	
	By taking $L=\Theta\big(\sqrt{\log \frac{C_{\Delta,+}+C_{\Delta,-}}{\beta_\Delta}}\big)$, we have
	\begin{align*}
		\abs{C_{\Delta,+}-C_{\Delta,-}-1}
		& =
		O\bigg(\big(\log \frac{C_{\Delta,+}+C_{\Delta,-}}{\beta_\Delta}\big)^{1/4}\cdot\beta_\Delta\bigg).
	\end{align*}
	By Lemma \ref{lem:sumofcoeff} and \ref{lem:beta}, we conclude that 
	\begin{align*}
		\abs{C_{\Delta,+}-C_{\Delta,-}-1} \leq \frac{1}{2^{\Omega((1/\Delta)\log(1/\Delta))}}
	\end{align*}
\end{proof}

\begin{lemma}[Restated Lemma \ref{lem:loneltwo}]\label{lem:loneltwo:detail}
	We have
	\begin{align*}
		\norm{f_1-f'_1}_2\leq O(\sqrt{\norm{f_1-f'_1}_1}) \qquad \text{and} \qquad
		\norm{f - f'}_1 = O(\norm{f - f'}_2^{2/3})
	\end{align*}
\end{lemma}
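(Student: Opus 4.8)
The plan is to prove the two inequalities separately, in each case exploiting that the densities involved are convex combinations of unit-variance Gaussians whose centers lie in a fixed bounded interval; this simultaneously yields a uniform pointwise bound and a decaying tail.

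\textbf{First inequality.} First I would observe that $f_1$ and $f_1'$ are both genuine convex combinations of the Gaussians $u_i = g_{i\Delta}$, $i\in[m+1]$, whose centers lie in $[0,1]$. For $f_1 = \frac{1}{C_{\Delta,+}}\sum_{i\in J_+}\alpha_i u_i$ this is clear, since the coefficients are nonnegative and sum to $C_{\Delta,+}$; for $f_1' = \frac{1}{C_{\Delta,+}}\big(\sum_{i\in J_-}(-\alpha_i)u_i + (C_{\Delta,+}-C_{\Delta,-})u_0\big)$ the coefficients are again nonnegative — using $C_{\Delta,+}-C_{\Delta,-} > 0$, which is immediate from Lemma~\ref{lem:diffofcoeff} — and sum to $C_{\Delta,-} + (C_{\Delta,+}-C_{\Delta,-}) = C_{\Delta,+}$. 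Since $g_\mu(x)\le \frac{1}{\sqrt{2\pi}}$ for every $\mu$, the same bound passes to any convex combination, so $\norminf{f_1-f_1'}\le \frac{2}{\sqrt{2\pi}}$. Applying the elementary interpolation $\normtwo{h}^2 = \int h^2 \le \norminf{h}\int|h| = \norminf{h}\,\norm{h}_1$ with $h = f_1 - f_1'$ then gives $\normtwo{f_1-f_1'}\le \sqrt{\tfrac{2}{\sqrt{2\pi}}}\cdot\sqrt{\norm{f_1-f_1'}_1} = O\!\big(\sqrt{\norm{f_1-f_1'}_1}\big)$. The same reasoning applies to $f_2 - f_2'$, and also shows $\normtwo{f-f'} = O(1)$, a fact used below.

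\textbf{Second inequality.} For a parameter $L>0$ to be optimized, I would split $\int|f-f'|\,\dir x$ over $[-L,L]$ and its complement. On $[-L,L]$, Cauchy–Schwarz gives $\int_{-L}^{L}|f-f'|\,\dir x \le \sqrt{2L}\,\big(\int_{-L}^L (f-f')^2\,\dir x\big)^{1/2}\le \sqrt{2L}\,\normtwo{f-f'}$. On the complement, I would use that $f = \tfrac12 f_1 + \tfrac12 f_2$ and $f'$ are mixtures of unit-variance Gaussians centered in $[-2,1]$, so their first absolute moments are $O(1)$ (indeed $\int |x| g_\mu(x)\,\dir x \le |\mu| + \sqrt{2/\pi}$ and $|\mu|\le 2$); Markov's inequality then yields $\int_{|x|>L}|f-f'|\,\dir x \le \int_{|x|>L}(f+f')\,\dir x \le \frac{O(1)}{L}$. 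Combining the two estimates, $\norm{f-f'}_1 \le \sqrt{2L}\,\normtwo{f-f'} + \frac{O(1)}{L}$ for every $L>0$, and choosing $L = \normtwo{f-f'}^{-2/3}$ balances the two terms and gives $\norm{f-f'}_1 = O\!\big(\normtwo{f-f'}^{2/3}\big)$.

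The argument is essentially routine; the only points needing a little care are (i) the nonnegativity of the $u_0$-coefficient in $f_1'$, which is where the earlier estimate $|C_{\Delta,+}-C_{\Delta,-}-1| < 1$ (Lemma~\ref{lem:diffofcoeff}) enters, and (ii) in the second inequality, recognizing that it is precisely the first absolute moment — rather than the $L^\infty$ bound or the variance — whose $1/L$ tail decay yields the exponent $2/3$ after optimizing over $L$. In fact any polynomial-moment bound works, giving exponents arbitrarily close to $1$, so the value $2/3$ is just a convenient sub-optimal choice that suffices for the application to Theorem~\ref{thm:main_lower}.
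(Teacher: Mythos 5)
Your proof is correct. The first inequality is argued exactly as in the paper: both proofs reduce to the pointwise bound $f_1,f_1'\le \frac{1}{\sqrt{2\pi}}$ for convex combinations of unit-variance Gaussians and the interpolation $\normtwo{h}^2\le\norminf{h}\norm{h}_1$ (the paper writes it as $\int(f_1-f_1')^2\le\int|f_1-f_1'|(f_1+f_1')$, which is the same estimate); your explicit check that the $u_0$-coefficient of $f_1'$ is nonnegative, via $C_{\Delta,+}-C_{\Delta,-}>0$ from Lemma~\ref{lem:diffofcoeff}, is a detail the paper leaves implicit but which both arguments genuinely need, so it is a welcome addition. For the second inequality the paper simply cites Lemma~6 of \cite{nguyen2013convergence}, which gives $\norm{f-f'}_1=O((E+E')^{1/3}\normtwo{f-f'}^{2/3})$, and then bounds the first absolute moments $E,E'=O(1)$; your truncate-at-$L$, Cauchy--Schwarz-plus-Markov argument with the optimization $L=\normtwo{f-f'}^{-2/3}$ is precisely the proof of that cited lemma, so you have made the argument self-contained rather than taken a different route. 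Your closing remark that higher moments would give exponents closer to $1$ is accurate and consistent with the fact that only some fixed positive power of $\normtwo{f-f'}$ is needed for Theorem~\ref{thm:main_lower}.
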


\begin{proof}
	First, we have
	\begin{align*}
		\norm{f_1-f'_1}_2^2
		& =
		\int_{x\in\mathbb{R}} (f_1(x) - f'_1(x))^2\dir x
		\leq
		\int_{x\in\mathbb{R}} \abs{f_1(x) - f'_1(x)}(f_1(x) + f'_1(x))\dir x.
	\end{align*}
	Since $f_1$ and $f'_1$ are mixtures of Gaussians, $f_1(x), f'_1(x) \leq \frac{1}{\sqrt{2\pi}}$ for all $x\in\mathbb{R}$.
	Therefore, we have
	\begin{align*}
		\norm{f_1-f'_1}_2
		& \leq
		\sqrt{\frac{2}{\sqrt{2\pi}}\norm{f_1-f'_1}_1}
		=
		O(\sqrt{\norm{f_1-f'_1}_1}).
	\end{align*}
	
	Also, in Lemma 6 of \cite{nguyen2013convergence}, they showed that 
	\begin{align*}
		\norm{f-f'}_1 = O((E+E')^{1/3}\norm{f-f'}_2^{2/3})
	\end{align*}
	where $E = \int_{x\in\mathbb{R}} \abs{x}f(x)\dir x$ and $E' = \int_{x\in\mathbb{R}} \abs{x}f'(x)\dir x$.
	We first bound the term $\int_{x\in\mathbb{R}} \abs{x}g_{\mu}(x) \dir x$ for any $\mu\in\mathbb{R}$.
	We have
	\begin{align*}
		\int_{x\in\mathbb{R}} \abs{x}g_{\mu}(x) \dir x
		& =
		\int_{x\geq 0} xg_{\mu}(x) \dir x - \int_{x \leq 0} xg_{\mu}(x) \dir x
	\end{align*}
	Note that $x g_{\mu}(x) = (x-\mu) g_{\mu}(x)+ \mu g_{\mu}(x)$.
	Plugging it into the equation,
	\begin{align*}
		\int_{x\in\mathbb{R}} \abs{x}g_{\mu}(x) \dir x
		& \leq
		\int_{x\in\mathbb{R}} \abs{x}g_0(x) \dir x + \int_{x\in\mathbb{R}} \abs{\mu}g_{\mu}(x) \dir x 
		=
		O(\abs{\mu}).
	\end{align*}
	Since all Gaussians in both $f$ and $f$ are in $[-2,1]$, we have both $E,E' = O(1)$.
	Therefore, we have $\norm{f - f'}_1 = O(\norm{f - f'}_2^{2/3})$.
\end{proof}

\begin{lemma}[Restated Lemma \ref{lem:hermite_coeff}]\label{lem:hermite_coeff:detail}
	For $i=1,2$ and any $j\in\mathbb{N}_{0}$, $\abs{\alpha_{i,j}} \leq O(1)\cdot\frac{1}{\sqrt{j!}(2\sqrt{2})^j}$.
\end{lemma}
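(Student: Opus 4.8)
The plan is to bound $\alpha_{i,j}=\inner{f_i}{\psi_{j,r_i}}$ directly by pushing the inner product inside the integral that defines $f_i$ and then invoking the closed form \eqref{eq:hermite_inner_g}. Since $f_i=\int_{\mu\in I_i}\nu_i(\mu)g_\mu\,\dir\mu$ with $\nu_i\in\mathcal{P}_{I_i}$, linearity of the inner product gives
\begin{align*}
    \alpha_{i,j} = \int_{\mu\in I_i}\nu_i(\mu)\,\inner{g_\mu}{\psi_{j,r_i}}\,\dir\mu .
\end{align*}
By translation invariance of the inner product (shift both arguments by $-r_i$), $\inner{g_\mu}{\psi_{j,r_i}}=\inner{g_{\mu-r_i}}{\psi_{j,0}}$, and \eqref{eq:hermite_inner_g} (with $\mu$ there replaced by $\mu-r_i$) evaluates this to $\frac{(-1)^j}{\sqrt{2^{j+1}j!\sqrt\pi}}\,e^{-\frac14(\mu-r_i)^2}(\mu-r_i)^j$.

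Next I would use the hypothesis that the intervals have length $1$, i.e. $I_i=[r_i-\tfrac12,r_i+\tfrac12]$, so that for every $\mu$ in the domain of integration $\abs{\mu-r_i}\le\tfrac12$; hence $e^{-\frac14(\mu-r_i)^2}\le 1$ and $\abs{\mu-r_i}^j\le 2^{-j}$. This yields the pointwise estimate
\begin{align*}
    \abs{\inner{g_\mu}{\psi_{j,r_i}}} \le \frac{1}{\sqrt{2^{j+1}j!\sqrt\pi}}\cdot\frac{1}{2^j} = \frac{1}{\sqrt 2\,\pi^{1/4}}\cdot\frac{1}{\sqrt{j!}\,(2\sqrt2)^j},
\end{align*}
where the last equality is the elementary exponent arithmetic $\sqrt{2^{j+1}}\cdot 2^j = 2^{(3j+1)/2} = \sqrt2\,(2\sqrt2)^j$. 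Finally, since $\nu_i$ is a probability density supported on $I_i$, integrating this bound against $\nu_i$ and using $\int_{I_i}\nu_i(\mu)\,\dir\mu=1$ gives $\abs{\alpha_{i,j}}\le \frac{1}{\sqrt2\,\pi^{1/4}}\cdot\frac{1}{\sqrt{j!}\,(2\sqrt2)^j}$, which is exactly the claimed bound with absolute constant $\tfrac{1}{\sqrt2\,\pi^{1/4}}$.

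There is no serious obstacle here; it is a direct computation. The only points requiring care are (i) correctly tracking that the interval length $1$ forces the offset $\mu-r_i$ to have magnitude at most $\tfrac12$, which is precisely the source of the base $2\sqrt2$ (a longer interval would enlarge this offset and weaken the geometric factor), and (ii) the exponent bookkeeping converting $\sqrt{2^{j+1}}\cdot 2^j$ into $\sqrt2\,(2\sqrt2)^j$. This computation is also what justifies the remark preceding the lemma: it shows the convolutional model \eqref{eq:main:model} implies the coefficient-decay hypothesis under which the rest of the upper-bound analysis actually runs.
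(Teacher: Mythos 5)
Your proof is correct and follows essentially the same route as the paper's: push the inner product inside the mixing integral, evaluate $\inner{g_{\mu-r_i}}{\psi_{j,0}}$ via \eqref{eq:hermite_inner_g}, bound $e^{-\frac14(\mu-r_i)^2}\abs{\mu-r_i}^j\le 2^{-j}$ on the length-$1$ interval, and integrate against the probability density $\nu_i$. The only difference is cosmetic (you shift inside the inner product rather than changing variables first), and your explicit constant $\tfrac{1}{\sqrt2\,\pi^{1/4}}$ matches the paper's $O(1)$.
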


\begin{proof}
	By definition, we have
	\begin{align*}
		\alpha_{i,j}
		=
		\inner{f_i}{ \psi_{j,r_i}} 
		& =
		\int_{-\infty}^{\infty}\big(\int_{r_i-\frac{1}{2}}^{r_i+\frac{1}{2}} \nu_i(\mu) g_{\mu}(x)\dir \mu\big)  \psi_j(x-r_i) \dir x \\
		& =
		\int_{r_i-\frac{1}{2}}^{r_i+\frac{1}{2}} \nu_i(\mu)\big(\int_{-\infty}^{\infty} g_{\mu}(x) \psi_j(x-r_i) \dir x\big) \dir \mu \\
		& =
		\int_{-\frac{1}{2}}^{\frac{1}{2}} \nu_i(\mu+r_i)\inner{g_\mu}{ \psi_{j,0}} \dir \mu
	\end{align*}
	and, by triangle inequality, 
	\begin{align*}
		\abs{\alpha_{i,j}} \leq \int_{-\frac{1}{2}}^{\frac{1}{2}} \nu_i(\mu+r_i)\abs{\inner{g_\mu}{ \psi_{j,0}}} \dir \mu.
	\end{align*}
	By \eqref{eq:hermite_inner_g}, we have $\abs{\inner{g_\mu}{ \psi_{j,0}}} = \frac{1}{\sqrt{2^{j+1}j!\sqrt{\pi}}}e^{-\frac{1}{4}\mu^2} \abs{\mu}^j$ and, for $\mu\in[-\frac{1}{2},\frac{1}{2}]$, we have $e^{-\frac{1}{4}\mu^2} \abs{\mu}^j \leq \frac{1}{2^j}$.
	This implies 
	\begin{align*}
		\abs{\inner{g_\mu}{ \psi_{j,0}}}
		& \leq
		\frac{1}{\sqrt{2^{j+1}j!\sqrt{\pi}}}\cdot\frac{1}{2^j}
		=
		O\Big(\frac{1}{\sqrt{j!}(2\sqrt{2})^j}\Big).
	\end{align*}
	Hence, we have
	\begin{align*}
		\abs{\alpha_{i,j}}
		& \leq
		\int_{-\frac{1}{2}}^{\frac{1}{2}} \nu_i(\mu+r_i)\cdot O\Big(\frac{1}{\sqrt{j!}(2\sqrt{2})^j}\Big)\dir \mu
		=
		O\Big(\frac{1}{\sqrt{j!}(2\sqrt{2})^j}\Big).
	\end{align*}
	The last equality is due to the fact that $\nu_i$ is a pdf whose support is $I_i=[r_i-\frac{1}{2},r_i+\frac{1}{2}]$.
\end{proof}

\begin{lemma}[Restated Lemma \ref{lem:tail}]\label{lem:tail:detail}
	Let $\Delta>0$ and $\ell$ be a nonnegative integer that $\ell=\Omega(1)$.
	If $\abs{\lambda_{i,j} - \widehat \lambda_{i,j}} < \Delta$ for all $j\in[\ell]$, then $\norm{w_if_i - \widetilde f_i}_1 = O(\Delta \ell^{5/4} + \frac{w_i}{10^\ell})$.
\end{lemma}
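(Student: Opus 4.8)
The plan is to expand the difference $w_if_i-\widetilde f_i$ in the Hermite basis centered at $r_i$, split the sum into a ``head'' over indices $j<\ell$ and a ``tail'' over $j\ge\ell$, and bound each piece in $L^1$ using the two estimates already available: the decay of the Hermite coefficients (Lemma~\ref{lem:hermite_coeff}) and the growth of the $L^1$ norms of Hermite functions (Lemma~\ref{lem:hermite_l1}). Since $w_if_i=\sum_{j=0}^\infty\lambda_{i,j}\psi_{j,r_i}$ with $\lambda_{i,j}=w_i\alpha_{i,j}$, and $\widetilde f_i=\sum_{j=0}^{\ell-1}\widehat\lambda_{i,j}\psi_{j,r_i}$, we have
\begin{align*}
    w_if_i-\widetilde f_i=\sum_{j=0}^{\ell-1}(\lambda_{i,j}-\widehat\lambda_{i,j})\psi_{j,r_i}+\sum_{j=\ell}^\infty\lambda_{i,j}\psi_{j,r_i}.
\end{align*}
The triangle inequality together with the translation invariance of the $L^1$ norm, $\norm{\psi_{j,r_i}}_1=\norm{\psi_j}_1=O(j^{1/4})$ by Lemma~\ref{lem:hermite_l1}, then reduces the problem to two scalar sums.

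For the head, the hypothesis $\abs{\lambda_{i,j}-\widehat\lambda_{i,j}}<\Delta$ gives a contribution at most $\Delta\sum_{j=0}^{\ell-1}O(j^{1/4})=O(\Delta\,\ell^{5/4})$, using $\sum_{j=0}^{\ell-1}j^{1/4}=O(\ell^{5/4})$ (compare with $\int_0^\ell x^{1/4}\,dx$). For the tail, Lemma~\ref{lem:hermite_coeff} gives $\abs{\lambda_{i,j}}=w_i\abs{\alpha_{i,j}}\le O(w_i)\cdot\frac{1}{\sqrt{j!}\,(2\sqrt2)^j}$, so the tail is bounded by $O(w_i)\sum_{j=\ell}^\infty\frac{j^{1/4}}{\sqrt{j!}\,(2\sqrt2)^j}$. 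The ratio of consecutive summands is $\frac{(j+1)^{1/4}}{j^{1/4}}\cdot\frac{1}{\sqrt{j+1}}\cdot\frac{1}{2\sqrt2}\le\frac12$ for all $j\ge1$, so the series is dominated by a geometric one and is $O\!\big(\ell^{1/4}/(\sqrt{\ell!}\,(2\sqrt2)^\ell)\big)$. Since $\sqrt{\ell!}$ grows faster than any exponential, whenever $\ell$ exceeds a fixed absolute constant (which is exactly what the hypothesis $\ell=\Omega(1)$ provides) we have $\ell^{1/4}\big(10/(2\sqrt2)\big)^\ell\le\sqrt{\ell!}$, so the tail is $O(w_i/10^\ell)$. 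Adding the two estimates yields $\norm{w_if_i-\widetilde f_i}_1=O(\Delta\,\ell^{5/4}+w_i/10^\ell)$, as claimed.

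I do not expect a real obstacle here; the only point requiring a little care is verifying that the super-exponential decay $1/\sqrt{\ell!}$ of the Hermite coefficients dominates the target rate $10^{-\ell}$ even after multiplying by the polynomial factor $\ell^{1/4}$ and absorbing the geometric-tail constant, which is a routine Stirling estimate valid once $\ell$ is past a fixed constant. The base $10$ in the statement is arbitrary and could be replaced by any constant at the cost of enlarging that threshold; no attempt is made to optimize it.
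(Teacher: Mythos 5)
Your proof is correct and follows essentially the same route as the paper's: the same head/tail decomposition of the Hermite expansion, the bound $\sum_{j<\ell} j^{1/4}=O(\ell^{5/4})$ for the head via Lemma~\ref{lem:hermite_l1}, and the coefficient decay of Lemma~\ref{lem:hermite_coeff} for the tail. Your extra care in verifying the ratio test and that $\sqrt{\ell!}$ eventually dominates $\ell^{1/4}(10/(2\sqrt2))^{\ell}$ simply makes explicit the step the paper leaves implicit.
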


\begin{proof}
	By the triangle inequality, we can bound the term $\norm{w_if_i-\widetilde f_i}_1$ as follows:
	\begin{align*}
		\norm{w_if_i-\widetilde f_i}_1
		&=
		\norm{\sum_{j=0}^{\ell-1}(\lambda_{i,j}-\widehat \lambda_{i,j}) \psi_{j,r_i} + \sum_{j=\ell}^\infty\lambda_{i,j} \psi_{j,r_i}}_1 \\
		&\leq
		\sum_{j=0}^{\ell-1}\abs{\lambda_{i,j}-\widehat \lambda_{i,j}}\norm{ \psi_{j,r_i}}_1 + \sum_{i=\ell}^\infty\abs{\lambda_{i,j}}\norm{ \psi_{j,r_i}}_1
	\end{align*}
	\noindent
	By Lemma \ref{lem:hermite_l1} and the assumption that $\abs{\lambda_{i,j} - \widehat \lambda_{i,j}} < \Delta$, the first summation can be bounded by
	\begin{align*}
		\sum_{j=0}^{\ell-1}\abs{\lambda_{i,j}-\widehat \lambda_{i,j}}\norm{ \psi_{j,r_i}}_1
		& \leq
		O(\Delta\ell^{5/4}).
	\end{align*}
	By Lemma \ref{lem:hermite_l1} and Lemma \ref{lem:hermite_coeff}, the second summation can be bounded by
	\begin{align*}
		\sum_{j=\ell}^\infty\abs{\lambda_{i,j}}\norm{ \psi_{j,r_i}}_1
		& \leq
		\sum_{j=\ell}^{\infty} O\Big(w_i \cdot\frac{1}{\sqrt{j!}(2\sqrt{2})^j} \cdot j^{1/4}\Big)
		=
		O\Big(\frac{w_i}{10^\ell}\Big).
	\end{align*}
	In other words, we have
	\begin{align*}
		\norm{w_if_i-\widetilde f_i}_1
		& \leq
		O\Big(\Delta \ell^{5/4} + \frac{w_i}{10^\ell}\Big).
	\end{align*}
\end{proof}

\begin{lemma}[Restated Lemma \ref{lem:det_t_numor}]\label{lem:det_t_numor:detail}
    For any $\ell\in\mathbb{N}_0$, $k\in[\ell]$ and $j\geq \ell$, we have
    \begin{align*}
        \abs{\det\big(A^{(1,k)\to j}\big)} & \leq 	\bigg(e^{\frac{1}{4}r^2}\cdot 4^{\ell}\cdot \bigg(\frac{1}{\sqrt{j!}}(\frac{r}{2\sqrt{2}})^j\bigg)^{-1}\bigg)\det(A) \qquad \text{and}\\
        \abs{\det\big(A^{(2,k)\to j}\big)} & \leq \bigg(e^{\frac{5}{4}r^2}\cdot4^\ell \cdot\bigg(\frac{1}{\sqrt{j!}}(\frac{r}{2\sqrt{2}})^j\bigg)^{-1}\bigg)\det(A).
    \end{align*}
    Hence, the absolute values of the entries of $\mathcal{E}_{t,1,j}$ are bounded by $O\big(4^\ell \cdot\big(\frac{1}{\sqrt{j!}}(\frac{r}{2\sqrt{2}})^j\big)^{-1}\big)$, i.e.
    \begin{align*}
        \abs{(\mathcal{E}_{t,1,j})_{i,k}} \leq C\cdot 4^\ell \cdot\big(\frac{1}{\sqrt{j!}}(\frac{r}{2\sqrt{2}})^j\big)^{-1} \qquad \text{for $i=1,2$ and $k\in[\ell]$.}
    \end{align*}
    Here, $C$ is an absolute constant.
\end{lemma}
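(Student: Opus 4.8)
The plan is to evaluate $\det(A)$ and $\det\!\big(A^{(i,k)\to j}\big)$ through one Cauchy--Binet expansion in the Hermite basis and then compare the resulting sums of minors. Work in the orthonormal basis $\{\psi_m\}_{m\in\mathbb N_0}$ of Hermite functions centred at $0$, and let $B$ be the $2\ell\times\infty$ matrix whose row indexed $(i',j')$ ($i'=1,2$, $j'\in[\ell]$) is the coefficient sequence $\big(\inner{\psi_{j',r_{i'}}}{\psi_m}\big)_{m\ge0}$; by Parseval $A=BB^\top$. The key structural point is that $r_1=0$, so $\psi_{j',r_1}=\psi_{j'}$ and the $\ell$ rows of $B$ with $i'=1$ are the unit vectors $e_0,\dots,e_{\ell-1}$. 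Moreover $z_{1,j}$ is precisely the column of overlaps of $\psi_{j,r_1}=\psi_j$ against the $2\ell$ basis functions (its entries in positions $(1,\cdot)$ being $\inner{\psi_j}{\psi_{j'}}=0$ for $j\ge\ell>j'$), so $A^{(1,k)\to j}=BM^\top$, where $M$ is $B$ with its row $(1,k)=e_k$ replaced by the coefficient sequence $e_j$ of $\psi_j$; and $A^{(2,k)\to j}=BN^\top$, where $N$ is $B$ with its row $(2,k)=\big(\inner{\psi_{k,r}}{\psi_m}\big)_m$ replaced by $e_j$ (throughout $r:=r_2$).

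By the Cauchy--Binet formula (Lemma~\ref{lem:cb_formula}),
\begin{align*}
\det(A)=\sum_{S}(\det B_S)^2,\qquad \det\!\big(A^{(i,k)\to j}\big)=\sum_{S}(\det B_S)\,(\det M^{(i,k,j)}_S),
\end{align*}
where $S$ ranges over ordered $2\ell$-subsets of $\mathbb N_0$ and the subscript $S$ denotes the $2\ell\times 2\ell$ submatrix on the columns in $S$ (convergence is automatic since $\sum_S(\det B_S)^2=\det(BB^\top)<\infty$). Because the $i'=1$ rows of $B$ are $e_0,\dots,e_{\ell-1}$, $\det B_S$ vanishes unless $\{0,\dots,\ell-1\}\subseteq S$; writing $S=\{0,\dots,\ell-1\}\cup T$ with $T\subseteq\{\ell,\ell+1,\dots\}$, $|T|=\ell$, a Laplace expansion along those $\ell$ rows gives $\det B_S=\pm D_T$, where $D_T:=\det\!\big((\inner{\psi_{j',r}}{\psi_t})_{j'\in[\ell],\,t\in T}\big)$. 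Running the same reduction on $M^{(1,k,j)}$, whose $i'=1$ rows are $\{e_t:t\in([\ell]\setminus\{k\})\cup\{j\}\}$, forces $\{0,\dots,\ell-1,j\}\subseteq S$ and yields, with $T'=S\setminus(\{0,\dots,\ell-1\}\cup\{j\})$,
\begin{align*}
\det(A)=\sum_{T}D_T^2,\qquad \big|\det\!\big(A^{(1,k)\to j}\big)\big|\le\sum_{T'}\big|D_{\{j\}\cup T'}\big|\,\big|D_{\{k\}\cup T'}\big|;
\end{align*}
for $A^{(2,k)\to j}$ the analogous bound has the $(\ell-1)\times(\ell-1)$ minor $D'_{T',\widehat k}:=\det\!\big((\inner{\psi_{j',r}}{\psi_t})_{j'\in[\ell]\setminus\{k\},\,t\in T'}\big)$ in place of $D_{\{k\}\cup T'}$, and in all cases $T'$ runs over $(\ell-1)$-subsets of $\{\ell,\ell+1,\dots\}\setminus\{j\}$.

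It remains to bound these sums of minor-products by $(\mathrm{factor})\cdot\det(A)$, and this is where the closed form \eqref{eq:hermite_inner}--\eqref{eq:hermite_inner_g} enters. Two facts drive it. First, the leading-order behaviour of \eqref{eq:hermite_inner} gives $\|z_{1,j}\|\lesssim\tfrac1{\sqrt{j!}}(\tfrac r{\sqrt2})^j$ --- the replaced column, in Hermite coordinates, is small --- and dividing such a quantity by $\det(A)$ is what turns into the growing factor $\sqrt{j!}(\tfrac{2\sqrt2}{r})^j$; the identity $\tfrac1{\sqrt{j!}}(\tfrac r{2\sqrt2})^j=\tfrac{e^{r^2/4}}{2^j}\big|\inner{\psi_{j,0}}{\psi_{0,r}}\big|$ packages the constant $e^{r^2/4}$. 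Second, expanding each $D_{\{k\}\cup T'}$ by a Laplace expansion along its $k$-column and comparing, entry by entry, the overlaps $\inner{\psi_{j',r}}{\psi_k}$ ($k<\ell$) with the overlaps $\inner{\psi_{j',r}}{\psi_t}$ ($t\ge\ell$) that build $\det(A)=\sum_TD_T^2$ lets one dominate $\sum_{T'}|D_{\{j\}\cup T'}||D_{\{k\}\cup T'}|$ by $e^{r^2/4}4^\ell\sqrt{j!}(\tfrac{2\sqrt2}{r})^j\det(A)$, the crude $4^\ell$ absorbing the $\ell$-fold Laplace expansions, the combinatorial count of the index sets $T'$, and the lower-order terms of \eqref{eq:hermite_inner}. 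The $(2,k)$ case is identical except that the removed column is that of $\psi_{k,r_2}$ rather than $\psi_{k,r_1}$; its component outside $\spn\{\psi_{j,r_1}:j\in[\ell]\}$ shrinks as $\ell$ grows, and accounting for this costs an additional bounded factor $e^{r^2}$, so the factor becomes $e^{5r^2/4}4^\ell\sqrt{j!}(\tfrac{2\sqrt2}{r})^j$. Finally, Cramer's rule (Lemma~\ref{lem:cramer}) identifies $(\mathcal E_{t,1,j})_{i,k}=\det(A^{(i,k)\to j})/\det(A)$, so the claimed entrywise bound on $\mathcal E_{t,1,j}$ follows; since $r=\Theta(1)$, $e^{r^2/4}$ and $e^{5r^2/4}$ are absolute constants.

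\textbf{Main obstacle.} The delicate step is the minor estimate of the third paragraph: controlling $\sum_{T'}|D_{\{j\}\cup T'}||D_{\{k\}\cup T'}|$ (respectively $\sum_{T'}|D_{\{j\}\cup T'}||D'_{T',\widehat k}|$) against $\det(A)=\sum_TD_T^2$ uniformly in $T'$, so that the alternating signs in the Laplace expansions cause no cancellation loss and the growth $\sqrt{j!}$ is matched against the decay $(\tfrac{2\sqrt2}{r})^j$. Equivalently, since $\det(A)=\det(I-WW^\top)$ with $W$ the off-diagonal block $(\inner{\psi_{j',r_2}}{\psi_{m,r_1}})_{j',m\in[\ell]}$ of $A$, one must quantify precisely how close $A$ is to being singular; the $(2,k)$ case is the subtler one, since the removed column no longer has a trivial (unit-vector) coordinate description, and tracking the resulting loss is what upgrades $e^{r^2/4}$ to $e^{5r^2/4}$.
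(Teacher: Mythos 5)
Your setup is exactly the paper's: you factor $A=BB^\top$ (the paper's $V^\top V$) over the Hermite basis, apply Cauchy--Binet, use that the $i'=1$ rows are unit vectors to force $[\ell]\subseteq S$, and reduce everything to sums of $\ell\times\ell$ minors $D_T$ of the cross-Gram block $\big(\inner{\psi_{j',r}}{\psi_t}\big)$, arriving at $\det(A)=\sum_T D_T^2$ and $\abs{\det(A^{(1,k)\to j})}\le\sum_{T'}\abs{D_{\{j\}\cup T'}}\abs{D_{\{k\}\cup T'}}$. All of that is correct and matches Lemma~\ref{lem:det_t_numor:detail}'s proof.

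The gap is that the third paragraph --- the actual content of the lemma --- is asserted rather than proved, as you yourself flag under ``Main obstacle.'' The mechanism you sketch (``expanding each $D_{\{k\}\cup T'}$ by a Laplace expansion along its $k$-column and comparing, entry by entry'') does not work as stated: determinants are not monotone in their entries, so an entrywise comparison of the overlaps $\inner{\psi_{j',r}}{\psi_k}$ against $\inner{\psi_{j',r}}{\psi_t}$ cannot by itself control the ratio of two minors, and the sign cancellations you worry about are exactly what kills a naive triangle-inequality bound on the Laplace expansion (it would lose factors like $\ell!$ times the largest entry products, not $4^\ell$). The paper closes this in two steps you are missing. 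First, Cauchy--Schwarz decouples the cross sum: $\sum_{T'}\abs{D_{\{j\}\cup T'}}\abs{D_{\{k\}\cup T'}}\le\big(\sum_{T'}D_{\{k\}\cup T'}^2\big)^{1/2}\big(\sum_{T'}D_{\{j\}\cup T'}^2\big)^{1/2}$, and the second factor is at most $\sqrt{\det(A)}$ because each $\{j\}\cup T'$ is an admissible index set in the expansion of $\det(A)$. Second --- and this is the real engine --- each minor $D_T$ admits an \emph{exact closed form} as a generalized Vandermonde determinant (Lemmas~\ref{lem:det_w}, \ref{lem:det_part}, \ref{lem:det_part_2}, obtained by factoring the overlap matrix through the triangular arrays $W_{\pm r}$ from \eqref{eq:hermite_inner}), which gives the exact ratio $\abs{D_{\{k\}\cup T'}}/\abs{D_{\{j\}\cup T'}}=\sqrt{j!/k!}\,(r/\sqrt2)^{k-j}\prod_c\abs{a_c-k}/\abs{a_c-j}$; the remaining product is then bounded by $2^j\cdot4^\ell$ by the purely combinatorial Lemma~\ref{lem:ratio}. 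The $(2,k)$ case additionally needs the closed form for the minors $D'_{T',\widehat k}$ with a column of $[\ell]$ deleted (Lemma~\ref{lem:det_part_2}), which is where the extra $e^{r^2}$ comes from. Without these exact determinant evaluations, the proposal does not establish the stated bounds.
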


\begin{proof}
We first observe that the matrices $A$, $A^{(1,k)\to j}$ and $A^{(2,k)\to j}$ can be decomposed as
\begin{align*}
    A =V^\top V,\quad
    A^{(1,k)\to j} =V^\top V^{(1,k)\to j}\quad \text{and} \quad
    A^{(2,k)\to j} =V^\top V^{(2,k)\to j}
\end{align*}
where we abuse the notation to define $V$, $V^{(1,k)\to j}$ and $V^{(2,k)\to j}$ as follows.
Let $V$ be the $\abs{\mathbb{N}_0}$-by-$2\ell$ matrix whose column indexed at $(i,k)$ is the $\abs{\mathbb{N}_0}$ dimensional vector $v^{(i,k)}$ for $i=1,2$ and $k\in[\ell]$.
Here, for $i=1,2$ and $j\in \mathbb{N}_0$, $v^{(i,j)}$ is the $\abs{\mathbb{N}_0}$ dimensional vector whose $k$-th entry  is $\inner{\psi_{j,r_i}}{\psi_{k,0}}$ for $k\in\mathbb{N}_0$.
In particular, $v^{(1,j)}$ is the zero vector except that the $j$-th entry is $1$.
For example, when $\ell=2$, 
\begin{align*}
    V = \begin{bmatrix}
        1 & 0 & \inner{ \psi_{0,r}}{ \psi_{0,0}} & \inner{ \psi_{1,r}}{ \psi_{0,0}} \\
        0 & 1 & \inner{ \psi_{0,r}}{ \psi_{1,0}} & \inner{ \psi_{1,r}}{ \psi_{1,0}} \\
        0 & 0 & \inner{ \psi_{0,r}}{ \psi_{2,0}} & \inner{ \psi_{1,r}}{ \psi_{2,0}} \\
        \vdots & \vdots & \vdots & \vdots
    \end{bmatrix}.
\end{align*}
It is easy to check that, by the orthogonality of Hermite functions, 
\begin{align*}
    V^\top V = \begin{bmatrix}
        1 & 0 & \inner{\psi_{0,r_2}}{\psi_{0,r_1}} & \inner{\psi_{1,r_2}}{\psi_{0,r_1}}\\
        0 & 1 & \inner{\psi_{0,r_2}}{\psi_{1,r_1}} &  \inner{\psi_{1,r_2}}{\psi_{1,r_1}}\\
        \inner{\psi_{0,r_1}}{\psi_{0,r_2}} & \inner{\psi_{1,r_1}}{\psi_{0,r_2}} & 1 & 0\\
        \inner{\psi_{0,r_1}}{\psi_{1,r_2}}& \inner{\psi_{1,r_1}}{\psi_{1,r_2}} & 0 & 1
    \end{bmatrix}
    =A.
\end{align*}
Define $V^{(i,k)\to j}$ to be the same matrix as $V$ except that the column indexed at $(i,k)$ is replaced with $v^{(1,j)}$ for $i=1,2$, $k\in[\ell]$ and $j\geq \ell$.
For example, when $\ell=2$, $i=1$, $k=1$ and $j=3$, 
\begin{align*}
    V^{(i,k)\to j} = \begin{bmatrix}
        1 & 0 & \inner{ \psi_{0,r}}{ \psi_{0,0}} & \inner{ \psi_{1,r}}{ \psi_{0,0}} \\
        0 & 0 & \inner{ \psi_{0,r}}{ \psi_{1,0}} & \inner{ \psi_{1,r}}{ \psi_{1,0}} \\
        0 & 0 & \inner{ \psi_{0,r}}{ \psi_{2,0}} & \inner{ \psi_{1,r}}{ \psi_{2,0}} \\
        0 & 1 & \inner{ \psi_{0,r}}{ \psi_{3,0}} & \inner{ \psi_{1,r}}{ \psi_{3,0}} \\
        \vdots & \vdots & \vdots & \vdots
    \end{bmatrix}
\end{align*}
and when $\ell=2$, $i=2$, $k=1$ and $j=3$, 
\begin{align*}
    V^{(i,k)\to j} = \begin{bmatrix}
        1 & 0 & \inner{ \psi_{0,r}}{ \psi_{0,0}} & 0 \\
        0 & 1 & \inner{ \psi_{0,r}}{ \psi_{1,0}} & 0 \\
        0 & 0 & \inner{ \psi_{0,r}}{ \psi_{2,0}} & 0 \\
        0 & 0 & \inner{ \psi_{0,r}}{ \psi_{3,0}} & 1 \\
        \vdots & \vdots & \vdots & \vdots
    \end{bmatrix}.
\end{align*}
Again, we can check that $A^{(i,k)\to j} = V^TV^{(i,k)\to j}$.

By Cauchy-Binet formula (Lemma \ref{lem:cb_formula}) and expanding the determinant along the columns indexed at $(1,k)$, we have
\begin{align*}
    \det(A) = \sum_{K\in \mathcal{K}} \det(U_K)^2
\end{align*}
where $\mathcal{K}$ is the set of subsets of $\mathbb{N}_0$ of size $\ell$ whose elements are larger than or equal to $\ell$, i.e. $\mathcal{K} = \setdef{K}{K=\{\ell\leq a_0<\cdots<a_{\ell-1}\}}$ and $U_K$ is the $\ell$-by-$\ell$ matrix whose $(b,c)$-entry is $\inner{\psi_{c,r}}{\psi_{b,0}}$ for $b\in K$ and $c\in[\ell]$.
For example, when $\ell=2$,
\begin{alignat*}{4}
    \det(A)
    & =
    \big(\det\begin{bmatrix}
        \inner{\psi_{0,r}}{\psi_{{\color{red}2},0}} & \inner{\psi_{1,r}}{\psi_{{\color{red}2},0}} \\
        \inner{\psi_{0,r}}{\psi_{{\color{red}3},0}} & \inner{\psi_{1,r}}{\psi_{{\color{red}3},0}}
    \end{bmatrix}\big)^2 &+ \big(\det\begin{bmatrix}
        \inner{\psi_{0,r}}{\psi_{{\color{red}2},0}} & \inner{\psi_{1,r}}{\psi_{{\color{red}2},0}} \\
        \inner{\psi_{0,r}}{\psi_{{\color{red}4},0}} & \inner{\psi_{1,r}}{\psi_{{\color{red}4},0}}
    \end{bmatrix}\big)^2 &+ \cdots \\
    & &+\big(\det\begin{bmatrix}
        \inner{\psi_{0,r}}{\psi_{{\color{red}3},0}} & \inner{\psi_{1,r}}{\psi_{{\color{red}3},0}} \\
        \inner{\psi_{0,r}}{\psi_{{\color{red}4},0}} & \inner{\psi_{1,r}}{\psi_{{\color{red}4},0}}
    \end{bmatrix}\big)^2 &+ \cdots \\
    & & &+ \cdots
\end{alignat*}
where the numbers in red represent the set $K$.

We first give a bound on $\abs{\det(A^{(1,k)\to j})}$.
By a similar argument, we have
\begin{align*}
    \abs{\det(A^{(1,k)\to j})} \leq \sum_{K\in \mathcal{K}_j} \abs{\det(U_{K\cup\{k\}})}\abs{\det(U_{K\cup\{j\}})}
\end{align*}
where $\mathcal{K}_j$ is the set of subsets of $\mathbb{N}_0$ of size $\ell-1$ whose elements are larger than or equal to $\ell$ and not equal to $j$, i.e. $\mathcal{K}_j = \setdef{K}{K=\{\ell\leq a_0<\cdots<a_{\ell-2}\text{ and }a_i\neq j\}}$.
For example, when $\ell=2$, $k=1$ and $j=3$, 
\begin{align*}
    \abs{\det(A^{(1,k)\to j})} 
    & \leq
    \abs{\det\begin{bmatrix}
        \inner{\psi_{0,r}}{\psi_{{\color{red}2},0}} & \inner{\psi_{1,r}}{\psi_{{\color{red}2},0}} \\
        \inner{\psi_{0,r}}{\psi_{{\color{green}1},0}} & \inner{\psi_{1,r}}{\psi_{{\color{green}1},0}}
    \end{bmatrix}}\abs{\det\begin{bmatrix}
        \inner{\psi_{0,r}}{\psi_{{\color{red}2},0}} & \inner{\psi_{1,r}}{\psi_{{\color{red}2},0}} \\
        \inner{\psi_{0,r}}{\psi_{{\color{blue}3},0}} & \inner{\psi_{1,r}}{\psi_{{\color{blue}3},0}}
    \end{bmatrix}} \\
    & \qquad + 
    \abs{\det\begin{bmatrix}
        \inner{\psi_{0,r}}{\psi_{{\color{red}4},0}} & \inner{\psi_{1,r}}{\psi_{{\color{red}4},0}} \\
        \inner{\psi_{0,r}}{\psi_{{\color{green}1},0}} & \inner{\psi_{1,r}}{\psi_{{\color{green}1},0}}
    \end{bmatrix}}\abs{\det\begin{bmatrix}
        \inner{\psi_{0,r}}{\psi_{{\color{red}4},0}} & \inner{\psi_{1,r}}{\psi_{{\color{red}4},0}} \\
        \inner{\psi_{0,r}}{\psi_{{\color{blue}3},0}} & \inner{\psi_{1,r}}{\psi_{{\color{blue}3},0}}
    \end{bmatrix}} \\
    & \qquad \qquad +
    \abs{\det\begin{bmatrix}
        \inner{\psi_{0,r}}{\psi_{{\color{red}5},0}} & \inner{\psi_{1,r}}{\psi_{{\color{red}5},0}} \\
        \inner{\psi_{0,r}}{\psi_{{\color{green}1},0}} & \inner{\psi_{1,r}}{\psi_{{\color{green}1},0}}
    \end{bmatrix}}\abs{\det\begin{bmatrix}
        \inner{\psi_{0,r}}{\psi_{{\color{red}5},0}} & \inner{\psi_{1,r}}{\psi_{{\color{red}5},0}} \\
        \inner{\psi_{0,r}}{\psi_{{\color{blue}3},0}} & \inner{\psi_{1,r}}{\psi_{{\color{blue}3},0}}
    \end{bmatrix}} \\
    &\qquad \qquad \qquad + \cdots
\end{align*}
where the numbers in red represent the set $K$, the numbers in green represent $k$ and the numbers in blue represent $j$.
Furthermore, by Cauchy–Schwarz inequality, we have
\begin{align*}
    \abs{\det(A^{(1,k)\to j})}
    & \leq
    \sqrt{\bigg(\sum_{K\in \mathcal{K}_j} \det(U_{K\cup\{k\}})^2\bigg)\bigg(\sum_{K\in \mathcal{K}_j} \det(U_{K\cup\{j\}})^2\bigg)}\\
    & \leq
    \sqrt{\bigg(\sum_{K\in \mathcal{K}_j} \det(U_{K\cup\{k\}})^2\bigg)\cdot \det(A)}. \numberthis\label{eq:det_a_1kj_sum}
\end{align*}
The last line is due to the fact that the subset $K\cup\{j\}$ is in $\mathcal{K}$ for each $K\in \mathcal{K}_j$.
By Lemma \ref{lem:det_sum_i1} below, we have
\begin{align*}
    \sum_{K\in \mathcal{K}_j} \det(U_{K\cup\{k\}})^2
    \leq
    \bigg(e^{\frac{1}{4}r^2}\cdot 4^{\ell}\cdot \bigg(\frac{1}{\sqrt{j!}}(\frac{r}{2\sqrt{2}})^j\bigg)^{-1}\bigg)^2 \cdot \det(A)
\end{align*}
and, by plugging it into \eqref{eq:det_a_1kj_sum}, we have
\begin{align*}
	\abs{\det(A^{(1,k)\to j})}
    & \leq
    e^{\frac{1}{4}r^2}\cdot 4^{\ell}\cdot \bigg(\frac{1}{\sqrt{j!}}(\frac{r}{2\sqrt{2}})^j\bigg)^{-1} \cdot \det(A).
\end{align*}

We now give a bound on $\abs{\det(A^{(2,k)\to j})}$.
By a similar argument, we have
\begin{align*}
    \abs{\det(A^{(2,k)\to j})} \leq \sum_{K\in \mathcal{K}_j} \abs{\det(U^{(-k)}_{K})}\abs{\det(U_{K\cup\{j\}})}
\end{align*}
where $U^{(-k)}_K$ is the matrix is the $(\ell-1)$-by-$(\ell-1)$ matrix whose $(b,c)$-entry is $\inner{\psi_{c,r}}{\psi_{b,0}}$ for $b\in K$ and $c\in[\ell]\backslash\{k\}$.
For example, when $\ell=2$, $k=1$ and $j=3$, 
\begin{align*}
    \abs{\det(A^{(2,k)\to j})} 
    & \leq
    \abs{\det\begin{bmatrix}
        \inner{\psi_{0,r}}{\psi_{{\color{red}2},0}} 
    \end{bmatrix}}\abs{\det\begin{bmatrix}
        \inner{\psi_{0,r}}{\psi_{{\color{red}2},0}} & \inner{\psi_{{\color{green}1},r}}{\psi_{{\color{red}2},0}} \\
        \inner{\psi_{0,r}}{\psi_{{\color{blue}3},0}} & \inner{\psi_{{\color{green}1},r}}{\psi_{{\color{blue}3},0}}
    \end{bmatrix}} \\
    & \qquad + 
    \abs{\det\begin{bmatrix}
        \inner{\psi_{0,r}}{\psi_{{\color{red}4},0}} 
    \end{bmatrix}}\abs{\det\begin{bmatrix}
        \inner{\psi_{0,r}}{\psi_{{\color{red}4},0}} & \inner{\psi_{{\color{green}1},r}}{\psi_{{\color{red}4},0}} \\
        \inner{\psi_{0,r}}{\psi_{{\color{blue}3},0}} & \inner{\psi_{{\color{green}1},r}}{\psi_{{\color{blue}3},0}}
    \end{bmatrix}} \\
    & \qquad \qquad +
    \abs{\det\begin{bmatrix}
        \inner{\psi_{0,r}}{\psi_{{\color{red}5},0}} 
    \end{bmatrix}}\abs{\det\begin{bmatrix}
        \inner{\psi_{0,r}}{\psi_{{\color{red}5},0}} & \inner{\psi_{{\color{green}1},r}}{\psi_{{\color{red}5},0}} \\
        \inner{\psi_{0,r}}{\psi_{{\color{blue}3},0}} & \inner{\psi_{{\color{green}1},r}}{\psi_{{\color{blue}3},0}}
    \end{bmatrix}} \\
    &\qquad \qquad \qquad + \cdots
\end{align*}
where the numbers in red represent the set $K$, the numbers in green represent $k$ and the numbers in blue represent $j$.
Furthermore, by Cauchy–Schwarz inequality, we have
\begin{align*}
    \abs{\det(A^{(2,k)\to j})}
    & \leq
    \sqrt{\bigg(\sum_{K\in \mathcal{K}_j} \det(U^{(-k)}_{K})^2\bigg)\bigg(\sum_{K\in \mathcal{K}_j} \det(U_{K\cup\{j\}})^2\bigg)}\\
    & \leq
    \sqrt{\bigg(\sum_{K\in \mathcal{K}_j} \det(U^{(-k)}_{K})^2\bigg) \cdot \det(A)}. \numberthis\label{eq:det_a_2kj_sum}
\end{align*}
The last line is due to the fact that the subset $K\cup\{j\}$ is in $\mathcal{K}$ for each $K\in \mathcal{K}_j$.
By Lemma \ref{lem:det_sum_i2} below, we have
\begin{align*}
    \sum_{K\in \mathcal{K}_j} \det(U^{(-k)}_{K})^2
    \leq
    \bigg(e^{\frac{5}{4}r^2}\cdot 4^{\ell}\cdot \bigg(\frac{1}{\sqrt{j!}}(\frac{r}{2\sqrt{2}})^j\bigg)^{-1}\bigg)^2 \cdot \det(A)
\end{align*}
and, by plugging it into \eqref{eq:det_a_2kj_sum}, we have
\begin{align*}
	\abs{\det(A^{(2,k)\to j})}
    & \leq
    e^{\frac{5}{4}r^2}\cdot 4^{\ell}\cdot \bigg(\frac{1}{\sqrt{j!}}(\frac{r}{2\sqrt{2}})^j\bigg)^{-1} \cdot \det(A).
\end{align*}

\end{proof}

\begin{lemma}[Restated Lemma \ref{lem:det_a_numor}]\label{lem:det_a_numor:detail}
    For any $\ell\in\mathbb{N}_0$, $i=1,2$ and $k\in[\ell]$, we have
    \begin{align*}
        \abs{\det\big(A^{(i,k)\to \Delta}\big)} < 2^{O(\ell\log \ell)} \cdot \norm{\Delta f}_2\cdot\det(A).
    \end{align*}
\end{lemma}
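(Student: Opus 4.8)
The plan is to follow the template of the proof of Lemma~\ref{lem:det_t_numor:detail}, with the single basis column $v^{(1,j)}$ there replaced by the dense column coming from the Hermite expansion of $\Delta f$. First I would expand $\Delta f=f'-f=\sum_{n\ge 0}\gamma_n\psi_{n,0}$, so that $\sum_{n\ge 0}\gamma_n^2=\normtwo{\Delta f}^2$ by Parseval, and the entries of $y^{\Delta}$ are $\inner{\Delta f}{\psi_{k,r_i}}=\sum_{n}\gamma_n\inner{\psi_{n,0}}{\psi_{k,r_i}}$; in other words $y^{\Delta}=V^{\top}w$ with $w=(\gamma_0,\gamma_1,\dots)^{\top}$ and $V$ the $\abs{\mathbb{N}_0}$-by-$2\ell$ matrix from the proof of Lemma~\ref{lem:det_t_numor:detail}, whence $A^{(i,k)\to\Delta}=V^{\top}V^{(i,k)\to w}$ and $A=V^{\top}V$ is positive definite.

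Writing $w=\sum_{n\ge 0}\gamma_n v^{(1,n)}$ and using multilinearity of the determinant in the replaced column, $\det(A^{(i,k)\to\Delta})=\sum_{n\ge 0}\gamma_n\det(A^{(i,k)\to n})$, where $A^{(i,k)\to n}$ extends the notation of Lemma~\ref{lem:det_t_numor:detail} (replacing column $(i,k)$ of $A$ by $z_{1,n}=V^{\top}v^{(1,n)}$) to all $n\ge 0$. By Cauchy--Schwarz over $n$ together with $\sum_n\gamma_n^2=\normtwo{\Delta f}^2$, $\bigl|\det(A^{(i,k)\to\Delta})\bigr|\le\normtwo{\Delta f}\bigl(\sum_{n\ge 0}\det(A^{(i,k)\to n})^2\bigr)^{1/2}$. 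Now $\det(A^{(i,k)\to n})=\det(A)\,(A^{-1}z_{1,n})_{(i,k)}$ by Cramer's rule (Lemma~\ref{lem:cramer}), and since $\{v^{(1,n)}\}_{n\ge 0}$ is the standard basis, Parseval collapses the sum of squares to a single diagonal entry of $A^{-1}$: $\sum_{n\ge 0}\det(A^{(i,k)\to n})^2=\det(A)^2\,(A^{-1})_{(i,k),(i,k)}$. Hence $\bigl|\det(A^{(i,k)\to\Delta})\bigr|\le\normtwo{\Delta f}\,\det(A)\sqrt{(A^{-1})_{(i,k),(i,k)}}$, and the lemma is reduced to the single estimate $(A^{-1})_{(i,k),(i,k)}\le 2^{O(\ell\log\ell)}$.

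Next I would note that $A=\begin{bmatrix}I_\ell & B\\ B^{\top} & I_\ell\end{bmatrix}$ with $B_{k,j}=\inner{\psi_{k,r_1}}{\psi_{j,r_2}}$, so $\lambda_{\max}(A)=1+\sigma_{\max}(B)<2$ and $\lambda_{\min}(A)=1-\sigma_{\max}(B)=1-\cos\theta_{\min}$, where $\theta_{\min}$ is the smallest principal angle between $\spn\{\psi_{k,r_1}\}_{k\in[\ell]}$ and $\spn\{\psi_{k,r_2}\}_{k\in[\ell]}$. Since $(A^{-1})_{(i,k),(i,k)}\le\lambda_{\min}(A)^{-1}$, it suffices to prove the conditioning bound $\lambda_{\min}(A)\ge 2^{-O(\ell\log\ell)}$, i.e.\ $\cos\theta_{\min}\le 1-2^{-O(\ell\log\ell)}$.

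The hard part will be this conditioning bound, which is the ``robust'' counterpart of the Gaussian-approximation phenomenon behind Lemma~\ref{lem:gauss:approx} and Lemma~\ref{lem:beta:detail}: one must show that \emph{no} unit vector $v=\sum_{j\in[\ell]}b_j\psi_{j,r_2}$ is approximated by $\spn\{\psi_{k,r_1}\}_{k\in[\ell]}$ to accuracy better than $2^{-\Omega(\ell\log\ell)}$ (not just a single translated Gaussian, as in Lemma~\ref{lem:gauss:approx}). With $r_1=0$, $r_2=r$, using \eqref{eq:hermite_inner}--\eqref{eq:hermite_inner_g} and $\inner{\psi_{j,0}}{\psi_{0,r}}^2=e^{-r^2/2}(r^2/2)^j/j!$, one rewrites $\inner{v}{\psi_{k,0}}$ for $k\ge\ell$ as $\pm e^{-r^2/4}\tfrac{(r/\sqrt2)^k}{\sqrt{k!}}\,T(k)$ for a polynomial $T$ of degree $<\ell$ determined by $b$; then the squared distance from $v$ to $\spn\{\psi_{k,0}\}_{k\in[\ell]}$ equals $e^{-r^2/2}\sum_{k\ge\ell}\tfrac{(r^2/2)^k}{k!}T(k)^2$, while $\normtwo{v}=1$ reads $e^{-r^2/2}\sum_{k\ge 0}\tfrac{(r^2/2)^k}{k!}T(k)^2=1$. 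So the desired bound becomes an extremal estimate for the Poisson$(r^2/2)$ law on $\mathbb{N}_0$: a polynomial of degree $<\ell$ cannot place more than a $1-2^{-\Omega(\ell\log\ell)}$ fraction of its $\ell^2$-mass on $\{0,\dots,\ell-1\}$. I expect proving this extremal/interpolation estimate to be the crux; granting it, the determinant bookkeeping above (and its analog for the $i=2$ block, which uses the $U_K^{(-k)}$ minors exactly as in Lemma~\ref{lem:det_t_numor:detail}) is routine.
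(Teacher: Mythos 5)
Your reduction is correct and takes a genuinely different (and cleaner) route than the paper's. Expanding $\Delta f=\sum_n\gamma_n\psi_{n,0}$, using multilinearity of the determinant in the replaced column, applying Cauchy--Schwarz over $n$, and then collapsing $\sum_n\det(A^{(i,k)\to n})^2$ via Cramer's rule and Parseval to $\det(A)^2(A^{-1})_{(i,k),(i,k)}$ is a valid chain of identities (for $n<\ell$ the extra terms are harmless since $V^\top v^{(1,n)}$ is then just a column of $A$). The paper instead decomposes $A^{(i,k)\to\Delta}=V^\top V^{(i,k)\to\Delta}$, applies the Cauchy--Binet formula over infinite column subsets $K\in\mathcal{K}$, expands $\det(U^\Delta_K)$ along the $\Delta f$ column, and feeds the result into the per-$j$ bounds of Lemmas~\ref{lem:det_sum_i1} and \ref{lem:det_sum_i2}; your version replaces all of that bookkeeping with a single diagonal entry of $A^{-1}$, at the price of needing the slightly cruder operator bound $(A^{-1})_{(i,k),(i,k)}\le\lambda_{\min}(A)^{-1}$.

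The genuine gap is that you never prove the bound $\lambda_{\min}(A)\ge 2^{-O(\ell\log\ell)}$ to which you have reduced the lemma. Your reformulation of it as an extremal estimate for the Poisson$(r^2/2)$ measure --- that no polynomial $T$ of degree $<\ell$ can satisfy $\sum_{k\ge\ell}\tfrac{(r^2/2)^k}{k!}T(k)^2 < 2^{-\Omega(\ell\log\ell)}\sum_{k\ge 0}\tfrac{(r^2/2)^k}{k!}T(k)^2$ --- is accurate (the identity $\inner{v}{\psi_{k,0}}=\pm e^{-r^2/4}\tfrac{(r/\sqrt2)^k}{\sqrt{k!}}T(k)$ does hold for all $k\ge 0$ by \eqref{eq:hermite_inner}), and the order $2^{-\Theta(\ell\log\ell)}$ is indeed the truth, as one sees from $T(k)=\prod_{j=\ell}^{2\ell-2}(k-j)$. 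But this extremal inequality is precisely the quantitative heart of the lemma; it is not ``routine'' and is exactly what the paper spends Lemmas~\ref{lem:det_sum_i1}, \ref{lem:det_sum_i2}, \ref{lem:det_part}, \ref{lem:det_part_2}, \ref{lem:det_w}, \ref{lem:ratio} and \ref{lem:gamma} establishing, via explicit ratios of minors $\abs{\det(U_{K\cup\{k\}})}/\abs{\det(U_{K\cup\{j\}})}$ summed over $K$ with Cauchy--Schwarz. Until you supply a proof of the Poisson-weighted polynomial estimate (e.g.\ via Charlier orthogonal polynomials and a bound on the smallest eigenvalue of the associated tail Gram matrix), the argument is an incomplete reduction rather than a proof.
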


\begin{proof}
We first observe that the matrix $A^{(i,k)\to \Delta}$ can be decomposed as 
\begin{align*}
    A^{(i,k)\to \Delta} = V^\top V^{(i,k)\to \Delta}
\end{align*}
where $V^{(i,k)\to \Delta}$ is the $\abs{\mathbb{N}_0}$-by-$2\ell$ matrix whose column indexed at $(i,k)$ is replaced with the $\abs{\mathbb{N}_0}$ dimensional vector $v^{\Delta}$ for $i=1,2$ and $k\in[\ell]$.
Here, $v^{\Delta}$ is the  $\abs{\mathbb{N}_0}$ dimensional vector whose $k$-th entry is $\inner{\Delta f}{\psi_{k,0}}$ for $k\in\mathbb{N}_0$.
Recall that $\Delta f-f'-f$.
For example, when $\ell=2$, $i=1$ and $k=1$,
\begin{align*}
    V^{(i,k)\to \Delta} = \begin{bmatrix}
        1 & \inner{\Delta f}{\psi_{k,0}} & \inner{ \psi_{0,r}}{ \psi_{0,0}} & \inner{ \psi_{1,r}}{ \psi_{0,0}} \\
        0 & \inner{\Delta f}{\psi_{k,1}} & \inner{ \psi_{0,r}}{ \psi_{1,0}} & \inner{ \psi_{1,r}}{ \psi_{1,0}} \\
        0 & \inner{\Delta f}{\psi_{k,2}} & \inner{ \psi_{0,r}}{ \psi_{2,0}} & \inner{ \psi_{1,r}}{ \psi_{2,0}} \\
        0 & \inner{\Delta f}{\psi_{k,3}} & \inner{ \psi_{0,r}}{ \psi_{3,0}} & \inner{ \psi_{1,r}}{ \psi_{3,0}} \\
        \vdots & \vdots & \vdots & \vdots
    \end{bmatrix}.
\end{align*}
Recall that $V$ is the $\abs{\mathbb{N}_0}$-by-$2\ell$ matrix whose column indexed at $(i,k)$ is the $\abs{\mathbb{N}_0}$ dimensional vector $v^{(i,k)}$ for $i=1,2$ and $k\in[\ell]$.
Here, for $i=1,2$ and $j\in \mathbb{N}_0$, $v^{(i,j)}$ is the $\abs{\mathbb{N}_0}$ dimensional vector whose $k$-th entry  is $\inner{\psi_{j,r_i}}{\psi_{k,0}}$ for $k\in\mathbb{N}_0$.

We first give a bound on $\abs{\det(A^{(1,k)\to \Delta})}$.
By Cauchy-Binet formula (Lemma \ref{lem:cb_formula}) and expanding the determinant along the columns with a single $1$, we have
\begin{align*}
    \abs{\det(A^{(1,k)\to \Delta})}
    & \leq
    \sum_{K\in \mathcal{K}} \abs{\det(U_K)}\abs{\det(U^{\Delta}_K)}
\end{align*}
where $U^{\Delta}_K$ is the $(\ell+1)$-by-$(\ell+1)$ whose $(b,c)$-entry is $\begin{cases}
    \inner{\psi_{c,r}}{\psi_{b,0}} \quad \text{if $c\in[\ell]$}\\
    \inner{\Delta f}{\psi_{b,0}} \quad \text{if $c=\Delta f$}
\end{cases}$ for $b\in K$ and $c\in[\ell]\cup\{\Delta f\}$.
Furthermore, by Cauchy–Schwarz inequality, we have
\begin{align*}
    \abs{\det(A^{(1,k)\to \Delta})}
    & \leq
    \sqrt{\bigg(\sum_{K\in \mathcal{K}} \det(U_K)^2\bigg)\bigg(\sum_{K\in \mathcal{K}} \det(U^{\Delta}_K)^2\bigg)}\\
    & \leq
    \sqrt{\bigg(\sum_{K\in \mathcal{K}} \det(U^{\Delta}_K)^2\bigg) \cdot \det(A)}. \numberthis\label{eq:det_a_1kd_sum}
\end{align*}
For each $K=\{\ell\leq a_0 < \cdots<a_{\ell-1}\}\in\mathcal{K}$, we first expand the determinant $\det(U^{\Delta}_K)$ along the column indexed at $\Delta f$.
\begin{align*}
    \det(U^{\Delta}_K)^2 
    & \leq
    \bigg(\abs{\inner{\Delta f}{\psi_{k,0}}}\abs{\det(U_K)} + \sum_{c=0}^{\ell-1}\abs{\inner{\Delta f}{\psi_{a_{c},0}}}\abs{\det(U_{K\cup\{k\}\backslash\{a_c\}})}\bigg)^2 \\
    & \leq
    (\ell+1)\bigg(\inner{\Delta f}{\psi_{k,0}}^2\abs{\det(U_K)}^2 + \sum_{c=0}^{\ell-1}\inner{\Delta f}{\psi_{a_{c},0}}^2\abs{\det(U_{K\cup\{k\}\backslash\{a_c\}})}^2\bigg).
\end{align*}
We now consider the summation $\sum_{K\in \mathcal{K}}\sum_{c=0}^{\ell-1} \inner{\Delta f}{\psi_{a_c,0}}^2\abs{\det(U_{K\cup\{k\}\backslash\{a_c\}})}^2$ and we have
\begin{align*}
    \sum_{K\in \mathcal{K}}\sum_{c=0}^{\ell-1} \inner{\Delta f}{\psi_{a_c,0}}^2\abs{\det(U_{K\cup\{k\}\backslash\{a_c\}})}^2 
    & =
    \sum_{K\in \mathcal{K}'}\bigg(\sum_{j\geq \ell,j\notin K} \inner{\Delta f}{\psi_{j,0}}^2\bigg)\abs{\det(U_{K\cup\{k\}})}^2 \\
    & \leq
    \norm{\Delta f}^2\sum_{K\in \mathcal{K}'}\abs{\det(U_{K\cup\{k\}})}^2
\end{align*}
where $\mathcal{K}'$ is the set of subsets of $\mathbb{N}_0$ of size $\ell-1$ whose elements are larger than or equal to $\ell$, i.e. $\mathcal{K}' = \setdef{K}{K=\{\ell\leq a_0<\cdots <a_{\ell-2}\}}$.
Since each $K\in \mathcal{K}'$ has $\ell-1$ elements, then $\{\ell,\cdots,2\ell-1\} \not\subseteq K$. 
We have
\begin{align*}
    \sum_{K\in \mathcal{K}'}\abs{\det(U_{K\cup\{k\}})}^2 
    & \leq
    \sum_{j=\ell}^{2\ell-1}\sum_{K\in \mathcal{K}_{j}}\abs{\det(U_{K\cup\{k\}})}^2.
\end{align*}
Recall that $\mathcal{K}_j$ is the set of subsets of $\mathbb{N}_0$ of size $\ell-1$ whose elements are larger than or equal to $\ell$ and not equal to $j$, i.e. $\mathcal{K}_j = \setdef{K}{K=\{\ell\leq a_0<\cdots<a_{\ell-2}\text{ and }a_i\neq j\}}$.
By  Lemma \ref{lem:det_sum_i1}, we have
\begin{align*}
	\sum_{K\in \mathcal{K}'}\abs{\det(U_{K\cup\{k\}})}^2
	& \leq
	\sum_{j=\ell}^{2\ell-1}\bigg(e^{\frac{1}{4}r^2}\cdot 4^{\ell}\cdot \bigg(\frac{1}{\sqrt{j!}}(\frac{r}{2\sqrt{2}})^j\bigg)^{-1}\bigg)^2\det\big(A\big)  \\
	& =
	2^{O(\ell\log \ell)}\cdot\det\big(A\big)
\end{align*}
which means
\begin{align*}
    \sum_{K\in \mathcal{K}} \det(U^{\Delta}_K)^2
    & \leq
    2^{O(\ell\log \ell)}\cdot \norm{\Delta f}^2 \cdot\det(A).
\end{align*}
By plugging it into \eqref{eq:det_a_1kd_sum}, we have
\begin{align*}
    \abs{\det(A^{(1,k)\to \Delta})}
    & \leq
    2^{O(\ell\log \ell)}\cdot \norm{\Delta f} \cdot\det(A).
\end{align*}

We now give a bound on $\abs{\det(A^{(2,k)\to \Delta})}$.
By a similar argument, we have
\begin{align*}
    \abs{\det(A^{(2,k)\to \Delta})}
    & \leq
    \sum_{K\in \mathcal{K}} \abs{\det(U_K)}\abs{\det(U^{k\to \Delta}_K)}
\end{align*}
where, for any $k\in[\ell]$ $U^{k\to\Delta}_K$ is the $\ell$-by-$\ell$ whose $(b,c)$-entry is $\begin{cases}
    \inner{\psi_{c,r}}{\psi_{b,0}} \quad \text{if $c\in[\ell]$}\\
    \inner{\Delta f}{\psi_{b,0}} \quad \text{if $c=\Delta f$}
\end{cases}$ for $b\in K$ and $c\in[\ell]\cup\{\Delta f\}\backslash\{k\}$.
Furthermore, by Cauchy–Schwarz inequality, we have
\begin{align*}
    \abs{\det(A^{(2,k)\to \Delta})}
    & \leq
    \sqrt{\bigg(\sum_{K\in \mathcal{K}} \det(U_K)^2\bigg)\bigg(\sum_{K\in \mathcal{K}} \det(U^{k\to\Delta}_K)^2\bigg)}\\
    & \leq
    \sqrt{\sum_{K\in \mathcal{K}} \det(U^{k\to\Delta}_K)^2 \cdot \det(A)}. \numberthis\label{eq:det_a_2kd_sum}
\end{align*}
For each $K=\{\ell\leq a_0<\cdots<a_{\ell-1}\}\in \mathcal{K}$, we expand the determinants along the column indexed at $\Delta f$.
\begin{align*}
    \det(U^{k\to\Delta}_K)^2
    & \leq
    \bigg(\sum_{c=0}^{\ell-1}\abs{\inner{\Delta f}{\psi_{a_c,0}}}\abs{\det(U^{(-k)}_{K\backslash\{a_c\}})}\bigg)^2 \\
    & \leq
    \ell\cdot \bigg(\sum_{c=0}^{\ell-1}\abs{\inner{\Delta f}{\psi_{a_c,0}}}^2\abs{\det(U^{(-k)}_{K\backslash\{a_c\}})}^2\bigg)
\end{align*}
We now consider the summation $\sum_{K\in\mathcal{K}}\sum_{c=0}^{\ell-1}\abs{\inner{\Delta f}{\psi_{a_c,0}}}^2\abs{\det(U^{(-k)}_{K\backslash\{a_c\}})}^2$ and we have
	\begin{align*}
		\sum_{K\in\mathcal{K}}\sum_{c=0}^{\ell-1}\abs{\inner{\Delta f}{\psi_{a_c,0}}}^2\abs{\det(U^{(-k)}_{K\backslash\{a_c\}})}^2
		& =
		\sum_{K\in\mathcal{K}'}\bigg(\sum_{j\geq \ell,j\notin K} \inner{\Delta f}{ \psi_{j,0}}^2\bigg)\abs{\det(U^{(-k)}_{K})}^2 \\
		&\leq
		\norm{\Delta f}_2^2\sum_{K\in\mathcal{K}'}\abs{\det(U^{(-k)}_{K})}^2.
	\end{align*}
where $\mathcal{K}'$ is the set of subsets of $\mathbb{N}_0$ of size $\ell-1$ whose elements are larger than or equal to $\ell$, i.e. $\mathcal{K}' = \setdef{K}{K=\{\ell\leq a_0<\cdots <a_{\ell-2}\}}$.
Since each $K\in\mathcal{K}'$ has $\ell-1$ elements, then $\{\ell,\cdots,2\ell-1\}\not\subseteq K$.
We have
\begin{align*}
	\sum_{K\in\mathcal{K}'}\abs{\det(U^{(-k)}_{K})}^2
	\leq
	\sum_{j=\ell}^{2\ell-1}\sum_{K\in\mathcal{K}_j}\abs{\det(U^{(-k)}_{K})}^2.
\end{align*}
Recall that $\mathcal{K}_j$ is the set of subsets of $\mathbb{N}_0$ of size $\ell-1$ whose elements are larger than or equal to $\ell$ and not equal to $j$, i.e. $\mathcal{K}_j = \setdef{K}{K=\{\ell\leq a_0<\cdots<a_{\ell-2}\text{ and }a_i\neq j\}}$.
By Lemma \ref{lem:det_sum_i2}, we have
\begin{align*}
	\sum_{K\in\mathcal{K}'}\abs{\det(U^{(-k)}_{K})}^2
	& \leq
	\sum_{j=\ell}^{2\ell-1}\bigg(e^{\frac{5}{4}r^2}\cdot4^\ell \cdot\bigg(\frac{1}{\sqrt{j!}}(\frac{r}{2\sqrt{2}})^j\bigg)^{-1}\bigg)^2\det(A) \\
	& =
	2^{O(\ell\log \ell)}\cdot\det(A)
\end{align*}
which means
\begin{align*}
    \sum_{K\in \mathcal{K}} \det(U^{k\to\Delta}_K)^2
    & \leq
    2^{O(\ell\log \ell)}\cdot\norm{\Delta f}^2\cdot\det(A).
\end{align*}
By plugging it into \eqref{eq:det_a_2kd_sum}
\begin{align*}
	\abs{\det(A^{(2,k)\to \Delta})}
	& \leq
	2^{O(\ell\log \ell)} \cdot \norm{\Delta f}_2\cdot\det(A).
\end{align*}

\end{proof}

\begin{lemma}\label{lem:det_sum_i1}
    For any $\ell\geq 1$, $k\in[\ell]$ and $j\geq \ell$, we have
	\begin{align*}
	    \sum_{K\in \mathcal{K}_j} \det(U_{K\cup\{k\}})^2
    \leq
    \bigg(e^{\frac{1}{4}r^2}\cdot 4^{\ell}\cdot \bigg(\frac{1}{\sqrt{j!}}(\frac{r}{2\sqrt{2}})^j\bigg)^{-1}\bigg)^2 \cdot \det(A).
	\end{align*}
	Recall that, for any $K=\{\ell \leq a_0<\cdots<a_{\ell-2}\text{ and }a_i\neq j\}\in\mathcal{K}_j$ and $k\in[\ell]$, $U_{K\cup\{k\}}$ is the $\ell$-by-$\ell$ matrix whose $(b,c)$-entry is $\inner{\psi_{c,r}}{\psi_{b,0}}$ for $b\in K\cup\{k\}$ and $c\in[\ell]$.
\end{lemma}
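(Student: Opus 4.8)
The plan is to express both sides in terms of squared minors of a single infinite matrix, compare them via Cauchy–Binet, and thereby reduce the lemma to a lower bound on $\det(A)$. Following the setup in the proof of Lemma~\ref{lem:det_t_numor}, let $W$ be the $\abs{\mathbb{N}_0}$-by-$\ell$ matrix whose column indexed by $c\in[\ell]$ has $b$-th entry $\inner{\psi_{c,r}}{\psi_{b,0}}$; orthonormality of the Hermite functions gives $W^\top W=I_\ell$, and $U_{K\cup\{k\}}$ is precisely the $\ell$-by-$\ell$ submatrix of $W$ on the row set $K\cup\{k\}$. Since for $K\in\mathcal{K}_j$ these row sets are distinct $\ell$-element subsets of $\mathbb{N}_0$, applying Cauchy–Binet (Lemma~\ref{lem:cb_formula}) to $W^\top W$ gives the crude bound
\begin{align*}
	\sum_{K\in\mathcal{K}_j}\det(U_{K\cup\{k\}})^2 \;\leq\; \sum_{|S|=\ell}\det(W_S)^2 \;=\; \det(W^\top W) \;=\; 1 .
\end{align*}

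It therefore suffices to prove $\det(A)\geq\bigl(e^{\frac14 r^2}\,4^\ell\,(\tfrac1{\sqrt{j!}}(\tfrac r{2\sqrt 2})^j)^{-1}\bigr)^{-2}$. Using \eqref{eq:hermite_inner_g} to evaluate $\inner{\psi_{0,r}}{\psi_{j,0}}^2=e^{-\frac12 r^2}\tfrac1{j!}(\tfrac{r^2}2)^j$, this target equals $16^{-\ell}4^{-j}\inner{\psi_{0,r}}{\psi_{j,0}}^2$, which is decreasing in $j$ once $j\gtrsim r^2$; so it is enough to establish $\det(A)\ge e^{-r^2/2}(r^2/128)^\ell/\ell!$ in the regime $\ell\gtrsim r^2$, while for the remaining bounded range of $\ell$ the inequality is trivial since $\det(A)$ is then a positive constant. (Positivity always holds: exactly as in the argument that $A^{-1}$ exists, $\norm{W^{\mathrm{top}}}_{\mathrm{op}}<1$ because $\spn\{\psi_{c,r}:c<\ell\}$ and $\spn\{\psi_{b,0}:b<\ell\}$ meet only at $0$, the former consisting of functions $e^{-(x-r)^2/2}\cdot\mathrm{poly}$ and the latter of $e^{-x^2/2}\cdot\mathrm{poly}$.)

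To obtain this lower bound I would use the Schur-complement identity $\det(A)=\det\bigl(I_\ell-(W^{\mathrm{top}})^\top W^{\mathrm{top}}\bigr)$, which exhibits $\det(A)$ as the Gram determinant of the projections $\Pi_{\ge\ell}\psi_{0,r},\dots,\Pi_{\ge\ell}\psi_{\ell-1,r}$ onto $\spn\{\psi_{b,0}:b\ge\ell\}$ — equivalently $\det(A)=\sum_{K\in\mathcal{K}}\det(U_K)^2$ — and estimate it from below via the explicit overlap formula \eqref{eq:hermite_inner}. Running Gram–Schmidt, the first residual norm is exactly $\norm{\Pi_{\ge\ell}\psi_{0,r}}^2=e^{-r^2/2}\sum_{b\ge\ell}(r^2/2)^b/b!\ge e^{-r^2/2}(r^2/2)^\ell/\ell!$, already of the required form $c^\ell/\ell!$, and one shows the product of the subsequent residuals is at least $2^{-O(\ell)}$, a loss absorbed into the $16^{-\ell}$ and $e^{-r^2/2}$ slack in the target (this is where $r>4$ enters); the residual determinants are estimated by factoring out the $b$- and $c$-dependent diagonal scalings $e^{-r^2/4}(r/\sqrt2)^b/\sqrt{b!}$ and $(r/\sqrt2)^c/\sqrt{c!}$ visible in \eqref{eq:hermite_inner} and bounding the remaining generalized-Vandermonde determinants.

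\textbf{Main obstacle.} The delicate part is this last step: a sharp enough \emph{lower} bound on $\det(A)$, i.e. quantitative control of how close to singular the deliberately near-degenerate matrix $A$ is. No single term of $\sum_{K\in\mathcal K}\det(U_K)^2$ suffices — each is of order $2^{-\Theta(\ell^2\log\ell)}$ while $\det(A)$ is of order $2^{-\Theta(\ell\log\ell)}$ — so the full Gram determinant must be estimated, and the exponential-in-$\ell$ constant pinned down precisely enough to beat the $16^{-\ell}$ factor; this demands the fine behaviour of the Hermite overlaps in \eqref{eq:hermite_inner}.
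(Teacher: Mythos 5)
There is a genuine gap. Your opening reduction is fine as far as it goes: $W^\top W=I_\ell$ does give $\sum_{K\in\mathcal K_j}\det(U_{K\cup\{k\}})^2\le 1$, so the lemma would follow from the lower bound $\det(A)\ge\bigl(e^{\frac14 r^2}4^\ell\sqrt{j!}(\tfrac{2\sqrt2}{r})^j\bigr)^{-2}$. But that lower bound is never proved — you explicitly flag it as the "main obstacle" and then only sketch an attack. The sketch's load-bearing claim, that after the first Gram--Schmidt residual $\norm{\Pi_{\ge\ell}\psi_{0,r}}^2\approx e^{-r^2/2}(r^2/2)^\ell/\ell!$ the product of the remaining $\ell-1$ residuals is at least $2^{-O(\ell)}$ \emph{with a constant good enough to beat the $64^{-\ell}$ slack}, is exactly as hard as the quantitative conditioning question the whole section is about; nothing in the proposal establishes it. The "bounded $\ell$" case is also not trivial: the lemma asserts a specific inequality with the explicit constants $e^{r^2/4}$ and $4^\ell$, and for small $\ell$ you would still need to verify numerically that $\det(A)$ exceeds $e^{-r^2/2}16^{-\ell}\max_{j\ge\ell}\tfrac1{j!}(r^2/8)^j$, which can be as large as $e^{-3r^2/8}16^{-\ell}$. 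So the proposal converts the lemma into an unproven (and at least equally delicate) statement.

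The paper avoids any lower bound on $\det(A)$ entirely. It compares minors \emph{term by term}: for each $K\in\mathcal K_j$ it bounds the ratio $\abs{\det(U_{K\cup\{k\}})}/\abs{\det(U_{K\cup\{j\}})}$ using the closed-form generalized-Vandermonde expression of Lemma~\ref{lem:det_part} (the ratio is $\sqrt{j!/k!}\,(r/\sqrt2)^{k-j}\prod_c\abs{a_c-k}/\abs{a_c-j}$, and Lemma~\ref{lem:ratio} controls the product by $2^j4^\ell$). Summing the squares and using that $K\cup\{j\}\in\mathcal K$ for every $K\in\mathcal K_j$ gives $\sum_K\det(U_{K\cup\{j\}})^2\le\det(A)$ directly from Cauchy--Binet. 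If you want to salvage your route, you would need to actually prove the lower bound on $\det(A)$; the term-by-term comparison is the mechanism the paper uses precisely to sidestep that.
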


\begin{proof}
	For each $K=\{\ell \leq a_0<\cdots<a_{\ell-2}\text{ and }a_i\neq j\}\in\mathcal{K}_j$, we have the following.
	By Lemma \ref{lem:det_part}, we have
	\begin{align*}
		\frac{\abs{\det\big(U_{K\cup\{k\}}\big) }}{\abs{\det\big(U_{K\cup\{j\}}\big) }} \leq \sqrt{\frac{j!}{k!}}(\frac{r}{\sqrt{2}})^{k-j}\bigg(\prod_{c=0}^{\ell-2}\frac{\abs{a_c-k}}{\abs{a_c-j}}\bigg)
	\end{align*}
	By Lemma~\ref{lem:ratio}, we have $\prod_{c=0}^{\ell-2}\frac{\abs{a_c-k}}{\abs{a_c-j}} \leq \prod_{c=0}^{\ell-2}\frac{\abs{a_c}}{\abs{a_c-j}} \leq 2^j\cdot 4^\ell$ and hence
	\begin{align*}
		\abs{\det\big(U_{K\cup\{k\}}\big) } 
		& \leq 
		4^{\ell}\cdot2^j\sqrt{\frac{j!}{k!}}(\frac{r}{\sqrt{2}})^{k-j}\abs{\det\big(U_{K\cup\{j\}}\big) } \\
		&\leq
		e^{\frac{1}{4}r^2}\cdot 4^{\ell}\cdot \bigg(\frac{1}{\sqrt{j!}}(\frac{r}{2\sqrt{2}})^j\bigg)^{-1}\abs{\det\big(U_{K\cup\{j\}}\big)} 
	\end{align*}
	since $\frac{1}{\sqrt{k!}}(\frac{r}{\sqrt{2}})^k \leq \sqrt{\sum_{i=0}^\infty\frac{1}{i!}(\frac{r^2}{2})^i} = e^{\frac{1}{4}r^2}$.
	For each $K\in\mathcal{K}_{j}$, the set $K\cup \{j\}$ is in $\mathcal{K}$ since $j\notin K$.
	Hence, we conclude that 
	\begin{align*}
		\sum_{K\in \mathcal{K}_j} \det(U_{K\cup\{k\}})^2
		& \leq
		\bigg(e^{\frac{1}{4}r^2}\cdot 4^{\ell}\cdot \bigg(\frac{1}{\sqrt{j!}}(\frac{r}{2\sqrt{2}})^j\bigg)^{-1}\bigg)^2\sum_{K\in \mathcal{K}_j} \det(U_{K\cup\{j\}})^2 \\
		& \leq
		\bigg(e^{\frac{1}{4}r^2}\cdot 4^{\ell}\cdot \bigg(\frac{1}{\sqrt{j!}}(\frac{r}{2\sqrt{2}})^j\bigg)^{-1}\bigg)^2\det(A) .
	\end{align*}

\end{proof}

\begin{lemma}\label{lem:det_sum_i2}
    For any $\ell\geq 1$, $k\in[\ell]$ and $j\geq \ell$, we have
	\begin{align*}
    	\sum_{K\in \mathcal{K}_j} \det(U^{(-k)}_{K})^2
    \leq
    \bigg(e^{\frac{5}{4}r^2}\cdot 4^{\ell}\cdot \bigg(\frac{1}{\sqrt{j!}}(\frac{r}{2\sqrt{2}})^j\bigg)^{-1}\bigg)^2 \cdot \det(A).
	\end{align*}
	Recall that, for any $K=\{\ell \leq a_0<\cdots<a_{\ell-2}\text{ and }a_i\neq j\}\in\mathcal{K}_j$ and $k\in[\ell]$, $U^{(-k)}_K$ is the $(\ell-1)$-by-$(\ell-1)$ matrix whose $(b,c)$-entry is $\inner{\psi_{c,r}}{\psi_{b,0}}$ for $b\in K$ and $c\in[\ell]\backslash\{k\}$.
\end{lemma}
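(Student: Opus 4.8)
The plan is to mirror the proof of Lemma~\ref{lem:det_sum_i1}: for each $K=\{\ell\le a_0<\cdots<a_{\ell-2}\}\in\mathcal{K}_j$ we pass to the reference matrix $U_{K\cup\{j\}}$, establish a \emph{pointwise} comparison $\abs{\det(U^{(-k)}_{K})}\le F\cdot\abs{\det(U_{K\cup\{j\}})}$ with $F\le e^{\frac54 r^2}\cdot 4^\ell\cdot\big(\frac{1}{\sqrt{j!}}(\frac{r}{2\sqrt2})^j\big)^{-1}$, and then sum. Once the pointwise bound is available, the summation step is immediate: since $j\notin K$ and every element of $K$ is $\ge\ell$, the map $K\mapsto K\cup\{j\}$ is an injection of $\mathcal{K}_j$ into $\mathcal{K}$, so
\begin{align*}
\sum_{K\in\mathcal{K}_j}\det(U^{(-k)}_{K})^2 \le F^2\sum_{K\in\mathcal{K}_j}\det(U_{K\cup\{j\}})^2 \le F^2\sum_{K'\in\mathcal{K}}\det(U_{K'})^2 = F^2\det(A),
\end{align*}
where the last equality is the Cauchy--Binet identity already used in the proof of Lemma~\ref{lem:det_t_numor:detail}.

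For the pointwise bound, the key observation is that $U^{(-k)}_{K}$ is precisely the submatrix of $U_{K\cup\{j\}}$ obtained by deleting the row indexed by $j$ and the column indexed by $k$, so $\det(U^{(-k)}_{K})$ is, up to sign, the corresponding cofactor. I would compute both determinants through the bilinear expansion \eqref{eq:hermite_inner}, which factors $\inner{\psi_{b,0}}{\psi_{c,r}}=\sum_m\phi_{b,m}\tilde\phi_{c,m}$ with $\phi_{b,m}$ vanishing for $m>b$ and $\tilde\phi_{c,m}$ vanishing for $m>c$. Writing $U^{(-k)}_{K}=\Phi_K\big(\tilde\Phi^{(-k)}\big)^\top$, where $\Phi_K$ has rows $(\phi_{b,m})_m$ for $b\in K$ and $\tilde\Phi^{(-k)}$ has rows $(\tilde\phi_{c,m})_m$ for $c\in[\ell]\backslash\{k\}$, and applying Cauchy--Binet (Lemma~\ref{lem:cb_formula}) over the index $m$, the triangularity of $\tilde\Phi^{(-k)}$ forces the only surviving column-index sets to be of the form $[\ell]\backslash\{p\}$ with $k\le p\le\ell-1$; hence $\det(U^{(-k)}_{K})$ is a sum of at most $\ell$ explicit terms. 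Each term is a product of two generalized Vandermonde determinants; evaluating these by the Schur-polynomial formula expresses the term as (reciprocal Hermite inner products) $\times$ (a Vandermonde product in the $a_c$) $\times$ (an elementary symmetric factor $e_{\ell-1-p}$) $\times$ (bounded combinatorial and Gaussian prefactors). Dividing by the analogous single-term factorization of $\det(U_{K\cup\{j\}})$, the Vandermonde product in $K$ cancels, leaving a ratio of exactly the shape $\sqrt{j!}\,(\frac{r}{\sqrt2})^{-j}\cdot\prod_{c=0}^{\ell-2}\frac{\abs{a_c-k}}{\abs{a_c-j}}$ up to the extra factors just described — this is the content of Lemma~\ref{lem:det_part} in the form applicable to the minor $U^{(-k)}_{K}$.

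Finally I would bound this ratio. Lemma~\ref{lem:ratio} gives $\prod_{c=0}^{\ell-2}\frac{\abs{a_c-k}}{\abs{a_c-j}}\le\prod_{c=0}^{\ell-2}\frac{\abs{a_c}}{\abs{a_c-j}}\le 2^j\cdot 4^\ell$; the reciprocal Hermite inner product contributes $\sqrt{j!}\,(\frac{r}{\sqrt2})^{-j}$, and the inequality $\frac{1}{\sqrt{c!}}(\frac{r}{\sqrt2})^c\le\sqrt{\sum_{i\ge0}\frac{1}{i!}(\frac{r^2}{2})^i}=e^{\frac14 r^2}$, now applied on the two index sets $[\ell]\backslash\{k\}$ rather than on a single row, absorbs the remaining Gaussian prefactors into $e^{\frac54 r^2}$, while the factor $e_{\ell-1-p}$ and the at most $\ell$ choices of $p$ are swallowed by the $4^\ell$. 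Assembling, $F\le e^{\frac54 r^2}\cdot 4^\ell\cdot\sqrt{j!}\,(\frac{2\sqrt2}{r})^j = e^{\frac54 r^2}\cdot 4^\ell\cdot\big(\frac{1}{\sqrt{j!}}(\frac{r}{2\sqrt2})^j\big)^{-1}$, and squaring and summing as above gives the claim. The main obstacle is the pointwise bound: because $U^{(-k)}_{K}$ and $U_{K\cup\{j\}}$ live on different column sets — the former is missing one column — the clean single-Vandermonde cancellation of Lemma~\ref{lem:det_sum_i1} breaks, and one must track the Schur-polynomial correction terms and the extra Gaussian prefactors carefully; this is exactly why the constant degrades from $e^{r^2/4}$ to $e^{5r^2/4}$.
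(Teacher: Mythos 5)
Your argument is the paper's own: the same pointwise comparison $\abs{\det(U^{(-k)}_{K})}\le F\,\abs{\det(U_{K\cup\{j\}})}$ followed by the injection $K\mapsto K\cup\{j\}$ into $\mathcal{K}$ and the Cauchy--Binet identity $\sum_{K'\in\mathcal{K}}\det(U_{K'})^2=\det(A)$, with the pointwise bound obtained exactly as in the paper's Lemmas~\ref{lem:det_part_2}, \ref{lem:det_w} and \ref{lem:gamma} (Cauchy--Binet over the middle Hermite index, at most $\ell$ surviving column sets, generalized Vandermonde evaluations with Schur-type corrections, and Lemma~\ref{lem:ratio} plus $\frac{1}{\sqrt{k!}}(\sqrt2 r)^k\le e^{r^2}$ to reach $e^{\frac54 r^2}\cdot 4^\ell$). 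The one bookkeeping slip is your claim that the elementary-symmetric factor $e_{\ell-1-p}$ is ``swallowed by the $4^\ell$'': this factor grows without bound in the $a_c$ and instead must supply the numerator $\prod_c a_c$ that is paired with the $\prod_c\abs{a_c-j}^{-1}$ inside Lemma~\ref{lem:ratio} --- precisely the role of the bound $\Gamma_{K,b}\le\frac{1}{b!}\prod_c a_c$ in the paper --- after which your assembly of the constants goes through.
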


\begin{proof}
	For each $K=\{\ell \leq a_0<\cdots<a_{\ell-2}\text{ and }a_i\neq j\}\in\mathcal{K}_j$, we have the following.
	By Lemma \ref{lem:det_part} and \ref{lem:det_part_2}, we have
	\begin{align*}
		\frac{\abs{\det(U^{(-k)}_{K})}}{\abs{\det(U_{K\cup\{j\}}) }}
		& \leq
		e^{\frac{1}{4}r^2}\cdot 2^k \cdot \sqrt{\frac{j!}{k!}}(\frac{r}{\sqrt{2}})^{k-j}\bigg(\prod_{c=0}^{\ell-2}\frac{\abs{a_c}}{\abs{a_c-j}}\bigg)
	\end{align*}
	and, by Lemma \ref{lem:ratio}, we have
	\begin{align*}
		\frac{\abs{\det(U^{(-k)}_{K})}}{\abs{\det(U_{K\cup\{j\}}) }}
		& \leq
		2^j \cdot 4^{\ell} \cdot e^{\frac{1}{4}r^2}\cdot 2^k \cdot \sqrt{\frac{j!}{k!}}(\frac{r}{\sqrt{2}})^{k-j} \\
		& \leq
		e^{\frac{5}{4}r^2}\cdot4^\ell \cdot\bigg(\frac{1}{\sqrt{j!}}(\frac{r}{2\sqrt{2}})^j\bigg)^{-1}
	\end{align*}
	since $\frac{1}{\sqrt{k!}}(\sqrt{2}r)^k \leq \sqrt{\sum_{i=0}^\infty \frac{1}{i!}(2r^2)^i} \leq e^{r^2}$.
	For each $K\in\mathcal{K}_j$, the set  $K\cup\{j\}$ is  in $\mathcal{K}$ since $j\notin K$. 
	Hence, 
	\begin{align*}
		\sum_{K\in \mathcal{K}_j} \det(U^{(-k)}_{K})^2
		& \leq
		\bigg(e^{\frac{5}{4}r^2}\cdot4^\ell \cdot\bigg(\frac{1}{\sqrt{j!}}(\frac{r}{2\sqrt{2}})^j\bigg)^{-1}\bigg)^2\sum_{K\in \mathcal{K}_j} \det(U_{K\cup\{j\}})^2 \\
		& \leq
		\bigg(e^{\frac{5}{4}r^2}\cdot4^\ell \cdot\bigg(\frac{1}{\sqrt{j!}}(\frac{r}{2\sqrt{2}})^j\bigg)^{-1}\bigg)^2\det(A) .
	\end{align*}
\end{proof}

Before we show the lemmas below, we first define the following notations to simplify the expressions in our proof.
For any $r\in \mathbb{R}$, let $W_r$ be the double-indexed sequence such that
\begin{align*}
    (W_r)_{s,t} = e^{-\frac{1}{8}r^2}(-1)^s\sqrt{\frac{t!}{s!}}{s \choose t}(\frac{r}{\sqrt{2}})^{s-t}
\end{align*}
for $s,t \in \mathbb{N}_0$.
Here, ${s \choose t}$ is the binomial coefficient which is equal to $\begin{cases}
    0 & \text{if $s<t$}\\
    \frac{s!}{t!(s-t)!} & \text{if $s\geq t$}
\end{cases}$.
For any subsets $S,T\subset \mathbb{N}_0$ of same sizes, let $(W_r)_{S,T}$ be the matrix whose $(s,t)$-entry is $(W_r)_{s,t}$ for $s\in S$ and $t\in T$.
For any (ordered) set $K$ of $m$ nonnegative integers $a_0,\cdots,a_{m-1}$ such that $0\leq a_0<\cdots<a_{m-1}$, we define 
\begin{align*}
	\Sigma_K = \sum_{c=0}^{m-1}a_c,\qquad F_K = \prod_{c=0}^{m-1} a_c! ,\qquad C_K = \prod_{0\leq c_2<c_1\leq m-1} (a_{c_1}-a_{c_2}).
\end{align*}
Finally, for any $b\in\mathbb{N}_{0}$, we define
\begin{align*}
	\Gamma_{K,b} = \frac{1}{b!}\sum_{d=0}^b (-1)^d{b \choose d}\prod_{c=0}^{m-1}(a_c-d).
\end{align*}

\begin{lemma}\label{lem:det_part}
	Let $a_0,\cdots,a_{\ell-1}$ be $\ell$ nonnegative integers and $K$ be the set $\{0\leq a_0<\cdots<a_{\ell-1}\}$.
	Then, we have
	\begin{align*}
		\abs{\det(U_K)} = e^{-\frac{\ell}{4}r^2}\sqrt{\frac{1}{F_{[\ell]}F_K}}(\frac{r}{\sqrt{2}})^{\Sigma_K-\ell(\ell-1)/2}C_{K}.
	\end{align*}
	Recall that, for any $K=\{0\leq a_0<\cdots<a_{\ell-1}\}$, $U_{K}$ is the $\ell$-by-$\ell$ matrix whose $(b,c)$-entry is $\inner{\psi_{c,r}}{\psi_{b,0}}$ for $b\in K$ and $c\in[\ell]$.
\end{lemma}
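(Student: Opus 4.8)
The plan is to realize $U_K$ as a product of two $\ell\times\ell$ matrices built from the triangular arrays $W_r$ and $W_{-r}$, evaluate each factor's determinant separately, and multiply. The starting point is to rewrite the entries of $U_K$. By symmetry of the inner product and formula \eqref{eq:hermite_inner} applied with $i=b$, $j=c$, $\mu=r$,
\[
	\inner{\psi_{c,r}}{\psi_{b,0}} = \sum_{k=0}^{\infty}(W_r)_{b,k}\,(W_{-r})_{c,k},
\]
where each factor matches the definition of $W_{\pm r}$ after noting $(-r)^2=r^2$. The sum is in fact finite: $(W_r)_{b,k}=0$ for $k>b$ and $(W_{-r})_{c,k}=0$ for $k>c$, and here $c\le \ell-1$, so only $k\in[\ell]$ contributes. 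Writing $B:=(W_r)_{K,[\ell]}$ (rows indexed by $K$, columns by $[\ell]$) and letting $C$ be the $\ell\times\ell$ matrix whose $(k,c)$-entry is $(W_{-r})_{c,k}$ — that is, $C = \big((W_{-r})_{[\ell],[\ell]}\big)^{\!\top}$ — one reads off $U_K = BC$, hence $\det(U_K) = \det(B)\cdot\det\big((W_{-r})_{[\ell],[\ell]}\big)$.

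Next, the plan is to evaluate the two determinants. The matrix $(W_{-r})_{[\ell],[\ell]}$ is lower triangular, since its $(s,t)$-entry carries the binomial coefficient $\binom{s}{t}$, which vanishes for $s<t$; thus $\det\big((W_{-r})_{[\ell],[\ell]}\big) = \prod_{s=0}^{\ell-1}(W_{-r})_{s,s} = \prod_{s=0}^{\ell-1}e^{-r^2/8}(-1)^s$, whose absolute value is $e^{-\ell r^2/8}$. For $\det(B)$, I would factor out of row $i$ the quantity $e^{-r^2/8}(-1)^{a_i}(a_i!)^{-1/2}(r/\sqrt2)^{a_i}$ and out of column $j$ the quantity $\sqrt{j!}\,(r/\sqrt2)^{-j}$, leaving the matrix $\big[\binom{a_i}{j}\big]_{i,j\in[\ell]}$. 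Since $\binom{x}{j}=\frac{1}{j!}x(x-1)\cdots(x-j+1)$ is a polynomial in $x$ of degree $j$ with leading coefficient $1/j!$, column operations reduce $\big[\binom{a_i}{j}\big]$ to $\big(\prod_{j=0}^{\ell-1}\tfrac1{j!}\big)$ times the Vandermonde matrix $[a_i^{\,j}]$, so $\det\big[\binom{a_i}{j}\big] = \big(\prod_{j=0}^{\ell-1}\tfrac1{j!}\big)\prod_{0\le c_2<c_1\le \ell-1}(a_{c_1}-a_{c_2}) = C_K/F_{[\ell]}$. Reassembling the row and column factors (using $\prod_i a_i! = F_K$, $\sum_i a_i = \Sigma_K$, $\sum_{j} j = \ell(\ell-1)/2$, $\prod_j j! = F_{[\ell]}$, and $C_K>0$ since $K$ is increasing) gives
\[
	|\det(B)| = e^{-\ell r^2/8}\,\frac{1}{\sqrt{F_K F_{[\ell]}}}\,\Big(\tfrac{r}{\sqrt2}\Big)^{\Sigma_K-\ell(\ell-1)/2}\,C_K .
\]

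Multiplying $|\det(B)|$ by $\big|\det\big((W_{-r})_{[\ell],[\ell]}\big)\big| = e^{-\ell r^2/8}$ then yields the claimed identity. The only step that needs genuine care is the first one, namely reading \eqref{eq:hermite_inner} as a matrix product and verifying that the a priori infinite summation collapses to $k\in[\ell]$; the rest is routine bookkeeping with the determinant of a generalized Vandermonde matrix and with the triangularity of $W_{-r}$.
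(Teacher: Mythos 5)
Your proof is correct and follows essentially the same route as the paper: the same factorization $U_K = (W_r)_{K,[\ell]}\big((W_{-r})_{[\ell],[\ell]}\big)^{\top}$ read off from \eqref{eq:hermite_inner}, followed by evaluating each factor's determinant. The only difference is that you evaluate the two factors directly (lower-triangularity of $(W_{-r})_{[\ell],[\ell]}$, and the binomial-to-Vandermonde reduction for $(W_r)_{K,[\ell]}$) rather than invoking the paper's general Lemma~\ref{lem:det_w}, which amounts to re-deriving its $b=\ell$ special case.
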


\begin{proof}
Recall that, from \eqref{eq:hermite_inner}, 
\begin{align*}
    \inner{ \psi_{i,0}}{ \psi_{j,r}} 
	& =
	\sum_{k=0}^{\min\{i,j\}}\bigg(e^{-\frac{1}{8}{r}^2}(-1)^{i}\sqrt{\frac{k!}{i!}}{i \choose k} (\frac{r}{\sqrt{2}})^{i-k}\bigg)\bigg(e^{-\frac{1}{8}{r}^2}(-1)^{j}\sqrt{\frac{k!}{j!}}{j\choose k} (\frac{-{r}}{\sqrt{2}})^{j-k}\bigg).
\end{align*}
It means that $U_K$ can be decomposed as 
	\begin{align*}
		U_K = (W_r)_{K,[\ell]}(W_{-r} )_{[\ell],[\ell]}^\top.
	\end{align*}
By Lemma \ref{lem:det_w}, the determinant of $(W_{-r}^\top )_{[\ell],[\ell]}$ is 
\begin{align*}
	e^{-\frac{\ell}{8}r^2}(-1)^{\Sigma_{[\ell]}}\sqrt{\frac{1}{F_{[\ell]}F_{[\ell]}}}(-\frac{r}{\sqrt{2}})^{\Sigma_{[\ell]}-\ell(\ell-1)/2}C_{[\ell]} = e^{-\frac{\ell}{8}r^2}(-1)^{\ell(\ell-1)/2}
\end{align*}
since $\Sigma_{[\ell]}=\ell(\ell-1)/2$ and $C_{[\ell]} = F_{[\ell]}$.
Also, the determinant of $(W_r)_{K,[\ell]}$ is 
\begin{align*}
	e^{-\frac{\ell}{8}r^2}(-1)^{\Sigma_{K}}\sqrt{\frac{1}{F_{[\ell]}F_K}}(\frac{r}{\sqrt{2}})^{\Sigma_K-\ell(\ell-1)/2}C_{K}.
\end{align*}
Hence, we have
\begin{align*}
	\det(U_K)
	& =
	e^{-\frac{\ell}{8}r^2}(-1)^{\ell(\ell-1)/2} \cdot e^{-\frac{\ell}{8}r^2}(-1)^{\Sigma_{K}}\sqrt{\frac{1}{F_{[\ell]}F_K}}(\frac{r}{\sqrt{2}})^{\Sigma_K-\ell(\ell-1)/2}C_{K} \\
	& =
	e^{-\frac{\ell}{4}r^2}(-1)^{\Sigma_{K}+\ell(\ell-1)/2}\sqrt{\frac{1}{F_{[\ell]}F_K}}(\frac{r}{\sqrt{2}})^{\Sigma_K-\ell(\ell-1)/2}C_{K}.
\end{align*}
\end{proof}

\begin{lemma}\label{lem:det_part_2}
	Let $a_0,\cdots,a_{\ell-2}$ be $\ell-1$ nonnegative integers and $K$ be the set $\{0\leq a_0<\cdots<a_{\ell-2}\}$.
	Then, for any $k\in[\ell]$, we have
	\begin{align*}
		\abs{\det(U^{(-k)}_{K})} \leq e^{-\frac{\ell-1}{4}r^2}2^{k}\sqrt{\frac{1}{k!F_{[\ell]}F_K}}(\frac{r}{\sqrt{2}})^{\Sigma_K-\ell(\ell-1)/2+k}C_{K} \prod_{c=0}^{\ell-2}a_c.
	\end{align*}
	Recall that, for any $K=\{0\leq a_0<\cdots<a_{\ell-2}\}\in\mathcal{K}$ and $k\in[\ell]$, $U^{(-k)}_K$ is the $(\ell-1)$-by-$(\ell-1)$ matrix whose $(b,c)$-entry is $\inner{\psi_{c,r}}{\psi_{b,0}}$ for $b\in K$ and $c\in[\ell]\backslash\{k\}$.
\end{lemma}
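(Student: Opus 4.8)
The plan is to evaluate $\det(U_K^{(-k)})$ by a Cauchy--Binet expansion in the ``physicist coordinate'' matrices $W_{\pm r}$ and then to bound the resulting alternating sum using the finite-difference quantities $\Gamma_{K,b}$. First I would record the factorization $U_K^{(-k)}=(W_r)_{K,[\ell]}\cdot\big[(W_{-r})_{[\ell]\setminus\{k\},[\ell]}\big]^{\top}$, which is immediate from \eqref{eq:hermite_inner} once one notes that $(W_{-r})_{c,t}=0$ for $t>c$, so the inner summation truncates at $t\le\ell-1$ even though the row index $K$ may be large. Applying the Cauchy--Binet formula (Lemma~\ref{lem:cb_formula}) to this product of an $(\ell-1)\times\ell$ and an $\ell\times(\ell-1)$ matrix gives
\begin{align*}
    \det(U_K^{(-k)})=\sum_{t=0}^{\ell-1}\det\big((W_r)_{K,[\ell]\setminus\{t\}}\big)\cdot\det\big((W_{-r})_{[\ell]\setminus\{k\},[\ell]\setminus\{t\}}\big),
\end{align*}
and I would observe that the $t<k$ terms drop out: the second minor equals (up to an explicit prefactor) $\Gamma_{[\ell]\setminus\{k\},t}$, whose defining product $\prod_{i\in[\ell]\setminus\{k\}}(i-d)$ contains the vanishing factor $(d-d)$ for every $d\le t<k$.

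Next, for each surviving index $t\ge k$ I would evaluate both minors using the general-column-set form of the $W$-minor identity (Lemma~\ref{lem:det_w}): each is $\pm e^{-\frac{\ell-1}{8}r^2}$ times an explicit product of factorials, a power of $\tfrac{r}{\sqrt2}$, a Vandermonde factor $C_{\bullet}$, and the quantity $\Gamma_{\bullet,t}$ attached to its row set (one checks on small cases that $\det((W_r)_{K,[\ell]\setminus\{t\}})=\pm e^{-\frac{\ell-1}{8}r^2}\sqrt{\tfrac{t!}{F_{[\ell]}F_K}}(\tfrac{r}{\sqrt2})^{\Sigma_K-\ell(\ell-1)/2+t}C_K\,\Gamma_{K,t}$, and similarly for the $W_{-r}$ minor with row set $[\ell]\setminus\{k\}$). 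Multiplying the two contributions and simplifying with $C_{[\ell]}=F_{[\ell]}$ and $C_{[\ell]\setminus\{k\}}=F_{[\ell]}/\big(k!\,(\ell-1-k)!\big)$ collapses everything to
\begin{align*}
    \det(U_K^{(-k)})=e^{-\frac{\ell-1}{4}r^2}\,\frac{C_K}{\sqrt{k!\,F_{[\ell]}F_K}}\cdot\frac{1}{k!\,(\ell-1-k)!}\,\Big(\tfrac{r}{\sqrt2}\Big)^{\Sigma_K-\ell(\ell-1)/2+k}\sum_{t=k}^{\ell-1}(\pm)\,t!\,\Big(\tfrac{r^2}{2}\Big)^{t-k}\Gamma_{K,t}\,\Gamma_{[\ell]\setminus\{k\},t}.
\end{align*}
The exponent of $\tfrac{r}{\sqrt2}$ already matches the statement (it comes from the $t=k$ term), and the $t>k$ terms carry the extra factor $(r^2/2)^{t-k}$.

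It remains to bound the alternating sum. The $t=k$ term is dominant: using the exact value $\Gamma_{[\ell]\setminus\{k\},k}=(\ell-1-k)!$ and the estimate $|\Gamma_{K,k}|\le\tfrac{2^{k}}{k!}\prod_{c=0}^{\ell-2}a_c$ — which follows from $0<a_c-d\le a_c$ for every $0\le d\le k<\ell\le a_c$ — this term alone produces exactly the claimed bound $e^{-\frac{\ell-1}{4}r^2}2^k\sqrt{\tfrac1{k!F_{[\ell]}F_K}}(\tfrac r{\sqrt2})^{\Sigma_K-\ell(\ell-1)/2+k}C_K\prod_c a_c$ after the prefactor bookkeeping. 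The remaining terms $t>k$ are shown to be of lower order using the analogous bound $|\Gamma_{K,t}|\le\tfrac{2^t}{t!}\prod_c a_c$, an elementary-symmetric bound on $|\Gamma_{[\ell]\setminus\{k\},t}|$, and the decay of $(\ell-1-k)!/\big(t!(\ell-1-t)!\big)$ against $(r^2/2)^{t-k}$. This last step is where one genuinely uses that $\ell$ is large relative to the fixed constant $r$ (equivalently $a_c\ge\ell\gtrsim r^2$), which is the regime in which the lemma is invoked ($\ell=\Theta(\log\tfrac1\eps)$, $r$ constant); I would state this hypothesis explicitly.

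I expect the main obstacle to be precisely this tail estimate: a naïve term-by-term bound on $\sum_{t\ge k}t!(r^2/2)^{t-k}\Gamma_{K,t}\Gamma_{[\ell]\setminus\{k\},t}$ overshoots by a factor growing like $(r^2)^{\ell}$, so one must exploit the finite-difference structure of $\Gamma_{K,t}$ (each $\Gamma_{K,t}$ is $\tfrac1{t!}$ times a $t$-th difference of the slowly varying polynomial $\prod_c(a_c-\cdot)$, hence genuinely smaller than the crude $\tfrac{2^t}{t!}\prod a_c$ bound) and use $\ell\gtrsim r^2$ to arrive at a total of at most $2^k\prod_c a_c$ with no stray $e^{cr^2}$ factor. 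A secondary obstacle is establishing the general (non-initial-segment) column-set version of the $W$-minor identity in terms of the $\Gamma_{K,b}$, which underlies the entire computation; this is best done by recognizing the minor as a generalized Vandermonde determinant and identifying the Schur-type factor with $C_K\Gamma_{K,t}$.
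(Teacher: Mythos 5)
Your overall route is the paper's: factor $U^{(-k)}_K$ through the triangular arrays $W_{\pm r}$, expand by Cauchy--Binet over the dropped inner index, evaluate each minor with the generalized-Vandermonde identity (this is exactly Lemma~\ref{lem:det_w}, so your ``secondary obstacle'' is already supplied by the paper), and control the $\Gamma$ factors via Lemma~\ref{lem:gamma}. Your bookkeeping of which Cauchy--Binet terms survive --- the second minor is $\det\big((W_{-r})_{[\ell]\setminus\{k\},[\ell]\setminus\{t\}}\big)$, which vanishes unless $t\ge k$ --- is the correct reading of the product structure; the paper's write-up instead evaluates $\det\big((W_{-r})_{[\ell]\setminus\{b\},[\ell]\setminus\{k\}}\big)$ and keeps the terms $b\le k$, which is why its surviving terms all carry the same power of $r$ while yours carry the extra $(r^2/2)^{t-k}$.

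The genuine gap is that you do not close the tail estimate, and the reason you give for its difficulty is wrong. Writing out the $t$-th surviving term with the exact values $\Gamma_{[\ell]\setminus\{k\},t}=\frac{(\ell-1-k)!}{(t-k)!}$ and $C_{[\ell]\setminus\{k\}}=\frac{F_{[\ell]}}{k!\,(\ell-1-k)!}$, the prefactors collapse to $e^{-\frac{\ell-1}{4}r^2}\sqrt{\frac{1}{k!F_{[\ell]}F_K}}(\frac{r}{\sqrt2})^{\Sigma_K-\ell(\ell-1)/2+k}C_K\cdot\frac{t!}{(t-k)!}(\frac{r^2}{2})^{t-k}\abs{\Gamma_{K,t}}$; inserting Lemma~\ref{lem:gamma}'s unconditional bound $\abs{\Gamma_{K,t}}\le\frac{1}{t!}\prod_c a_c$ (sharper than your $\frac{2^t}{t!}$ version, and not requiring $a_c\ge t$) leaves a tail factor $\sum_{t\ge k}\frac{1}{(t-k)!}(\frac{r^2}{2})^{t-k}\le e^{r^2/2}$. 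So the na\"ive term-by-term bound does not overshoot by $(r^2)^{\ell}$, no hypothesis $\ell\gtrsim r^2$ is needed, and adding one would be an unnecessary weakening of the lemma. The only real discrepancy is the constant: this route yields $e^{r^2/2}$ where the statement has $2^k$; the two are incomparable in general, but the difference is harmless where the lemma is consumed (Lemma~\ref{lem:det_sum_i2} absorbs it into a factor $e^{O(r^2)}4^\ell$). You should carry this computation out and state the bound with the $e^{O(r^2)}$ constant rather than leaving the tail as an open obstacle.
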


\begin{proof}
Recall that, from \eqref{eq:hermite_inner}, 
\begin{align*}
    \inner{ \psi_{i,0}}{ \psi_{j,r}} 
	& =
	\sum_{k=0}^{\min\{i,j\}}\bigg(e^{-\frac{1}{8}{r}^2}(-1)^{i}\sqrt{\frac{k!}{i!}}{i \choose k} (\frac{r}{\sqrt{2}})^{i-k}\bigg)\bigg(e^{-\frac{1}{8}{r}^2}(-1)^{j}\sqrt{\frac{k!}{j!}}{j\choose k} (\frac{-{r}}{\sqrt{2}})^{j-k}\bigg).
\end{align*}
It means that  $U^{(-k)}_{K}$ can be decomposed as 
	\begin{align*}
		U^{(-k)}_{K} =  (W_r)_{K,[\ell]}(W_{-r} )_{[\ell],[\ell]\backslash\{k\}}^\top
	\end{align*}
	By Cauchy–Binet formula (Lemma \ref{lem:cb_formula}), we have
	\begin{align*}
		\det\big(U^{(-k)}_{K}\big)  = \sum_{b=0}^{\ell-1} (\det\big((W_r)_{K,[\ell]\backslash\{b\}}\big) )(\det\big((W_{-r} )^\top_{[\ell]\backslash\{b\},[\ell]\backslash\{k\}}\big) )\numberthis \label{eq:cbformula_2}
	\end{align*}
	By Lemma~\ref{lem:det_w}, the determinant of $(W_{-r}^\top )_{[\ell]\backslash\{b\},[\ell]\backslash\{k\}}$ is 
	\begin{align*}
		\MoveEqLeft e^{-\frac{\ell-1}{8}r^2}(-1)^{\ell(\ell-1)/2-b}\sqrt{\frac{k!}{F_{[\ell]}F_{[\ell]\backslash\{b\}}}}(-\frac{r}{\sqrt{2}})^{\ell(\ell-1)/2-b-\ell(\ell-1)/2+k}C_{[\ell]\backslash\{b\}}\cdot \Gamma_{[\ell]\backslash\{b\},k} \\
		& =
		\begin{cases}
			0 &\text{if $k<b$}\\
			e^{-\frac{\ell-1}{8}r^2}(-1)^{\ell(\ell-1)/2+k}\frac{\sqrt{k!b!}}{(k-b)!b!}(\frac{r}{\sqrt{2}})^{k-b} & \text{if $k\geq b$}
		\end{cases}
	\end{align*}
	since $C_{[\ell]\backslash\{b\}}=\frac{F_{[\ell]}}{(\ell-1-b)!b!}$ and $\Gamma_{[\ell]\backslash\{b\},k}=\begin{cases}
		0 &\text{if $k<b$}\\
		\frac{(\ell-1-b)!}{(k-b)!} & \text{if $k\geq b$}
	\end{cases}$.
	Also, the determinant of $(W_r)_{K,[\ell]\backslash\{b\}}$ is
	\begin{align*}
		e^{-\frac{\ell-1}{8}r^2}(-1)^{\Sigma_{K}}\sqrt{\frac{b!}{F_{[\ell]}F_K}}(\frac{r}{\sqrt{2}})^{\Sigma_K-\ell(\ell-1)/2+b}C_{K}\cdot \Gamma_{K,b}.
	\end{align*}
	Hence, when $b\leq k$, we have
	\begin{align*}
		\MoveEqLeft \abs{(\det\big((W_r)_{K,[\ell]\backslash\{b\}}\big) )(\det\big((W_{-r}^\top )_{[\ell]\backslash\{b\},[\ell]\backslash\{k\}})\big) } \\
		& =
		e^{-\frac{\ell-1}{4}r^2}\frac{1}{(k-b)!}\sqrt{\frac{k!}{F_{[\ell]}F_K}}(\frac{r}{\sqrt{2}})^{\Sigma_K-\ell(\ell-1)/2+k}C_{K}\cdot \Gamma_{K,b} \\
		& \leq
		e^{-\frac{\ell-1}{4}r^2}\frac{1}{(k-b)!b!}\sqrt{\frac{k!}{F_{[\ell]}F_K}}(\frac{r}{\sqrt{2}})^{\Sigma_K-\ell(\ell-1)/2+k}C_{K} \prod_{c=0}^{\ell-2}a_c.
	\end{align*}
	The last inequality is due to Lemma \ref{lem:gamma}.
	By plugging it into~\eqref{eq:cbformula_2}, we have
	\begin{align*}
		\abs{\det\big(U^{(-k)}_{K}\big)} 
		& \leq
		\sum_{b=0}^{k} \abs{\det\big((W_r)_{K,[\ell]\backslash\{b\}}\big) \det\big((W_{-r}^\top )_{[\ell]\backslash\{b\},[\ell]\backslash\{k\}}\big) } \\
		& \leq
		\sum_{b=0}^{k} e^{-\frac{\ell-1}{4}r^2}\frac{1}{(k-b)!b!}\sqrt{\frac{k!}{F_{[\ell]}F_K}}(\frac{r}{\sqrt{2}})^{\Sigma_K-\ell(\ell-1)/2+k}C_{K} \prod_{c=0}^{\ell-2}a_c.
	\end{align*}
	Since $
	\sum_{b=0}^k \frac{1}{(k-b)!b!} 
	=
	\frac{2^{k}}{k!}$, we have
	\begin{align*}
		\abs{\det\big(U^{(-k)}_{K}\big)} 
		& \leq
		e^{-\frac{\ell-1}{4}r^2}2^{k}\sqrt{\frac{1}{k!F_{[\ell]}F_K}}(\frac{r}{\sqrt{2}})^{\Sigma_K-\ell(\ell-1)/2+k}C_{K} \prod_{c=0}^{\ell-2}a_c.
	\end{align*}
\end{proof}

\begin{lemma}\label{lem:ratio}
	Let $a_0,\cdots,a_{\ell-1}$ be $\ell$ nonnegative integers such that $0\leq a_0<\cdots<a_{\ell-1}$ and $a_c\neq j$ for some $j\in\mathbb{N}_{0}$.
	Then, we have
	\begin{align*}
		\prod_{c=0}^{\ell-1}\abs{\frac{a_c}{a_c-j}} \leq 2^j\cdot 4^\ell.
	\end{align*}
\end{lemma}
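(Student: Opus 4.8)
The plan is to reduce the stated inequality, through two elementary extremal estimates, to a single bound between binomial coefficients, and then dispatch that bound by a generating-function argument. First I would clear away the degenerate cases. If $j=0$, then each $a_c\neq 0$, so every factor $|a_c/(a_c-j)|$ equals $1$ and the product is $1\le 2^0\cdot 4^\ell$. If $j\ge 1$ and $a_0=0$ (the only way $0$ can occur among the increasing nonnegative integers $a_0<\cdots<a_{\ell-1}$), then the factor $|a_0/(a_0-j)|$ is $0$ and the product vanishes. Hence I may assume $j\ge 1$ and $a_c\ge 1$ for every $c$.

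Next I split the $a_c$ according to which side of $j$ they lie on: let $S_<=\{a_c:a_c<j\}$ and $S_>=\{a_c:a_c>j\}$, with $|S_<|=p$ and $|S_>|=q$, so $p+q=\ell$ and $p\le j-1$ since $S_<\subseteq\{1,\dots,j-1\}$. For the left part, $\prod_{a\in S_<}a$ is a product of $p$ distinct elements of $\{1,\dots,j-1\}$, hence at most $(j-1)(j-2)\cdots(j-p)$, while $\prod_{a\in S_<}(j-a)$ is a product of $p$ distinct positive integers, hence at least $p!$; therefore $\prod_{a\in S_<}\frac{a}{j-a}\le\binom{j-1}{p}$. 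For the right part, writing $b=a-j$, the $b$'s are $q$ distinct positive integers and $\frac{a}{a-j}=1+\frac{j}{b}$ is decreasing in $b$, so the product is largest when $b\in\{1,\dots,q\}$, giving $\prod_{a\in S_>}\frac{a}{a-j}\le\prod_{b=1}^{q}\frac{b+j}{b}=\binom{j+q}{q}$. Multiplying, the quantity to be bounded is at most $\binom{j-1}{p}\binom{j+q}{q}$.

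It remains to show $\binom{j-1}{p}\binom{j+q}{q}\le 2^j 4^\ell$ whenever $p+q=\ell$ and $j\ge 1$. Here I would use the two generating-function inequalities $\binom{j-1}{p}\le(1+y)^{j-1}y^{-p}$ (from $(1+y)^{j-1}=\sum_i\binom{j-1}{i}y^i$) and $\binom{j+q}{q}\le(1-x)^{-(j+1)}x^{-q}$ (from the negative binomial series $(1-x)^{-(j+1)}=\sum_i\binom{j+i}{i}x^i$), valid for $y>0$ and $0<x<1$. Taking $y=x=\tfrac14$ makes $y^{-p}x^{-q}$ equal to $4^{p+q}=4^\ell$ exactly, and the remaining prefactor is $(5/4)^{j-1}(4/3)^{j+1}=\tfrac{16}{15}(5/3)^j$, which is $\le 2^j$ for every $j\ge 1$ (equivalently $\tfrac{16}{15}\le(6/5)^j$, which holds since $(6/5)^j\ge 6/5>16/15$). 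This gives $\binom{j-1}{p}\binom{j+q}{q}\le 2^j 4^\ell$ and completes the argument.

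The only points needing a little care are the two extremal estimates in the splitting step (a product of distinct integers is extremized by taking the smallest, resp.\ largest, available values) and the calibration $y=x=\tfrac14$, chosen precisely so that the ``error'' factor collapses to $4^\ell$; I do not anticipate a genuine obstacle, since the bound carries comfortable slack. As an alternative to the generating-function step, one could first apply Vandermonde's identity, $\binom{j-1}{p}\binom{j+q}{q}\le\binom{2j+q-1}{\ell}\le\binom{2j+\ell}{\ell}$, and then invoke the entropy bound $\binom{n}{k}\le 2^{nH(k/n)}$ with $H(x)=-x\log_2 x-(1-x)\log_2(1-x)$, which reduces everything to verifying the one-variable inequality $H(x)\le\tfrac12+\tfrac32 x$ on $[0,1]$.
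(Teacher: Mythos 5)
Your proposal is correct. The core of your argument coincides with the paper's: both proofs replace the $a_c$ below $j$ and above $j$ by their extremal admissible values and arrive at exactly the bound $\binom{j-1}{p}\binom{j+q}{q}$ with $p+q=\ell$ (the paper writes this as $\frac{(j-1)!}{(j-b-1)!\,b!}\cdot\frac{(j+\ell-b)!}{(\ell-b)!\,j!}$ with $b=p$). You diverge only in the final step: the paper absorbs the two binomials into $\binom{j+\ell-b}{\ell}\binom{\ell}{b}$ and invokes $\binom{n}{k}\le 2^n$ twice, whereas you bound each binomial by a generating function evaluated at $x=y=\tfrac14$, calibrated so that the $4^{p+q}=4^\ell$ factor appears exactly and the leftover $(5/3)^j\cdot\tfrac{16}{15}$ fits under $2^j$. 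Both finishes are elementary and correct; yours makes the origin of the constants $2^j$ and $4^\ell$ more transparent, while the paper's is slightly shorter. One small point in your favor: you explicitly dispose of the case $a_0=0$ with $j\ge 1$ (where the product vanishes) and of $j=0$; the paper's computation tacitly assumes $b\le j-1$ (so that $(j-b-1)!$ is defined), which fails precisely in the $a_0=0$, $b=j$ case that your preliminary reduction handles.
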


\begin{proof}
	Suppose $a_0,\cdots,a_{b-1}$ are the integers less than $j$ and the rest of them are larger than $j$.
	For $c\leq b-1$, we have
	\begin{align*}
		\abs{\frac{a_c}{a_c-j}} = \frac{a_c}{j-a_c} = \frac{j}{j-a_c}-1 \leq \frac{j}{b-c}-1 = \frac{j-(b-c)}{b-c}
	\end{align*}
	since $a_c \leq j-(b-c)$.
	For $c\geq b$,  we have
	\begin{align*}
		\abs{\frac{a_c}{a_c-j}} = \frac{a_c}{a_c-j} = 1+\frac{j}{a_c-j} \leq 1+ \frac{j}{c-b+1} = \frac{j+1+(c-b)}{c-b+1}
	\end{align*}
	since $a_c \geq j+1+(c-b)$.
	Hence, we have
	\begin{align*}
		\prod_{c=0}^{\ell-1}\abs{\frac{a_c}{a_c-j}} 
		& \leq 
		\prod_{c=0}^{b-1}\frac{j-(b-c)}{b-c} \cdot \prod_{c=b}^{\ell-1}\frac{j+1+(c-b)}{c-b+1} \\
		& =
		\frac{(j-1)!}{(j-b-1)!b!}\cdot \frac{(j+\ell-b)!}{(\ell-b)!j!} \\
		& \leq
		\frac{(j+\ell-b)!}{(\ell-b)!b!(j-b)!}
		=
		\frac{(j+\ell-b)!}{\ell!(j-b)!}\cdot\frac{\ell!}{(\ell-b)!b!} \\
		& \leq
		2^{j+\ell-b}\cdot 2^\ell
		\leq
		2^j\cdot 4^\ell.
	\end{align*}
\end{proof}

\begin{lemma}\label{lem:det_w}
	Let $a_0,\cdots,a_{\ell-1}$ be $\ell$ nonnegative integers such that $0\leq a_0<\cdots<a_{\ell-1}$ and $K$ be the ordered set $\{a_0,\cdots,a_{\ell-1}\}$.
	Then, for any integer $b\in[\ell+1]$, we have
	\begin{align*}
		\det \big((W_r)_{K,[\ell+1]\backslash\{b\}}\big) = e^{-\frac{\ell}{8}r^2}(-1)^{\Sigma_{K}}\sqrt{\frac{b!}{F_{[\ell+1]}F_K}}(\frac{r}{\sqrt{2}})^{\Sigma_K-\ell(\ell+1)/2+b}C_{K}\cdot \Gamma_{K,b}.
	\end{align*}
	In particular, when $b=\ell$, we have
	\begin{align*}
		\det\big((W_r)_{K,[\ell]}\big) = e^{-\frac{\ell}{8}r^2}(-1)^{\Sigma_{K}}\sqrt{\frac{1}{F_{[\ell]}F_K}}(\frac{r}{\sqrt{2}})^{\Sigma_K-\ell(\ell-1)/2}C_{K}.
	\end{align*}
\end{lemma}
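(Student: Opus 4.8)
\medskip
\noindent\textbf{Proof plan for Lemma~\ref{lem:det_w}.}
The plan is to scale $(W_r)_{K,[\ell+1]\setminus\{b\}}$ down to a determinant of reciprocal factorials by stripping off row and column factors, and then to evaluate that residual determinant by a bordered Vandermonde argument. First, using $\binom{a_c}{t}=\frac{a_c!}{t!\,(a_c-t)!}$ together with the convention $1/n!=0$ for $n<0$, the $(a_c,t)$-entry of $(W_r)_{K,[\ell+1]\setminus\{b\}}$ can be written as
\begin{align*}
(W_r)_{a_c,t}=e^{-\frac{1}{8}r^2}(-1)^{a_c}\,\frac{\sqrt{a_c!}}{\sqrt{t!}}\Big(\frac{r}{\sqrt{2}}\Big)^{a_c-t}\frac{1}{(a_c-t)!}.
\end{align*}
I would pull $e^{-\frac18 r^2}(-1)^{a_c}\sqrt{a_c!}\,(r/\sqrt2)^{a_c}$ out of row $c$ and $\frac{1}{\sqrt{t!}}(r/\sqrt2)^{-t}$ out of column $t$. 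Since $\sum_c a_c=\Sigma_K$, $\prod_c a_c!=F_K$, $\prod_{t\in[\ell+1]\setminus\{b\}}t!=F_{[\ell+1]}/b!$ and $\sum_{t\in[\ell+1]\setminus\{b\}}t=\ell(\ell+1)/2-b$, this leaves
\begin{align*}
\det(W_r)_{K,[\ell+1]\setminus\{b\}}=e^{-\frac{\ell}{8}r^2}(-1)^{\Sigma_K}\sqrt{\frac{b!\,F_K}{F_{[\ell+1]}}}\Big(\frac{r}{\sqrt{2}}\Big)^{\Sigma_K-\ell(\ell+1)/2+b}\det\!\Big[\tfrac{1}{(a_c-t)!}\Big]_{c;\,t\in[\ell+1]\setminus\{b\}}.
\end{align*}

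Next I would write $\frac{1}{(a_c-t)!}=\frac{(a_c)_{(t)}}{a_c!}$, where $(x)_{(t)}=x(x-1)\cdots(x-t+1)$ is the falling factorial (monic of degree $t$, with $(x)_{(0)}=1$); this factors $1/F_K$ out of the residual determinant, so it remains to establish
\begin{align*}
\det\big[(a_c)_{(t)}\big]_{c=0,\dots,\ell-1;\ t\in[\ell+1]\setminus\{b\}}=C_K\,\Gamma_{K,b}.
\end{align*}
To prove this identity I would border the matrix: introduce
\begin{align*}
P(y):=\det\begin{bmatrix}(a_0)_{(0)}&\cdots&(a_0)_{(\ell)}\\ \vdots&&\vdots\\ (a_{\ell-1})_{(0)}&\cdots&(a_{\ell-1})_{(\ell)}\\ (y)_{(0)}&\cdots&(y)_{(\ell)}\end{bmatrix},
\end{align*}
a polynomial in the fresh variable $y$. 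Since the columns are monic of degrees $0,1,\dots,\ell$, elementary column operations turn $P$ into the ordinary Vandermonde determinant in $a_0,\dots,a_{\ell-1},y$, so $P(y)=C_K\prod_{i=0}^{\ell-1}(y-a_i)$. Expanding $P(y)$ along its last row instead gives $P(y)=\sum_{t=0}^{\ell}(-1)^{\ell+t}M_t\,(y)_{(t)}$, where $M_t$ is the minor deleting column $t$, i.e. exactly the determinant above with $b$ replaced by $t$. Hence $(-1)^{\ell+b}M_b$ is the coefficient of $(y)_{(b)}$ in the expansion of $C_K\prod_i(y-a_i)$ in the falling-factorial basis. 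By Newton's forward-difference formula the $(y)_{(b)}$-coefficient of a polynomial $q$ is $\frac{1}{b!}\sum_{d=0}^{b}(-1)^{b-d}\binom{b}{d}q(d)$; applying this with $q(d)=C_K\prod_i(d-a_i)=(-1)^{\ell}C_K\prod_i(a_i-d)$ and cancelling the signs yields $M_b=\frac{C_K}{b!}\sum_{d=0}^{b}(-1)^{d}\binom{b}{d}\prod_i(a_i-d)=C_K\Gamma_{K,b}$, as required.

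Assembling the two displays and using $\sqrt{b!F_K/F_{[\ell+1]}}\cdot F_K^{-1}=\sqrt{b!/(F_{[\ell+1]}F_K)}$ gives the claimed formula. For the special case $b=\ell$ one notes $F_{[\ell+1]}=\ell!\,F_{[\ell]}$ and $\Gamma_{K,\ell}=1$ (the $\ell$-th finite difference of $d\mapsto\prod_i(a_i-d)$, a degree-$\ell$ polynomial with leading coefficient $(-1)^\ell$, equals $(-1)^\ell\ell!$), so the exponent of $r/\sqrt2$ becomes $\Sigma_K-\ell(\ell-1)/2$ and the prefactor collapses to the stated one. The main obstacle is the identity $\det[(a_c)_{(t)}]=C_K\Gamma_{K,b}$: unlike the case $b=\ell$, for a general omitted column $b$ the column degrees do not form a down-set, so one cannot reduce directly to a Vandermonde determinant, and it is precisely the bordered-determinant trick combined with the single-coefficient extraction via finite differences that produces the factor $\Gamma_{K,b}$.
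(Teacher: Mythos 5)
Your proposal is correct. The first half --- stripping the scalar factors $e^{-\frac18 r^2}(-1)^{a_c}\sqrt{a_c!}\,(r/\sqrt2)^{a_c}$ from the rows and $\frac{1}{\sqrt{t!}}(r/\sqrt2)^{-t}$ from the columns, then absorbing another $1/a_c!$ per row to reduce to $\det\big[(a_c)_{(t)}\big]$ --- is in substance identical to the paper's first step, which likewise factors out the common terms and is left with the matrix $W'$ whose $(c,t)$-entry is $\prod_{d=0}^{t-1}(a_c-d)=(a_c)_{(t)}$. Where you genuinely diverge is in evaluating the residual determinant $\det\big[(a_c)_{(t)}\big]_{c\in[\ell],\,t\in[\ell+1]\setminus\{b\}}=C_K\,\Gamma_{K,b}$. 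The paper proceeds by explicit column operations that zero out the first row, extracts the factor $\prod_{c\geq 1}(a_c-a_0)$, and obtains a two-term recursion in $(\ell,b)$ that it closes by induction. You instead border the matrix with the row $\big((y)_{(0)},\dots,(y)_{(\ell)}\big)$, identify the bordered determinant with the Vandermonde determinant $C_K\prod_i(y-a_i)$ (legitimate, since the falling factorials are monic of degrees $0,\dots,\ell$), and read off the minor $M_b$ as the coefficient of $(y)_{(b)}$ in the falling-factorial expansion via Newton's forward-difference formula; the sign bookkeeping $(-1)^{\ell+b}M_b=(-1)^{\ell+b}C_K\Gamma_{K,b}$ checks out. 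Your route buys a closed-form, non-inductive derivation that makes transparent \emph{why} the finite-difference expression $\Gamma_{K,b}$ appears (it is literally a Newton coefficient), and it also yields $\Gamma_{K,\ell}=1$ for free, which the paper's special case $b=\ell$ implicitly uses without verifying. The paper's induction is more elementary but its recursion step is harder to follow. Both arguments are sound.
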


\begin{proof}
	By factoring the common terms in each row and column, we have
	\begin{align*}
		\MoveEqLeft \det\big((W_r)_{K,[\ell+1]\backslash\{b\}}\big) \\
		& =
		e^{-\frac{\ell}{8}r^2}(-1)^{\Sigma_{K}}\sqrt{\frac{b!}{\prod_{c=0}^{\ell}c!\prod_{c=0}^{\ell-1}a_c!}}(\frac{r}{\sqrt{2}})^{\Sigma_K-\ell(\ell+1)/2+b}\det\big((W')_{K,[\ell+1]\backslash\{b\}}\big) \\
		& =
		e^{-\frac{\ell}{8}r^2}(-1)^{\Sigma_{K}}\sqrt{\frac{b!}{F_{[\ell+1]}F_K}}(\frac{r}{\sqrt{2}})^{\Sigma_K-\ell(\ell+1)/2+b}\det\big((W')_{K,[\ell+1]\backslash\{b\}}\big)
	\end{align*}
	where $W'_{K,[\ell+1]\backslash\{b\}}$ is a $\ell$-by-$\ell$ matrix that
	\begin{align*}
		\MoveEqLeft W'_{K,[\ell+1]\backslash\{b\}} \\
		& = 
		\begin{bmatrix}
			1 & \cdots & \prod_{d=0}^{b-2}(a_0-d) & \prod_{d=0}^{b}(a_0-d) & \cdots & \prod_{d=0}^{\ell-1}(a_0-d) \\
			\vdots & \vdots & \vdots &\vdots & \vdots & \vdots \\
			1 & \cdots & \prod_{d=0}^{b-2}(a_{\ell-1}-d) & \prod_{d=0}^{b}(a_{\ell-1}-d) & \cdots & \prod_{d=0}^{\ell-1}(a_{\ell-1}-d) 
		\end{bmatrix}
	\end{align*}

	We will apply column operations on $\det\big(W'_{K,[\ell+1]\backslash\{b\}}\big)$.
	For the column indexed at $c\neq 0,b+1$, subtract the column indexed at $c-1$ multiplied by $(a_0-c)$ to it and, for the column indexed at $b+1$, subtract the column indexed at $b-1$ multiplied by $(a_0-b)(a_0-(b-1))$ to it.
	We have the first row to be a zero row except the first entry is $1$ and expand the determinant along the first row.
	Then, the row indexed at $a_c$ has a factor $a_c-a_0$ and in particular the entry indexed at $b+1$ has an extra factor $a_c+a_0-(2b-1) = (a_c-(b-1))+(a_0-b)$.
	By factoring out $a_c-a_0$, we have
	\begin{align*}
		\MoveEqLeft \det\big(W'_{K,[\ell+1]\backslash\{b\}}\big)\\
		& = 
		\bigg(\prod_{c=1}^{\ell}(a_c-a_0)\bigg)\bigg(\det\big((W'(r))_{K\backslash\{a_0\},[\ell]\backslash\{b-1\}}\big) + (a_0-b)\det\big((W'(r))_{K\backslash\{a_0\},[\ell]\backslash\{b\}}\big) \bigg)
	\end{align*}
	By induction, we conclude 
	\begin{align*}
		\det \big(W'_{K,[\ell+1]\backslash\{b\}}\big) = C_{K}\cdot \Gamma_{K,b}
	\end{align*}
	Therefore, 
	\begin{align*}
		\det\big((W_r)_{K,[\ell+1]\backslash\{b\}}\big) = e^{-\frac{\ell}{8}r^2}(-1)^{\Sigma_{K}}\sqrt{\frac{b!}{F_{[\ell+1]}F_K}}(\frac{r}{\sqrt{2}})^{\Sigma_K-\ell(\ell+1)/2+b}C_{K}\cdot \Gamma_{K,b}
	\end{align*}
	and in particular
	\begin{align*}
		\det\big((W_r)_{K,[\ell]}\big)  = e^{-\frac{\ell}{8}r^2}(-1)^{\Sigma_{K}}\sqrt{\frac{1}{F_{[\ell]}F_K}}(\frac{r}{\sqrt{2}})^{\Sigma_K-\ell(\ell-1)/2}C_{K}.
	\end{align*}
\end{proof}

\begin{lemma}\label{lem:gamma}
	Let $a_0,\cdots,a_{\ell-1}$ be $\ell$ nonnegative integers such that $0\leq a_0<\cdots<a_{\ell-1}$ and $K$ be the set $\{a_0,\cdots,a_{\ell-1}\}$.
	Then, for any $b\in\mathbb{N}_{0}$, we have
	\begin{align*}
		0\leq \Gamma_{K,b} \leq \frac{1}{b!}\prod_{c=0}^{\ell-1} a_c.
	\end{align*}
\end{lemma}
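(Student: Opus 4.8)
The plan is to recognize $\Gamma_{K,b}$ as a finite difference of an explicit polynomial and then bound it by an elementary symmetric function. Write $m=|K|$ and set $h(d):=\prod_{c=0}^{m-1}(a_c-d)$, a polynomial of degree $m$ in $d$. Since $(\Delta^{b}h)(0)=\sum_{j=0}^{b}(-1)^{b-j}\binom{b}{j}h(j)$ for the forward difference operator $\Delta$, we have
\[
  \Gamma_{K,b}=\frac{1}{b!}\sum_{d=0}^{b}(-1)^{d}\binom{b}{d}h(d)=\frac{(-1)^{b}}{b!}\,(\Delta^{b}h)(0).
\]
If $b>m$, then $h$ has degree $<b$, so $(\Delta^{b}h)(0)=0=\Gamma_{K,b}$ and the claim is immediate from $a_c\ge0$; so assume $0\le b\le m$. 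Differentiating the product $b$ times (only distributions of derivatives that hit $b$ distinct linear factors survive, and there are $b!$ orderings for each choice of the hit set $T$) gives $h^{(b)}(x)=(-1)^{b}b!\,e_{m-b}(a_0-x,\dots,a_{m-1}-x)$, where $e_{k}$ denotes the $k$-th elementary symmetric polynomial. By the mean-value theorem for divided differences (iterated Rolle), $(\Delta^{b}h)(0)=h^{(b)}(\xi)$ for some $\xi\in(0,b)$ (read $\xi=0$ when $b=0$); alternatively one may invoke the Hermite--Genocchi identity $(\Delta^{b}h)(0)=b!\int_{\sigma_b}h^{(b)}\!\big(\textstyle\sum_{i\ge 1}i\,\theta_i\big)\,d\theta$ over the simplex $\sigma_b$ to stay fully elementary. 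Either way,
\[
  \Gamma_{K,b}=e_{m-b}\big(a_0-\xi,\;a_1-\xi,\;\dots,\;a_{m-1}-\xi\big)\qquad\text{for some }\xi\in[0,b].
\]

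Now both bounds follow by elementary symmetric function manipulations, using that in the regime where the lemma is applied the elements of $K$ are at least $\ell$ while $b\le\ell-1$, so $a_0\ge|K|=m\ge b>\xi\ge0$. For nonnegativity: every shifted value $a_c-\xi$ is then strictly positive, and an elementary symmetric polynomial of positive reals is positive, so $\Gamma_{K,b}\ge0$ (for $b=m$ the relevant $e_0\equiv1$). For the upper bound: since $0\le a_c-\xi\le a_c$, monotonicity of $e_{m-b}$ in nonnegative arguments gives $\Gamma_{K,b}\le e_{m-b}(a_0,\dots,a_{m-1})$, and writing $(m-b)$-subsets as complements of $b$-subsets,
\[
  e_{m-b}(a_0,\dots,a_{m-1})=\sum_{|T|=b}\prod_{c\notin T}a_c=\Big(\prod_{c=0}^{m-1}a_c\Big)\,e_{b}\!\Big(\tfrac1{a_0},\dots,\tfrac1{a_{m-1}}\Big).
\]
Finally, the elementary bound $e_b(x_0,\dots,x_{m-1})\le\frac1{b!}\big(\sum_i x_i\big)^{b}$ for $x_i\ge0$ (expand the $b$-th power: the tuples of $b$ distinct indices already contribute $b!\,e_b$, and the remaining tuples are nonnegative), together with $\sum_{c}\frac1{a_c}\le\frac{m}{a_0}\le1$, yields $e_{b}(1/a_0,\dots,1/a_{m-1})\le\frac1{b!}$, hence $\Gamma_{K,b}\le\frac1{b!}\prod_{c}a_c$.

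I expect the main obstacle to be the opening reduction: carrying the sign bookkeeping correctly from the alternating binomial sum to $(\Delta^{b}h)(0)$ and then to $h^{(b)}$, and identifying the precise hypothesis that makes both inequalities go through. (Strictly speaking the displayed bound can fail when $\min K$ is very small --- e.g.\ $K=\{0,1\}$, $b=2$ gives $\Gamma_{K,b}=1>0=\tfrac1{2!}\prod_c a_c$ --- so one should note that in the places where this lemma is invoked $K$ is drawn from a family with all elements $\ge\ell$ and $b\le\ell-1$, whence $a_0\ge|K|$ and the argument above applies verbatim.) Once the reduction $\Gamma_{K,b}=e_{m-b}(\{a_c-\xi\})$ is in hand, the remaining steps are routine.
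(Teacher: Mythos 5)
Your proof is correct in the regime you isolate, and it takes a genuinely different route from the paper. The paper proves nonnegativity by induction on $\ell$ (the partial derivative $\tfrac{\partial}{\partial a_i}\Gamma_{K,b}=\Gamma_{K\setminus\{a_i\},b}$ is nonnegative, so $\Gamma_{K,b}\ge\Gamma_{[\ell],b}\ge0$), and proves the upper bound by induction on $b$ via Pascal's identity $\binom{b}{d}=\binom{b-1}{d}+\binom{b-1}{d-1}$. Your route instead identifies $\Gamma_{K,b}=\frac{(-1)^b}{b!}(\Delta^b h)(0)$ with $h(d)=\prod_c(a_c-d)$ and uses the mean-value theorem for divided differences to get the clean closed form $\Gamma_{K,b}=e_{m-b}(a_0-\xi,\dots,a_{m-1}-\xi)$ for some $\xi\in[0,b]$, after which both inequalities are symmetric-function exercises. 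What the paper's approach buys is nonnegativity for \emph{all} sets of distinct nonnegative integers (your argument needs $a_0>\xi$, hence effectively $a_0\ge b$); what your approach buys is transparency about exactly when the upper bound holds and why.

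Your observation that the lemma is false as stated is correct: for $K=\{0,1\}$, $b=2$ one has $\Gamma_{K,b}=1>0=\tfrac{1}{2!}a_0a_1$ (even $K=\{0\}$, $b=1$ gives $\Gamma=1>0$). This pinpoints a genuine gap in the paper's own induction on $b$: the step that drops the second sum, $\Gamma_{K,b}\le\frac{1}{b!}\sum_{d=0}^{b-1}(-1)^d\binom{b-1}{d}\prod_c(a_c-d)$, silently assumes $\sum_{d=1}^{b}(-1)^{d-1}\binom{b-1}{d-1}\prod_c(a_c-d)\ge0$; after reindexing this quantity is $(b-1)!\,\Gamma_{K-1,b-1}$ for the shifted set $K-1=\{a_c-1\}$, so its nonnegativity requires $a_0\ge1$ (more generally $a_c\ge c+1$). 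Under that extra hypothesis the paper's induction goes through and proves the bound for every $K$ with $\min K\ge1$, which is slightly wider than your regime $\min K\ge\abs{K}$; either version covers all invocations of the lemma, since in Lemmas \ref{lem:det_sum_i1} and \ref{lem:det_sum_i2} the sets $K$ have all elements at least $\ell$ while $b\le\ell-1$. So the correct fix is to add the hypothesis $a_0\ge1$ (or $a_0\ge\abs{K}$) to the lemma statement, which your write-up already anticipates.
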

\begin{proof}
	We first prove $\Gamma_{K,b} \geq 0$ and will use induction on $\ell$ to prove the statement.
	Suppose $\ell=1$.
	When $b=0$, we have $\Gamma_{K,0} = a_0 \geq 0$.
	When $b=1$, we have $\Gamma_{K,1} = a_0-(a_0-1) = 1>0$.
	When $b\geq 2$, we have $\Gamma_{K,b}= \frac{1}{b!}\sum_{d=0}^b (-1)^d{b \choose d} (a_0 - d) = 0$ since both $\sum_{d=0}^b (-1)^d{b \choose d}$  and $\sum_{d=0}^b (-1)^d{b \choose d} d = b\cdot \sum_{d=1}^b (-1)^d{b-1 \choose d-1}$ are zero.
	Suppose $\ell\geq 2$.
	We view $\Gamma_{K,b}$ as a function of $a_0,a_1,\cdots,a_{\ell-1}$.
	The partial derivative of $\Gamma_{K,b}$ is 
	\begin{align*}
		\frac{\partial}{\partial a_{i}}\Gamma_{K,b} = \frac{1}{b!}\sum_{d=0}^b (-1)^d{b \choose d}\prod_{c=0,c\neq i}^{\ell-1}(a_c-d) = \Gamma_{K\backslash\{a_{i}\},b} \geq 0
	\end{align*}
	by the induction assumption.
	It means that $\Gamma_{K,b}$ is an increasing function and it implies $\Gamma_{K,b} \geq \Gamma_{[\ell],b} = 0$ by direct calculation.
	
	Now, we will prove $\Gamma_{K,b} \leq \frac{1}{b!} \prod_{c=0}^{\ell-1} a_c$ and will use induction on $b$ to prove the statement.
	Suppose $b=1$.
	We have $\Gamma_{K,0} = \prod_{c=0}^{\ell-1} a_c$.
	Suppose $b\geq 2$.
	Note that ${b\choose d} = {b-1 \choose d} + {b-1 \choose d-1}$.
	We have
	\begin{align*}
		\Gamma_{K,b}
		& =
		\frac{1}{b!}\bigg(\sum_{d=0}^{b-1} (-1)^d{b-1 \choose d}\prod_{c=0}^{\ell-1}(a_c-d) - \sum_{d=1}^b (-1)^{d-1}{b-1 \choose d-1}\prod_{c=0}^{\ell-1}(a_c-d)\bigg) \\
		& \leq
		\frac{1}{b!}\sum_{d=0}^{b-1} (-1)^d{b-1 \choose d}\prod_{c=0}^{\ell-1}(a_c-d)
		\leq
		\frac{1}{b}\frac{1}{(b-1)!}\prod_{c=0}^{\ell-1} a_c
		 =
		\frac{1}{b!}\prod_{c=0}^{\ell-1} a_c.
	\end{align*}
	The inequalities are due to the induction assumption.
\end{proof}

\end{document}